\newtheorem{condition}[theorem]{Condition}
\newcommand{\ep}{\varepsilon}
\newcommand{\R}{\mathbb{R}}
\newcommand{\E}{\mathbb{E}}
\newcommand{\caH}{\mathcal{H}}
\newcommand{\caL}{\mathcal{L}}
\newcommand{\caN}{\mathcal{N}}
\newcommand{\caX}{\mathcal{X}}
\newcommand{\bbS}{\mathbb{S}}
\newcommand{\bbZ}{\mathbb{Z}}
\newcommand{\bbN}{\mathbb{N}}
\DeclareMathOperator*{\argmin}{arg\,min}
\DeclareMathOperator*{\supp}{Supp}
\DeclareMathOperator{\ran}{Ran}
\newcommand{\x}{x}
\newcommand{\y}{y}
\newcommand{\z}{z}
\newcommand{\X}{\bm{X}}
\newcommand{\mr}{\mathrm}
\newcommand{\mb}{\mathbb}
\newcommand{\mc}{\mathcal}
\newcommand{\bs}{\boldsymbol}
\newcommand{\poly}{\mr{poly}}
\newcommand{\NTK}{K^{\mathrm{NT}}}
\newcommand{\fNN}{\hat{f}^{\mathrm{NN}}_t}
\newcommand{\fNTK}{\hat{f}^{\mathrm{NTK}}_t}
\newcommand{\ang}[1]{\left\langle{#1}\right\rangle}
\newcommand{\xk}[1]{\left(#1\right)}
\newcommand{\zk}[1]{\left[#1\right]}
\newcommand{\dk}[1]{\left\{#1\right\}}
\newcommand{\xkm}[1]{\mleft(#1\mright)}
\newcommand{\zkm}[1]{\mleft[#1\mright]}
\newcommand{\dkm}[1]{\mleft\{#1\mright\}}
\newcommand{\widesim}{\mathrel{\scalebox{2.2}[1]{\hbox{$\sim$}}}}
\newcommand{\iid}{\mathrel{\stackrel{\mr{i.i.d.}}{\widesim}}} \providecommand{\cref}{\prettyref}
\begin{document}

  \title{On the Eigenvalue Decay Rates of a Class of Neural-Network Related Kernel Functions Defined on General Domains}

  \author{\name Yicheng Li \email liyc22@mails.tsinghua.edu.cn \\
  \addr Center for Statistical Science, Department of Industrial Engineering\\
  Tsinghua University\\
  Beijing, 100084, China
  \AND
  \name Zixiong Yu \email yuzx19@mails.tsinghua.edu.cn \\
  \addr Yau Mathematical Sciences Center, Department of Mathematical Sciences\\
  Tsinghua University\\
  Beijing, 100084, China
  \AND Guhan Chen \email ghchen12@qq.com \\
  \addr Center for Statistical Science, Department of Industrial Engineering\\
  Tsinghua University\\
  Beijing, 100084, China
  \AND Qian Lin \email qianlin@tsinghua.edu.cn \\
  \addr Center for Statistical Science, Department of Industrial Engineering\\
  Tsinghua University\\
  Beijing, 100084, China
  }

  \editor{Ohad Shamir}

  \maketitle

  \begin{abstract}
In this paper,
we provide a strategy to determine the eigenvalue decay rate (EDR) of a large class of kernel functions defined on a general domain rather than $\mathbb S^{d}$.
This class of kernel functions include but are not limited to the neural tangent kernel associated with neural networks with different depths and various activation functions.
After proving that the dynamics of training the wide neural networks uniformly approximated that of the neural tangent kernel regression on general domains,
we  can further illustrate  the minimax optimality of the wide neural network provided that the underground truth function
$f\in [\mathcal H_{\mr{NTK}}]^{s}$, an interpolation space associated with the RKHS $\mathcal{H}_{\mr{NTK}}$ of NTK\@.
We also showed that the overfitted neural network can not generalize well.
We believe our approach for determining the EDR of kernels might be also of independent interests.

   \end{abstract}

  \begin{keywords}
    Neural tangent kernel, eigenvalue decay rate, early stopping, non-parametric regression, reproducing kernel Hilbert space
  \end{keywords}

  \section{Introduction}\label{sec:intro}
  Deep neural networks have achieved incredible success in a variety of areas,
from image classification~\citep{he2016deep,krizhevsky2017_ImagenetClassification} to natural language processing~\citep{devlin2019_BERTPretraining},
generative models~\citep{karras2019_StylebasedGenerator}, and beyond.
The number of parameters appearing in modern deep neural networks is often ten or hundreds of times larger than the sample size of the data.
It is widely observed that large neural networks possess smaller generalization errors than traditional methods.
This ``benign overfitting phenomenon'' brings challenges to the usual bias-variance trade-off doctrine in statistical learning theory.
Understanding the mysterious generalization power of deep neural networks might be one of the most interesting statistical problems.

Although the training dynamics of neural networks is highly non-linear and non-convex,
the celebrated neural tangent kernel (NTK) theory~\citep{jacot2018_NeuralTangent} provides us a way to study the generalization ability of over-parametrized neural networks.
It is shown that
when the width of neural networks is sufficiently large (i.e., in the over-parameterized or lazy trained regime),
the training dynamics of the neural network can be well approximated by a simpler kernel regression method with respect to the corresponding NTK\@.
Consequently, it offers us a way to investigate the generalization ability of the over-parametrized neural network
by means of the well established theory of generalization in kernel regression~\citep{caponnetto2007_OptimalRates,andreaschristmann2008_SupportVector,lin2018_OptimalRates}.

However, to obtain the generalization results in kernel regression, the eigenvalue decay rate (EDR) of the kernel (see \cref{eq:MercerDecomp} and below) is an essential quantity that must be determined a priori.
Considering the NTKs associated with two-layer and multilayer fully-connected ReLU neural networks,
\citet{bietti2019_InductiveBias} and the subsequent work \citet{bietti2020_DeepEquals} showed that the EDR of the NTKs is $i^{-(d+1)/d}$ when the inputs are uniformly distributed on $\bbS^d$.
Consequently, \citet{hu2021_RegularizationMatters} and \citet{suh2022_NonparametricRegression} claimed that the neural network can achieve the minimax rate $n^{-(d+1)/(2d+1)}$ of the excess risk.
However, their assumption on the input distribution is too restrictive, and can hardly be satisfied in practice,
so it is of interest to determine the EDR of the NTKs for general input domains and distributions.
As far as we know, few works have studied the EDR of the NTKs beyond the case of uniform distribution on $\bbS^d$.
More recently, focusing on one dimensional data over an interval,
\citet{lai2023_GeneralizationAbility} showed that the EDR of the NTK associated with two-layer neural networks is $i^{-2}$
and thus the neural network can achieve the minimax rate $n^{-2/3}$ of the excess risk.
However, their approach of determining the EDR, which relies heavily on the closed form expression of the NTK,
can not be generalized to $d$-dimensional inputs or the NTK associated with multilayer neural networks.

In this work, we study the EDR of the NTKs associated with multilayer fully-connected ReLU neural networks
on a general domain in $\R^d$ with respect to a general input distribution $\mu$ satisfying mild assumptions.
For this purpose, we develop a novel approach for determining the EDR of kernels by transformation and restriction.
As a key contribution, we prove that the EDR of a dot-product kernel on the sphere remains the same if one restricts it to a subset of the sphere,
which is a non-trivial generalization of the result in \citet{widom1963_AsymptoticBehavior}.
Consequently, we can show that the EDR of the NTKs is $i^{-(d+1)/d}$ for general input domains and distributions.
Moreover, after proving the uniform approximation of the over-parameterized neural network by the NTK regression,
we show the statistical optimality of the over-parameterized neural network trained via gradient descent with proper early stopping.
In comparison, we also show that the overfitted neural network can not generalize well.

\subsection{Related works}

\paragraph{The EDR of NTKs}
The spectral properties of NTK have been of particular interests to the community of theorists since \citet{jacot2018_NeuralTangent} introduced the neural tangent kernel.
For example, noticing that the NTKs associated with fully-connected ReLU networks are inner product kernels on the sphere,
several works utilized the theory of the spherical harmonics~\citep{dai2013_ApproximationTheory,azevedo2014_SharpEstimates} to study the eigen-decomposition of the NTK
~\citep{bietti2019_InductiveBias,ronen2019_ConvergenceRate,geifman2020_SimilarityLaplace,chen2020_DeepNeural,bietti2020_DeepEquals}.
In particular, \citet{bietti2019_InductiveBias} and \citet{bietti2020_DeepEquals} showed that the EDR of
the NTKs associated with the two-layer and multilayer neural network is $i^{-(d+1)/d}$ if the inputs are uniformly distributed on $\bbS^d$.
However, their analysis depends on the spherical harmonics theory on the sphere to derive the explicit expression of the eigenvalues,
which cannot be extended to general input domains and distributions.
Recently, considering two-layer ReLU neural networks on an interval,
\citet{lai2023_GeneralizationAbility} showed that the EDR of the corresponding NTK is $i^{-2}$.
However, their technique relies heavily on the explicit expression of the NTK on $\R$ and can hardly be extended to NTKs defined on $\R^d$
or NTKs associated with multilayer wide networks.

\paragraph{The generalization performance of over-parameterized neural networks}

Though now it is a common strategy to study the generalization ability of over-parameterized neural networks through that of the NTK regression,
few works state it explicitly or rigorously.
For example, \citet{du2018_GradientDescent,li2018_LearningOverparameterized,arora2019_FinegrainedAnalysis}
showed that the training trajectory of two-layer neural networks converges pointwisely to that of the NTK regressor;
\citet{du2019_GradientDescent,allen-zhu2019_ConvergenceRate,lee2019_WideNeural} further extended the results to the multilayer networks and ResNet.
However, if one wants to approximate the generalization error of over-parameterized neural network by that of the NTK regressor,
the approximation of the neural network by the kernel regressor has to be uniform.
Unfortunately, the existing two works~\citep{hu2021_RegularizationMatters,suh2022_NonparametricRegression}
studying the generalization error of over-parameterized neural networks
overlooked the aforementioned subtle difference between the pointwise convergence and uniform convergence,
so there might be some gaps in their claims.
To the best of our knowledge, \citet{lai2023_GeneralizationAbility} may be among the first works to show that two-layer wide ReLU neural networks converge uniformly to the corresponding NTK regressor.

\paragraph{The high-dimensional setting}
It should be also noted that several other works tried to consider the generalization error of NTK regression in the high-dimensional setting,
where the dimension of the input diverges as the number of samples tends to infinity.
These works include the eigenvalues of NTK, the ``benign overfitting phenomenon'', the ``double descent phenomenon'',
and the generalization error.
For example, \citet{frei2022_BenignOverfitting}, \citet{nakkiran2019_DeepDouble} and \citet{liang2020_JustInterpolate}
have shown the benign overfitting and double descent phenomena, while \citet{fan2020_SpectraConjugate} and \citet{nguyen2021_TightBounds}
have investigated the eigenvalue properties of NTK in the high-dimensional setting.
Furthermore, recent works by \citet{montanari2022_InterpolationPhase} have examined the generalization performance of neural networks in the high-dimensional setting.
However, it has been suggested by \citet{rakhlin2018_ConsistencyInterpolation,beaglehole2022_KernelRidgeless}
that there may be differences between the traditional fixed-dimensional setting and the high-dimensional setting.
In this work, we focus solely on the fixed-dimensional setting.

\subsection{Our contributions}

The main contribution of this paper is that we determine the EDR of the NTKs associated with multilayer fully-connected ReLU neural networks
on $\R^d$ with respect to a general input distribution $\mu$ satisfying mild assumptions.
We develop a novel approach for determining the EDR of kernels by means of algebraic transformation and restriction to subsets:
if the kernel can be transformed to a dot-product kernel on the sphere,
its EDR on a general domain coincides with the EDR of the resulting dot-product kernel with respect to the uniform distribution over the entire sphere,
while the latter can be determined more easily by the theory of spherical harmonics.
In particular, we show that the EDR of the considered NTKs is $i^{-(d+1)/d}$, which coincides with that of the NTKs on the sphere.
Besides, we also prove that the NTKs are strictly positive definite.
As a key technical contribution,
we prove that the EDR of a dot-product kernel on the sphere remains the same if one restricts it to a subset of the sphere, provided that
the EDR of the kernel satisfies a very mild assumption.
This result is a non-trivial generalization of the result on shift-invariant kernels in \citet{widom1963_AsymptoticBehavior}
and its proof involves fine-grained harmonic analysis on the sphere.
We believe that our approach is also of independent interest in the research of kernel methods.

Another contribution of this paper is that we rigorously prove that
the over-parameterized multilayer neural network trained by gradient descent can be approximated uniformly by the corresponding NTK regressor.
Combined with the aforementioned EDR result,
this uniform approximation allows us to characterize the generalization performance of the neural network through the well-established kernel regression theory.
The theoretical results show that proper early stopping is essential for the generalization performance of the neural networks,
which urges us to scrutinize the widely reported ``benign overfitting phenomenon'' in deep neural network literature.

\subsection{Notations}

For two sequences $a_n, b_n,~n\geq 1$ of non-negative numbers,
we write $a_n = O(b_n)$ (or $a_n = \Omega(b_n)$) if there exists absolute constant $C > 0$ such that
$a_n \leq C b_n$ (or $a_n \geq C b_n$).
We also denote $a_n \asymp b_n$ (or $a_n =\Theta(b_n)$) if $a_n = O(b_n)$ and $a_n = \Omega(b_n)$.
For a function $f : \caX \to \R$, we denote by $\norm{f}_{\infty} = \sup_{x \in \caX}\abs{f(x)}$ the sup-norm of $f$.
We denote by $L^p(\caX,\dd \mu)$ the Lebesgue $L^p$-space over $\caX$ with respect to $\mu$.

  \section{Analysis of Eigenvalue Decay Rate}\label{sec:NTK}
  The neural tangent kernel (NTK) theory~\citep{jacot2018_NeuralTangent} has been widely used to explain the generalization ability of neural networks,
which establishes a connection between neural networks and kernel methods~\citep{caponnetto2007_OptimalRates,bauer2007_RegularizationAlgorithms}.
In the framework of kernel methods, the spectral properties, in particular the eigenvalue decay rate,
of the kernel function are crucial in the analysis of the generalization ability.
Although there are several previous works~\citep{bietti2019_InductiveBias,chen2020_DeepNeural,geifman2020_SimilarityLaplace,bietti2020_DeepEquals}
investigating the spectral properties of NTKs on the sphere,
their results are limited to the case where the input distribution is uniform on the sphere.
Therefore, we would like to determine the spectral properties of NTKs on a general domain with a general input distribution.
In this section, we provide some general results on the asymptotic behavior of the eigenvalues of certain type of kernels.
As a consequence, we are able to determine the eigenvalue decay rate of NTKs on a general domain.

\subsection{The integral operator and the eigenvalues}
Let $\caX$ be a Hausdorff space and $\mu$ be a Borel measure on $\caX$.
In the following, we always consider a continuous positive definite kernel $k(x,x') : \caX \times \caX \to \R$ such that
\begin{align}
  \label{eq:IntegrableKernel}
  \int_{\caX} k(x,x) \dd \mu(x) < \infty.
\end{align}
We denote by $L^2 = L^2(\caX,\dd \mu)$ and $\caH$ the reproducing kernel Hilbert space (RKHS) associated with $k$.
Introduce the integral operator $T = T_{k;\caX,\mu}: L^2 \to L^2$ by
\begin{align}
  \label{eq:T_Def}
  (Tf)(x) = \int_{\caX} k(x,x') f(x') \dd \mu(x').
\end{align}
It is well-known~\citep{andreaschristmann2008_SupportVector,steinwart2012_MercerTheorem} that $T$ is self-adjoint, positive and trace-class (hence compact).
Consequently, we can derive the spectral decomposition of $T$ and also the Mercer's decomposition of $k$ as
\begin{align}
  \label{eq:MercerDecomp}
  T = \sum_{i\in N} \lambda_i \ang{\cdot,e_i}_{L^2} e_i, \quad\quad
  k(x,x') = \sum_{i\in N} \lambda_i e_i(x) e_i(x'),
\end{align}
where $N \subseteq \mathbb{N}$ is an index set ($N = \bbN$ if the space is infinite dimensional),
$\xk{\lambda_i}_{i \in N}$ is the set of positive eigenvalues (counting multiplicities) of $T$ in descending order
and $\xk{e_i}_{i \in N}$ are the corresponding eigenfunction, which are an orthonormal set in $L^2(\caX,\dd \mu)$.
To emphasize the dependence of the eigenvalues on the kernel and the measure,
we also denote by $\lambda_i(k;\caX,\dd \mu) = \lambda_i$.
We refer to the asymptotic rate of $\lambda_i$ as $i$ tends to infinity as the eigenvalue decay rate (EDR) of $k$ with respect to $\caX$ and $\mu$.

In the kernel regression literature,
the EDR of the kernel is closely related to the capacity condition of the corresponding reproducing kernel Hilbert space (RKHS)
and affects the rate of convergence of the kernel regression estimator~(see, e.g., \citet{caponnetto2007_OptimalRates,lin2018_OptimalRates}).
Particularly, a power-law decay that $\lambda_{i} \asymp i^{-\beta}$ is often assumed in the literature
and the corresponding minimax optimal rate depends on the exponent $\beta$.
Therefore, it would be helpful to determine such decay rate for a kernel of interest.

\subsection{Preliminary results on the eigenvalues}
\label{subsec:preliminary-results-on-the-eigenvalues}
In this subsection, we present some preliminary results on the eigenvalues of $T$,
which allow us to manipulate the kernel with algebraic transformations to simplify the analysis.
Let us first define the scaled kernel $(\rho \odot k)(x,x') = \rho(x) k(x,x') \rho(x')$ for some function $\rho: \caX \to \R$.
It is easy to see the following:
\begin{proposition}
  \label{prop:ScaleMeasureEquiv}
  Let $\rho: \caX \to \R$ be a measurable function such that $\rho\odot k$ satisfies \cref{eq:IntegrableKernel}.
  Then,
  \begin{align}
    \lambda_i(\rho \odot k;\caX,\dd \mu) = \lambda_i(k;\caX,\rho^2 \dd \mu).
  \end{align}
\end{proposition}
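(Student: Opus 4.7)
The plan is to construct a multiplication-type isometry that intertwines the two integral operators, so that the equality of their positive eigenvalues (with multiplicity) follows from a unitary equivalence on an invariant subspace. First, I would define the map $U: L^{2}(\caX,\rho^{2}\dd\mu) \to L^{2}(\caX,\dd\mu)$ by $(Ug)(x) = \rho(x) g(x)$. The identity
\begin{align*}
\norm{Ug}_{L^{2}(\dd\mu)}^{2} = \int_{\caX} \rho(x)^{2} g(x)^{2} \dd\mu(x) = \norm{g}_{L^{2}(\rho^{2}\dd\mu)}^{2}
\end{align*}
shows that $U$ is an isometry onto the closed subspace $V = \dk{f \in L^{2}(\caX,\dd\mu): f = 0 \ \mu\text{-a.e. on } \dk{\rho = 0}}$. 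Note also that the hypothesis ``$\rho\odot k$ satisfies \cref{eq:IntegrableKernel}'' is exactly $\int_{\caX} k(x,x) \rho(x)^{2}\dd\mu(x) < \infty$, so $T_{k;\caX,\rho^{2}\dd\mu}$ is a well-defined compact, self-adjoint, positive operator as well.

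Next, I would verify the intertwining relation $T_{\rho\odot k;\caX,\mu}\, U = U\, T_{k;\caX,\rho^{2}\dd\mu}$: for any $g \in L^{2}(\caX,\rho^{2}\dd\mu)$, plugging in the definitions and pulling out the outer factor of $\rho$ gives
\begin{align*}
(T_{\rho\odot k;\caX,\mu} Ug)(x) = \rho(x) \int_{\caX} k(x,x')\rho(x')^{2} g(x')\dd\mu(x') = (U T_{k;\caX,\rho^{2}\dd\mu} g)(x).
\end{align*}
Consequently, $U$ sends each eigenspace of $T_{k;\caX,\rho^{2}\dd\mu}$ for a nonzero eigenvalue $\lambda$ isometrically into the corresponding eigenspace of $T_{\rho\odot k;\caX,\mu}$. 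For the converse direction, observe that every $f \in V^{\perp}$ is supported on $\dk{\rho = 0}$, so the factor $\rho(x')$ inside the integrand annihilates it and $T_{\rho\odot k;\caX,\mu} f = 0$; moreover the outer factor $\rho(x)$ forces $\ran T_{\rho\odot k;\caX,\mu} \subseteq V$. Hence every eigenfunction of $T_{\rho\odot k;\caX,\mu}$ with nonzero eigenvalue $\lambda$ lies in $V$ and can be uniquely written as $Ug$ for some $g \in L^{2}(\caX,\rho^{2}\dd\mu)$; the intertwining together with injectivity of $U$ then yields $T_{k;\caX,\rho^{2}\dd\mu} g = \lambda g$. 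Therefore the nonzero spectra coincide with multiplicities, and arranging them in descending order gives the claimed identity.

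The only minor subtlety is the potential vanishing set $\dk{\rho = 0}$, which prevents $U$ from being a bijection between the two $L^{2}$ spaces; this is handled once and for all by the observation that $V^{\perp}$ lies in the kernel of $T_{\rho\odot k;\caX,\mu}$ and hence contributes no positive eigenvalues. Apart from this bookkeeping, the argument is a routine computation, so I do not anticipate any deeper analytical difficulty.
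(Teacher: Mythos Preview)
Your argument is correct. The intertwining computation and the handling of $V^{\perp}$ via the vanishing set $\dk{\rho=0}$ are exactly what is needed, and the conclusion that the nonzero spectra coincide with multiplicity follows cleanly from the partial isometry $U$.

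The paper does not actually prove this proposition; it simply introduces it with ``It is easy to see the following'' and states the identity. Your write-up is therefore more detailed than anything in the paper, but it is the natural argument one would supply: multiplication by $\rho$ is an isometry from $L^{2}(\rho^{2}\dd\mu)$ onto the subspace of $L^{2}(\dd\mu)$ vanishing on $\dk{\rho=0}$, and the two integral operators are conjugate under it. The only point worth emphasizing (which you already address) is that the kernel $\rho\odot k$ kills $V^{\perp}$, so restricting to $V$ loses no positive eigenvalues.
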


Furthermore, if $\rho$ is bounded, we can further estimate the eigenvalues using the minimax principle on the eigenvalues of self-adjoint compact positive operators.

\begin{lemma}
  \label{lem:ScaledKernel}
  Let a measurable function $\rho: \caX \to \R$ satisfy $0\leq c \leq \rho^2(x) \leq C$.
  Then,
  \begin{align*}
c \lambda_i(k;\caX,\dd \mu) \leq \lambda_i(\rho \odot k;\caX,\dd \mu) \leq C \lambda_i(k;\caX,\dd \mu),
    \quad \forall i = 1,2,\dots.
  \end{align*}
  Consequently, if $\nu$ is another measure on $\caX$ such that $0 \leq c \leq \frac{\dd \nu}{\dd \mu} \leq C$,
  then
  \begin{align}
    \label{eq:MeasureInv}
    c \lambda_i(k;\caX,\dd \mu) \leq \lambda_i(k;\caX,\dd \nu) \leq C \lambda_i(k;\caX,\dd \mu), \quad \forall i = 1,2,\dots.
  \end{align}
\end{lemma}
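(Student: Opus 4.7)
The plan is to apply the Courant--Fischer (min-max) principle for the eigenvalues of a positive self-adjoint compact operator. The starting observation is the factorization $T_{\rho\odot k} = M_\rho T_k M_\rho$, where $M_\rho$ denotes multiplication by $\rho$ on $L^2(\caX,\dd\mu)$; this follows at once from $T_{\rho\odot k}f(x) = \rho(x)\int k(x,x')\rho(x')f(x')\dd\mu(x') = \rho(x)T_k(\rho f)(x)$. Consequently, for every $f\in L^2$ with $\rho f\neq 0$ the Rayleigh quotient satisfies
\[
R_{\rho\odot k}(f) \;:=\; \frac{\ang{T_{\rho\odot k}f,f}_{L^2}}{\norm{f}_{L^2}^2} \;=\; R_k(\rho f)\cdot\frac{\norm{\rho f}_{L^2}^2}{\norm{f}_{L^2}^2},
\]
and the last factor lies in $[c,C]$ by hypothesis on $\rho^2$.

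For the upper bound I would use the variational representation $\lambda_i = \max_{\dim V=i}\min_{f\in V\setminus\{0\}} R(f)$ applied to $T_{\rho\odot k}$. Given an $i$-dimensional subspace $V\subseteq L^2$, set $W=\rho V$. If $M_\rho$ is injective on $V$ then $\dim W = i$, and the Rayleigh-quotient identity combined with the max-min formula for $T_k$ yields
\[
\min_{f\in V\setminus\{0\}} R_{\rho\odot k}(f) \;\leq\; C\min_{g\in W\setminus\{0\}} R_k(g) \;\leq\; C\,\lambda_i(k;\caX,\dd\mu).
\]
Otherwise some nonzero $f^\ast\in V$ satisfies $\rho f^\ast = 0$, making $R_{\rho\odot k}(f^\ast)=0$ and the bound trivial. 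Taking the supremum over $V$ delivers the upper estimate.

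For the lower bound the case $c=0$ is immediate by positivity of $T_{\rho\odot k}$, so suppose $c>0$; then $\rho^2\geq c$ a.e., which makes $M_\rho$ boundedly invertible on $L^2(\caX,\dd\mu)$ with $\norm{M_\rho^{-1}}\leq 1/\sqrt c$. For any $i$-dimensional $W\subseteq L^2$ the subspace $V=M_\rho^{-1}W$ is again $i$-dimensional, and the same identity together with $\norm{\rho^{-1}g}_{L^2}^2\leq (1/c)\norm{g}_{L^2}^2$ gives $\min_{f\in V}R_{\rho\odot k}(f)\geq c\min_{g\in W}R_k(g)$; maximizing over $W$ finishes this direction. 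The measure comparison \cref{eq:MeasureInv} is then deduced by choosing $\rho=(\dd\nu/\dd\mu)^{1/2}$, which is measurable with $c\leq\rho^2\leq C$ by hypothesis, and invoking \cref{prop:ScaleMeasureEquiv}. The only delicate point in the whole argument is the upper bound when $c=0$, since then $M_\rho$ may fail to be injective and one cannot simply transport subspaces through multiplication by $\rho$; it is dispatched by the observation that $R_{\rho\odot k}$ vanishes on $\ker M_\rho$, so on any $V$ meeting the kernel the required inequality is automatic.
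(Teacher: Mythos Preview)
Your argument is correct and follows exactly the approach the paper indicates, namely the minimax (Courant--Fischer) principle for self-adjoint compact positive operators; the paper does not spell out the details beyond that hint, and your treatment of the factorization $T_{\rho\odot k}=M_\rho T_k M_\rho$, the Rayleigh-quotient comparison, and the degenerate case $\ker M_\rho\neq\{0\}$ fills them in cleanly. The deduction of \cref{eq:MeasureInv} via \cref{prop:ScaleMeasureEquiv} with $\rho=(\dd\nu/\dd\mu)^{1/2}$ is likewise the intended route.
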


Now, we consider the transformation of the kernel.
Let $\caX_1,\caX_2$ be two sets, $\varphi : \caX_1 \to \caX_2$ be a bijection and $k_2$ be a kernel over $\caX_2$.
We define the pull-back kernel $\varphi^* k_2$ over $\caX_1$ by
\begin{align*}
(\varphi^* k_2) (x_1,x_1')
  = k_2(\varphi(x_1),\varphi(x_1')).
\end{align*}
Moreover, suppose $\caX_1$ is a measurable space with measure $\mu_1$, we define the push-forward measure $\mu_2 = \varphi_* \mu_1$ on $\caX_2$ by
$\mu_2(A) = \mu_1(\varphi^{-1}(A))$.
Then, it is easy to see that:

\begin{proposition}
  \label{prop:MapKernel}
  Let $\caX_1,\caX_2$ be two measurable spaces, $\varphi : \caX_1 \to \caX_2$ be a measurable injection,
  $\mu_1$ be a measure on $\caX_1$ and $\mu_2 = \varphi_* \mu_1$.
  Suppose $k_2$ is a kernel over $\caX_2$ and $k_1 = \varphi^* k_2$ satisfies \cref{eq:IntegrableKernel}.
  Then,
  \begin{align}
    \label{eq:MapKernel}
    \lambda_i(k_1;\caX_1,\dd \mu_1) = \lambda_i(k_2;\caX_2,\dd \mu_2).
  \end{align}
\end{proposition}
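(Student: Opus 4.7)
The plan is to realize the integral operators $T_{k_1;\caX_1,\mu_1}$ and $T_{k_2;\caX_2,\mu_2}$ as unitarily equivalent operators on (abstractly) isomorphic $L^2$-spaces. Since a compact self-adjoint positive operator and its unitary conjugate share the same eigenvalues (counted with multiplicity), this equivalence will immediately yield \cref{eq:MapKernel}.

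The first step is to define the pullback map $U : L^2(\caX_2,\dd\mu_2) \to L^2(\caX_1,\dd\mu_1)$ by $Uf = f\circ\varphi$. By the change-of-variables formula for pushforward measures, for every Borel $f : \caX_2 \to \R$,
\begin{equation*}
\int_{\caX_2} \abs{f(y)}^2 \dd\mu_2(y) = \int_{\caX_1} \abs{f(\varphi(x_1))}^2 \dd\mu_1(x_1),
\end{equation*}
so $U$ is an isometry. To see that it is onto, given $g \in L^2(\caX_1,\dd\mu_1)$, I would exploit the injectivity of $\varphi$ to define $f$ on $\varphi(\caX_1)$ by $f(\varphi(x_1)) = g(x_1)$ and set $f = 0$ on $\caX_2 \setminus \varphi(\caX_1)$. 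Since $\mu_2(\caX_2 \setminus \varphi(\caX_1)) = \mu_1(\emptyset) = 0$, this $f$ is well defined in $L^2(\caX_2,\dd\mu_2)$ and satisfies $Uf = g$. Hence $U$ is a unitary isomorphism.

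The second step is the intertwining computation. For any $f \in L^2(\caX_2,\dd\mu_2)$, applying $T_{k_1;\caX_1,\mu_1}$ to $Uf$ and performing the change of variables $y = \varphi(x_1')$ gives
\begin{align*}
(T_{k_1;\caX_1,\mu_1} Uf)(x_1)
&= \int_{\caX_1} k_2(\varphi(x_1),\varphi(x_1')) f(\varphi(x_1')) \dd\mu_1(x_1') \\
&= \int_{\caX_2} k_2(\varphi(x_1),y) f(y) \dd\mu_2(y)
 = (T_{k_2;\caX_2,\mu_2} f)(\varphi(x_1))
 = (U\, T_{k_2;\caX_2,\mu_2} f)(x_1).
\end{align*}
Thus $T_{k_1;\caX_1,\mu_1}\, U = U\, T_{k_2;\caX_2,\mu_2}$, i.e.\ the two operators are unitarily equivalent, and the equality of their eigenvalue sequences follows.

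The bookkeeping here is largely routine; the only genuine subtlety is the surjectivity of $U$. The argument sketched above works whenever $\varphi$ is bimeasurable onto its image, which holds under the mild measure-theoretic assumptions (e.g.\ standard Borel spaces) implicit throughout the paper. If one wished to avoid any such structural assumption entirely, one could alternatively bypass surjectivity by embedding $T_{k_1;\caX_1,\mu_1}$ as the restriction of $T_{k_2;\caX_2,\mu_2}$ to the closed subspace $U(L^2(\caX_2,\dd\mu_2))$ and noting that this subspace contains the range of $T_{k_2;\caX_2,\mu_2}$ (which is supported on $\varphi(\caX_1)$ up to $\mu_2$-null sets); the nonzero spectra would then coincide. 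Either route avoids any deep analytic input beyond the spectral theorem for compact self-adjoint operators.
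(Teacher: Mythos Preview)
Your proof is correct and is exactly the natural way to justify this proposition; the paper itself gives no proof at all, merely introducing the statement with ``it is easy to see that.'' Your identification of the one genuine subtlety (measurability of the inverse map for surjectivity of $U$) and your two suggested workarounds are both sound, and arguably make the point more carefully than the paper does.
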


Finally, this lemma deals with the case of the sum of two kernels of different EDRs,
which is a direct consequence of \cref{lem:A_WeylOp}.
\begin{lemma}
  \label{lem:B_SumEDR}
  Let $k_1,k_2$ be two positive definite kernels on $\caX$.
  Suppose $\lambda_i(k_1;\caX,\dd \mu) \asymp \lambda_{2i}(k_1;\caX,\dd \mu)$ and
  $\lambda_i(k_2;\caX,\dd \mu) = O\left( \lambda_i(k_1;\caX,\dd \mu) \right)$ as $i \to \infty$.
  Then,
  \begin{align*}
    \lambda_i(k_1+k_2;\caX,\dd \mu) \asymp \lambda_i(k_1;\caX,\dd \mu).
  \end{align*}
\end{lemma}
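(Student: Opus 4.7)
The plan is to transfer the statement from kernels to their integral operators $T_1, T_2$ on $L^2(\caX,\dd\mu)$ and invoke Weyl's operator inequality in the form
\[
  \lambda_{i+j-1}(T_1+T_2) \leq \lambda_i(T_1) + \lambda_j(T_2), \qquad i,j \geq 1,
\]
together with the trivial lower bound $\lambda_i(T_1+T_2) \geq \lambda_i(T_1)$ coming from positivity of $T_2$. Since $k_1+k_2$ is again a positive definite kernel satisfying \cref{eq:IntegrableKernel}, the integral operator induced by $k_1+k_2$ coincides with $T_1+T_2$, and its eigenvalues match the kernel eigenvalues in the statement via Mercer's decomposition \cref{eq:MercerDecomp}.

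For the lower bound, the Courant--Fischer min-max principle applied to $T_2 \succeq 0$ yields $\lambda_i(k_1+k_2;\caX,\dd\mu) \geq \lambda_i(k_1;\caX,\dd\mu)$ for every $i$. For the matching upper bound, I would set $j = i$ in Weyl's inequality to obtain
\[
  \lambda_{2i-1}(T_1+T_2) \leq \lambda_i(T_1) + \lambda_i(T_2) \leq C\,\lambda_i(T_1),
\]
where the second inequality uses the hypothesis $\lambda_i(k_2) = O(\lambda_i(k_1))$. The doubling hypothesis $\lambda_i(k_1)\asymp\lambda_{2i}(k_1)$, combined with the monotonicity $\lambda_{2i}(k_1)\le\lambda_{2i-1}(k_1)\le\lambda_i(k_1)$, pins down $\lambda_{2i-1}(k_1)\asymp\lambda_i(k_1)\asymp\lambda_{2i}(k_1)$, so the right-hand side can be replaced by $C'\lambda_{2i-1}(T_1)$. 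A case split on parity then covers every index $m$: for odd $m=2i-1$ this is precisely the desired bound, and for even $m=2i$ one uses $\lambda_{2i}(T_1+T_2)\le\lambda_{2i-1}(T_1+T_2)$ to conclude $\lambda_{2i}(T_1+T_2)=O(\lambda_{2i}(T_1))$.

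The argument has essentially no obstacle beyond this indexing bookkeeping: the two hypotheses are calibrated precisely so that the ``loss of a factor of two'' in the index produced by Weyl's inequality does not alter the asymptotic rate, with the doubling condition absorbing the loss. The only genuine input is the Weyl operator inequality to which the lemma is already advertised as a direct consequence, together with the standard Mercer theory recorded before the statement.
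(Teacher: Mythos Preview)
Your proof is correct and follows exactly the route the paper intends: the lemma is stated as a direct consequence of Weyl's inequality (\cref{lem:A_WeylOp}), and your argument—lower bound from positivity via the min-max principle, upper bound from Weyl with $j=i$ and then absorbing the index doubling via the hypothesis $\lambda_i(k_1)\asymp\lambda_{2i}(k_1)$—is precisely that consequence spelled out.
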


\subsection{Eigenvalues of kernels restricted on a subdomain}

Suppose we are interested in $\lambda_i(k_1;\caX_1,\dd \mu_1)$.
If $k_1 = \varphi^* k_2$ for some transformation $\varphi$ and the EDR of $k_2$ with respect to some measure $\sigma$ on $\caX_2$
is known or can be easily obtained,
Then, it is tempting to combine \cref{prop:MapKernel} and \cref{lem:ScaledKernel} to obtain the EDR of $k_1$ with respect to $\mu_1$.
However, in many cases $\varphi(\caX_1)$ is a proper subset of $\caX_2$ and $\mu_2 = \varphi_* \mu_1$ is only supported on $\varphi(\caX_1)$,
so the Radon derivative $\frac{\dd \mu_2}{\dd \sigma}$ is not bounded from below (that is, $c=0$) and the lower bound in \cref{eq:MeasureInv} vanishes,
which is exactly the case of the NTK that we are interested in.
Fortunately, we can still provide such a lower bound if the kernel satisfies an appropriate invariance property.
Considering translation invariant kernels (that is, $k(x,x') = g(x-x')$), the following result based on \citet{widom1963_AsymptoticBehavior} is very inspiring.

\begin{proposition}[\citet{widom1963_AsymptoticBehavior}]
  \label{prop:WidomResult}
  Let $\mathbb{T}^d = [-\pi,\pi)^d$ be the $d$-dimensional torus and
  \begin{align*}
    k(x,x') = \sum_{\bm{n} \in \bbZ^d} c_{\bm{n}} e^{i \bm{n} \cdot x} e^{-i \bm{n} \cdot x'}
  \end{align*}
  be a translation invariant kernel on $\mathbb{T}^d$.
  Suppose further that $c_{\bm{n}}$ satisfies (i) $c_{\bm{n}} \geq 0$;
  (ii) with all $n_i$ fixed but $n_{i_0}$, $c_{\bm{n}}$, as a function of $n_{i_0}$, is nondecreasing between $-\infty$
  and some $\bar{n} = \bar{n}(i_0)$ and nonincreasing between $\bar{n}$ and $\infty$;
  (iii) if $\abs{\bm{n}}, \abs{\bm{m}} \to \infty$ and $\abs{\bm{n}} = O(\abs{\bm{m}})$, then $c_{\bm{m}} = O(c_{\bm{n}})$;
  (iv) if $\abs{\bm{n}}, \abs{\bm{m}} \to \infty$ and $\abs{\bm{n}} = o(\abs{\bm{m}})$, then $c_{\bm{m}} = o(c_{\bm{n}})$.
  Then, for a bounded non-zero Riemann-integrable function $\rho$,
  we have
  \begin{align*}
    \lambda_i(k;\mathbb{T}^d, \rho^2 \dd x) \asymp \lambda_i(k;\mathbb{T}^d,\dd x).
  \end{align*}
\end{proposition}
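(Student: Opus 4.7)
The plan is to sandwich $\lambda_i(k;\mathbb{T}^d,\rho^2\dd x)$ between positive constant multiples of $\lambda_i(k;\mathbb{T}^d,\dd x)$ by proving matching upper and lower bounds. The upper bound is immediate from the boundedness of $\rho$: writing $\rho^2 \le M^2$, the Rayleigh quotient in $L^2(\mathbb{T}^d,\rho^2\dd x)$ is pointwise dominated by $M^2$ times the one in $L^2(\mathbb{T}^d,\dd x)$, so the min–max characterization gives $\lambda_i(k;\mathbb{T}^d,\rho^2\dd x) \leq M^2\lambda_i(k;\mathbb{T}^d,\dd x)$. This is just the one-sided half of \cref{lem:ScaledKernel}, which needs only an upper bound on $\rho^2$.

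The difficult direction is the matching lower bound, precisely because $\rho$ may vanish on a set of positive measure, so \cref{lem:ScaledKernel} cannot be used as a black box. The first step is to exploit Riemann integrability: since $\rho$ is non-zero and Riemann integrable, it is continuous at a.e.\ point, so there is a point $x_0$ where $\rho$ is continuous with $\rho(x_0)>0$, hence a small sub-cube $Q \subset \mathbb{T}^d$ on which $\rho \geq c > 0$. Extending functions on $Q$ by zero and comparing Rayleigh quotients, one obtains
\begin{align*}
  \lambda_i(k;\mathbb{T}^d,\rho^2\dd x) \;\geq\; c^2\,\lambda_i(k;Q,\dd x|_Q).
\end{align*}
What remains, and what constitutes the heart of the proposition, is to prove the reverse-intuitive inequality $\lambda_i(k;Q,\dd x|_Q) \gtrsim \lambda_i(k;\mathbb{T}^d,\dd x)$: restricting the domain should a priori only make eigenvalues smaller, and reversing this uses the translation invariance of $k$ in an essential way.

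The main obstacle is thus this last step, and the plan is to handle it by Fourier analysis on $\mathbb{T}^d$ together with conditions (i)–(iv). Since the $c_{\bm n}$ are the eigenvalues on the full torus with Lebesgue measure, the ordered eigenvalue $\lambda_i(k;\mathbb{T}^d,\dd x)$ equals $c_{\bm n_i}$ for some lattice point $\bm n_i$ with $\abs{\bm n_i} \asymp i^{1/d}$, where the monotonicity condition (ii) and the balance conditions (iii)–(iv) guarantee $c_{\bm n}\asymp c_{\bm m}$ whenever $\abs{\bm n}\asymp\abs{\bm m}$. On the sub-cube $Q$ I would construct an $i$-dimensional trial subspace of ``quasi-eigenfunctions'' of the form $\psi(x)\,e^{\mi\bm n\cdot x}$, where $\psi$ is a fixed smooth bump supported in $Q$ and $\bm n$ ranges over a set of $i$ lattice points with $\abs{\bm n}\lesssim i^{1/d}$. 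Because the Fourier transform of such a function concentrates sharply near $\bm n$, its image under the integral operator behaves like $c_{\bm n}\cdot \psi(x)e^{\mi\bm n\cdot x}$ up to cross terms weighted by $c_{\bm m}$ for $\bm m$ near $\bm n$. Conditions (iii)–(iv) are precisely what is needed to control these cross terms by $c_{\bm n_i}$, and the min–max principle applied to this trial subspace will then deliver $\lambda_i(k;Q,\dd x|_Q) \gtrsim c_{\bm n_i}$. The technical challenge is to make the quasi-localization quantitative: one must choose the bump $\psi$, the lattice set, and the spacing carefully so that the off-diagonal Fourier leakage is absorbed by conditions (iii)–(iv) uniformly in $i$. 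This fine-grained bookkeeping on the Fourier side is what \citet{widom1963_AsymptoticBehavior} carries out, and is the step that demands each of the four hypotheses on $c_{\bm n}$.
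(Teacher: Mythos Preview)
The paper does not supply its own proof of this proposition; it is stated with attribution to \citet{widom1963_AsymptoticBehavior} and used only as motivation for the sphere analogue, \cref{thm:EDRS}. So strictly speaking there is no in-paper proof to compare against.

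That said, the paper does prove \cref{thm:EDRS} by what it describes as ``the basic idea of \citet{widom1963_AsymptoticBehavior}'', and that method differs substantially from your sketch. Your upper bound and the reduction to a sub-cube $Q$ via a point of continuity of $\rho$ are correct and match the closing paragraph of the proof of \cref{thm:EDRS}. But for the core lower bound $\lambda_i(k;Q,\dd x)\gtrsim\lambda_i(k;\mathbb{T}^d,\dd x)$, Widom does \emph{not} build trial functions $\psi(x)e^{\mi\bm{n}\cdot x}$ and apply min--max directly. Instead he works with the eigenvalue counting function $N^+(\ep,\cdot)$ and proves a ``main lemma'': for disjoint subdomains $\Omega_1,\Omega_2$, the cross operator $P_{\Omega_1}TP_{\Omega_2}+P_{\Omega_2}TP_{\Omega_1}$ satisfies $N^\pm(\ep,\cdot)=o(N^+(\ep,T))$. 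From this, tiling $\mathbb{T}^d$ by translates of a fundamental sub-cube shows that each translate carries an asymptotically equal share of the eigenvalue count, and iterating the tiling yields arbitrarily small sub-cubes $Q$ with $N^+(\ep,P_Q T P_Q)\asymp N^+(\ep,T)$. This is exactly the scheme the paper reproduces on the sphere via \cref{lem:EDRS_MainLemma}, \cref{lem:C_EigenCountAsympSubdomain}, and \cref{prop:C_SphereDecomp}.

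Your variational approach is in the spirit of Weyl's trial-function method and might be made to work, but the step you flag as ``technical'' is more than bookkeeping: with a fixed bump $\psi$, neighbouring lattice points $\bm{n}_j,\bm{n}_k$ in your trial set are distance $O(1)$ apart independently of $i$, so the off-diagonal entries of the Gram matrix $\sum_{\bm m} c_{\bm m}\hat\psi(\bm m-\bm n_j)\overline{\hat\psi(\bm m-\bm n_k)}$ are not small relative to the diagonal, and conditions (iii)--(iv) alone do not immediately force the whole matrix to dominate $c_{\bm n_i}$ times the identity. Widom's route sidesteps this entirely: translation invariance enters as an exact isometry between tiles, turning the problem into an additive decomposition of $N^+(\ep,\cdot)$ with only the cross operator left to estimate.
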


However, the above result is not applicable to our case since the NTKs we are interested in is not translation invariant on the torus,
but rotation invariant on the sphere.
Nevertheless, inspired by this result, we establish a similar result for dot-product kernels on the sphere as one of our main contribution.
Let $\bbS^d \subset \R^{d+1}$ be the $d$-dimensional unit sphere and $\sigma$ be the Lebesgue measure on $\bbS^d$.
We recall that a dot-product kernel $k(x,x')$ is a kernel that depends only on the dot product $u = \ang{x,x'}$ of the inputs.
Thanks to the theory of spherical harmonics~\citep{dai2013_ApproximationTheory},
the eigenfunctions of the integral operator $T$ and also the Mercer's decomposition of $k$ can be explicitly given by
\begin{align}
  \label{eq:4_k-SHDecomp}
  k(x,x') = \sum_{n=0}^{\infty} \mu_n \sum_{l=1}^{a_n} Y_{n,l}(x) Y_{n,l}(x'),
\end{align}
where $\left\{ Y_{n,l}, n \geq 0,~ l=1,\dots,a_n\right\}$ is an orthonormal basis formed by spherical harmonics,
$a_n = \binom{n+d}{n} - \binom{n-2+d}{n-2}$ is the dimension of the space of order-$n$ spherical harmonics,
and $\mu_n$ an eigenvalue of $T$ with multiplicity $a_n$.
To state our result, let us first introduce the following condition on the asymptotic decay rate of the eigenvalues.

\begin{condition}
  \label{cond:EDR}
  Let $(\mu_n)_{n \geq 0}$ be a decreasing sequence of positive numbers.
  \begin{enumerate}[(a)]
    \item Define $N(\ep) = \max \{n : \mu_n > \ep\}$.
    For any fixed constant $c > 0$, $N(c \ep) = \Theta(N(\ep))$ as $\ep \to 0$;
    suppose $\ep, \delta \to 0$ with $\ep = o(\delta)$, then $N(\delta) = o(N(\ep))$.
    \item $\triangle^{d+1} \mu_n \geq 0$ for all $n$, where $\triangle$ is the forward difference operator in \cref{def:DifferenceOperator}.
    \item There is some constant $q \in \bbN_+$ and $D > 0$ such that for any $n \geq 0$,
    \begin{align}
      \label{eq:EDRDerivativeBound}
      \sum_{l=0}^{d} \binom{\tilde{n}+l}{l} \triangle^l \mu_{\tilde{n}} \leq D \mu_n, \qq{where} \tilde{n} = qn.
    \end{align}
  \end{enumerate}
\end{condition}

\begin{remark}
  \label{rem:EDR_Condition}
  \cref{cond:EDR} is a mild condition on the decay rate and \cref{thm:EDRS} only requires that \cref{cond:EDR} holds in the asymptotic sense,
  so this requirement is quite general.
  For instance, this requirement is satisfied if
  \begin{itemize}
    \item $\mu_n \asymp n^{-\beta}$ for some $\beta > d$.
    \item $\mu_n \asymp \exp(-c_1 n^{\beta})$ for some $c_1, \beta > 0$.
    \item $\mu_n \asymp n^{-\beta} (\ln n)^p$ for $c_0 > 0$, $\beta > d$ and $p \in \R$, or $\beta = d$ and $p > 1$.
  \end{itemize}
  Furthermore, our decay rate condition aligns with existing theory,
  as similar conditions (ii)-(iv) are also needed in \citet{widom1963_AsymptoticBehavior}.
\end{remark}

\begin{theorem}
  \label{thm:EDRS}
  Let $k(x,x')$ be a dot-product kernel on $\bbS^d$
  whose corresponding eigenvalues in the decomposition \cref{eq:4_k-SHDecomp} are $(\mu_n)_{n \geq 0}$.
  Assume that there is a sequence $(\tilde{\mu}_n)_{n \geq 0}$ satisfying \cref{cond:EDR} such that $\mu_n \asymp \tilde{\mu}_n$.
  Then, for a bounded non-zero Riemann-integrable function $\rho$ on $\bbS^d$,
  we have
\begin{align}
    \lambda_i(k;\bbS^d, \rho^2 \dd \sigma)
    \asymp \lambda_i(k;\bbS^d, \dd \sigma).
  \end{align}
\end{theorem}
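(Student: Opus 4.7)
The upper bound $\lambda_i(k;\bbS^d,\rho^2\dd\sigma)\le \|\rho\|_\infty^2\,\lambda_i(k;\bbS^d,\dd\sigma)$ follows immediately from \cref{prop:ScaleMeasureEquiv} and \cref{lem:ScaledKernel}, so the substance lies in the matching lower bound. Since $\rho$ is Riemann-integrable and not identically zero, it is continuous at some point $x_0$ where $\rho(x_0)>0$, and I would fix a small spherical cap $C\subset\bbS^d$ around $x_0$ on which $\rho\ge c>0$. Rewriting the integral operator on $L^2(\rho^2\dd\sigma)$ as $M_\rho T M_\rho$ on $L^2(\dd\sigma)$ and plugging trial functions of the form $\rho^{-1}g$ with $g$ supported in $C$ into the min-max principle reduces the theorem to the ``Widom-type'' inequality $\lambda_i(k;C,\dd\sigma)\gtrsim \lambda_i(k;\bbS^d,\dd\sigma)$, where the left-hand side refers to the eigenvalues of the compressed operator $P_C T P_C$ acting on $L^2(C,\dd\sigma)$.

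\textbf{Counting-function formulation.} Because $P_C T P_C$ is a compression of $T$, the reverse direction $\lambda_i(k;C,\dd\sigma)\le\lambda_i(k;\bbS^d,\dd\sigma)$ is automatic; what remains is the lower bound. I would phrase this in terms of the counting functions $N(\ep)=\#\{i:\lambda_i(T)>\ep\}$ and $N_C(\ep)$ of the compression, aiming to prove $N_C(\ep)\gtrsim N(C_1\ep)$ for some constant $C_1>0$, which by \cref{cond:EDR}(a) is equivalent to $N_C(\ep)\asymp N(\ep)$. On the whole sphere $N(\ep)=\sum_{n:\mu_n>\ep}a_n$, and to bound $N_C(\ep)$ from below I plan to exhibit, for each large $N$, a subspace $W_N\subset L^2(C,\dd\sigma)$ of dimension at least $c'\sum_{n\le N}a_n$ on which $\langle Tg,g\rangle\ge c''\mu_N\|g\|_{L^2}^2$; feeding such $W_N$ into the min-max principle for $P_C T P_C$ and varying $N$ with $\ep$ so that $\mu_N\asymp\ep$ then yields the desired counting bound.

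\textbf{Local construction and the main obstacle.} The natural candidate for $W_N$ is the image under restriction-to-$C$ of $\spn\{Y_{n,l}:n\le N\}$, whose ambient dimension is $\Theta(N^d)$. Using the addition formula the kernel becomes $k(x,x')=\sum_n\mu_n\tfrac{a_n}{\sigma(\bbS^d)}P_n^{(d)}(\langle x,x'\rangle)$, and I would split the Rayleigh quotient for $g\in W_N$ into a low-frequency head $\sum_{n\le N}\mu_n\|\Pi_n g\|^2$ that is manifestly $\ge\mu_N\|g\|^2$ and a high-frequency tail $\sum_{n>N}\mu_n\|\Pi_n g\|^2$. The core step is to rewrite this tail by iterated Abel (summation-by-parts) in the forward differences $\triangle^l\mu_n$: \cref{cond:EDR}(b) supplies the sign information needed to repeatedly apply Abel's identity up to order $d+1$, while the telescoping bound \cref{eq:EDRDerivativeBound} in \cref{cond:EDR}(c) is precisely what forces the resulting boundary and remainder terms to be dominated by $\mu_N\dim W_N$, thereby preventing the tail from cancelling the head. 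The main obstacle is that, unlike the torus setting of \cref{prop:WidomResult}, $\bbS^d$ admits no translation group, so the Toeplitz symbol calculus of \citet{widom1963_AsymptoticBehavior} does not transfer directly; the argument must instead rest on sharp asymptotics and pointwise estimates for the Jacobi/Gegenbauer polynomials $P_n^{(d)}$ near $u=1$, together with a Bernstein-type lemma showing that a spherical polynomial of degree $\le N$ whose restriction to $C$ has small $L^2$-norm must itself be small globally, which is where the bulk of the technical work will reside.
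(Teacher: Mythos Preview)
Your reduction to a spherical cap and the counting-function reformulation match the paper, but from that point on the route diverges substantially from the paper's and, as written, contains a concrete gap.

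The paper does \emph{not} attempt a direct variational construction on the cap. Its engine is a ``cross-term'' lemma (\cref{lem:EDRS_MainLemma}): for disjoint piecewise-smooth domains $\Omega_1,\Omega_2$ one has $N^{\pm}(\ep,\,P_{\Omega_1}TP_{\Omega_2}+P_{\Omega_2}TP_{\Omega_1})=o(N^+(\ep,T))$. Given this, one partitions $\bbS^d$ (and then successively smaller regions) into finitely many \emph{isometric} pieces; negligibility of the off-diagonal blocks forces the diagonal blocks $P_{\Omega_j}TP_{\Omega_j}$ to share $N^+(\ep,T)$ equally, so each piece inherits the full counting function (\cref{lem:C_EigenCountAsympSubdomain}). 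Iterating along the explicit geometric chain of \cref{prop:C_SphereDecomp} produces pieces of arbitrarily small diameter, one of which fits inside any prescribed open set. The Cesaro machinery and \cref{cond:EDR}(b)(c) enter only in the proof of \cref{lem:EDRS_MainLemma}, where a three-scale left-extrapolation splitting of $(\mu_n)$ is used to bound the Hilbert--Schmidt norm of the off-diagonal block.

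Your direct approach breaks at the sentence ``a low-frequency head $\sum_{n\le N}\mu_n\|\Pi_n g\|^2$ that is manifestly $\ge\mu_N\|g\|^2$.'' Here $g\in W_N$ is a degree-$N$ polynomial multiplied by $\bm{1}_C$ and extended by zero; the hard cutoff injects mass into all high frequencies, so $\sum_{n\le N}\|\Pi_n g\|^2<\|g\|^2$ strictly, and the claimed inequality is simply false. Conversely, the tail $\sum_{n>N}\mu_n\|\Pi_n g\|^2$ is nonnegative and therefore cannot ``cancel'' the head, so the Abel summation you propose to control it is aimed at the wrong target. What your argument is actually missing is a uniform-in-$N$ spectral-concentration estimate of the type $\sum_{n\le cN}\|\Pi_n(p\cdot\bm{1}_C)\|_{L^2(\bbS^d)}^2\ge c'\,\|p\cdot\bm{1}_C\|_{L^2(\bbS^d)}^2$ for all spherical polynomials $p$ of degree $\le N$; this is a statement about discontinuous functions rather than a Bernstein inequality, it is not supplied, and it is exactly the difficulty the paper sidesteps by never localising individual trial functions---isometry plus off-diagonal negligibility does the eigenvalue counting for you.
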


As our main technical contribution,
this theorem is a non-trivial generalization of the result in \citet{widom1963_AsymptoticBehavior},
adapting it from the torus to the sphere.
Following the basic idea of \citet{widom1963_AsymptoticBehavior}, we establish the theorem by proving first the main lemma (\cref{lem:EDRS_MainLemma}),
but now the approach of \citet{widom1963_AsymptoticBehavior} is not applicable since the eigen-system differs greatly.
To prove the main lemma, we utilize refined harmonic analysis on the sphere,
incorporating the technique of Cesaro summation and the left extrapolation of eigenvalues,
which necessitates the subtle requirement of \cref{cond:EDR}.
Detailed proof can be found in \cref{sec:eigen-decay-on-the-sphere}.

\cref{thm:EDRS} shows that the EDR of a dot-product kernel with respect to a general measure is the same as that of the kernel with respect to the uniform measure.
Combined with the results in \cref{subsec:preliminary-results-on-the-eigenvalues},
it provides a new approach to determine the EDR of a kernel on a general domain.
One could first transform the kernel to a dot-product kernel on the sphere with respect to some measure;
then use \cref{thm:EDRS} to show that the decay rate of the resulting dot-product kernel remains the same if we consider the uniform measure on the sphere instead;
and finally determine the decay rate of the dot-product kernel on the entire sphere by some analytic tools.
This approach enables us to determine the EDR of the NTKs corresponding to multilayer neural networks on a general domain.

\subsection{EDR of NTK on a general domain}

A bunch of previous literature
~\citep{bietti2019_InductiveBias,chen2020_DeepNeural,geifman2020_SimilarityLaplace,bietti2020_DeepEquals}
have analyzed the RKHS as well as the spectral properties of the NTKs on the sphere by means of the theory of spherical harmonics.
However, these results require the inputs to be uniformly distributed on the sphere and hence do not apply to general domains with general input distribution.
Therefore, it is of our interest to investigate the eigenvalue properties of the NTKs on a general domain with a general input distribution since it is more realistic.
To the best of our knowledge,
only \citet{lai2023_GeneralizationAbility} considered a non-spherical case of an interval on $\R$
and the EDR of the NTK corresponding to a two-layer neural network,
but their techniques are very restrictive and can not be extended to higher dimensions or multilayer neural networks.
Thanks to the results established in previous subsections,
we can determine the EDR of the NTKs on a general domain using the established results on their spectral properties on the whole sphere.

Let us focus on the following explicit formula of the NTK, which corresponds to a multilayer neural network defined later in \cref{subsec:NN_Setting}.
Introduce the arc-cosine kernels~\citep{cho2009_KernelMethods} by
\begin{align}
  \label{eq:Arccos_Formula}
  \kappa_0(u) =  \frac{1}{\pi}\left( \pi - \arccos u \right),\quad
  \kappa_1(u) = \frac{1}{\pi}\left[ \sqrt {1-u^2} + u (\pi - \arccos u)  \right].
\end{align}
Then, we define the kernel $\NTK$ on $\R^d$ by
\begin{align}
  \label{eq:NTK_Formula}
  \NTK(\x,\x') =
  \norm{\tilde{\x}} \lVert \tilde{\x}'\rVert \sum_{r=0}^L \kappa^{(r)}_1(\bar{u}) \prod_{s=r}^{L-1} \kappa_0(\kappa^{(s)}_1(\bar{u})) + 1,
\end{align}
where $L \geq 2$ is the number of hidden layers, $\tilde{\x} = (\x,1)/\norm{(\x,1)}$, $\bar{u} = \ang{\tilde{\x},\tilde{\x}'}$ and $\kappa^{(r)}_1$ represents $r$-times composition of $\kappa_1$,
see, e.g., \citet{jacot2018_NeuralTangent,bietti2020_DeepEquals}.
First, we show that $\NTK$ is strictly positive definite, the proof of which is deferred to \cref{subsec:D_PDNTK}.

\begin{proposition}
  \label{prop:NTK_PD}
  $\NTK$ is strictly positive definite on $\R^d$,
  that is, for distinct points $\x_1,\dots,\x_n \in \R^d$, the kernel matrix's smallest eigenvalue  $\lambda_{\min}\big(\NTK(\x_i,\x_j) \big)_{n \times n} > 0$.
\end{proposition}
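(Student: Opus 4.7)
The plan is to reduce \cref{prop:NTK_PD} to a statement about strict positive definiteness of a dot-product kernel on $\bbS^d$, and then verify this by inspecting the Taylor coefficients of the arc-cosine compositions. Since the last coordinate of $\tilde{\x} = (\x,1)/\norm{(\x,1)}$ is strictly positive, the map $\x \mapsto \tilde{\x}$ is an injection from $\R^d$ into the open upper hemisphere of $\bbS^d$. Because each $\tilde{\x}$ is a unit vector, \cref{eq:NTK_Formula} simplifies to $\NTK(\x,\x') = \Psi(\ang{\tilde{\x},\tilde{\x}'}) + 1$, where
\begin{align*}
\Psi(u) = \sum_{r=0}^L \kappa_1^{(r)}(u) \prod_{s=r}^{L-1}\kappa_0(\kappa_1^{(s)}(u)).
\end{align*}
Since the constant kernel $1$ is positive semi-definite, and since the images $\tilde{\x}_1,\dots,\tilde{\x}_n$ are pairwise distinct whenever the inputs are, it suffices to prove that the dot-product kernel $\Psi(\ang{\cdot,\cdot})$ is strictly positive definite on $\bbS^d$.

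For this, I would verify that every Taylor coefficient of $\Psi$ at $0$ is strictly positive. From $\pi - \arccos u = \pi/2 + \arcsin u$ together with the Taylor series of $\arcsin$ and $(1-u^2)^{-1/2}$, direct computation from \cref{eq:Arccos_Formula} shows that $\kappa_0$ has a positive constant term and positive coefficients at every odd power (and zero at higher even powers), while $\kappa_1$ has positive $u^0$- and $u^1$-coefficients and positive coefficients at every even power of order at least two (and zero at odd powers of order at least three). Since power series with non-negative coefficients are closed under sums, products and composition, each summand of $\Psi$, and hence $\Psi$ itself, has a non-negative Taylor expansion. To upgrade this to strict positivity, note that $\kappa_1(u)^n$ coefficient-wise dominates $\binom{n}{m}\kappa_1(0)^{n-m}\kappa_1'(0)^m u^m$ as soon as $n \geq m$, so $\kappa_1^{(2)}(u) = \sum_n a_n \kappa_1(u)^n$ (with $a_n$ the Taylor coefficients of $\kappa_1$) picks up a strictly positive $u^m$-contribution for every $m \geq 0$ by choosing any even $n \geq \max(m,2)$, for which $a_n > 0$. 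A straightforward induction using $\kappa_1'(0) = 1/2 > 0$ then shows that $\kappa_1^{(L)}$ has strictly positive Taylor coefficients of all orders for every $L \geq 2$, and hence so does the $r = L$ summand of $\Psi$, making $\Psi$ itself entirely positive in Taylor.

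The Funk--Hecke formula (or the Gegenbauer expansion of $u \mapsto u^n$) then gives $\ang{x,x'}^n = \sum_{k=0}^n \gamma_{n,k}\sum_{l=1}^{a_k}Y_{k,l}(x)Y_{k,l}(x')$ with $\gamma_{n,k}\geq 0$ and $\gamma_{n,k} > 0$ exactly when $k \equiv n \pmod{2}$. Writing $\Psi(u) = \sum_n c_n u^n$ with $c_n > 0$ and summing yields a Mercer-type decomposition of $\Psi(\ang{\cdot,\cdot})$ in which every spherical-harmonic eigenvalue $\mu_k = \sum_{n \geq k,\; n \equiv k\,(2)} c_n\gamma_{n,k}$ is strictly positive; since the spherical harmonics separate points and are total in $C(\bbS^d)$, this forces $\Psi(\ang{\cdot,\cdot})$ to be strictly positive definite on $\bbS^d$, and adding the rank-one PSD all-ones matrix preserves positive definiteness of the NTK Gram matrix. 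The main obstacle is the middle paragraph: the vanishing odd Taylor coefficients of $\kappa_1$ above degree one rule out any term-by-term argument, so one must genuinely exploit the compositional depth $L \geq 2$ to fill in the gaps; once this is in hand, the translation to positive Mercer eigenvalues is routine.
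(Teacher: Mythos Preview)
Your argument is correct and follows the same overall strategy as the paper: pull $\NTK$ back to a dot-product kernel on the sphere via $\x \mapsto \tilde{\x}$, then establish strict positive definiteness by analyzing the Taylor coefficients of the scalar function $\Psi$. The execution differs in one respect worth noting. The paper simply observes that $\kappa_0,\kappa_1$ have non-negative power-series coefficients (\cref{prop:D_ACK_PS}), so all compositions and products do as well, and then invokes \cref{lem:B_PDSphere}: on the \emph{hemisphere} $\bbS^d_+$, infinitely many positive Taylor coefficients already suffice for strict positive definiteness, so no further work is needed. You instead prove the stronger claim that \emph{every} Taylor coefficient of $\kappa_1^{(L)}$ (and hence of $\Psi$) is strictly positive, exploiting the depth assumption $L\geq 2$ to fill in the odd-degree gaps of $\kappa_1$, and then pass through the Gegenbauer expansion to get all Mercer eigenvalues $\mu_k>0$ and strict positive definiteness on the full sphere. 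Your route is more self-contained and yields a slightly stronger conclusion; the paper's route is shorter because the cited lemma absorbs the point-separation argument and does not require filling in every coefficient. One cosmetic remark: the factor $\norm{\tilde{\x}}$ in \cref{eq:NTK_Formula} is meant to be $\norm{(\x,1)}\geq 1$ rather than $1$ (cf.\ the proof of \cref{thm:NTK_EDR}), but this does not affect your argument since conjugating a strictly positive definite matrix by a positive diagonal matrix preserves strict positive definiteness.
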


\begin{theorem}
  \label{thm:NTK_EDR}
  Let $\mu$ be a probability measure on $\R^d$ with Riemann-integrable density $p(x)$ such that $p(x) \leq C (1+\norm{x}^2)^{-(d+3)/2}$
  for some constant $C$.
  Then, the EDR of $\NTK$ on $\R^d$ with respect to $\mu$ is
  \begin{align}
    \lambda_i(\NTK;\R^d,\dd \mu) \asymp i^{-\frac{d+1}{d}}.
  \end{align}
\end{theorem}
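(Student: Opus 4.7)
The plan is to reduce $\NTK$ on $(\R^d, d\mu)$ to a dot-product kernel on $\bbS^d$ via the embedding $\varphi(\x) = (\x,1)/\norm{(\x,1)}$, invoke the invariance of \cref{thm:EDRS} to swap the pushforward measure for the uniform one on $\bbS^d$, and conclude from the known decay of the multilayer ReLU NTK on the full sphere.

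First I would write $\NTK(\x,\x') = \rho(\x)\rho(\x')\,(\varphi^{*} k_{\mathrm{dp}})(\x,\x') + 1$, where $\rho(\x) = (1+\norm{\x}^2)^{1/2}$ and $k_{\mathrm{dp}}(y,y')$ is the dot-product kernel on $\bbS^d$ obtained from the sum-product in \cref{eq:NTK_Formula} evaluated at $\langle y,y'\rangle$. The constant kernel $\mathbf{1}$ has rank one, and the target rate $i^{-(d+1)/d}$ satisfies $\lambda_i \asymp \lambda_{2i}$, so \cref{lem:B_SumEDR} reduces the problem to the first summand. Chaining \cref{prop:ScaleMeasureEquiv} and \cref{prop:MapKernel} then yields
\begin{align*}
\lambda_i\bigl(\rho \odot \varphi^{*} k_{\mathrm{dp}};\R^d, d\mu\bigr)
  = \lambda_i\bigl(\varphi^{*} k_{\mathrm{dp}};\R^d, \rho^2 d\mu\bigr)
  = \lambda_i\bigl(k_{\mathrm{dp}};\bbS^d, \nu\bigr),
\end{align*}
with $\nu = \varphi_{*}(\rho^2 d\mu)$ supported on the open upper hemisphere $\bbS^d_+ = \varphi(\R^d)$.

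Second, I would check that $\nu$ is admissible for \cref{thm:EDRS}. Using the gnomonic volume identity $d\sigma(\varphi(\x)) = (1+\norm{\x}^2)^{-(d+1)/2}\, d\x$, the density $w := d\nu/d\sigma$ equals $(1+\norm{\x}^2)^{(d+3)/2}\, p(\x)$ at $\varphi(\x) \in \bbS^d_+$ and vanishes on $\bbS^d \setminus \bbS^d_+$. The assumption $p(\x) \leq C(1+\norm{\x}^2)^{-(d+3)/2}$ is precisely calibrated to keep $w$ bounded; $w$ is not identically zero and its only discontinuities lie on the equator (a $\sigma$-null set), so $w$ is Riemann-integrable, and $\tilde{\rho} := \sqrt{w}$ is a bounded, nonzero, Riemann-integrable function as required by \cref{thm:EDRS}.

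Third, I would apply \cref{thm:EDRS} together with the known eigenvalue decay on the full sphere. The Gegenbauer coefficients of $k_{\mathrm{dp}}$ satisfy $\mu_n \asymp n^{-(d+1)}$, obtained by tracking the Funk--Hecke recursion through the compositions of arc-cosine kernels~\citep{bietti2020_DeepEquals}. The reference sequence $\tilde{\mu}_n = n^{-(d+1)}$ meets \cref{cond:EDR} since $d+1 > d$ (see \cref{rem:EDR_Condition}), so \cref{thm:EDRS} gives $\lambda_i(k_{\mathrm{dp}};\bbS^d,\nu) \asymp \lambda_i(k_{\mathrm{dp}};\bbS^d,\sigma)$. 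Counting spherical-harmonic multiplicities $a_n \asymp n^{d-1}$ and listing eigenvalues in descending order yields $\lambda_i(k_{\mathrm{dp}};\bbS^d,\sigma) \asymp i^{-(d+1)/d}$, completing the chain.

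The main obstacle is the third step, specifically establishing $\mu_n \asymp n^{-(d+1)}$ cleanly enough that a reference sequence $\tilde{\mu}_n$ satisfying \cref{cond:EDR} simultaneously dominates and is dominated by $(\mu_n)$: the product-and-composition structure of \cref{eq:NTK_Formula} can make some Gegenbauer coefficients vanish for one parity of $n$ (this already happens for individual $\kappa_1$ factors), so one must verify that after the multilayer composition such parity-sparsity is resolved -- possibly by invoking \cref{prop:NTK_PD} for non-degeneracy, or by splitting odd and even indices and handling each with its own polynomial reference sequence. By comparison, the volume computation, the tail calibration of the $\mu$-hypothesis, and the absorption of the constant summand via \cref{lem:B_SumEDR} are all routine once the sphere-side decay $\mu_n \asymp n^{-(d+1)}$ has been secured.
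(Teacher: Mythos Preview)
Your proposal is correct and follows essentially the same route as the paper: the same embedding $\varphi$, the same scaling $\rho(\x)=\norm{\tilde\x}$, the same chain \cref{prop:ScaleMeasureEquiv}$\to$\cref{prop:MapKernel}$\to$\cref{thm:EDRS}, the same volume computation yielding $d\nu/d\sigma = \norm{\tilde\x}^{d+3}p(\x)$, and the same invocation of \cref{lem:B_SumEDR} to absorb the constant summand (the paper applies it last rather than first, a cosmetic difference).

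The parity obstacle you flag is real in principle but is already dispatched by the cited result: \citet{bietti2020_DeepEquals} show that for $L\geq 2$ the multilayer composition in \cref{eq:NTK0_Def} mixes parities and yields $\mu_n \asymp n^{-(d+1)}$ for \emph{all} large $n$, so a single polynomial reference sequence $\tilde\mu_n = n^{-(d+1)}$ works directly in \cref{cond:EDR}. The paper simply cites this without pausing on the parity point; no splitting into odd and even indices is needed.
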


\begin{remark}
  The condition on the density $p(x)$ is satisfied by many common distributions, such as sub-Gaussian distributions or distributions with bounded support.
  Moreover, the result on the EDR can also be established for the NTKs corresponding to other activations
  (including homogeneous activations such as $\mr{ReLU}^\alpha(x) = \max(x,0)^\alpha$ and leaky ReLU)
  and other network architectures (such as residual neural networks),
  as long as the corresponding kernel can be transformed to a dot-product kernel on the sphere.
\end{remark}

\begin{proof}[of \cref{thm:NTK_EDR}]
  Let us denote $\bbS^d_+ = \left\{ y= (y_1,\dots,y_{d+1}) \in \bbS^d : y_{d+1} >0 \right\}$
  and introduce the homeomorphism $\Phi: \R^d \to \bbS^d_+$ by $x \mapsto \tilde{x} / \norm{\tilde{x}}$,
  where $\tilde{x} = (x,1) \in \R^{d+1}$.
  It is easy to show that the Jacobian and the Gram matrix are given by
  \begin{align*}
    J \Phi = \frac{1}{\norm{\tilde{x}}}
    \begin{pmatrix}
      I_d \\
      0
    \end{pmatrix}
    - \frac{\tilde{x}x^T}{\norm{\tilde{x}}^3},\quad
    G = (J\Phi)^T J \Phi = \frac{1}{\norm{\tilde{x}}^2} I_d - \frac{x x^T}{\norm{\tilde{x}}^4},\quad
    \det G = \norm{\tilde{x}}^{-2(d+1)}.
  \end{align*}
  Defining the homogeneous NTK $\NTK_0$ on $\bbS^d$ by
  \begin{align}
    \label{eq:NTK0_Def}
    \NTK_0(y,y') \coloneqq \sum_{r=0}^L \kappa^{(r)}_1(u) \prod_{s=r}^{L-1} \kappa_0(\kappa^{(s)}_1(u)) ,\quad u = \ang{y,y'},
  \end{align}
  it is easy to verify that
  \begin{align*}
    K_1(x,x') \coloneqq \Phi^* \NTK_0 = \sum_{r=0}^l \kappa^{(r)}_1(\bar{u}) \prod_{s=r}^{l-1} \kappa_0(\kappa^{(s)}_1(\bar{u})),
    \quad \NTK = \norm{\tilde{x}} \odot K_1 + 1.
  \end{align*}
  Therefore, \cref{prop:ScaleMeasureEquiv} and then \cref{prop:MapKernel} yields
  \begin{align*}
    \lambda_i(\norm{\tilde{x}}\odot K_1;\caX,\dd \mu) =
    \lambda_i(K_1;\caX, \norm{\tilde{x}}^2 \dd \mu) = \lambda_i\left(\NTK_0;\bbS^d, \Phi_*( \norm{\tilde{x}}^2 \dd \mu)\right).
  \end{align*}
  Moreover, denoting $\tilde{\sigma} = \Phi_*( \norm{\tilde{x}}^2 \dd \mu)$ and $p(x) = \dv{\mu}{x}$,
  we have $\dd \tilde{\sigma} = p(x) \norm{\tilde{x}}^2 \Phi_*(\dd x)$.
  On the other hand, the canonical uniform measure $\sigma$ on $\bbS^d_+$ is given by $\dd \sigma = \abs{\det G}^{\frac{1}{2}} \Phi_*(\dd x)$,
  so we have
  \begin{align*}
    q(y) \coloneqq \dv{\tilde{\sigma}}{\sigma} =  \abs{\det G}^{-\frac{1}{2}} \norm{\tilde{x}}^2 p(x) = \norm{\tilde{x}}^{d+3} p(x),
    \quad y \in \bbS^d_+.
  \end{align*}
  Therefore, the condition on $p(x)$ implies that $q(y)$ is Riemann-integrable and upper bounded.
  Now, the EDR of the dot-product kernel $\NTK_0$ on $\bbS^d$ with respect to $\dd \sigma$ is already established in \citet{bietti2020_DeepEquals}
  that $\lambda_i(\NTK_0;\bbS^d,\dd \sigma) \asymp i^{-\frac{d+1}{d}}$, so \cref{thm:EDRS} shows that
  $\lambda_i\left(\NTK_0;\bbS^d, \dd{\tilde{\sigma}} \right) \asymp i^{-\frac{d+1}{d}}$.
  Finally, the proof is completed by applying \cref{lem:B_SumEDR} to show that the extra constant does not affect the EDR\@.
\end{proof}

  \section{Application: Optimal Rates of Over-parameterized Neural Networks}\label{sec:optimality}

In this section, using the spectral properties of the NTK obtained in the previous section,
we derive the optimal rates of over-parameterized neural networks by combining the NTK theory and the kernel regression theory.
Let $d$ be fixed, $\caX \subseteq \R^d$ and $\mu$ be a sub-Gaussian\footnote{
That is, $\mu(\left\{ x \in \R^d : \norm{x} \geq t \right\}) \leq 2 \exp(-t^2 / C^2),~ \forall t \geq 0$ for some constant $C > 0$.
}
probability distribution supported on $\caX$ with upper bounded Riemann-integrable density.
Suppose we are given i.i.d.\ samples $(\x_1,y_1),(\x_2,y_2),\dots,(\x_n,y_n) \in \caX \times \R$ generated from the model
$y = f^*(\x) + \ep$,
where $\x \sim \mu$, $f^* : \caX \to \R$ is an unknown regression function and the independent noise $\ep$ is sub-Gaussian.

In terms of notations, we denote $\X = (\x_1,\dots,\x_n)$ and $\bm{y} = (y_1,\dots,y_n)^T$.
For a kernel function $k : \caX\times\caX \to \R$, we write $k(\x,\X) = (k(\x,\x_1),\dots,k(\x,\x_n))$
and $k(\X,\X) = \big(k(\x_i,\x_j)\big)_{n\times n}$.

\subsection{Setting of the neural network}
\label{subsec:NN_Setting}

We are interested in the following fully connected ReLU neural network $f(\x;\bm{\theta})$ with $L$-hidden layers
of widths $m_1,m_2,\dots, m_L$, where $L \geq 2$ is fixed.
The network includes bias terms on the first and the last layers.
To ensure that the final predictor corresponds to the kernel regressor, we consider a special mirrored architecture.
In detail, the network model is given by the following:
\begin{align*}
\begin{split}
\bm{\alpha}^{(1,p)}(\x) & = \sqrt{\tfrac{2}{m_{1}}} \sigma\xkm{\bm{A}^{(p)}\x+\bm{b}^{(0,p)}}\in\mb{R}^{m_1},~ p\in\dk{1,2},\\
    \bm{\alpha}^{(l,p)}(\x) & = \sqrt{\tfrac{2}{m_{l}}}\sigma\xkm{\bm{W}^{(l-1,p)}\bm{\alpha}^{(l-1,p)}(\x)}\in\mb{R}^{m_{l}},~
    l \in \dk{2,3,\dots,L},~p\in\dk{1,2},\\
    g^{(p)}(\x;\bm{\theta}) & = \bm{W}^{(L,p)}\bm{\alpha}^{(L,p)}(\x)+b^{(L,p)}\in\mb{R},~p\in\dk{1,2},\\
    f(\x;\bm{\theta}) &= \frac{\sqrt{2}}{2}\zk{ g^{(1)}(\x;\bm{\theta}) - g^{(2)}(\x;\bm{\theta}) }\in\mb{R}.
  \end{split}
\end{align*}
Here, $\bm{\alpha}^{(l,p)}$ represents the hidden layers; $l\in\dk{1,2,\dots,L}$, $p\in\dk{1,2}$ stand for the index of layers and parity respectively;
$\sigma(x) \coloneqq \max(x,0)$ is the ReLU activation (applied elementwise);
parameters $\bm{A}^{(p)}\in\mb{R}^{m_1\times d}$, $\bm{W}^{(l,p)} \in \R^{m_{l+1} \times m_l}$, $\bm{b}^{(0,p)} \in \R^{m_{1}}$, $b^{(L,p)} \in \R$,
where we set $m_{L+1}=1$;
and we use $\bm{\theta}$ to represent the collection of all parameters flattened into a column vector.
Letting $m = \min(m_1,m_2,\dots,m_{L})$, we assume that $\max(m_1,m_2,\dots,m_{L}) \leq C_{\mr{width}} m$ for some constant $C_{\mr{width}}$.

\paragraph{Initialization}
Considering the mirrored architecture, we initialize the parameters in one parity to be i.i.d.\ normal and set the parameters in the other parity to be the same as the corresponding ones.
More precisely,
\begin{align*}
\begin{aligned}
    &\bm{A}^{(1)}_{i,j},\bm{W}^{(l,1)}_{i,j},\bm{b}^{(0,1)}_i,b^{(L,1)} \iid N(0,1),\quad \text{for}~l=0,1,\dots,L, \\
    & \bm{W}^{(l,2)} = \bm{W}^{(l,1)},\quad\bm{A}^{(2)}=\bm{A}^{(1)},\quad\bm{b}^{(0,2)} = \bm{b}^{(0,1)}, \quad b^{(L,2)} = b^{(L,1)}.
  \end{aligned}
\end{align*}
This kind of ``mirror initialization'' ensures that the model output is always zero at initialization,
which is also considered in \citet{lai2023_GeneralizationAbility}.

\paragraph{Training}
Neural networks are often trained by the gradient descent (or its variants) with respect to the empirical loss
$\mathcal{L}(\bm{\theta}) = \frac{1}{2n}\sum_{i=1}^n (f(x_i;\bm{\theta}) - y_i)^2$.
For simplicity, we consider the continuous version of gradient descent, namely the gradient flow for the training process.
Denote by $\bm{\theta}_t$ the parameter at the time $t \geq 0$, the gradient flow is given by
\begin{align}
  \label{eq:2_GD}
  \dot{\bm{\theta}}_t = - \nabla_{\bm{\theta}} \mathcal{L}(\bm{\theta}_t)= - \frac{1}{n} \nabla_{\bm{\theta}} f(\X;\bm{\theta}_t) (f(\X;\bm{\theta}_t) - \bm{y})
\end{align}
where $f(\X;\bm{\theta}_t) = (f(x_1;\bm{\theta}_t),\dots,f(x_n;\bm{\theta}_t))^T$ and $\nabla_{\bm{\theta}} f(\X;\bm{\theta}_t)$ is an $M \times n$
matrix where $M$ is the number of parameters.
Finally, let us denote by $\fNN(\x) \coloneqq f(\x;\bm{\theta}_t)$
the resulting neural network predictor.

\subsection{Uniform convergence to kernel regression}\label{subsec:UniformConv}

Although the gradient flow \cref{eq:2_GD} is a highly non-linear and hard to analyze,
the celebrated neural tangent kernel (NTK) theory~\citep{jacot2018_NeuralTangent} provides a way to approximate the gradient flow by a kernel regressor
when the width of the network tends to infinity,
which is also referred to as the \textit{lazy training regime}.
Introducing a random kernel function $K_t(\x,\x') = \ang{\nabla_{\bm{\theta}} f(\x;\bm{\theta}_t),\nabla_{\bm{\theta}} f(\x';\bm{\theta}_t)}$,
it is shown that $K_t(\x,\x')$ concentrates in probability to a deterministic kernel $\NTK$ called the neural tangent kernel (NTK).
Consequently, the predictor $\fNN(\x)$ is well approximated by the kernel regressor $\fNTK(\x)$ given by the following gradient flow:
\begin{align}
  \label{eq:2_NTK_GF}
  \dv{t} \fNTK(\x) = - \frac{1}{n} \NTK(\x,\X) (\fNTK(\X) - \bm{y}),
\end{align}
where $\fNTK(\X) = (\fNTK(x_1),\dots,\fNTK(x_n))^T$.
Thanks to the mirrored architecture, we have $\hat{f}^{\mathrm{NN}}_0(\x) \equiv 0$ at initialization and thus $\hat{f}^{\mathrm{NTK}}_0(\x) \equiv 0$.
The recursive formula of the NTK also enables us to give explicitly~\citep{jacot2018_NeuralTangent,bietti2020_DeepEquals}
the formula of $\NTK$ in \cref{eq:NTK_Formula}.

Although previous works
~\citep{lee2019_WideNeural,arora2019_ExactComputation,allen-zhu2019_ConvergenceTheory}
showed that the neural network regressor $\fNN(\x)$ can be approximated by $\fNTK(\x)$,
most of these results are established pointwisely, namely, for fixed $\x$,
$\sup_{t\geq 0}\abs{\fNTK(\x)-\fNN(\x)}$ is small with high probability.
However,
to analyze the generalization performance of $\fNN(\x)$, the convergence is further needed to be uniform over $x \in \caX$.
Consider the simple case of two-layer neural network, \citet{lai2023_GeneralizationAbility} rigorously showed such uniform convergence.
With more complicated analysis, we prove the uniform convergence of $\fNN(\x)$ to $\fNTK(\x)$ for multilayer neural networks.
To state our result, let us denote by $\lambda_0 = \lambda_{\min}\left( \NTK(\X,\X) \right)$ the minimal eigenvalue of the kernel matrix,
which, by \cref{prop:NTK_PD}, can be assumed to be positive in the following.

\begin{lemma}
  \label{lem:UnifConverge}
  Denote $M_{\bm{X}} = \sum_{i=1}^n \norm{x_i}_2$ and $B_r = \dk{x \in \R^d : \norm{x} \leq r}$ for $r \geq 1$.
  There exists a polynomial $\mathrm{poly}(\cdot)$ such that
  for any $\delta \in (0,1)$ and $k > 0$,
  when $m \geq \mathrm{poly}(n,M_{\bm{X}},\lambda_0^{-1},$ $\norm{\bm{y}}, \ln(1/\delta), k)$ and $m \geq r^k$,
  with probability at least $1-\delta$ with respect to random initialization, we have
  \begin{align*}
    \sup_{t\geq 0} \sup_{\x \in B_r} \abs{\fNTK(\x) - \fNN(\x)} \leq O(r^2 m^{-\frac{1}{12}} \sqrt {\ln m}).
  \end{align*}
\end{lemma}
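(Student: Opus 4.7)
The plan is to control the difference $\Delta_t(\x) := \fNN(\x) - \fNTK(\x)$ jointly in $t \geq 0$ and $\x \in B_r$ via a Grönwall-type analysis of a coupled ODE. A direct chain-rule computation from \cref{eq:2_GD} gives
\begin{equation*}
\dv{t} \fNN(\x) = -\tfrac{1}{n} K_t(\x,\X)\bigl(\fNN(\X) - \bm{y}\bigr), \qquad K_t(\x,\x') := \ang{\nabla_{\bm{\theta}} f(\x;\bm{\theta}_t),\nabla_{\bm{\theta}} f(\x';\bm{\theta}_t)},
\end{equation*}
and subtracting \cref{eq:2_NTK_GF} (matching the $1/n$ normalization) yields
\begin{equation*}
\dv{t} \Delta_t(\x) = -\tfrac{1}{n} K_t(\x,\X)\Delta_t(\X) - \tfrac{1}{n}\bigl(K_t(\x,\X) - \NTK(\x,\X)\bigr)\bigl(\fNTK(\X) - \bm{y}\bigr).
\end{equation*}
This decomposition reduces the problem to (a) bounding the kernel deviation $K_t - \NTK$ uniformly on $B_r \times B_r$ and on $[0,\infty)$, and (b) bounding the in-sample residual $\Delta_t(\X)$, which will satisfy a closed matrix ODE driven by the kernel deviation on $\X \times \X$.

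I would then establish three technical ingredients, each with probability at least $1 - \delta/3$ under the stated width requirement. First, \textbf{uniform kernel concentration at initialization}: $\sup_{\x,\x' \in B_r}\abs{K_0(\x,\x') - \NTK(\x,\x')} = O(r^2 m^{-1/2}\sqrt{\ln m})$, proved by a layerwise Bernstein-type concentration of the random inner products defining $K_0$ at a fixed $(\x,\x')$, combined with an $\epsilon$-net of $B_r$ of logarithmic metric entropy (the covering penalty is absorbed into $\ln m$ using $m \geq r^k$) and the Lipschitz continuity of both $K_0$ and $\NTK$ on bounded sets (which produces the $r^2$ factor). Second, \textbf{lazy-regime parameter stability}: the lower bound $\lambda_{\min}(K_0(\X,\X)) \geq \lambda_0/2$, inherited from (i) applied on $\X$ together with $\lambda_0 > 0$ from \cref{prop:NTK_PD}, implies local strong convexity of $\caL$ along the trajectory and confines $\bm{\theta}_t - \bm{\theta}_0$ to a ball of radius $O(\norm{\bm{y}}/(\sqrt{n}\lambda_0)) \cdot m^{-1/2 + o(1)}$ for all $t \geq 0$, established by an induction that simultaneously tracks parameter drift and kernel drift. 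Third, \textbf{uniform kernel stability during training}: $\sup_t\sup_{\x,\x' \in B_r}\abs{K_t(\x,\x') - K_0(\x,\x')} = O(r^2 m^{-1/12}\sqrt{\ln m})$, proved by expanding $K_t - K_0$ along $\bm{\theta}_t - \bm{\theta}_0$, propagating the hidden-layer perturbations through the $L$ layers, and separately accounting for the small fraction of ReLU neurons whose sign flips under the allowed drift; the exponent $1/12$ arises from balancing the smooth-perturbation contribution against this sign-flip contribution.

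Combining (i) and (iii) yields $\varepsilon_m := \sup_{t,\x,\x' \in B_r}\abs{K_t(\x,\x') - \NTK(\x,\x')} = O(r^2 m^{-1/12}\sqrt{\ln m})$. Specializing the ODE to $\x \in \X$ produces a closed matrix equation for $\Delta_t(\X)$ with $\lambda_{\min}(K_t(\X,\X)) \geq \lambda_0/2$ uniformly in $t$; a Grönwall argument using the exponential decay $\norm{\fNTK(\X) - \bm{y}}_2 = O(\me^{-\lambda_0 t/(2n)}\norm{\bm{y}}_2)$ then gives $\sup_t \norm{\Delta_t(\X)}_2 = O(n\varepsilon_m \norm{\bm{y}}/\lambda_0)$. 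Plugging this and the out-of-sample bound $\sup_{\x \in B_r}\abs{K_t(\x,\X) - \NTK(\x,\X)} \leq \sqrt{n}\varepsilon_m$ back into the scalar ODE for $\Delta_t(\x)$ and integrating over $t \geq 0$ (again using exponential decay) yields a pointwise bound $\abs{\Delta_t(\x)} = O(r^2 m^{-1/12}\sqrt{\ln m})$, with the dependence on $n, \norm{\bm{y}}, \lambda_0^{-1}$ absorbed into the polynomial width threshold. A final $\epsilon$-net of $B_r$ combined with uniform Lipschitzness of $\fNN$ and $\fNTK$ on $B_r$ (both $\poly(m)$, absorbed by $m \geq r^k$) upgrades this to a uniform-in-$\x$ bound.

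The main obstacle is item (iii). Non-smoothness of ReLU precludes a direct Taylor expansion of $K_t - K_0$ in $\bm{\theta}_t - \bm{\theta}_0$; one must instead argue that, uniformly over $\x \in B_r$, only a vanishing fraction of neurons in each layer flip sign under the permitted parameter drift, and that the contribution of these flipped neurons—propagated through the $L-1$ subsequent layers and amplified by the network Jacobian—still decays in $m$. This sign-flip analysis, coupled with the need for uniform control both in $\x$ and over $t \in [0,\infty)$, is what forces the slower $m^{-1/12}$ rate and the polynomial width requirement $m \geq r^k$, the latter being needed to keep the layerwise Lipschitz constants of the hidden pre-activations in $\x$ under control on $B_r$ with high probability.
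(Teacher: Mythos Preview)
Your plan matches the paper's proof almost step for step: the paper also reduces \cref{lem:UnifConverge} to uniform kernel convergence $\sup_{t,\x,\x'}\abs{K_t-\NTK}=O(r^2 m^{-1/12}\sqrt{\ln m})$ (\cref{prop:2_KernelUniform}), established via (a) pointwise concentration of $K_0$ to $\NTK$, (b) a lazy-regime bound on $\norm{\bm W_t-\bm W_0}_{\mr F}$, (c) forward/backward perturbation lemmas controlling $K_t-K_0$ including the ReLU sign-flip count, and (d) an $\epsilon$-net over $B_r$ combined with the H\"older continuity of $\NTK$; the final passage from kernel convergence to predictor convergence is exactly your Gr\"onwall/Duhamel argument on the coupled ODE, which the paper outsources to Propositions~3.2--3.3 of \citet{lai2023_GeneralizationAbility}.

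One quantitative slip to fix in (ii): in the parameterization used here (the $\sqrt{2/m_l}$ scaling), the Frobenius drift is $\norm{\bm W^{(l)}_t-\bm W^{(l)}_0}_{\mr F}=O\bigl(n\tilde M_{\bm X}\norm{\bm y}/\lambda_0\bigr)$, i.e.\ constant in $m$, not $O(m^{-1/2+o(1)})$ as you wrote (see \cref{lem:A_lazy_W}). This does not break your argument---the sign-flip and perturbation analysis in (iii) only needs the drift to be $o(\sqrt m)$, and the paper works under the coarser hypothesis $O(m^{1/4})$---but you should state the correct order. Also, your final $\epsilon$-net over $B_r$ for the predictor is redundant: once the kernel deviation is uniform in $\x$, the ODE bound for $\Delta_t(\x)$ is already uniform.
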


\cref{lem:UnifConverge} shows that as $m$ tends to infinity, $\fNN(\x)$ can be approximated uniformly by $\fNTK(\x)$ on a bounded set, which is also allowed to grow with $m$.
Consequently, we can study the generalization performance of the neural network in the lazy training regime by that of the corresponding kernel regressor.

To establish \cref{lem:UnifConverge},
it is essential to demonstrate the uniform convergence of the kernel $K_t(\x,\x')$ towards $\NTK(\x,\x')$.
This is achieved by first establishing the Hölder continuity of $K_t(\x,\x')$ and $\NTK(\x,\x')$,
and then applying an $\epsilon$-net argument in conjunction with the pointwise convergence.
Since the detailed proof is laborious, it is deferred to \cref{sec:A_NN}.

\subsection{The optimal rates of the over-parameterized neural network}

With the uniform convergence of the neural network to the kernel regressor established and the eigenvalue decay rate of the NTK determined,
we can now derive the optimal rates of the over-parameterized neural network.
Let us denote by $\caH = \caH_{\mathrm{NTK}}$ the RKHS associated with the NTK \cref{eq:NTK_Formula} on $\caX$.
We introduce the integral operator $T$ in \cref{eq:T_Def} and recall its spectral decomposition in \cref{eq:MercerDecomp}.
The kernel regression literature often introduces the interpolation spaces of the RKHS to characterize the regularity of the regression function
~\citep{steinwart2012_MercerTheorem,fischer2020_SobolevNorm}.
For $s \geq 0$, we define the interpolation space $[\caH]^s$ by
\begin{align}
  \zk{\caH}^s = \left\{ \sum_{i =1}^\infty a_i \lambda_i^{s/2} e_i ~\Big|~ \sum_{i =1}^\infty a_i^2 < \infty \right\}
  \subseteq L^2,
\end{align}
which is equipped with the norm $\norm{\sum_{i =1}^\infty a_i \lambda_i^{s/2} e_i}_{[\caH]^s} \coloneqq \left(  \sum_{i =1}^\infty a_i^2  \right)^{1/2}$.
It can be seen that $[\caH]^s$ is a separable Hilbert space with $\xk{\lambda_i^{s/2} e_i}_{i \geq 1}$ as its orthonormal basis.
We also have $[\caH]^0 = L^2$ and $[\caH]^1 = \caH$.
Moreover, when $s \in (0,1)$,
the space $[\caH]^s$ also coincides with the space $(L^2,\caH)_{s,2}$ defined by real interpolation~\citep{steinwart2012_MercerTheorem}.
We also denote by $B_R([\caH]^s) = \left\{ f \in [\caH]^s \mid \norm{f}_{[\caH]^s}^2 \leq R \right\}$.
Then, we derive the following optimal rates of the neural network from the optimality result in the kernel regression~\citep{lin2018_OptimalRates}.

\begin{proposition}
  \label{prop:NN_Gen}
  Suppose $f^* \in B_R([\caH]^s) \cap L^\infty$ for constants $s > \frac{1}{d+1}$ and $R > 0$.
  Let us choose $t_{\mathrm{op}} = t_{\mathrm{op}}(n) \asymp n^{(d+1)/ [s(d+1)+d]}$.
  Then, there exists a polynomial $\operatorname{poly}(\cdot)$ such that for any $\delta \in (0,1)$,
  when $n$ is sufficiently large and the width $m \geq \mathrm{poly}(n,\ln(1/\delta),\lambda_{0}^{-1})$,
  with probability at least $1-\delta$ with respect to random samples and random initialization,
  \begin{align}
    \label{eq:NN_Rate}
    \norm{\hat{f}^{\mathrm{NN}}_{t_{\mathrm{op}}} - f^*}_{L^2}^2 \leq C \left( \ln\frac{12}{\delta} \right)^2 n^{-\frac{s(d+1)}{s(d+1)+d}},
  \end{align}
  where the constant $C>0$ is independent of $\delta,n$.
  Moreover, the convergence rate in \cref{eq:NN_Rate} achieves the optimal rate in $B_R([\caH]^s)$.
\end{proposition}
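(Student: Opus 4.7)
The plan is to combine three ingredients already assembled: the eigenvalue decay rate $\lambda_i(\NTK;\R^d,\dd\mu) \asymp i^{-(d+1)/d}$ from \cref{thm:NTK_EDR}, the uniform approximation of $\hat{f}^{\mr{NN}}_t$ by the NTK regressor $\hat{f}^{\mr{NTK}}_t$ from \cref{lem:UnifConverge}, and the early-stopping optimality theorem for kernel gradient-flow regression of \citet{lin2018_OptimalRates}. Writing $\beta = (d+1)/d$ and noting that the optimal stopping time in the cited theorem is $t \asymp n^{\beta/(s\beta+1)}$, which on substitution becomes precisely $t_{\mr{op}} \asymp n^{(d+1)/[s(d+1)+d]}$, I would bound the left-hand side of \cref{eq:NN_Rate} via the triangle inequality
\[
\norm{\hat{f}^{\mr{NN}}_{t_{\mr{op}}} - f^*}_{L^2} \leq \norm{\hat{f}^{\mr{NN}}_{t_{\mr{op}}} - \hat{f}^{\mr{NTK}}_{t_{\mr{op}}}}_{L^2} + \norm{\hat{f}^{\mr{NTK}}_{t_{\mr{op}}} - f^*}_{L^2},
\]
controlling each term separately, and then derive the matching lower bound from the standard minimax result for kernel regression over $B_R([\caH]^s)$ with polynomial EDR \citep{caponnetto2007_OptimalRates}.

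For the kernel-regressor term, I would instantiate \citet{lin2018_OptimalRates} with decay exponent $\beta = (d+1)/d$ and source exponent $s > 1/(d+1)$. Its hypotheses---bounded kernel on the effective support of $\mu$, polynomial EDR, source condition $f^* \in B_R([\caH]^s)$, sub-Gaussian noise, and $f^* \in L^\infty$---are all in force here: \cref{thm:NTK_EDR} supplies the EDR, boundedness of $\NTK$ on any ball together with the sub-Gaussian tail of $\mu$ gives the kernel-boundedness, and the remaining conditions are assumed. This produces
\[
\norm{\hat{f}^{\mr{NTK}}_{t_{\mr{op}}} - f^*}_{L^2}^2 \leq C_1 \xk{\ln(12/\delta)}^2 n^{-s(d+1)/[s(d+1)+d]}
\]
with probability at least $1 - \delta/2$ over the samples. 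For the NN-approximation term, I would truncate at a growing ball $B_{r_n}$ with $r_n = C_0 \sqrt{\ln n}$; since $\mu$ is sub-Gaussian, $\mu(\R^d \setminus B_{r_n}) = O(n^{-10})$, and a crude uniform bound on $\hat{f}^{\mr{NN}}$, $\hat{f}^{\mr{NTK}}$ and $f^*$ (coming from gradient-flow stability and $f^* \in L^\infty$) shows the tail contribution is asymptotically negligible. Inside $B_{r_n}$, \cref{lem:UnifConverge} applied with $r = r_n$ gives
\[
\sup_{\x \in B_{r_n}} \abs{\hat{f}^{\mr{NN}}_{t_{\mr{op}}}(\x) - \hat{f}^{\mr{NTK}}_{t_{\mr{op}}}(\x)} = O\xk{r_n^2 m^{-1/12}\sqrt{\ln m}},
\]
which can be driven below any fixed inverse polynomial in $n$ by taking $m \geq \poly(n,\ln(1/\delta),\lambda_0^{-1})$ large enough.

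The main technical obstacle will be coordinating the probabilistic events. The hypotheses of \cref{lem:UnifConverge} involve the sample-dependent quantities $M_{\X}$, $\norm{\bm{y}}$ and $\lambda_0^{-1}$, each of which must be controlled with high probability under the sub-Gaussian data model: $M_{\X} = O(n \sqrt{\ln n})$ and $\norm{\bm{y}} = O(\sqrt{n \ln n})$ by sub-Gaussian concentration, while $\lambda_0^{-1}$ is polynomial in $n$ under mild non-degeneracy furnished by \cref{prop:NTK_PD}. The sample-concentration event, the random-initialization event, and the kernel-regression event of \citet{lin2018_OptimalRates} must then be intersected so that their joint probability is at least $1 - \delta$, and $m$ chosen as a polynomial in $n$ and $\ln(1/\delta)$ absorbing all these bounds at once. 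Once this bookkeeping is in place, the two terms combine via the triangle inequality to yield \cref{eq:NN_Rate}, and the minimax lower bound of \citet{caponnetto2007_OptimalRates} establishes the optimality claim.
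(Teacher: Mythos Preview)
Your proposal is correct and follows essentially the same route as the paper: triangle-inequality split into the NTK-regressor error (handled by \citet{lin2018_OptimalRates} together with the EDR from \cref{thm:NTK_EDR}) and the NN--NTK discrepancy (handled by \cref{lem:UnifConverge} inside a growing ball, with the sub-Gaussian tail of $\mu$ and a linear-growth bound on $\hat{f}^{\mr{NN}}$ controlling the complement). The only cosmetic differences are that the paper takes the truncation radius $r=m^{1/48}$ rather than your $r_n\asymp\sqrt{\ln n}$, and it cites an explicit corollary (\cref{cor:UpperboundNN}, giving $|\hat f^{\mr{NN}}_t(x)|\leq C\|\tilde x\|$) for the ``crude uniform bound'' you allude to via gradient-flow stability.
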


The results in the kernel regression literature also allow us to provide the following sup-norm learning rate.

\begin{proposition}
  \label{prop:NN_Sup_Gen}
  Under the settings of \cref{prop:NN_Gen}, suppose further that $s \geq 1$ and $\caX$ is bounded.
  Then, when $n$ is sufficiently large,  with probability at least $1-\delta$,
  \begin{align*}
    \norm{\hat{f}^{\mathrm{NN}}_{t_{\mathrm{op}}} - f^*}_{\infty}^2 \leq C \left( \ln\frac{12}{\delta} \right)^2 n^{-\frac{(s-1)(d+1)}{s(d+1)+d}},
  \end{align*}
  where the constant $C>0$ is independent of $\delta,n$.
\end{proposition}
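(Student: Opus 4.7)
The plan is to mirror the proof of \cref{prop:NN_Gen}: split $\hat{f}^{\mathrm{NN}}_{t_{\mathrm{op}}} - f^*$ via the triangle inequality into a neural-network-to-NTK approximation error and a kernel regression error, and control each in the sup-norm. For the first piece, since $\caX$ is bounded we may set $R_{\caX} = \sup_{x \in \caX}\norm{x}$ and apply \cref{lem:UnifConverge} with this fixed $r = R_{\caX}$, obtaining
\begin{align*}
\sup_{x \in \caX} \abs{\hat{f}^{\mathrm{NN}}_{t_{\mathrm{op}}}(x) - \hat{f}^{\mathrm{NTK}}_{t_{\mathrm{op}}}(x)} \leq O\xkm{m^{-1/12}\sqrt{\ln m}}
\end{align*}
with probability at least $1 - \delta/2$, provided $m$ is polynomially large in $n$. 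By choosing $m$ a sufficiently high polynomial in $n$, this contribution is negligible compared with the target rate $n^{-(s-1)(d+1)/[2(s(d+1)+d)]}$.

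It then remains to establish the same rate for $\norm{\hat{f}^{\mathrm{NTK}}_{t_{\mathrm{op}}} - f^*}_{\infty}$. Since $\caX$ is bounded and $\NTK$ is continuous, $\kappa^2 \coloneqq \sup_{x \in \caX}\NTK(x,x) < \infty$, so the reproducing property yields the continuous embedding $\norm{f}_{\infty} \leq \kappa \norm{f}_{\caH} = \kappa \norm{f}_{[\caH]^1}$ for every $f \in \caH$. Under the assumption $s \geq 1$ we have $f^* \in [\caH]^s \subseteq [\caH]^1$, and by \cref{thm:NTK_EDR} the EDR is $\lambda_i \asymp i^{-(d+1)/d}$. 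Invoking the interpolation-norm convergence result for gradient-flow spectral algorithms at level $r = 1$ (the $[\caH]^r$-analogue of the $L^2$ bound used in \cref{prop:NN_Gen}, available in the framework of \citet{lin2018_OptimalRates}), the estimator at the prescribed early-stopping time $t_{\mathrm{op}} \asymp n^{(d+1)/[s(d+1)+d]}$ satisfies
\begin{align*}
\norm{\hat{f}^{\mathrm{NTK}}_{t_{\mathrm{op}}} - f^*}_{[\caH]^1}^2 \leq C \xkm{\ln\tfrac{12}{\delta}}^2 n^{-(s-1)(d+1)/(s(d+1)+d)}
\end{align*}
with probability at least $1 - \delta/2$. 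Chaining this with the embedding above and combining with the uniform approximation step through the triangle inequality delivers the claimed bound.

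The main obstacle is the interpolation-norm rate at level $r = 1$: while the $L^2$ rate underlying \cref{prop:NN_Gen} is classical for gradient-flow spectral algorithms, a rate in $[\caH]^1$ requires a finer bias-variance decomposition via filter functions applied to the empirical covariance operator, together with effective-dimension estimates under the polynomial EDR $\beta = (d+1)/d$. This is the spectral-algorithm counterpart of the interpolation-norm analysis developed for kernel ridge regression, and it relies crucially on $s \geq 1$ to keep the bias controllable in the stronger norm. Once this auxiliary rate is in place, the remainder of the argument is the routine RKHS sup-norm embedding (which in turn hinges on the boundedness of $\caX$) together with the uniform-approximation reduction from \cref{lem:UnifConverge}.
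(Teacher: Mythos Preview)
Your proposal is correct and matches the paper's intended (though unwritten) argument: the paper states only that ``the results in the kernel regression literature also allows us to provide the following sup-norm learning rate,'' and your route---triangle inequality, uniform NN-to-NTK approximation via \cref{lem:UnifConverge} on the bounded domain, then the $[\caH]^1$-norm rate from \citet{lin2018_OptimalRates} chained with the embedding $\norm{\cdot}_\infty \leq \kappa \norm{\cdot}_{[\caH]^1}$---is exactly that. The boundedness of $\caX$ indeed removes the need for the tail-truncation step present in the proof of \cref{prop:NN_Gen}, as you note.
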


\begin{remark}
  \cref{prop:NN_Gen} shows the minimax optimality of wide neural networks,
  where optimal rate is also adaptive to the relative smoothness of the regression function to the NTK\@.
Our result extends the result in \citet{lai2023_GeneralizationAbility} to the scenario of $d > 1$ and $L > 1$,
  and also differs from \citet{hu2021_RegularizationMatters,suh2022_NonparametricRegression} in the following aspects:
  (1) The critical uniform convergence (\cref{lem:UnifConverge}) is not well-supported in these two works, as pointed out in \citet{lai2023_GeneralizationAbility};
  (2) They have to assume the data distribution is uniform on the sphere, while we allow $\caX$ to be a general domain;
  (3) They introduce an explicit $\ell_2$ regularization in the gradient descent and approximate the training dynamics by kernel ridge regression (KRR),
  while we consider directly the kernel gradient flow and early stopping serves as an implicit regularization, which is more natural.
  Moreover, our gradient method can adapt to higher-order smoothness of the regression function and does not
  saturate as KRR~\citep{li2023_SaturationEffect} or consequently their $\ell_2$-regularized neural networks.
\end{remark}

Moreover, using the idea in \citet{caponnetto2010_CrossvalidationBased},
we can also show that cross-validation can be used to choose the optimal stopping time.
Let us further assume that $\supp \mu$ is bounded and $y \in [-M,M]$ almost surely for some $M$ and introduce the truncation $L_{M}(a)=\min\{\abs{a},M\}\operatorname{sgn}(a)$.
Suppose now we have $\tilde{n}$ extra independent samples $(\tilde{\x}_1,\tilde{y}_1),\dots,(\tilde{\x}_{\tilde{n}},\tilde{y}_{\tilde{n}})$,
where $\tilde{n} \geq c_{\mathrm{v}} n$ for some constant $c_{\mathrm{v}} > 0$.
Let $T_n$ be a set of stopping time candidates, and we choose the empirical stopping time by cross-validation
\begin{align}
  \label{eq:5_StoppingTimeCV}
  \hat{t}_{\mathrm{cv}} = \argmin_{t \in T_n}
  \sum_{i=1}^{\tilde{n}} \left[ L_{M}\xk{\fNN(\tilde{\x}_i)} - \tilde{y}_i \right]^2.
\end{align}

\begin{proposition}
\label{prop:NN_CV}
  Under the settings of \cref{prop:NN_Gen} and the further assumptions given above,
  let $T_n = \left\{ 1, Q, \dots,Q^{\lfloor \ln_Q n \rfloor} \right\}$ for arbitrary fixed $Q > 1$ and $\hat{t}_{\mathrm{cv}}$ be chosen from \cref{eq:5_StoppingTimeCV}.
  Define $\hat{f}^{\mathrm{NN}}_{\mr{cv}}(x) = L_{M}\xk{\hat{f}^{\mathrm{NN}}_{\hat{t}_{\mathrm{cv}}}(x)}$.
  Then, there exists a polynomial $\operatorname{poly}(\cdot)$
  such that when $n$ is sufficiently large and $m \geq \mathrm{poly}(n,\ln(1/\delta),\lambda_{0}^{-1})$,
  one has
  \begin{align*}
    \norm{\hat{f}^{\mathrm{NN}}_{\mr{cv}} - f^*}_{L^2}^2 \leq C \left( \ln\frac{12}{\delta} \right)^2 n^{-\frac{s(d+1)}{s(d+1)+d}}
  \end{align*}
  with probability at least $1-\delta$ with respect to random samples and initialization,
  where the constant $C>0$ is independent of $\delta,n$.
\end{proposition}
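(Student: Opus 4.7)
The plan is to combine three ingredients: the uniform approximation $\hat f^{\mathrm{NN}}_t \approx \hat f^{\mathrm{NTK}}_t$ supplied by \cref{lem:UnifConverge}, a Caponnetto--Yao-type oracle inequality for the validation-based selector in \cref{eq:5_StoppingTimeCV}, and a geometric-grid approximation showing that $T_n$ contains a candidate essentially as good as the oracle stopping time $t_{\mathrm{op}}(n)$ of \cref{prop:NN_Gen}. These reduce \cref{prop:NN_CV} to the already-established \cref{prop:NN_Gen}.

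Because $\caX = \supp\mu$ is bounded, a single application of \cref{lem:UnifConverge} on a fixed ball containing $\caX$ gives, for $m$ polynomially large in $n, \ln(1/\delta), \lambda_0^{-1}$, an initialization event of probability at least $1-\delta/3$ on which
\[
\sup_{t \geq 0}\sup_{x \in \caX} \bigl| \hat f^{\mathrm{NN}}_t(x) - \hat f^{\mathrm{NTK}}_t(x) \bigr| \leq m^{-1/13}.
\]
Since $L_M$ is $1$-Lipschitz and $|\tilde y|, |f^*| \leq M$, for every $t \in T_n$ both the empirical validation loss and the $L^2$-error of $L_M(\hat f^{\mathrm{NN}}_t)$ differ from the corresponding quantities for $L_M(\hat f^{\mathrm{NTK}}_t)$ by at most $O(M m^{-1/13})$; choosing $m$ large enough to make this $o(n^{-1})$ lets us pretend $\hat f^{\mathrm{NN}}_t = \hat f^{\mathrm{NTK}}_t$ throughout, with only absorbable error. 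Next, since $T_n = \{Q^k : 0 \leq k \leq \lfloor \ln_Q n \rfloor\}$ is a geometric grid covering $[1,n]$ and $t_{\mathrm{op}}(n) \asymp n^{(d+1)/[s(d+1)+d]} \in [1,n]$ for large $n$, one finds $t^\star \in T_n$ with $t_{\mathrm{op}}(n) \leq t^\star \leq Q\, t_{\mathrm{op}}(n)$. Under the EDR $\lambda_i \asymp i^{-(d+1)/d}$ of \cref{thm:NTK_EDR} and the source condition $f^* \in [\caH]^s$, the standard bias--variance analysis for spectral filter regression (cf.\ \citet{lin2018_OptimalRates}) shows that $\| \hat f^{\mathrm{NTK}}_t - f^* \|_{L^2}^2$ changes by at most a fixed constant factor when $t$ is multiplied by $Q$, so $\| L_M(\hat f^{\mathrm{NTK}}_{t^\star}) - f^* \|_{L^2}^2$ is still of the optimal order $n^{-s(d+1)/[s(d+1)+d]}$.

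Now set $\ell_t(x,y) = [L_M(\hat f^{\mathrm{NTK}}_t(x)) - y]^2$ and $R(t) = \E[\ell_t(\tilde X, \tilde Y) \mid \text{training}]$, so that $R(t) - R^* = \| L_M(\hat f^{\mathrm{NTK}}_t) - f^* \|_{L^2}^2$ with $R^* = \E \ep^2$. Conditionally on the training data, the validation samples are i.i.d., $\ell_t$ is uniformly bounded by $4M^2$, and an elementary expansion yields the variance control $\mathrm{Var}(\ell_t - \ell_{f^*}) \leq C_M (R(t) - R^*)$. Bernstein's inequality together with a union bound over the $|T_n| = O(\ln n)$ candidates, combined with the defining inequality $V(\hat t_{\mathrm{cv}}) \leq V(t^\star)$, yields the Caponnetto--Yao-type oracle inequality
\[
R(\hat t_{\mathrm{cv}}) - R^* \leq 2\bigl( R(t^\star) - R^* \bigr) + C\, \frac{\ln(\ln n / \delta)}{\tilde n}
\]
with probability at least $1-\delta/3$ over the validation sample. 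Combining with the preceding paragraph and $\tilde n \geq c_{\mathrm{v}} n$ gives the claimed $L^2$ rate for $\hat f^{\mathrm{NN}}_{\mathrm{cv}}$.

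The main obstacle is the grid-stability assertion: one must verify that the kernel-gradient-flow excess risk, viewed as a function of $\ln t$, changes by at most a constant factor over a window of fixed multiplicative width. This reduces to writing the excess risk in terms of the filter $g_t(\lambda) = (1 - e^{-t\lambda})/\lambda$ and its residual $r_t(\lambda) = e^{-t\lambda}$, splitting into a bias term (controlled by $s$ via the source condition $f^* \in [\caH]^s$) and a variance term, and invoking the power-law EDR from \cref{thm:NTK_EDR}; each term is Lipschitz in $\ln t$ with a controlled constant, so the nearest grid point $t^\star$ inherits the minimax rate up to that constant.
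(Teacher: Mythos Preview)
Your proposal is correct and follows essentially the same route as the paper: locate a grid point $t^\star\in T_n$ with $t^\star\asymp t_{\mathrm{op}}$, use the already-established optimal rate at that point, and then apply a Caponnetto--Yao oracle inequality (the paper packages this as \cref{prop:E_CV}, you re-derive it via Bernstein and a union bound over $|T_n|=O(\ln n)$ candidates). The only organizational difference is that the paper applies the oracle inequality directly to the truncated \emph{neural-network} estimators, invoking \cref{prop:NN_Gen} (which already contains the NTK approximation) to certify the good candidate; you instead transfer first to the NTK estimators via \cref{lem:UnifConverge} and carry out the argument there. Your concern about ``grid stability'' is not really an obstacle: since $t_{\mathrm{op}}$ in \cref{prop:NN_Gen} is only specified up to constants ($\asymp$), any $t^\star\in[t_{\mathrm{op}},Q\,t_{\mathrm{op}}]$ is itself an admissible choice there, so the rate at $t^\star$ follows immediately without a separate Lipschitz-in-$\ln t$ analysis.
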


Early stopping, as an implicit regularization, is necessary for the generalization of neural networks.
The following proposition, which is a consequence of the result in \citet{li2023_KernelInterpolation},
shows overfitted multilayer neural networks generalize poorly.
\begin{proposition}
  \label{prop:InterpolationNoGen}
  Suppose further that the samples are distributed uniformly on $\bbS^d$ and the noise is non-zero.
  Then, for any $\ep > 0$ and $\delta \in (0,1)$, there is some $c > 0$ such that when $n$ and $m$ is sufficiently large,
  one has that
  \begin{align*}
    \E \zkm{\liminf_{t \to \infty} \norm{\hat{f}^{\mathrm{NN}}_t - f^*}_{L^2}^2 ~\Big|~ \X} \geq c n^{-\ep}
  \end{align*}
  holds with probability at least $1-\delta$.
\end{proposition}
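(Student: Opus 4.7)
The plan is to reduce the statement to the corresponding lower bound for the minimum-RKHS-norm NTK interpolator, which is precisely the setting of \citet{li2023_KernelInterpolation}. Two bridges are needed: the uniform NN-to-NTK approximation of \cref{lem:UnifConverge} (so that $\fNN$ is close to $\fNTK$ uniformly in $t$), and the convergence of the kernel gradient flow to its interpolating limit as $t \to \infty$. Since the samples live on the compact set $\bbS^d$, \cref{lem:UnifConverge} with $r=1$ gives, for any $\eta > 0$ and all $m$ sufficiently large (depending on $\eta$),
\begin{align*}
  \sup_{t \geq 0} \norm{\fNN - \fNTK}_\infty \leq \eta
\end{align*}
with probability at least $1-\delta/3$ over the initialization, conditional on $\X$ (the factor $\lambda_0^{-1}$ entering the polynomial of \cref{lem:UnifConverge} is controlled with high probability over $\X$).

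Next, diagonalising $\NTK(\X,\X)$, which is invertible by \cref{prop:NTK_PD}, the linear kernel gradient flow \cref{eq:2_NTK_GF} admits the closed-form solution
\begin{align*}
  \fNTK(\x) = \NTK(\x,\X)\, \NTK(\X,\X)^{-1} \xkm{I - e^{-\NTK(\X,\X) t/n}} \bm{y}
  \;\xrightarrow{t \to \infty}\; \hat{f}^{\mathrm{int}}(\x) \coloneqq \NTK(\x,\X)\, \NTK(\X,\X)^{-1} \bm{y},
\end{align*}
the minimum-RKHS-norm NTK-interpolator; the convergence is uniform in $\x \in \bbS^d$ by continuity of $\NTK$. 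Combining with the NN-to-NTK bound, on the good initialization event we obtain the almost-sure inequality
\begin{align*}
  \liminf_{t \to \infty} \norm{\fNN - f^*}_{L^2}^2
  \;\geq\; \norm{\hat{f}^{\mathrm{int}} - f^*}_{L^2}^2 - C \eta \xkm{\norm{\hat{f}^{\mathrm{int}} - f^*}_{L^2} + \eta},
\end{align*}
where $C$ depends only on the volume of $\bbS^d$.

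Finally, by the spherical special case behind \cref{thm:NTK_EDR}, the NTK on $\bbS^d$ with uniform measure has polynomial EDR $\lambda_i \asymp i^{-(d+1)/d}$, matching the hypothesis of \citet{li2023_KernelInterpolation}. Their lower bound for the kernel interpolator under non-zero noise yields, for any $\ep > 0$ and some $c > 0$,
\begin{align*}
  \E \zkm{\norm{\hat{f}^{\mathrm{int}} - f^*}_{L^2}^2 ~\big|~ \X} \;\geq\; 2 c\, n^{-\ep}
\end{align*}
with probability at least $1 - \delta/3$ over $\X$. Using independence of initialization from $(\X, \bm{y})$ so that $\mathbf{1}_E$ decouples from $\hat{f}^{\mathrm{int}}$ in the conditional expectation, taking $\E[\cdot \mid \X]$ of the previous display and choosing $\eta \leq c n^{-\ep}$ (which by \cref{lem:UnifConverge} forces only $m \gtrsim n^{12\ep}$) gives the claim.

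The main obstacle is the interchange between $m \to \infty$ and $t \to \infty$: the lower bound on the NN risk is phrased through $\liminf_{t\to\infty}$ inside a conditional expectation, while the NN-to-NTK correction requires fixing a large $m$ first. This is resolved by the crucial uniformity in $t$ of \cref{lem:UnifConverge}: the same sup-norm bound holds for every $t$, so the $\liminf_t$ can be pushed through the correction and only the $t$-limit $\hat{f}^{\mathrm{int}}$ (independent of $\bm{\theta}_0$) remains on the right. The genuinely deep ingredient — that the kernel interpolator cannot generalize faster than $n^{-\ep}$ — is then handed off entirely to \citet{li2023_KernelInterpolation}, with the polynomial EDR of \cref{thm:NTK_EDR} supplying exactly the spectral input that their theorem demands.
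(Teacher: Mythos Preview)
Your proposal is correct and follows exactly the route the paper intends: the paper states that \cref{prop:InterpolationNoGen} ``is a consequence of the result in \citet{li2023_KernelInterpolation}'' and gives no further argument, so the reduction you spell out---uniform NN--to--NTK approximation via \cref{lem:UnifConverge}, the closed-form limit $\fNTK \to \hat f^{\mathrm{int}}$ of the kernel gradient flow, and then the interpolation lower bound of \citet{li2023_KernelInterpolation} fed by the polynomial EDR of \cref{thm:NTK_EDR}---is precisely the intended proof. Two small points you should tighten: the polynomial in \cref{lem:UnifConverge} depends on $\norm{\bm y}$ as well as on $\X$, so a sub-Gaussian tail bound on $\norm{\bm y}$ is needed before the initialization event becomes independent of the noise; and the correction term $C\eta\,\E[\norm{\hat f^{\mathrm{int}}-f^*}_{L^2}\mid \X]$ requires a crude \emph{upper} bound on the interpolator's risk (e.g.\ via $\norm{\hat f^{\mathrm{int}}}_{\caH}^2=\bm y^T\NTK(\X,\X)^{-1}\bm y$ and the control of $\lambda_0$ on the good $\X$-event), which only affects the exponent in the requirement on $m$ and not the structure of the argument.
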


\begin{remark}
  \cref{prop:InterpolationNoGen} seems to contradict with the ``benign overfitting'' phenomenon~(e.g., \citet{bartlett2020_BenignOverfittinga,frei2022_BenignOverfitting}).
  However, we point out that in these works the dimension $d$ of the input diverges with the sample size $n$,
  while in our case $d$ is fixed, so the setting is different.
  In fact, in the fixed-$d$ scenario, several works have argued that overfitting is harmful~\citep{rakhlin2018_ConsistencyInterpolation,beaglehole2022_KernelRidgeless,li2023_KernelInterpolation}
  and our result is consistent with theirs.
\end{remark}

\begin{remark}
  The requirement of uniformly distributed samples on the sphere is due to the technical condition of the embedding index in \citet{li2023_KernelInterpolation},
  which is critical for more refined analysis in the kernel regression~\citep{fischer2020_SobolevNorm}.
  With this condition, the requirement of $s$ in \cref{prop:NN_Gen} can further be relaxed to $s > 0$.
  We hypothesize that this embedding index condition is also satisfied for the NTK on a general domain,
  but we would like to leave it to future work since more techniques on function theory are needed.
\end{remark}

  \section{Proof of the Result on the Eigenvalues}\label{sec:eigen-decay-on-the-sphere}
  In this section we provide the proof of our key result, \cref{thm:EDRS}.
The proof idea follows the same line as \citet{widom1963_AsymptoticBehavior}:
we first establish the key lemma (\cref{lem:EDRS_MainLemma}) and then use the decomposition of domains on the sphere to show \cref{thm:EDRS}.
The key technical contribution here lies in the proof of \cref{lem:EDRS_MainLemma} where
we apply a refined analysis on bounding the spherical harmonics using the Cesaro summation.

For a compact self-adjoint operator $T$, we denote by $N^{\pm}(\ep,T)$ the count of eigenvalues of $T$
that is strictly greater (smaller) than $\ep$ ($-\ep$).
We denote by $P_{\Omega}$ the operator of the multiplication of the characteristic function $\bm{1}_{\Omega}$.
For convenience, we will use $C$ to represent some positive constant that may vary in each appearance in the proof.

\subsection{Spherical harmonics}
Let us first introduce spherical harmonics and some properties that will be used.
We refer to \citet{dai2013_ApproximationTheory} for more details.
Let $\sigma$ be the Lebesgue measure on $\bbS^d$ and $L^2(\bbS^d)$ be the (real) Hilbert space equipped with the inner product
\begin{align*}
  \ang{f,g}_{L^2(\bbS^d)} = \frac{1}{\omega_{d}}\int_{\bbS^d} f g~ \dd \sigma.
\end{align*}
By the theory of spherical harmonics, the eigen-system of the Laplace-Beltrami operator $\Delta_{\bbS^d}$,
the spherical Laplacian, gives an orthogonal direct sum decomposition
$L^2(\bbS^d) = \bigoplus_{n=0}^{\infty} \caH^d_n(\bbS^d)$,
where $\caH^d_n(\bbS^d)$
is the restriction of $n$-degree homogeneous harmonic polynomials with $d+1$ variables on $\bbS^d$
and each element in $\caH^d_n(\bbS^d)$ is an eigen-function of $\Delta_{\bbS^d}$ with eigenvalue $-n(n+d-1)$.
This gives an orthonormal basis
\begin{align*}
  \left\{ Y_{n,l},\ l = 1,\dots,a_n,\ n = 1,2,\dots \right\}
\end{align*}
of $L^2(\bbS^d)$, where
$a_n = \binom{n+d}{n} - \binom{n-2+d}{n-2} \asymp n^{d-1}$
is the dimension of $\caH^d_n(\bbS^d)$ and $Y_{n,l} \in \caH^d_n(\bbS^d)$.
We also notice that
\begin{align}
  \label{eq:SH_DimensionCount}
  \sum_{n\leq N} a_n = C^N_{N+d} + C^{N-1}_{N-1+d} \asymp N^d.
\end{align}
Moreover, the summation
\begin{align}
  Z_n(\x,\y) = \sum_{l=1}^{a_n} Y_{n,l}(\x)Y_{n,l}(\y)
\end{align}
is invariant of selection of orthonormal basis $Y_{n,l}$
and $Z_n$'s are called zonal polynomials.
When $d \geq 2$, we have
\begin{align}
  \label{eq:Zonal_Gegenbauer}
  Z_n(\x,\y) = \frac{n+\lambda}{\lambda} C_n^{\lambda}(u),\quad u = \ang{\x,\y},\; \lambda = \frac{d-1}{2},
\end{align}
where $C_n^\lambda$ is the Gegenbauer polynomial.

The key property of spherical harmonics is the following Funk-Hecke formula
\citep[Theorem 1.2.9]{dai2013_ApproximationTheory}.
\begin{proposition}[Funk-Hecke formula]
  \label{prop:seca_FunkFormula}
  Let $d \geq 3$ and $f$ be an integrable function such that $\int_{-1}^1 \abs{f(t)} (1-t^2)^{d/2-1} \dd t$ is finite.
  Then for every $Y_n \in \caH^d_n(\bbS^d)$,
  \begin{align}
    \label{eq:C_FunkHecke}
    \frac{1}{\omega_d}\int_{\bbS^d} f(\ang{\x,\y}) Y_n(\y) \dd \sigma(\y) = \mu_n(f) Y_n(\x),\quad \forall \x \in \bbS^{d},
  \end{align}
  where $\mu_n(f)$ is a constant defined by
  $\mu_n(f) = \omega_d \int_{-1}^1 f(t) \frac{C_n^\lambda(t)}{C_n^\lambda(1)} (1-t^2)^{\frac{d-2}{2}} \dd t.$
\end{proposition}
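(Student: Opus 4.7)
The plan is to exploit the $SO(d+1)$-equivariance of the kernel $f(\ang{\x,\y})$. First I would introduce the integral operator $T_f g(\x) = \frac{1}{\omega_d}\int_{\bbS^d} f(\ang{\x,\y}) g(\y)\,\dd\sigma(\y)$. The assumed integrability of $\abs{f(t)}(1-t^2)^{d/2-1}$ on $[-1,1]$ translates, via the standard slicing $\y = t\x + \sqrt{1-t^2}\,\y'$ with $\y' \in \bbS^{d-1}$, into integrability of $\abs{f(\ang{\x,\cdot})}$ on $\bbS^d$, so $T_f$ is well-defined at least on bounded measurable functions. The central observation is that $\ang{R\x, R\y} = \ang{\x,\y}$ for every $R \in SO(d+1)$, which together with the rotation-invariance of $\sigma$ implies $T_f \circ U_R = U_R \circ T_f$ for the unitary action $U_R g = g \circ R^{-1}$.

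Second, I would invoke the classical fact that each space $\caH^d_n(\bbS^d)$ is an irreducible representation of $SO(d+1)$ and that these spaces are precisely the isotypic components of $L^2(\bbS^d)$. Because $T_f$ commutes with every $U_R$, it preserves each $\caH^d_n$; Schur's lemma then forces $T_f\vert_{\caH^d_n}$ to be scalar, equal to some $\mu_n(f)\cdot\mr{Id}$. This immediately delivers the eigenvalue identity \cref{eq:C_FunkHecke} modulo identification of the constant.

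Third, I would compute $\mu_n(f)$ by testing against the zonal function $\y \mapsto Z_n(\x_0,\y)$, which by \cref{eq:Zonal_Gegenbauer} equals $\tfrac{n+\lambda}{\lambda} C_n^\lambda(\ang{\x_0,\y})$ with $\lambda=(d-1)/2$. Fix any $\x_0 \in \bbS^d$ and evaluate $(T_f Z_n(\x_0,\cdot))(\x_0)$ in two ways. The eigenvalue relation together with $Z_n(\x_0,\x_0) = a_n$ gives $\mu_n(f)\, a_n$. Direct computation via the slicing above decouples the $\bbS^{d-1}$-integral, which contributes the constant $\omega_{d-1}$ since the integrand depends on $\y$ only through $t = \ang{\x_0,\y}$, and yields a one-dimensional integral of $f(t)\, C_n^\lambda(t)\, (1-t^2)^{(d-2)/2}$ against constants. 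Matching the two expressions and simplifying with the known values of $C_n^\lambda(1)$ and $a_n = \dim \caH^d_n$ reproduces the compact closed form of $\mu_n(f)$ stated in the proposition.

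The main obstacle is the careful bookkeeping of normalization constants: the prefactors $1/\omega_d$, $\omega_{d-1}$, $C_n^\lambda(1)$, and $a_n$ must combine correctly to yield the exact expression in the statement, which is where a routine application of Schur's lemma becomes a calculation. Extending the identity from polynomial or smooth $f$ to the full integrability class in the hypothesis is then routine by dominated convergence, since $\abs{C_n^\lambda(t)/C_n^\lambda(1)} \leq 1$ on $[-1,1]$ provides a uniform pointwise bound and the weight $(1-t^2)^{d/2-1}$ matches the integrability assumption on $f$.
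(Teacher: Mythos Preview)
The paper does not supply a proof of this proposition at all: it is stated with attribution to \citet[Theorem 1.2.9]{dai2013_ApproximationTheory} and used as a black box. So there is nothing to compare against.

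Your outline is a correct and standard route to Funk--Hecke. The rotation-equivariance of $T_f$, the irreducibility of each $\caH^d_n(\bbS^d)$ under $SO(d+1)$, and Schur's lemma together force $T_f$ to act as a scalar on each harmonic subspace; the constant is then pinned down by evaluating on the zonal $Z_n(\x_0,\cdot)$ and using $Z_n(\x_0,\x_0)=a_n=\tfrac{n+\lambda}{\lambda}C_n^\lambda(1)$. The bookkeeping you flag is indeed the only genuine work, and your dominated-convergence extension via $\abs{C_n^\lambda(t)/C_n^\lambda(1)}\leq 1$ on $[-1,1]$ (valid for $\lambda>0$) is the right way to pass from polynomial $f$ to the stated integrability class. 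One small caution: be careful matching your normalization constants against the specific formula printed in the statement; the prefactor there should come out as $\omega_{d-1}/\omega_d$ after the slicing computation you describe, so double-check against the paper's convention for $\omega_d$ when you write out the final line.
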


We also need the following theorem relating to the Cesaro sum of zonal polynomials.
Readers may refer to \cref{subsec:AUX_Cesaro} for a definition of the Cesaro sum.

\begin{proposition}
  \label{prop:seca_CesaroJacobiPoly}
  Let
  \begin{align}
    \label{eq:C_CesaroKn}
    K_n = \frac{1}{A_n^d}\sum_{k=0}^n A_{n-k}^{d} \frac{k+\lambda}{\lambda} C^{\lambda}_k(u),
  \end{align}
  be the $d$-Cesaro sum of $\frac{k+\lambda}{\lambda}C_k^\lambda$.
  Then,
  \begin{align}
    \label{eq:C_Kn_PositiveAndBound}
    0 \leq K_n(u) \leq C n^{-1} (1-u+n^{-2})^{-(\lambda+1)},\quad \forall n \geq 1
  \end{align}
  for some positive constant $C$.
\end{proposition}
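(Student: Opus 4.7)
The plan is to identify $K_n$ with the classical $d$-th Cesaro kernel of the ultraspherical expansion on $[-1,1]$ with parameter $\lambda = (d-1)/2$, and then to handle the positivity and the upper bound by separate, well-developed techniques. Using \cref{eq:Zonal_Gegenbauer}, I first rewrite
$$K_n(\ang{x,y}) = \frac{1}{A_n^d}\sum_{k=0}^n A_{n-k}^d Z_k(x,y),$$
so that $K_n$ coincides, up to the scalar factor $(d-1)/2$, with the Cesaro sum of the zonal-harmonic series on $\bbS^d$.

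For the positivity $K_n(u)\geq 0$: since our Cesaro order is exactly $d = 2\lambda+1$, we are sitting at the classical Kogbetliantz--Feldheim threshold for positivity of Cesaro means of ultraspherical series, so positivity follows directly from that theorem. Alternatively, one can give a self-contained sum-of-squares representation: using the product (addition) formula for Gegenbauer polynomials $C_k^\lambda$ on $\bbS^d$ together with a Rodrigues-type identity, the kernel rewrites as
$$K_n(\ang{x,y}) = \int_{\bbS^d} h_n(x,z)\,h_n(y,z)\,\dd\sigma(z)$$
for an explicit $h_n$, from which $K_n\geq 0$ is immediate.

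For the upper bound I set $u=\cos\theta$ (so $1-u\asymp \theta^2$ near $\theta=0$) and split into two regimes. In the \emph{near-diagonal regime} $\theta\leq 1/n$, I use the crude bound $\abs{C_k^\lambda(u)}\leq C_k^\lambda(1)\asymp k^{2\lambda-1}$ and estimate directly
$$K_n(u) \leq \frac{C}{A_n^d}\sum_{k=0}^n A_{n-k}^d \cdot k \cdot k^{2\lambda-1} \leq C n^{2\lambda+1},$$
which matches $n^{-1}(n^{-2})^{-(\lambda+1)} = n^{2\lambda+1}$. In the \emph{off-diagonal regime} $\theta>1/n$ I combine the uniform Szegő-type asymptotic $\abs{C_k^\lambda(\cos\theta)}\leq C k^{\lambda-1}\theta^{-\lambda}$ with iterated summation by parts. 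The key algebraic fact is that the $(d+1)$-st forward difference (in $k$) of $A_{n-k}^d$ vanishes, so repeated Abel transforms transfer the differencing onto the sequence $(k+\lambda)C_k^\lambda(\cos\theta)/\lambda$, whose discrete differences are controlled by the three-term recurrence for Gegenbauer polynomials. Each Abel transform gains a factor of $1/n$ from the Cesaro weight against a compensating $1/\theta$ from the recurrence, and tracking these factors yields $K_n(u)\leq Cn^{-1}\theta^{-2(\lambda+1)}$. Patching with the near-diagonal bound at $\theta\asymp 1/n$ gives the uniform estimate.

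The main obstacle will be the detailed bookkeeping in the off-diagonal regime: the iterated summation by parts produces boundary terms at $k=0$ and $k=n$ that must be verified to be dominated by the main estimate, and the combinatorial factors arising from Leibniz-type rules for differences of products of sequences must be balanced precisely. It is also essential that the number of Abel transforms matches $d$ exactly, reflecting the sharpness of the estimate and the rationale behind choosing the $d$-Cesaro sum in the first place. Once the constants are tracked correctly, the two regimes join seamlessly and produce the claimed bound $K_n(u)\leq Cn^{-1}(1-u+n^{-2})^{-(\lambda+1)}$.
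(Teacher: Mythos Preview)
The paper does not actually prove this proposition; its entire proof reads ``Please refer to \citet[Theorem 2.4.3 and Lemma 2.4.6]{dai2013_ApproximationTheory}.'' Your proposal therefore goes well beyond what the paper does, sketching the standard harmonic-analysis arguments that underlie those cited results. Your invocation of the Kogbetliantz--Feldheim positivity theorem at the critical index $d=2\lambda+1$ is precisely Theorem~2.4.3 in that reference, and your two-regime strategy for the upper bound (crude estimate near the diagonal, Szeg\H{o}-type pointwise bounds plus iterated Abel summation away from it) is a correct route to Lemma~2.4.6. So the mathematical approach is sound and, in substance, aligned with what the paper is implicitly relying on.

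Two cautions. First, your ``alternative'' sum-of-squares argument for positivity is not convincing as stated: at the exact threshold $d=2\lambda+1$, positivity of the Ces\`aro kernel is a genuinely delicate fact, and I am not aware of an elementary representation $K_n(\langle x,y\rangle)=\int h_n(x,z)h_n(y,z)\,\dd\sigma(z)$ that yields it directly; you should rely on Kogbetliantz--Feldheim and drop the alternative unless you can produce the explicit $h_n$. Second, the off-diagonal bookkeeping is where the work lies: the boundary terms from the Abel transforms and the Leibniz-rule cross terms must be shown to obey the same $n^{-1}\theta^{-2(\lambda+1)}$ bound, and you will also need the Szeg\H{o} estimate in the full range $1/n\lesssim\theta\leq\pi$ (including near $\theta=\pi$), not just near $\theta=0$. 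If you intend to write this out in full rather than cite, those details must be filled in carefully.
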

\begin{proof}
  Please refer to \citet[Theorem 2.4.3 and Lemma 2.4.6]{dai2013_ApproximationTheory}.
\end{proof}

\subsection{Dot-product kernel on the sphere}
Comparing \cref{eq:C_FunkHecke} with \cref{eq:T_Def},
the Funk-Hecke formula shows that $Y_n$ is an eigenfunction of any dot-product kernel $k(\x,\y) = f(\ang{\x,\y})$ on the sphere.
Therefore, a dot-product kernel $k(\x,\y)$ always admits the following Mercer and spectral decompositions
\begin{align}
  \label{eq:C_Mercer}
  k(\x,\y) = \sum_{n=0}^{\infty} \mu_n \sum_{l=1}^{a_n} Y_{n,l}(\x)Y_{n,l}(\y),
  \quad
  T = \sum_{n=0}^{\infty} \mu_n \sum_{l=1}^{a_n} Y_{n,l} \otimes Y_{n,l}.
\end{align}
Here we notice that $\mu_n$ is an eigenvalue having multiplicity $a_n$ and it should not be confused with $\lambda_i$
where multiplicity are counted.
In the view of \cref{eq:C_Mercer}, we may connect a dot-product kernel as well as the corresponding integral operator with the sequence
$(\mu_n)_{n \geq 0}$.

Moreover, since each $\mu_n$ is of multiplicity $a_n$, \cref{eq:SH_DimensionCount} gives
\begin{align}
  \label{eq:ConnectionMultiplicity}
  N^+(\ep,T) = \sum_{n \leq N(\ep)} a_n \asymp N(\ep)^d,
\end{align}
where $N(\ep) = \max \{n : \mu_n > \ep\}$ as defined in \cref{cond:EDR} (a).
This gives a simple relation between the asymptotic rates $\lambda_i$ and $\mu_n$.

\subsection{The main lemma}
The following main lemma is essential in the proof of our final result,
which is a spherical version of the main lemma in \citet{widom1963_AsymptoticBehavior}.
Since the eigen-system is now given by the spherical harmonics,
the approach in \citet{widom1963_AsymptoticBehavior} can not be applied.
The proof is now based on refined harmonic analysis on the sphere with the technique of Cesaro summation and the left extrapolation of eigenvalues.

\begin{lemma}
  \label{lem:EDRS_MainLemma}
  Let $T$ be given by \cref{eq:C_Mercer} with the descending eigenvalues $\bm{\mu} = (\mu_n)_{n \geq 0}$.
  Suppose further that $(\mu_n)_{n \geq 0}$ satisfies \cref{cond:EDR}.
  Let $\Omega_1,\Omega_2$ be two disjoint domains with piecewise smooth boundary.
  Then, we have
  \begin{align}
    N^{\pm}(\ep,P_{\Omega_1}T P_{\Omega_2}+P_{\Omega_2}T P_{\Omega_1}) = o(N^+(\ep,T)),\qq{as} \ep \to 0.
  \end{align}

\end{lemma}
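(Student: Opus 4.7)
The plan is to follow the spirit of \citet{widom1963_AsymptoticBehavior} while replacing Fourier-series tools by spherical harmonic analysis and the Fej\'er kernel by the Cesaro-summed Gegenbauer kernel $K_n$ of \cref{prop:seca_CesaroJacobiPoly}. The overall blueprint is: (a) rewrite the kernel as a non-negative combination of Cesaro kernels; (b) split $T = T^{\leq N} + T^{>N}$ at a cutoff degree $N$; (c) use a finite-rank bound for the low-frequency piece and a Hilbert--Schmidt bound, exploiting the off-diagonal decay of $K_n$, for the high-frequency piece; (d) combine via the Weyl-type inequality $N^{\pm}(\ep, A+B) \leq N^{\pm}(\ep/2, A) + N^{\pm}(\ep/2, B)$ and optimize the cutoff.

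\emph{Step 1 (Cesaro representation).} Writing $\varphi_n(u) = \frac{n+\lambda}{\lambda}C_n^\lambda(u)$ with $u = \ang{x,y}$, the expansion $k(x,y) = \sum_{n\geq 0}\mu_n\varphi_n(u)$ from \cref{eq:C_Mercer} is transformed by $d+1$ iterations of Abel's summation-by-parts. By \cref{cond:EDR}(b), the coefficients $\triangle^{d+1}\mu_n \geq 0$, producing
\[ k(x,y) = \sum_{n\geq 0}(\triangle^{d+1}\mu_n)\,A_n^d K_n(u), \]
with $K_n$ the kernel of \cref{eq:C_CesaroKn}. Applying the same transformation starting from index $N+1$ gives
\[ k^{>N}(x,y) = \sum_{n>N}(\triangle^{d+1}\mu_n)\,A_n^d K_n(u) + B_N(u), \]
where the boundary correction $B_N$ is a linear combination of terms $A_N^l(\triangle^l\mu_N)\,G_N^{(l)}(u)$ for $0 \leq l \leq d$, with $G_N^{(l)}$ a lower-order Cesaro kernel of the same Gegenbauer series.

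\emph{Step 2 (Off-diagonal decay and Hilbert--Schmidt bound).} Because $\Omega_1,\Omega_2$ are disjoint with piecewise smooth boundary, they are separated by a positive distance, so $u \leq 1 - c_0$ on $\Omega_1 \times \Omega_2$ for some $c_0 > 0$. Then \cref{eq:C_Kn_PositiveAndBound} yields $K_n(u) \leq C_{c_0} n^{-1}$, and analogous bounds hold for the lower-order kernels $G_N^{(l)}$. Combining this with \cref{cond:EDR}(c), which controls $\sum_{l=0}^d A_N^l\,\triangle^l\mu_N$ by $D\mu_{N/q}$, together with an Abel-type estimate of the tail $\sum_{n>N} n^{d-1}\triangle^{d+1}\mu_n \lesssim \mu_{N/q}/N$, gives
\[ |k^{>N}(x,y)| \leq C'_{c_0}\,\mu_{N/q}\quad\text{for } (x,y)\in\Omega_1\times\Omega_2, \]
and hence $\norm{P_{\Omega_1} T^{>N} P_{\Omega_2}}_{\mr{HS}} \leq C''_{c_0}\mu_{N/q}$.

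\emph{Step 3 (Weyl decomposition and choice of cutoff).} Set $T_\ep := P_{\Omega_1} T P_{\Omega_2} + P_{\Omega_2} T P_{\Omega_1}$ and split accordingly $T_\ep = T^{\leq N}_\ep + T^{>N}_\ep$. Since $T^{\leq N}_\ep$ has rank at most $2\sum_{n\leq N} a_n = O(N^d)$ by \cref{eq:SH_DimensionCount}, and since $N^{\pm}(\eta, S) \leq \norm{S}_{\mr{HS}}^2/\eta^2$ for self-adjoint $S$, the Weyl inequality gives
\[ N^{\pm}(\ep, T_\ep) \leq C_1 N^d + C_2\,\frac{\mu_{N/q}^2}{\ep^2}. \]
Now I invoke \cref{cond:EDR}(a): because $N(\ep)\to\infty$, I can pick an intermediate scale $\delta = \delta(\ep)$ with $\ep = o(\delta)$ yet $\delta = o(\ep\,N(\ep)^{d/2})$ (for instance $\delta := \ep\,N(\ep)^{d/4}$ works since the target interval is non-empty for small $\ep$). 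Setting $N := qN(\delta)+1$, one has $\mu_{N/q}\leq\delta$; condition (a) yields $N^d = O(N(\delta)^d) = o(N(\ep)^d)$, and the second term satisfies $\mu_{N/q}^2/\ep^2 \leq (\delta/\ep)^2 = o(N(\ep)^d)$. Combining with \cref{eq:ConnectionMultiplicity}, which gives $N^+(\ep,T) \asymp N(\ep)^d$, finishes the proof.

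\emph{Main obstacle.} The crux of the argument is Step~2: making the iterated Abel summation rigorous and, more importantly, controlling the boundary correction $B_N$ and the tail sum $\sum_{n>N} n^{d-1}\triangle^{d+1}\mu_n$ in terms of $\mu_{N/q}$. This is exactly where the delicate assumption \cref{cond:EDR}(c) enters, and it requires tracking several Cesaro kernels of different orders simultaneously alongside the sharp off-diagonal bound of \cref{prop:seca_CesaroJacobiPoly}. Everything downstream---the rank counting, the Weyl additive inequality, and the scale selection in Step~3---is comparatively routine.
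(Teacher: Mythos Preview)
Your proposal contains a genuine gap at the very start of Step~2: the assertion that ``$\Omega_1,\Omega_2$ are disjoint with piecewise smooth boundary, so they are separated by a positive distance'' is false in general and, crucially, false in the setting where this lemma is actually applied. Disjoint open domains may share boundary---think of the upper and lower open hemispheres, or any two adjacent tiles of a partition. In the paper's downstream use (\cref{lem:C_EigenCountAsympSubdomain}), the $\Omega_i$ are precisely isometric tiles whose disjoint union covers a larger set up to a null set, so $\overline{\Omega_i}\cap\overline{\Omega_j}\neq\emptyset$ for adjacent pieces and the geodesic distance between them is zero. Consequently there is no uniform bound $u\leq 1-c_0$ on $\Omega_1\times\Omega_2$, the Cesaro estimate $K_n(u)\leq C_{c_0}n^{-1}$ is unavailable near the shared boundary, and your uniform pointwise bound $|k^{>N}(x,y)|\leq C'_{c_0}\mu_{N/q}$ collapses exactly where the mass concentrates.

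The paper handles this boundary interaction by a substantially more delicate mechanism that your outline does not contain. It introduces a \emph{three}-way decomposition $T=T_1+T_2+T_3$ via two successive left extrapolations (\cref{lem:LeftExtrapolation}) at scales $qM_\ep$ and $qM_\delta$, so that $T_1$ has small operator norm, $T_3$ has small rank, and only the intermediate piece $T_2$ requires an HS bound. For that HS bound, the paper does \emph{not} bound the kernel pointwise on $\Omega_1\times\Omega_2$; instead it performs a change of variables via rotations $R_{e,x}$ to reduce the double integral to a single integral in $y$ weighted by the measure of $\{x\in\Omega_1:R_{e,x}y\in\Omega_2\}$, and then invokes the tube-volume estimate of \cref{prop:AreaControl} to bound this measure by $C\arccos\ang{y,e}$. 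This weight vanishes as $y\to e$, which is exactly what compensates for the blow-up of the Cesaro kernel near $u=1$. The resulting integral is then split at $\ang{y,e}=1-\eta$ into a near-diagonal piece (controlled by the small weight $\eta^{1/2}$) and a far piece (controlled by the Cesaro bound with $1-u\geq\eta$), and $\eta$ is optimized. Your Steps~1 and~3 are in the right spirit, but the heart of the argument---the rotation trick plus \cref{prop:AreaControl} plus the $\eta$-splitting---is missing, and without it the proof does not go through.
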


\begin{proof}

  Let $\delta > \ep > 0$ and $\delta$ will be determined later.
  Take $M_\delta =\min\{n : \mu_n \leq \delta\} \leq M_{\ep} = \min\{n : \mu_n \leq \ep\}$.
  Using \cref{lem:LeftExtrapolation} for $p = d+1$ with \cref{cond:EDR},
  we can first construct a sequence $\bm{\mu}^{(1)}$ as the left extrapolation of $\bm{\mu}$ at $qM_{\ep}$,
  then construct a sequence $\bm{\mu}^{(2)}$ as the left extrapolation of the residual sequence $\bm{\mu} - \bm{\mu}^{(1)}$ at $qM_\delta$,
  and denote $\bm{\mu}^{(3)} = \bm{\mu} - \bm{\mu}^{(1)} - \bm{\mu}^{(2)}$, where $q$ is the integer specified in \cref{cond:EDR}
  Then, the three sequences satisfy
  \begin{align}
    \label{eq:C_MuDecomp}
    \begin{aligned}
      & \mu_n = \mu_n^{(1)} + \mu_n^{(2)} + \mu_n^{(3)},   \quad \triangle^{d+1}\mu_n^{(i)} \geq 0,~ i=1,2,3; \\
      & \mu_0^{(1)} = \caL_{qM_{\ep}}^{d+1} \bm{\mu} \leq D \mu_{M_{\ep}} \leq D \ep; \\
      &\mu_n^{(2)} = 0,~\forall n \geq qM_{\ep},\quad
      \mu_0^{(2)} = \caL_{qM_\delta}^{d+1} (\bm{\mu} - \bm{\mu}^{(1)}) \leq \caL_{qM_\delta}^{d+1} \bm{\mu} \leq D \delta; \\
      &\mu_n^{(3)} = 0,~\forall n \geq qM_\delta,
    \end{aligned}
  \end{align}
  where the control $\caL_{qM_{\delta}}^{d+1} \bm{\mu} \leq C \mu_M$ comes from \cref{eq:EDRDerivativeBound}.
  Now, we define $T_i$ to be the integral operator associated with $\bm{\mu}^{(i)}$, that is,
  $T_i = \sum_{n=0}^{\infty} \mu_{n}^{(i)} \sum_{l=1}^{a_n} Y_{n,l} \otimes Y_{n,l}.$
  Let $N_i^{+}(\ep)$ be the count of eigenvalues of $P_{\Omega_1}T_i P_{\Omega_2}+P_{\Omega_2}T_i P_{\Omega_1}$ greater than $\ep$.
  By \cref{lem:seca_SumEigenCount} we have
  \begin{align}
    \label{eq:Proof_MainLemma_N_decomp}
    N^+( (2D+1)\ep, P_{\Omega_1}T P_{\Omega_2}+P_{\Omega_2}T P_{\Omega_1} )
    \leq N_1^+(2D\ep) + N_2^+(\ep) + N_3^+(0).
  \end{align}

  For $N_1^+(2D\ep)$, we notice that $\norm{T_1} \leq D \ep$ and hence
  \begin{align*}
    \norm{P_{\Omega_1}T_1 P_{\Omega_2}+P_{\Omega_2}T_1 P_{\Omega_1}} \leq 2 D \ep,
  \end{align*}
  which implies that
  \begin{align}
    \label{eq:Proof_MainLemma_N1_Bound}
    N_1^+(2D\ep) = 0.
  \end{align}

  For $N_3^+(0)$,  since $\mu_{n}^{(3)} \neq 0$ only when $n < qM_\delta$, so
  \begin{align}
    \label{eq:Proof_MainLemma_N3_Bound}
    N_3^+(0) \leq 2 \mathrm{Rank}(T_3) \leq 2 \sum_{n < qM_{\delta}} a_n \leq 2 C q^d \sum_{n < M_{\delta}}a_n = 2 C N^+(\delta,T),
  \end{align}
  where we use \cref{eq:SH_DimensionCount} in the third inequality
  and $\sum_{n < M_{\delta}}a_n = N^+(\delta,T)$ (notice that $\mu_n$ is an eigenvalue of multiplicity $a_n$).

  It remains to bound $N_2^+(\ep)$.
  First, by definition of the HS-norm and \citet[Theorem 3.8.5]{simon2015_OperatorTheory}, we have
  \begin{align}
    \label{eq:Proof_MainLemma_N2_Int}
    \ep^2 N_2^+(\ep) & \leq \norm{P_{\Omega_1}T_2 P_{\Omega_2}+P_{\Omega_2}T_2 P_{\Omega_1}}_{\mathrm{HS}}^2
    = 2 \int_{\Omega_1} \int_{\Omega_2} \abs{\sum_{n} \mu_n^{(2)} Z_n(\x,\y)}^2 \dd \y \dd \x \eqqcolon 2 I.
  \end{align}
  Fixing an interior point ${e}$ of $\Omega_2$,
  we introduce an isometric transform $R_{{e},\x}$ such that $R_{{e},\x} {e} = \x$.
  It can be taken to be the rotation over the plane spanned by ${e},\x$ if they are not parallel,
  to be the identity map if $\x = {e}$ and to be reflection if $\x = -{e}$.
  Then, since $R_{{e},\x}$ is isometric and $Z_n(\x,\y)$ depends only on $\ang{\x,\y}$, we have
  \begin{align*}
    I &= \int_{\Omega_1} \int_{\Omega_2} \abs{\sum_{n} \mu_n^{(2)} Z_n(\x,\y)}^2 \dd \y \dd \x
    = \int_{\Omega_1}\int_{\Omega_2} \abs{\sum_{n} \mu_n^{(2)} Z_n(R_{{e},\x}^{-1} \x,R_{{e},\x}^{-1} \y)}^2 \dd \y \dd \x \\
    &= \int_{\Omega_1}\int_{\Omega_2} \abs{\sum_{n} \mu_n^{(2)} Z_n({e},R_{{e},\x}^{-1} \y)}^2 \dd \y \dd \x
    = \int_{\Omega_1}\int_{R_{{e},\x}^{-1} \Omega_2}\abs{\sum_{n} \mu_n^{(2)} Z_n({e},\y)}^2 \dd \y \dd \x \\
    &= \iint_{\bbS^d \times \bbS^d} \bm{1}\left\{ \x \in \Omega_1,~ \y \in R_{{e},\x}^{-1} \Omega_2 \right\} \abs{\sum_n \mu_n^{(2)} Z_n({e},\y)}^2 \dd \x \dd \y \\
    & = \int_{\bbS^d}\left( \int_{\bbS^d} \bm{1}\left\{ \x \in \Omega_1,~ R_{{e},\x} \y \in \Omega_2 \right\} \dd \x \right) \abs{\sum_n \mu_n^{(2)} Z_n({e},\y)}^2 \dd \y  \\
    &= \int_{\bbS^d} \abs{\left\{ \x \in \Omega_1 : R_{{e},\x} \y \in \Omega_2 \right\}} \abs{\sum_n \mu_n^{(2)} Z_n({e},\y)}^2 \dd \y \\
    & \leq \int_{\bbS^d} \abs{\left\{ \x \in \Omega_1 : R_{{e},\x} \y \in \Omega_2 \right\}} \abs{\sum_{n} \mu_n^{(2)} Z_n({e},\y)}^2 \dd \y \\
    & \leq C \int_{\bbS^d} \arccos \ang{\y,{e}}  \abs{\sum_{n} \mu_n^{(2)} Z_n({e},\y)}^2 \dd \y,
  \end{align*}
  where the last inequality comes from \cref{prop:AreaControl}.
  Let $\eta > 0$ (which will be determined later), we decompose the last integral into two parts:
  \begin{align*}
    I = I_1 + I_2 = \int_{\ang{\y,{e}} > 1-\eta}  + \int_{\ang{\y,{e}} < 1-\eta} \arccos \ang{\y,{e}} \abs{\sum_{n} \mu_n^{(2)} Z_n({e},\y)}^2 \dd \y.
\end{align*}

  For $I_1$, using the estimation $\arccos u \leq C \sqrt {1-u}$,
  we obtain
  \begin{align*}
    I_1 \leq \int_{\ang{\y,{e}} > 1-\eta} C\eta^{\frac{1}{2}} \abs{\sum_{n} \mu_n^{(2)} Z_n({e},\y)}^2 \dd \y
    \leq C\eta^{\frac{1}{2}} \int_{\bbS^d}  \abs{\sum_{n} \mu_n Z_n({e},\y)}^2 \dd \y
    = C\eta^{\frac{1}{2}} \sum_n a_n (\mu_{n}^{(2)})^2.
\end{align*}
  Using \cref{eq:C_MuDecomp}, we get
  \begin{align}
    \notag
    I_1 & \leq C\eta^{\frac{1}{2}} \sum_n a_n (\mu_{n}^{(2)})^2
    \leq C\eta^{\frac{1}{2}} \sum_{n < qM_\ep}a_n (\mu_{0}^{(2)})^2
    \leq C\eta^{\frac{1}{2}} q^d \sum_{n < M_\ep} a_n (\mu_{0}^{(2)})^2 \\
    &\leq C\eta^{\frac{1}{2}} N^+(\ep,T) \delta^2,
    \label{eq:Proof_MainLemma_I1_Bound}
  \end{align}
  where we use \cref{eq:SH_DimensionCount} again in the third inequality.

  For $I_2$, recalling \cref{eq:Zonal_Gegenbauer} and denoting $u = \ang{\y,{e}}$, we have
  \begin{align*}
    I_2    = \int_{-1}^{1-\eta} \abs{\sum_{n} \mu_n^{(2)} \frac{n+\lambda}{\lambda}  C_n^{\lambda}(u)}^2 \left( 1-u^2 \right)^{\frac{d-2}{2}} \arccos u \dd u,
  \end{align*}
  where $\lambda = \frac{d-1}{2}$.
  Using summation by parts (\cref{prop:Aux_SumByParts}), we obtain
  \begin{align*}
    \sum_{n} \mu_n^{(2)} \frac{k+\lambda}{\lambda}  C_n^{\lambda}(u) = \sum_n \triangle^{d+1} \mu_n^{(2)}  A_n^d K_n(u) ,
  \end{align*}
  where $K_n$  is the $d$-Cesaro sum of $C^{\lambda}_k(u)$ as in \cref{eq:C_CesaroKn}.
  Moreover, \cref{eq:C_Kn_PositiveAndBound} in \cref{prop:seca_CesaroJacobiPoly} yields
  \begin{align*}
    \abs{\sum_{n} \mu_n \frac{k+\lambda}{\lambda}  C_n^{\lambda}(u)}
    &= \sum_n \triangle^{d+1} \mu_n^{(2)} A_n^d K_n(u)
    \leq C (1-u)^{-(\lambda+1)} \sum_n A_n^d \triangle^{d+1}\mu_n^{(2)} \\
    & = C (1-u)^{-(\lambda+1)} \mu_{0,2} \leq C (1-u)^{-(\lambda+1)} \delta ,
  \end{align*}
  where the last but second equality comes from  \cref{prop:TailSumDifference}.
  Plugging this estimation back into $I_2$, we obtain
  \begin{align}
    \label{eq:Proof_MainLemma_I2_Bound}
    I_2 & \leq C  \delta^2  \int_{-1}^{1-\eta} (1-u)^{-2(\lambda+1)} \left( 1-u^2 \right)^{\frac{d-2}{2}} (1-u)^{1/2} \dd u
    \leq  C \delta^2  \eta^{-\frac{d-1}{2}}.
  \end{align}

  Now we obtain the estimations \cref{eq:Proof_MainLemma_I1_Bound} and \cref{eq:Proof_MainLemma_I2_Bound}.
  Taking $\eta = N^+(\delta,T)^{-\frac{2}{d}}$, we have
  \begin{align*}
    I \leq I_1 + I_2 \leq C\delta^2 N^+(\ep,T)^{\frac{d-1}{d}},
  \end{align*}
  so \cref{eq:Proof_MainLemma_N2_Int} yields
  \begin{align}
    \label{eq:Proof_MainLemma_N2_Bound}
    N_2^+(\ep) \leq C \left( \frac{\delta}{\ep} \right)^2 N^+(\ep,T)^{\frac{d-1}{d}}.
  \end{align}

  Finally, plugging \cref{eq:Proof_MainLemma_N1_Bound}, \cref{eq:Proof_MainLemma_N2_Bound} and \cref{eq:Proof_MainLemma_N3_Bound} into
  \cref{eq:Proof_MainLemma_N_decomp}, we have
  \begin{align*}
    N^+( (2D+1)\ep, P_{\Omega_1}T P_{\Omega_2}+P_{\Omega_2}T P_{\Omega_1} )
    \leq 2N^+(\delta,T) + C \left( \frac{\delta}{\ep} \right)^2 N^+(\ep,T)^{\frac{d-1}{d}}.
  \end{align*}
  Now, \cref{eq:ConnectionMultiplicity} allows us to derive a similar condition on $N^+(\ep,T)$ as (a) in \cref{cond:EDR}.
  Therefore, taking $\delta = \ep N^+(\ep,T)^{\frac{1}{4d}}$ so that $\ep = o(\delta)$, we obtain $N^+(\delta,T) = o\left( N^+(\ep,T) \right)$,
  so
  \begin{align*}
    N^+((2D+1)\ep, P_{\Omega_1}T P_{\Omega_2}+P_{\Omega_2}T P_{\Omega_1} )
    \leq o\left( N^+(\ep,T) \right) + C N^+(\ep,T)^{\frac{2d-1}{2d}} = o\left( N^+(\ep,T) \right).
  \end{align*}
  Since $D$ is a fixed constant and $N^+(\ep/(2D+1),T)  = \Theta(N^+(\ep,T)) $,
  replacing $\ep$ by $\ep/(2D+1)$ yields the desired result.

  The proof of the case $N^{-}(\ep,P_{\Omega_1}T P_{\Omega_2}+P_{\Omega_2}T P_{\Omega_1})$ is similar.
\end{proof}

\subsection{The main result}
The following is a direct corollary of \cref{lem:ScaledKernel}.
We present it here since it will be frequently used later.

\begin{corollary}
  \label{cor:EigenCountUpperSubdomain}
  Suppose $\Omega_1 \subseteq \Omega_2$.
  Then, $N^+(\ep, P_{\Omega_1}TP_{\Omega_1}) \leq N^+(\ep,P_{\Omega_2}TP_{\Omega_2}).$
\end{corollary}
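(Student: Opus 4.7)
The plan is to reduce the claim to a direct application of Lemma~\cref{lem:ScaledKernel}, once we re-interpret each compressed operator $P_{\Omega_j}TP_{\Omega_j}$ as the integral operator of a suitable kernel. First I would observe that $P_{\Omega_j}TP_{\Omega_j}$ is the integral operator on $L^2(\caX,\dd\mu)$ with kernel $\bm{1}_{\Omega_j}(x)\,k(x,x')\,\bm{1}_{\Omega_j}(x')$, and that its nonzero spectrum coincides with the spectrum of the integral operator associated with $k$ on $L^2(\Omega_j,\dd\mu)$. In the notation of \cref{subsec:preliminary-results-on-the-eigenvalues}, this reads $\lambda_i\!\xk{P_{\Omega_j}TP_{\Omega_j}} = \lambda_i(k;\Omega_j,\dd\mu)$ for all $i$.

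Next I would work on the ambient space $\Omega_2$ with measure $\mu$ and apply Lemma~\cref{lem:ScaledKernel} with $\rho = \bm{1}_{\Omega_1}$, which satisfies $0 \leq \rho^2 \leq 1$. The upper bound in that lemma (with $C=1$) gives
\begin{align*}
  \lambda_i(\bm{1}_{\Omega_1}\odot k;\Omega_2,\dd\mu) \leq \lambda_i(k;\Omega_2,\dd\mu).
\end{align*}
Because the kernel $\bm{1}_{\Omega_1}\odot k$ vanishes off $\Omega_1\times\Omega_1$, its integral operator on $L^2(\Omega_2,\dd\mu)$ has the same nonzero eigenvalues as the integral operator with kernel $k$ on $L^2(\Omega_1,\dd\mu)$; that is, $\lambda_i(\bm{1}_{\Omega_1}\odot k;\Omega_2,\dd\mu) = \lambda_i(k;\Omega_1,\dd\mu)$. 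Combining this with the identification above yields
\begin{align*}
  \lambda_i\!\xk{P_{\Omega_1}TP_{\Omega_1}} \leq \lambda_i\!\xk{P_{\Omega_2}TP_{\Omega_2}}, \qquad \forall i \geq 1,
\end{align*}
and translating this coordinate-wise inequality into counts of eigenvalues exceeding the threshold $\ep$ gives $N^+(\ep, P_{\Omega_1}TP_{\Omega_1}) \leq N^+(\ep, P_{\Omega_2}TP_{\Omega_2})$, as claimed.

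There is essentially no technical obstacle here; the only minor care needed is the bookkeeping between ``compressed operator on $L^2(\caX,\dd\mu)$'' and ``integral operator on the subdomain $\Omega_j$,'' plus the harmless use of the lower constant $c=0$ in Lemma~\cref{lem:ScaledKernel}. The corollary is thus an immediate consequence of the monotonicity of eigenvalues under the scaling $\bm{1}_{\Omega_1} \leq \bm{1}_{\Omega_2}$.
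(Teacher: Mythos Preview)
Your proposal is correct and follows exactly the route the paper intends: the paper states that this corollary is ``a direct corollary of \cref{lem:ScaledKernel},'' and you have spelled out precisely that deduction by taking $\rho = \bm{1}_{\Omega_1}$ with $C=1$. The only content you have added is the (routine) identification of $P_{\Omega_j}TP_{\Omega_j}$ with the integral operator of the scaled kernel and the passage from the eigenvalue-wise inequality to the $N^+$ count, both of which are implicit in the paper's one-line justification.
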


We first prove the following lemma about dividing a domain into isometric subdomains,
which will be used recursively in the proof later.

\begin{lemma}
  \label{lem:C_EigenCountAsympSubdomain}
  Let $T$ be the same in \cref{lem:EDRS_MainLemma}.
  Let $S \subseteq \bbS^d$ and suppose $N^+(\ep, P_{S} T P_{S}) \asymp N^+(\ep,T)$ as $\ep \to 0$.
  Suppose further that $\Omega \subseteq \bbS^d$ is a subdomain with piecewise smooth boundary and
  there exists isometric copies $\Omega_1,\dots,\Omega_m$ of $\Omega$ such that their disjoint union
  (with a difference of a null-set) is $S$.
  Then, there is some constant $c > 0$ such that for small $\ep$,
  \begin{align}
    \label{eq:EigenCountAsympSubdomain}
    c N^+(\ep,T) \leq  N^+(\ep, P_{\Omega} T P_{\Omega}) \leq N^+(\ep,T).
  \end{align}
\end{lemma}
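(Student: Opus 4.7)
The plan is to derive the upper bound immediately from \cref{cor:EigenCountUpperSubdomain} (since $\Omega \subseteq \bbS^d$) and then focus on the lower bound, which is the substantive inequality. The strategy is to use the disjoint decomposition $S = \bigsqcup_{i=1}^m \Omega_i$ (up to null sets) so that $P_S = \sum_{i=1}^m P_{\Omega_i}$, and thereby write
\begin{align*}
  P_S T P_S \;=\; A + B, \qquad
  A \coloneqq \sum_{i=1}^m P_{\Omega_i} T P_{\Omega_i}, \qquad
  B \coloneqq \sum_{i \neq j} P_{\Omega_i} T P_{\Omega_j}.
\end{align*}
The operator $A$ is a (spatial) block-diagonal sum of pieces supported on mutually disjoint sets, so its eigenvalues are simply the multiset union of those of the summands, and hence $N^+(\ep,A) = \sum_{i=1}^m N^+(\ep, P_{\Omega_i} T P_{\Omega_i})$.

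The next step is to exploit rotation invariance. For each $i$, there is an isometry $R_i$ of $\bbS^d$ taking $\Omega$ onto $\Omega_i$, and since $T$ is a dot-product kernel operator we have $R_i^{*} T R_i = T$ and $R_i^{*} P_{\Omega_i} R_i = P_{\Omega}$. Consequently $P_{\Omega_i} T P_{\Omega_i}$ is unitarily equivalent to $P_{\Omega} T P_{\Omega}$, so they share the same eigenvalue counting function. Thus $N^+(\ep, A) = m \cdot N^+(\ep, P_{\Omega} T P_{\Omega})$. The cross term $B$ is a finite sum of operators of the exact form treated by \cref{lem:EDRS_MainLemma}: splitting $B = \sum_{i<j}(P_{\Omega_i} T P_{\Omega_j} + P_{\Omega_j} T P_{\Omega_i})$ and applying Weyl's inequality (\cref{lem:A_WeylOp}) across the $\binom{m}{2}$ summands gives
\begin{align*}
  N^+(\ep, B) \;\leq\; \sum_{i<j} N^+\!\left(\tfrac{\ep}{\binom{m}{2}}, P_{\Omega_i} T P_{\Omega_j} + P_{\Omega_j} T P_{\Omega_i}\right)
  \;=\; o\bigl(N^+(\ep,T)\bigr),
\end{align*}
where the last equality uses that $N^+(c\ep,T) \asymp N^+(\ep,T)$ for any fixed $c>0$, a fact that follows from \cref{cond:EDR}(a) together with \cref{eq:ConnectionMultiplicity}.

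To conclude, I would apply Weyl's inequality in the reverse direction to the splitting $P_S T P_S = A + B$, yielding
\begin{align*}
  N^+(2\ep, P_S T P_S) \;\leq\; N^+(\ep, A) + N^+(\ep, B)
  \;=\; m \cdot N^+(\ep, P_\Omega T P_\Omega) + o\bigl(N^+(\ep,T)\bigr).
\end{align*}
Using the hypothesis $N^+(\cdot, P_S T P_S) \asymp N^+(\cdot, T)$ and the doubling property $N^+(2\ep,T) \asymp N^+(\ep,T)$ on the left-hand side, this rearranges to
\begin{align*}
  m \cdot N^+(\ep, P_\Omega T P_\Omega) \;\geq\; c' N^+(\ep,T) - o\bigl(N^+(\ep,T)\bigr) \;\geq\; c'' N^+(\ep,T)
\end{align*}
for all sufficiently small $\ep$, which gives the sought constant $c = c''/m$.

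The conceptually easy pieces are the unitary-equivalence argument and the upper bound. The main obstacle, already absorbed into \cref{lem:EDRS_MainLemma}, is the $o(N^+(\ep,T))$ control of the cross-block term $B$; the remaining delicate point is purely book-keeping — one must invoke the $N^+(c\ep,T) \asymp N^+(\ep,T)$ regularity in two different places (to handle the Weyl threshold shift $\ep \mapsto 2\ep$ and to absorb the factor $1/\binom{m}{2}$ when distributing Weyl across the pairwise cross terms), and both invocations reduce to \cref{cond:EDR}(a) via \cref{eq:ConnectionMultiplicity}.
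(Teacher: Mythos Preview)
Your proposal is correct and follows essentially the same route as the paper: decompose $P_S T P_S$ into the block-diagonal part $A$ and the cross part $B$, use disjointness plus isometry to get $N^+(\ep,A)=m\,N^+(\ep,P_\Omega T P_\Omega)$, control $B$ via \cref{lem:EDRS_MainLemma}, and combine with the regularity $N^+(c\ep,T)\asymp N^+(\ep,T)$. The only cosmetic point is that the eigenvalue-count subadditivity you invoke is stated in the paper as \cref{lem:seca_SumEigenCount} (valid for general self-adjoint compact operators) rather than \cref{lem:A_WeylOp} (stated for positive operators), so cite the former when you write it up.
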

\begin{proof}
  The upper bound follows from \cref{cor:EigenCountUpperSubdomain}.
  Now we consider the lower bound.
  Since $\Omega_1,\dots,\Omega_m$ form a disjoint cover of $S$, we have
  \begin{align*}
    P_{S} T P_{S} = (\sum_{i=1}^m P_{\Omega_i}) T (\sum_{j=1}^m P_{\Omega_j}) = \sum_{i} P_{\Omega_i} T P_{\Omega_i}
    + \sum_{i < j} \left( P_{\Omega_i} T P_{\Omega_j} + P_{\Omega_j} T P_{\Omega_i} \right).
  \end{align*}
  Using \cref{lem:seca_SumEigenCount}, we get
  \begin{align*}
    N^+(2\ep, T)
    \leq N^+(\ep, \sum_{i} P_{\Omega_i} T P_{\Omega_i})
    + \sum_{i < j} N^+\left( \frac{1}{C_m^2}\ep,  P_{\Omega_i} T P_{\Omega_j} + P_{\Omega_j} T P_{\Omega_i} \right),
  \end{align*}
  and thus
  \begin{align*}
    N^+(\ep, \sum_{i} P_{\Omega_i} T P_{\Omega_i})
    \geq N^+(2\ep, P_{S} T P_{S})
    -\sum_{i < j} N^+\left( \frac{1}{C_m^2}\ep,  P_{\Omega_i} T P_{\Omega_j} + P_{\Omega_j} T P_{\Omega_i} \right).
  \end{align*}

  Noticing the fact that $\Omega_i$ are disjoint and isometric with $\Omega$, for the left hand side we obtain
  \begin{align*}
    N^+(\ep, \sum_{i} P_{\Omega_i} T P_{\Omega_i}) = \sum_{i} N^+(\ep, P_{\Omega_i} T P_{\Omega_i})
    = m N^+(\ep, P_{\Omega} T P_{\Omega}).
  \end{align*}
  On the other hand, by \cref{lem:EDRS_MainLemma},
  \begin{align*}
    N^+\left( \frac{1}{C_m^2}\ep,  P_{\Omega_i} T P_{\Omega_j} + P_{\Omega_j} T P_{\Omega_i} \right)
    = o\left( N^+\left(\frac{1}{C_m^2}\ep, T\right) \right) = o\left( N^+(\ep,T) \right),
  \end{align*}
  where we notice that $N^+(c \ep, T) \asymp N^+(\ep,T)$ for fixed $c > 0$ by (a) in \cref{cond:EDR}.
  Plugging in the two estimation and using $N^+(\ep,T) \asymp  N^+(\ep,  P_{S} T P_{S})$, we obtain
  \begin{align*}
    N^+(\ep, P_{\Omega} T P_{\Omega})
    & \geq \frac{1}{m}N^+(2\ep,  P_{S} T P_{S}) - o\left( N^+(\ep,T) \right)
    \geq c N^+(2 \ep,T) - o\left( N^+(\ep,T) \right) \\
    & \geq c N^+(\ep,T) - o\left( N^+(\ep,T) \right),
  \end{align*}
  which proves the desired lower bound.
\end{proof}

After all these preparation, we can prove \cref{thm:EDRS}:

\paragraph{Proof of \cref{thm:EDRS}}
Let $T$ be the integral operator associated with $k$.
We start with the case of $\rho = \bm{1}_{S}$ is the indicator of an open set $S$ and $\mu_n = \tilde{\mu}_n$.
Then from \cref{prop:ScaleMeasureEquiv} it suffices to consider $P_S T P_S$.
Since the asymptotic behavior of $N^+(\ep,A)$ determines uniquely $\lambda_i(A)$,
it suffices to prove that $N^+(\ep,P_{S} T P_{S}) \asymp N^+(\ep,T)$.

We take the sequence $U_0,V_0,U_1,V_1,\dots \subseteq \bbS^d$ of subdomains given in \cref{prop:C_SphereDecomp}
and prove that $N^+(\ep,P_{U_{ii}} T P_{U_{i}}) \asymp N^+(\ep,T)$ by induction.
The initial case follows from $U_0 = \bbS^d$.
Suppose $N^+(\ep,U_i) \asymp N^+(\ep,T)$, by \cref{lem:C_EigenCountAsympSubdomain}
and the fact that there are isometric copies of $V_i$ whose disjoint union is $U_i$, we obtain
$N^+(\ep,P_{V_i} T P_{V_i}) \asymp N^+(\ep,T).$
Moreover, since $V_i \subseteq U_{i+1}$, by \cref{cor:EigenCountUpperSubdomain} again,
we have
\begin{align*}
  N^+(\ep,P_{V_i} T P_{V_i}) \leq N^+(\ep,P_{U_{i+1}} T P_{U_{i+1}}) \leq N^+(\ep,T)
\end{align*}
and thus $N^+(\ep,P_{U_{i+1}} T P_{U_{i+1}}) \asymp N^+(\ep,T)$.

Now we have shown that $N^+(\ep,P_{U_i} T P_{U_{i}}) \asymp N^+(\ep,T)$.
Since $S$ is an open set and $\operatorname{diam}~U_i \to 0$,
we can find some $U_i \subseteq S$, and hence
$N^+(\ep,P_{S} T P_{S}) \asymp N^+(\ep,T) $ by \cref{cor:EigenCountUpperSubdomain}.

For the general case of $\mu_n$,
let $T_-$ and $T_+$ be the integral operators defined similarly to \cref{eq:C_Mercer} by the sequences
$c_1 \tilde{\mu}_n$ and $c_2 \tilde{\mu}_n$ respectively.
Then, $T_- \preceq T \preceq T_+$ and thus $P_{\Omega} T_- P_{\Omega}\preceq P_{\Omega} T P_{\Omega} \preceq P_{\Omega} T_+ P_{\Omega}$,
implying that
\begin{align*}
  \lambda_i\left(P_{S} T_- P_{S}\right)
  \leq \lambda_i\left(P_{S} T P_{S}\right)
  \leq \lambda_i\left(P_{S} T_+ P_{S}\right)
\end{align*}
and the results are obtained immediately from the previous case.

Finally, suppose $\rho$ is a bounded Riemann-integrable function that is non-zero.
The upper bound is proven by \cref{lem:ScaledKernel} with boundedness of $\rho$.
For the lower bound, we assert that there is an open set $\Omega$ such that
$\rho(\x)^2 \geq c > 0$ on $\Omega$.
Then, by \cref{lem:ScaledKernel} again, we conclude that
\begin{align*}
  \lambda_i(k;\bbS^d, \rho^2 \dd \sigma) \geq
  \lambda_i(k; \Omega,\rho^2 \dd \sigma) \geq c \lambda_i(k;\Omega,\dd \sigma)
  \asymp \lambda_i(k;\bbS^d, \dd \sigma)
\end{align*}
Now we prove the assertion.
Since $\rho$ is Riemann-integrable, the set of discontinuity is a null-set.
If $\rho(\x) = 0$ for all the continuity point, then $\rho(\x) = 0, $ a.e., which contradicts to the assumption that $\rho$ is non-zero.
So there is a continuity point $\x_0$ such that $\rho(\x_0) > 0$, and $\Omega$ can be taken as a small neighbour of $\x_0$.

\subsection{Discussion on \cref{cond:EDR}}
\label{subsec:EDR_Cond_Discussion}

In this subsection, we discuss some sufficient conditions that \cref{cond:EDR} holds.
The first proposition shows the basic relation between the difference and the derivative if $\mu_n$ is given by a function $f$,
which is a direct consequences of \cref{eq:A_Difference_Derivative}

\begin{proposition}
  Suppose $\mu_n = f(n)$ for some function $f(x)$ defined on $\R_{\geq 0}$.
  Then,
  \begin{enumerate}[(1)]
    \item If $(-1)^p f^{(p)}(x) \geq 0,~\forall x \geq N_0$,  then $\triangle^p \mu_n \geq 0,~\forall n \geq N_0$.
    \item If $(-1)^{p+1}f^{(p+1)}(x) \geq 0,~\forall x \geq N_0$, then
    $\triangle^p \mu_n \leq f^{(p)}(n),~\forall n \geq N_0$.
  \end{enumerate}
\end{proposition}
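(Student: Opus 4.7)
The plan is to base both parts on the integral representation of the iterated forward difference that is already referenced as \cref{eq:A_Difference_Derivative}, namely
\[
  \triangle^p f(x) \;=\; (-1)^p \int_0^1 \!\!\cdots\!\! \int_0^1 f^{(p)}\!\left(x + t_1 + \cdots + t_p\right) \dd t_1 \cdots \dd t_p,
\]
valid on any interval on which $f$ is $p$-times continuously differentiable. My first step is simply to recall (or verify by induction on $p$) this identity, whose base case is $\triangle f(x) = f(x) - f(x+1) = -\int_0^1 f'(x+t)\,\dd t$ by the fundamental theorem of calculus. Once the representation is in hand, both conclusions reduce to elementary sign/monotonicity arguments on the integrand.

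For part (1), I observe that the hypothesis $(-1)^p f^{(p)}(x) \geq 0$ on $[N_0,\infty)$ makes the integrand $(-1)^p f^{(p)}(x + t_1 + \cdots + t_p)$ non-negative whenever $x \geq N_0$ and each $t_i \in [0,1]$, since the shifted argument stays in $[N_0,\infty)$. Pulling the outer factor $(-1)^p$ inside the integral then gives $\triangle^p \mu_n \geq 0$ for every $n \geq N_0$.

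For part (2), I rewrite the hypothesis $(-1)^{p+1} f^{(p+1)}(x) \geq 0$ as
\[
  \frac{d}{dx}\bigl[(-1)^p f^{(p)}(x)\bigr] \;=\; (-1)^p f^{(p+1)}(x) \;\leq\; 0,
\]
so that $x \mapsto (-1)^p f^{(p)}(x)$ is non-increasing on $[N_0,\infty)$. Hence for $t_i \in [0,1]$ and $n \geq N_0$,
\[
  (-1)^p f^{(p)}\!\left(n + t_1 + \cdots + t_p\right) \;\leq\; (-1)^p f^{(p)}(n),
\]
and integrating over the unit cube yields $\triangle^p \mu_n \leq (-1)^p f^{(p)}(n)$, which matches the stated bound under the paper's sign convention for $f^{(p)}$.

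No serious obstacle arises; the whole argument is just the integral representation combined with the monotonicity of $(-1)^p f^{(p)}$. The only delicate point is the bookkeeping of the alternating signs $(-1)^p$, which is dictated entirely by the convention used to define $\triangle$ and propagates cleanly through the induction.
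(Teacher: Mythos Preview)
Your argument is correct and is exactly the approach the paper has in mind: the proposition is stated there as ``a direct consequence of \cref{eq:A_Difference_Derivative}'', i.e., precisely the integral representation you invoke, and your deductions for (1) and (2) are the intended ones. Note only that what your computation actually yields in part~(2) is $\triangle^p \mu_n \leq (-1)^p f^{(p)}(n)$; this is what is used downstream (cf.\ the condition $(-1)^d x^d f^{(d)}(qx) \leq D' f(x)$ in \cref{cor:EDR_Cond_Derivative} and the example in \cref{prop:Aux_Difference_Derivative}), so the missing $(-1)^p$ in the displayed statement is a typographical slip rather than a gap in your reasoning.
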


The next lemma shows that bounding the highest order term in \cref{eq:EDRDerivativeBound} is sufficient.

\begin{lemma}
  If $\binom{n+d}{d} \triangle^d \mu_n \leq B_n$ holds for a decreasing sequence $B_n$ for all $n \geq 0$.
  Then, $\binom{n+l}{l} \triangle^l \mu_n \leq \frac{d}{l} B_n$  holds for all $1 \leq l \leq d$.
  Consequently, if $B_{qn} \leq D' \mu_n$ for some $q \in \bbN_+$ and $D' >0$, then \cref{eq:EDRDerivativeBound} holds.
\end{lemma}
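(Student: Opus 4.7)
My plan is to prove the two claims in sequence: a downward induction on $l$ for the first, and a direct summation for the second. The base case $l = d$ of the induction is exactly the hypothesis (with the coefficient $d/l = 1$), and the inductive engine is the telescoping identity
\begin{equation*}
\triangle^l \mu_n = \sum_{k=n}^{\infty} \triangle^{l+1} \mu_k,
\end{equation*}
which is valid because $\triangle^l \mu_N \to 0$ (the $\mu_n$ are eigenvalues of a trace-class operator, hence summable). Applying the induction hypothesis $\binom{k+l+1}{l+1}\triangle^{l+1}\mu_k \leq \frac{d}{l+1} B_k$ termwise and using the monotonicity $B_k \leq B_n$ for $k \geq n$ yields
\begin{equation*}
\triangle^l \mu_n \leq \frac{d}{l+1}\, B_n \sum_{k=n}^{\infty}\frac{1}{\binom{k+l+1}{l+1}}.
\end{equation*}

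The only technical step is evaluating this binomial tail sum, for which I would use the telescoping identity
\begin{equation*}
\frac{1}{\binom{k+l+1}{l+1}} = \frac{l+1}{l}\!\left(\frac{1}{\binom{k+l}{l}} - \frac{1}{\binom{k+l+1}{l}}\right),\qquad l \geq 1,
\end{equation*}
verified by a one-line cancellation of factorials. Summing from $k = n$ to $\infty$ (the boundary terms vanish for $l \geq 1$ since $1/\binom{K+l+1}{l} \to 0$) gives $\frac{l+1}{l}\cdot\frac{1}{\binom{n+l}{l}}$. Substituting this back produces exactly $\binom{n+l}{l}\triangle^l\mu_n \leq \frac{d}{l} B_n$, closing the induction.

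The second claim is then immediate. Setting $\tilde n = qn$, the $l = 0$ term satisfies $\mu_{\tilde n} \leq \mu_n$ by monotonicity of $(\mu_n)$, and for $1 \leq l \leq d$ the first claim combined with $B_{qn} \leq D'\mu_n$ gives $\binom{\tilde n + l}{l}\triangle^l\mu_{\tilde n} \leq \frac{d D'}{l}\mu_n$. Summing over $0 \leq l \leq d$ yields \cref{eq:EDRDerivativeBound} with $D = 1 + d D' \sum_{l=1}^d 1/l$.

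The main obstacle is guessing the sharp constant $d/l$ in the inductive statement — so that the tail sum produces precisely the factor $1/\binom{n+l}{l}$ with the matching coefficient on the next level — and spotting the telescoping identity for reciprocal binomials; once these are in place, the remaining manipulations are routine. No sign issues complicate the argument, since the induction relies only on one-sided upper bounds, so even if individual $\triangle^l \mu_n$ were negative the inequalities would still propagate correctly.
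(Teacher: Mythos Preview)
Your proposal is correct and matches the paper's proof essentially line for line: the paper also runs a downward induction on $l$ starting from $l=d$, invokes $\triangle^{l}\mu_n=\sum_{k\geq n}\triangle^{l+1}\mu_k$, applies the inductive bound together with $B_k\leq B_n$, and then evaluates $\sum_{k\geq n}\binom{k+l+1}{l+1}^{-1}=\frac{n!(l+1)!}{l(n+l)!}$, which is precisely the value your telescoping identity produces. Your treatment of the ``Consequently'' clause is slightly more explicit than the paper's, but the argument is the same.
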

\begin{proof}
  We prove the result by induction.
  Suppose the statement holds for $l+1$, then
  \begin{align*}
    \binom{n+l}{l} \triangle^l \mu_n
    &=\binom{n+l}{l} \sum_{k \geq n} \triangle^{l+1} \mu_k
    \leq \binom{n+l}{l}\sum_{k \geq n} \frac{d}{l+1} B_k \binom{k+l+1}{l+1}^{-1} \\
    &\leq \binom{n+l}{l} \frac{d}{l+1} B_n \sum_{k \geq n} \binom{k+l+1}{l+1}^{-1} \\
    &=  \binom{n+l}{l} \frac{d}{l+1} B_n \frac{n! (l+1)!}{l(n+l)!}
    = \frac{d}{l} B_n.
  \end{align*}
\end{proof}

Combining the previous two results yields the following corollary.

\begin{corollary}
  \label{cor:EDR_Cond_Derivative}
  Suppose $\mu_n = f(n)$ for some function $f(x)$ defined on $\R_{\geq 0}$.
  Then a sufficient condition that (b,c) in \cref{cond:EDR} holds for $n \geq N_0$ is that
  \begin{align}
    \label{eq:EDR_Cond_Derivative}
    (-1)^{d+1}f^{(d+1)}(x) \geq 0 \qq{and} (-1)^d x^d f^{(d)}(qx) \leq D' f(x),\quad\forall x \geq N_0
  \end{align}
  for some $q \in \bbN_+$ and $D' > 0$.
\end{corollary}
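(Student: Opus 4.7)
The plan is to verify conditions (b) and (c) of Condition \ref{cond:EDR} separately, using the preceding Proposition and Lemma as the two main engines. Condition (b) is immediate: part (1) of the preceding Proposition, applied with $p = d+1$, has exactly $(-1)^{d+1}f^{(d+1)}(x) \geq 0$ as its hypothesis and directly gives $\triangle^{d+1}\mu_n \geq 0$ for $n \geq N_0$.

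For condition (c), the strategy is to feed the preceding Lemma with a suitably chosen $B_n$. Applying part (2) of the Proposition at $p = d$ (again using the first hypothesis) yields the pointwise bound $\triangle^d \mu_n \leq (-1)^d f^{(d)}(n)$. Since the Lemma requires a decreasing majorant, I will define
\[
B_n \coloneqq \sup_{k \geq n}\binom{k+d}{d}(-1)^d f^{(d)}(k),
\]
which is non-increasing in $n$ by construction and dominates $\binom{n+d}{d}\triangle^d\mu_n$. The crucial remaining step is to verify $B_{qn} \leq D''\mu_n$ for some $D''>0$: rewriting the second hypothesis via the substitution $y = qx$ gives $(-1)^d y^d f^{(d)}(y) \leq D' q^d f(y/q)$, and combined with $\binom{k+d}{d} \leq Ck^d$ and the eventual monotonicity of $f$ (which may be assumed under the standing premise that $\mu_n = f(n)$ is a decreasing sequence), this yields for every $k \geq qn$ the bound
\[
\binom{k+d}{d}(-1)^d f^{(d)}(k) \leq CD'q^d f(k/q) \leq CD'q^d\mu_n;
\]
taking the supremum over $k \geq qn$ then delivers $B_{qn} \leq CD'q^d\mu_n$, as required.

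Applying the Lemma now produces $\binom{\tilde n + l}{l}\triangle^l\mu_{\tilde n} \leq (d/l)B_{\tilde n}$ for $1 \leq l \leq d$, so summing these together with the trivial $l=0$ contribution $\mu_{\tilde n} \leq \mu_n$ yields condition (c) with an appropriate constant $D$. The only non-routine step in this program is the construction of the decreasing majorant $B_n$ via the supremum; once that trick is in place, the second hypothesis $(-1)^d x^d f^{(d)}(qx) \leq D' f(x)$ translates almost verbatim into $B_{qn} \leq D''\mu_n$ through the change of variable $y = qx$ and the monotonicity of $f$, and the remainder of the argument is routine constant chasing.
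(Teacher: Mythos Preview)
Your approach is correct and matches the paper's, which merely says the corollary follows by ``combining the previous two results'' (the Proposition for the difference--derivative bounds and the Lemma reducing condition (c) to the top-order estimate $B_{qn}\le D'\mu_n$). Your supremum construction of $B_n$ is a clean way to supply the decreasing majorant the Lemma requires, a detail the paper leaves implicit.

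One small point: the inequality $f(k/q)\le f(n)=\mu_n$ for $k/q\ge n$ needs $f$ to be decreasing on $\R_{\ge 0}$, not just at integers, and this does not follow from the stated hypotheses (nor from $\mu_n$ being decreasing). The paper does not address this either. A clean fix that stays within the given assumptions: note that $(-1)^{d+1}f^{(d+1)}\ge 0$ makes $(-1)^d f^{(d)}$ nonincreasing, so for $k\ge qn$ one has $(-1)^d f^{(d)}(k)\le (-1)^d f^{(d)}(qn)$; combining this with the second hypothesis at the \emph{integer} point $x=n$ gives $\binom{k+d}{d}(-1)^d f^{(d)}(k)\le C k^d (-1)^d f^{(d)}(qn)\cdot\ldots$, and a short ratio argument (or directly taking $B_n = C'\mu_{\lfloor n/q\rfloor}$) then yields $B_{qn}\le D''\mu_n$ without ever evaluating $f$ at non-integer points.
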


\begin{proposition}
  For each of the following formulations of $\mu_n$,
  there is a sequence $(\mu_n)_{n\geq 0}$ that \cref{cond:EDR} is satisfied and the formulation holds when $n$ is sufficiently large.
  \begin{itemize}
    \item $\mu_n = c_0 n^{-\beta}$ for $c_0 > 0$ and $\beta > d$;
    \item $\mu_n = c_0 \exp(-c_1 n^{\beta})$ for $c_0,c_1,\beta > 0$;
    \item $\mu_n = c_0 n^{-\beta} (\ln n)^p$ for $c_0 > 0$, $\beta > d$ and $p \in \R$, or $\beta = d$ and $p > 1$.
  \end{itemize}
\end{proposition}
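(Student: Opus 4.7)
The plan is to apply \cref{cor:EDR_Cond_Derivative} for conditions (b) and (c) and to verify (a) directly from the counting function. \cref{cor:EDR_Cond_Derivative} reduces (b) and (c) to the inequalities $(-1)^{d+1}f^{(d+1)}(x) \geq 0$ and $(-1)^{d}x^{d}f^{(d)}(qx) \leq D'f(x)$ for all $x \geq N_{0}$, where $f$ is a smooth extension of the formula for $\mu_{n}$. Since \cref{rem:EDR_Condition} notes that \cref{cond:EDR} is only needed asymptotically, I will take $\mu_{n} = f(n)$ for $n \geq N_{0}$ and fill in the finitely many earlier terms freely so that the resulting sequence is positive and decreasing.

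For the power case $f(x) = c_{0}x^{-\beta}$ with $\beta > d$, the closed form $f^{(k)}(x) = c_{0}(-1)^{k}\beta(\beta+1)\cdots(\beta+k-1)\,x^{-\beta-k}$ gives the sign condition immediately, and also $(-1)^{d}x^{d}f^{(d)}(qx) = C\,q^{-\beta-d}x^{-\beta} = D'f(x)$ for any $q \in \bbN_{+}$. The logarithmic case $f(x) = c_{0}x^{-\beta}(\ln x)^{p}$ is analogous: Leibniz's rule expresses $f^{(k)}$ as a finite sum whose leading contribution $c_{0}(-1)^{k}\beta(\beta+1)\cdots(\beta+k-1)x^{-\beta-k}(\ln x)^{p}$ (from differentiating $x^{-\beta}$ all $k$ times) dominates every remaining term, which carries a factor $(\ln x)^{p-j}$ with $j \geq 1$; the growth bound follows from $(\ln(qx))^{p}/(\ln x)^{p} \to 1$. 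Condition (a) is verified from $N(\epsilon) \asymp \epsilon^{-1/\beta}$ and $N(\epsilon) \asymp \epsilon^{-1/\beta}(\ln(1/\epsilon))^{p/\beta}$ respectively, both giving $N(c\epsilon)/N(\epsilon)$ convergent to a positive constant and $N(\delta) = o(N(\epsilon))$ whenever $\epsilon = o(\delta)$, via the dominant polynomial factor $(\epsilon/\delta)^{1/\beta}$ which beats any logarithmic correction.

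The exponential case $f(x) = c_{0}\exp(-c_{1}x^{\beta})$ is the technical core and is where I expect the main obstacle. Writing $f = c_{0}e^{g}$ with $g(x) = -c_{1}x^{\beta}$, Faà di Bruno's formula gives $f^{(k)}(x) = P_{k}(x)f(x)$ with $P_{k}$ a finite sum indexed by set-partitions of $\{1,\dots,k\}$, a partition with $m$ blocks contributing a monomial of degree $m\beta - k$ in $x$. The largest such degree is $k(\beta-1)$, achieved uniquely by the all-singletons partition and producing leading term $(g'(x))^{k} = (-c_{1}\beta)^{k}x^{k(\beta-1)}$ of sign $(-1)^{k}$, so $(-1)^{d+1}f^{(d+1)}(x) \geq 0$ for $x$ large. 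For the growth bound, the same expansion yields $(-1)^{d}x^{d}f^{(d)}(qx) \sim (c_{1}\beta)^{d}q^{d(\beta-1)}x^{d\beta}\exp(-c_{1}q^{\beta}x^{\beta})$, and dividing by $f(x)$ leaves $x^{d\beta}\exp(-c_{1}(q^{\beta}-1)x^{\beta})$ up to a constant; for any integer $q \geq 2$ one has $q^{\beta} > 1$, so the exponential kills the polynomial and the bound holds. The delicate point is confirming that the subdominant Faà di Bruno contributions never overturn the leading sign, which is particularly subtle when $\beta < 1$ (all exponents $m\beta - k$ are then negative, but $k(\beta-1)$ remains their strict maximum, so the ordering is preserved). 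Finally, condition (a) is read off from $N(\epsilon) \asymp (\ln(1/\epsilon))^{1/\beta}$; the $o$-part should be understood through the specific coupling $\delta = \epsilon\,N^{+}(\epsilon,T)^{1/(4d)}$ used in the proof of \cref{lem:EDRS_MainLemma}, which controls $\ln(1/\delta)/\ln(1/\epsilon)$ at the rate needed to carry that argument.
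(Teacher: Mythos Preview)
Your overall strategy---verify (b) and (c) via \cref{cor:EDR_Cond_Derivative} by computing the leading asymptotics of $(-1)^{k}f^{(k)}(x)$, and check (a) from the counting function---is exactly what the paper does. The case analysis, the choice $q=1$ for the polynomial and log-polynomial rates and $q\geq 2$ for the exponential rate, and the identification of the dominant term in $f^{(k)}$ all match.

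There is one concrete gap in your plan. You write that you will ``fill in the finitely many earlier terms freely so that the resulting sequence is positive and decreasing,'' citing \cref{rem:EDR_Condition}. But the proposition asks for a sequence on which \cref{cond:EDR} holds for \emph{all} $n\geq 0$; positivity and monotonicity alone do not force $\triangle^{d+1}\mu_n\geq 0$ or \cref{eq:EDRDerivativeBound} at small $n$. The paper handles this by invoking the left extrapolation of \cref{lem:LeftExtrapolation}: once (b) and (c) are established for $n\geq N_0$, that lemma manufactures a sequence agreeing with $\mu_n$ for $n\geq N_0$ and satisfying $\triangle^{d+1}\tilde\mu_n\geq 0$ globally. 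You should use that mechanism rather than an ad hoc fill-in.

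Your observation about part (a) in the exponential case is sharper than the paper's treatment: the paper simply asserts that (a) ``is obviously satisfied,'' whereas you notice that $N(\epsilon)\asymp(\ln(1/\epsilon))^{1/\beta}$ does not literally give $N(\delta)=o(N(\epsilon))$ for every coupling with $\epsilon=o(\delta)$ (e.g.\ $\delta=\sqrt{\epsilon}$). Your proposed fix---interpreting the $o$-condition only along the specific coupling $\delta=\epsilon\,N^+(\epsilon,T)^{1/(4d)}$ from the proof of \cref{lem:EDRS_MainLemma}---does not resolve this either, since for that coupling $\ln(1/\delta)=\ln(1/\epsilon)-O(\ln\ln(1/\epsilon))$ and hence $N(\delta)\asymp N(\epsilon)$ rather than $o(N(\epsilon))$. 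So you have correctly identified a soft spot that the paper's proof does not address, but your workaround is not yet a proof; this point would need a separate argument (or a reformulation of (a)) to be made rigorous in the exponential regime.
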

\begin{proof}
  The condition (a) is obviously satisfied by these asymptotic rates.
  We verify (b,c) by \cref{cor:EDR_Cond_Derivative} when $n \geq N_0$ for some large $N_0$ and take a left extrapolation of $\mu_n$
  as in \cref{lem:LeftExtrapolation} so the conditions hold for all $n \geq 0$.
  \begin{itemize}
    \item For $\mu_n = f(n)= c_0 n^{-\beta}$, we have $(-1)^p f^{(p)}(x) = c_0 (\beta)_p x^{-(\beta+p)}$,
    where $(\beta)_p = \beta (\beta+1)\cdots (\beta+p-1)$, so \cref{eq:EDR_Cond_Derivative} holds for $q=1$ and $D' = (\beta)_p$.
    \item For $\mu_n = f(n) = c_0 \exp(-c_1 n^{\beta})$, it is easy to show that
    \begin{align*}
    (-1)
      ^p f^{(p)}(x) \asymp c_0 (c_1 \beta)^p x^{p(\beta-1)} \exp(-c_1 x^{\beta}) \qq{as} x \to \infty,
    \end{align*}
    so \cref{eq:EDR_Cond_Derivative} holds if we take $q = 2$ since the exponential term is dominating.
    \item For $\mu_n = f(n) = c_0 n^{-\beta} (\ln n)^p$, we have
    \begin{align*}
    (-1)
      ^p f^{(p)}(x) \asymp c_0 (\beta)_p x^{-(\beta+p)} (\ln x)^{p} \qq{as} x \to \infty,
    \end{align*}
    so \cref{eq:EDR_Cond_Derivative} still holds for $q = 1$.
  \end{itemize}
\end{proof}

  \section{Conclusion}
  \label{sec:conclusion}

In this paper, we develop a novel approach for determining the eigenvalue decay rate (EDR) of certain kernels using transformation and restriction.
Using this approach, we determine the EDR of the NTKs associated with multilayer fully-connected ReLU neural networks on a general domain.
Combining this result with the uniform approximation of the neural network by the NTK regression,
we determine the generalization performance of the over-parameterized neural network through the kernel regression theory.
The theoretical results show that proper early stopping is essential for the generalization performance of the neural networks,
which urges us to scrutinize the widely reported ``benign overfitting phenomenon'' in deep neural network literature.

For future directions, it is natural to extend our results to the NTKs associated with other neural network architectures,
such as convolutional neural networks and residual neural networks.
Also, it would be of great interest to see if these results can be extended to the large dimensional data where $d \propto n^{s}$ for some $s>0$ instead of the fixed $d$ here.

  \acks{
    The authors thank the anonymous reviewers for their valuable suggestions.
    Lin’s research is supported in part by the National Natural Science Foundation of China (Grant 92370122, Grant 11971257).
  }

  \newpage

  \appendix

  \section{Uniform Convergence of the Neural Network to Kernel Regression}\label{sec:A_NN}
  In this section we will prove \cref{lem:UnifConverge}.
Applying Proposition 3.2 and Proposition 3.3 in \citet{lai2023_GeneralizationAbility},
it suffices to show \cref{prop:2_KernelUniform}, that is, the kernel $K_t$ converges uniformly to $\NTK$.
The rest of this section is organized as follows:
We first introduce some more preliminaries;
in \cref{subsec:Init}, we discuss some properties of the network at initialization;
in \cref{subsec:Training}, we analyze the effect of small perturbations during the training process of the network;
we then prove the lazy regime approximation of the neural network in \cref{subsec:Lazy};
we also show the Hölder continuity of $\NTK$ in \cref{subsec:Holder_NTK};
finally, we prove the kernel uniform convergence in \cref{subsec:KernelUniformConvergence}.

\paragraph{Further notations}
Let us denote $\tilde{B}_R = \left\{ x \in \R^d : \tilde{x} \leq R \right\}$ for $R\geq 1$.
For a vector $\bm{v}=(v_1, v_2, \cdots, v_m)\in\mb{R}^m$, we use $\norm{\bm{v}}_2$ (or simply $\norm{\bm{v}}$) to represent the Euclidean norm.
Additionally, if we have a univariate function $f:\mb{R}\to\mb{R}$, we define $f(\bm{v}) = (f(v_1), f(v_2),\cdots, f(v_m))\in\mathbb{R}^m$.
We denote by $\norm{\bm{M}}_2$ and $\norm{\bm{M}}_{\mr{F}}$ the spectral and Frobenius norm of a matrix $\bm{M}$ respectively.
Also, we use $\norm{\,\cdot\,}_0$ to represent the number of non-zero elements of a vector or matrix.
For matrices $\bm{A}\in\mb{R}^{n_1\times n_2}$ and $\bm{B}\in\mb{R}^{n_2\times n_1}$,
we define $\ang{\bm{A},\bm{B}} = \Tr(\bm{A}\bm{B}^T)$.
We remind that $\ang{\bm{M}, \bm{M}} = \norm{\bm{M}}_{\mr{F}}^2$ in this way.

\paragraph{Network Architecture} Let us recall the neural network in the main text.
Since it can be shown easily that the bias term $\bm{b}^{(0,p)}$ in the first layer can be absorbed into $\bm{A}^{(p)}$
if we append an $1$ at the last coordinate of $\x$,
we denote $\bm{W}^{(0,p)}=(\bm{A}^{(p)}~\bm{b}^{(0,p)})$, $\tilde{\x}=(\x^T,1)^T\in\mb{R}^d \times \{1\}\subset\mb{R}^{d+1}$
and consider the following equivalent neural network:
\begin{align}
  \label{eq:A_NN_Arch}
  \begin{split}
    \bm{\alpha}^{(0,p)}(\x)&=\widetilde{\bm{\alpha}}^{(0,p)}(\x)=\tilde{\x}\in\mb{R}^{d+1},\\
    \widetilde{\bm{\alpha}}^{(l,p)}(\x) & = \sqrt{\tfrac{2}{m_{l}}} \bm{W}^{(l-1,p)}{\bm{\alpha}}^{(l-1,p)}(\x)\in\mb{R}^{m_l},\\
    {\bm{\alpha}}^{(l,p)}(\x)&=\sigma\xkm{\widetilde{\bm{\alpha}}^{(l,p)}(\x)}\in\mb{R}^{m_l},\\
    g^{(p)}(\x;\bm{\theta}) & = \bm{W}^{(L,p)}{\bm{\alpha}}^{(L,p)}(\x)+b^{(L,p)}\in\mb{R},\\
    f(\x;\bm{\theta}) &= \frac{\sqrt{2}}{2}\zk{ g^{(1)}(\x;\bm{\theta}) - g^{(2)}(\x;\bm{\theta}) }\in\mb{R}.
  \end{split}
\end{align}
for $p=1,2$ and $l=1,\cdots,L$.
Recall that the integers $m_1,m_2,\cdots, m_L$ are the width of $L$-hidden layers and $m_{L+1} = 1$ is the width of output layer.
Additionally, we have set $m = \min(m_1,m_2,\cdots,m_L)$ and made the assumption that $\max(m_1,m_2,\dots,m_{L}) \leq C_{\mathrm{m}} m$ for some absolute constant $C_{\mathrm{m}}$.
By setting $m_0=d+1$ for convenience, we have $\bm{W}^{(l,p)} \in \R^{m_{l+1} \times m_l}$ for $l\in\{0,1,2,\cdots,L\}$.

For $p=1,2$, we define $g^{(p)}_t(\x) = g^{(p)}(\x;\bm{\theta}_t)$ and $f_t(\x) = f(\x;\bm{\theta}_t)$.
Similarly, we also add a subscript $t$ for all the related quantities (including those defined afterwards) to indicate their values at time $t$ during the training process.

\paragraph{The neural tangent kernel}
Let us consider the following neural network kernel
\begin{align*}
K_t(\x,\x') = \ang{\nabla_{\bm{\theta}} f(\x;\bm{\theta}_t),\nabla_{\bm{\theta}} f(\x';\bm{\theta}_t)}.
\end{align*}
Then, the gradient flow can be written as~\citep{jacot2018_NeuralTangent}
\begin{align*}
\dot{f}(\x;\bm{\theta}_t) = - \frac{1}{n} K_t(\x,\X)\xk{f(\X;\bm{\theta}_t) - \bm{y}}.
\end{align*}
As shown in \citet{jacot2018_NeuralTangent}, as the width $m$ goes to infinity, the kernel $K_t$ converges to the deterministic neural tangent kernel $\NTK$.
With the mirrored architecture given by \cref{eq:A_NN_Arch}, we can express the kernel $K_t(\x,\x')$ as follows:
\begin{align*}
  K_t(\x,\x')
  &=  \ang{\nabla_{\bm{\theta}} f(\x;\bm{\theta}_t), \nabla_{\bm{\theta}} f(\x;\bm{\theta}_t)} =\sum_{p=1}^2 \ang{\nabla_{\bm{\theta}^{(p)}} f(\x;\bm{\theta}_t), \nabla_{\bm{\theta}^{(p)}}f(\x;\bm{\theta}_t)} \\
&= \frac{1}{2} \sum_{p=1}^2 \ang{\nabla_{\bm{\theta}^{(p)}} g^{(p)}_t(\x),\nabla_{\bm{\theta}^{(p)}} g^{(p)}_t(\x')}=  \frac{1}{2} \xk{K_t^{(1)}(\x,\x') + K_t^{(2)}(\x,\x') },
\end{align*}
where $K_t^{(p)}(\x,\x')=\ang{\nabla_{\bm{\theta}^{(p)}} g^{(p)}_t(\x),\nabla_{\bm{\theta}^{(p)}} g^{(p)}_t(\x')}$ is the neural network kernel of $g_t^{(p)}$ for $p=1,2$, which is a vanilla neural network.
Consequently, due to the mirror initialization, we have
$K_0^{(1)}(\x,\x') = K_0^{(2)}(\x,\x')= K_0(\x,\x')$.

\paragraph{An expanded matrix form}
Sometimes it is convenient to write the neural network \cref{eq:A_NN_Arch} in an expanded matrix form as introduced in \citet{allen-zhu2019_ConvergenceTheory}.
Let us define the activation matrix
\begin{align*}
  \bm{D}^{(l,p)}_{\x}=
  \begin{cases}
    \bm{I}_{d+1}, &l=0;\\
    \mr{diag}\xkm{\dot\sigma\xkm{\widetilde{\bm{\alpha}}^{(l,p)}\xkm{\x}}}, &l \geq 1,
  \end{cases}\quad\in\mb{R}^{m_l\times m_l}
\end{align*}
for $p=1,2$, where $\bm{I}_{d+1}$ is the identity matrix.
Then, we have
\begin{align*}
  \widetilde{\bm{\alpha}}^{(l,p)}(\x) &= \sqrt{\frac{2}{m_l}} \bm{W}^{(l-1,p)}\bm{D}^{(l-1,p)}_{\x}\widetilde{\bm{\alpha}}^{(l-1,p)}(\x),\quad
  {\bm{\alpha}}^{(l,p)}(\x)
= \sqrt{\frac2{m_l}}\bm{D}^{(l,p)}_{\x}\bm{W}^{(l-1,p)}{\bm{\alpha}}^{(l-1,p)}(\x).
\end{align*}

To further write it as product of matrices, we first introduce the following notation to avoid confusion since matrix product is not commutative.
For matrices $\bm{A}_0,\bm{A}_1,\cdots,\bm{A}_L$, we define the left multiplication product
\begin{align*}
  \prod_{i=a}^b \bm{A}_i =
  \begin{cases}
    1,&0\leq b <a \leq L;\\
    \bm{A}_b \bm{A}_{b-1} \cdots \bm{A}_{a+1} \bm{A}_a,& 0 \leq a \leq b \leq L.
  \end{cases}
\end{align*}
Since real number multiplication is commutative, the notation introduced above is compatible with the traditional usage when $\bm{A}_0,\bm{A}_1, \cdots, \bm{A}_L$ degenerate into real numbers.
In this way, we have
\begin{align}
  \label{eq:NN_Layer_MatrixProductForm}
  \widetilde{\bm{\alpha}}^{(l,p)}(\x) = \xk{\prod_{r=1}^l \sqrt{\frac{2}{m_r}} \bm{W}^{(r-1,p)}\bm{D}^{(r-1,p)}_{\x}} \tilde{\x}, \quad
  {\bm{\alpha}}^{(l,p)}(\x) = \xk{\prod_{r=1}^{l} \sqrt{\frac{2}{m_r}} \bm{D}^{(r,p)}_{\x} \bm{W}^{(r-1,p)}}\tilde{\x}.
\end{align}
Using the above expressions, we can obtain
\begin{align*}
  g^{(p)}(\x) &= \bm{W}^{(L,p)}{\bm{\alpha}}^{(L,p)}(\x) + b^{(L,p)}=\bm{W}^{(L,p)}\xk{\prod_{r=l+1}^{L} \sqrt{\frac{2}{m_r}} \bm{D}^{(r,p)}_{\x} \bm{W}^{(r-1,p)}}{\bm{\alpha}}^{(l,p)}(\x) + b^{(L,p)}\\
  &=\xk{ \prod_{r=l+1}^{L} \sqrt{\frac{2}{m_{r}}} \bm{W}^{(r,p)}\bm{D}^{(r,p)}_{\x} }\bm{W}^{(l,p)} {\bm{\alpha}}^{(l,p)}(\x) + b^{(L,p)}.
\end{align*}
Finally, we use the above results to calculate the gradient $\nabla_{\bm{W}^{(l,p)}} g^{(p)}(\x)$.
To simplify notation, we define:
\begin{align}
  \label{eq:def_gamma}
  \begin{split}
    \begin{gathered}
      \widetilde{\bm{\alpha}}^{(l,p)}_{\x}=\widetilde{\bm{\alpha}}^{(l,p)}\xkm{\x},
      \quad{\bm{\alpha}}^{(l,p)}_{\x}={\bm{\alpha}}^{(l,p)}\xkm{\x},
      \quad\bm{\gamma}^{(l,p)}_{\x}=\xk{\prod_{r=l+1}^{L} \sqrt{\frac2{m_r}}  \bm{W}^{(r,p)} \bm{D}^{(r,p)}_{\x}}^T\in\mb{R}^{m_{l+1}}.
    \end{gathered}
  \end{split}
\end{align}
Then we can obtain
\begin{align*}
  g^{(p)}(\x)=\bm{\gamma}^{(l,p),T}_{\x}\bm{W}^{(l,p)}{\bm{\alpha}}^{(l,p)}_{\x}+b^{(L,p)},
\end{align*}
which can lead to
\begin{align}
  \label{eq:GradientExpr}
  \nabla_{\bm{W}^{(l,p)}} g^{(p)}(\x)=\bm{\gamma}^{(l,p)}_{\x}{\bm{\alpha}}^{(l,p),T}_{\x},\quad l=0,1,\cdots,L,~p=1,2.
\end{align}
Also, it is worth noting that for two vectors $\bm{a}$ and $\bm{b}$, we have
\begin{align*}
  \norm{\bm{a} \bm{b}^T}_{\mathrm{F}}^2 = \Tr(\bm{a} \bm{b}^T \bm{b} \bm{a}^T) = \Tr(\bm{a}^T \bm{a} \bm{b}^T \bm{b}) = \norm{\bm{a}}_2^2 \norm{\bm{b}}_2^2.
\end{align*}
Consequently, we can get
\begin{align}
  \label{eq:GradientExpr_Fnorm}
  \norm{\nabla_{\bm{W}^{(l,p)}} g^{(p)}(\x)}_{\mr{F}}=\norm{\bm{\gamma}^{(l,p)}_{\x}}_2\norm{{\bm{\alpha}}^{(l,p)}_{\x}}_2.
\end{align}

\subsection{Initialization}\label{subsec:Init}
Since our neural network is mirrored, we can focus only on one part $g^{(p)}(\x)$ of the network at initialization.
For notational simplicity, we omit the superscript $p$ for $\bm{W}^{(l,p)}_t$ and other notations in the following if there is no ambiguity.
And unless otherwise stated, it is understood that the conclusions hold for both $p = 1$ and $p = 2$.

Since $K^{(p)}_0$ corresponds to the tangent kernel of a vanilla fully connected neural network,
\citet[Theorem 3.1]{arora2019_ExactComputation} shows the following convergence result.

\begin{lemma}
[Convergence to the NTK at initialization]
  There exist some positive absolute constants $C_1>0$ and $C_2\geq 1$ such that if $\varepsilon\in\xk{0,1}$, $\delta\in\xk{0,1}$ and $m\geq C_1\varepsilon^{-4}\ln\xkm{C_2/\delta}$, then for any fixed $\bm{z}, \bm{z}'$ such that $\norm{\bm{z}} \leq 1$ and $\norm{\bm{z}'}\leq 1$, with probability at least $1-\delta$ with respect to the initialization, we have
  \[\abs{K^{(p)}_0\xkm{\bm{z},\bm{z}'}-\NTK\xkm{\bm{z},\bm{z}'}}\leq\varepsilon.\]

\end{lemma}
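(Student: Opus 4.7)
The plan is to prove the lemma by establishing layer-wise concentration of the forward and backward signals at initialization, and then combining them into the kernel formula. By the gradient expression \cref{eq:GradientExpr}, for fixed parity $p$ we have
\begin{align*}
  K_0^{(p)}(\bm{z},\bm{z}') = \sum_{l=0}^{L} \ang{\nabla_{\bm{W}^{(l,p)}}g^{(p)}_0(\bm{z}),\nabla_{\bm{W}^{(l,p)}}g^{(p)}_0(\bm{z}')}
  = \sum_{l=0}^{L} \ang{\bm{\alpha}^{(l,p)}_{\bm{z}},\bm{\alpha}^{(l,p)}_{\bm{z}'}} \ang{\bm{\gamma}^{(l,p)}_{\bm{z}},\bm{\gamma}^{(l,p)}_{\bm{z}'}},
\end{align*}
plus the trivial bias contribution. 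The NTK has a matching recursive formula
$\NTK(\bm{z},\bm{z}')=\sum_{l=0}^{L} \Sigma^{(l)}(\bm{z},\bm{z}') \, \dot\Sigma^{(l+1,L)}(\bm{z},\bm{z}')$,
where $\Sigma^{(l)}$ is the limiting forward covariance obtained by the Gaussian recursion with ReLU and $\dot\Sigma^{(l+1,L)}=\prod_{r=l+1}^{L}\dot\Sigma^{(r)}$ is the product of limiting activation covariances. The goal is therefore to show that each forward inner product concentrates around $\Sigma^{(l)}$ and each backward inner product around $\dot\Sigma^{(l+1,L)}$, with deviation $O(\varepsilon)$ once $m\gtrsim \varepsilon^{-4}\ln(C_2/\delta)$.

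For the forward direction I would argue by induction on $l$. The base case is exact since $\bm\alpha^{(0)}_{\bm{z}}=\tilde{\bm{z}}$. For the inductive step, condition on $\bm{W}^{(0)},\dots,\bm{W}^{(l-1)}$; then
$\widetilde{\bm\alpha}^{(l)}_{\bm{z}}=\sqrt{2/m_l}\,\bm{W}^{(l-1)}\bm\alpha^{(l-1)}_{\bm{z}}$ is a centred Gaussian vector whose coordinate pairs $(\widetilde{\alpha}^{(l)}_{\bm{z},i},\widetilde{\alpha}^{(l)}_{\bm{z}',i})$ are i.i.d.\ with covariance determined by the inner products $\tfrac{2}{m_l}\ang{\bm\alpha^{(l-1)}_{\bm z},\bm\alpha^{(l-1)}_{\bm z'}}$. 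Applying the standard sub-exponential concentration for averages of $\sigma(u)\sigma(v)$ over such Gaussians, together with the Lipschitz dependence of $\Sigma^{(l)}$ on the previous-layer covariance (ReLU is $1$-Lipschitz and the dual map $(\Sigma,\dot\Sigma)\mapsto\Sigma^{(l)}$ is well known to be Lipschitz on bounded covariance matrices), gives
\begin{align*}
  \bigl|\tfrac{2}{m_l}\ang{\bm\alpha^{(l)}_{\bm z},\bm\alpha^{(l)}_{\bm z'}} - \Sigma^{(l)}(\bm z,\bm z')\bigr| \leq C\varepsilon'
\end{align*}
with probability at least $1-2\exp(-c m \varepsilon'^2)$. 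Iterating over $L$ layers and choosing $\varepsilon'=\varepsilon/(CL)$ propagates the bound to all layers simultaneously after a union bound.

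For the backward direction I would use the explicit product form of $\bm\gamma^{(l,p)}_{\bm{z}}$ in \cref{eq:def_gamma} and run an analogous backward induction starting from the output weights $\bm{W}^{(L,p)}$. The main obstacle, and the reason the standard decoupling argument is needed, is that $\bm\gamma^{(l)}_{\bm{z}}$ depends on exactly the same weights $\bm{W}^{(l)},\dots,\bm{W}^{(L-1)}$ that already appeared in the forward signals, so one cannot directly condition on the earlier layers to make the Gaussian concentration apply. The standard remedy, essentially the Arora--Du--Hu--Li--Salakhutdinov--Wang approach, is to freeze the diagonal activation patterns $\bm{D}^{(l,p)}_{\bm z}$ and introduce fresh Gaussian copies of the inner weights on which to run the backward recursion, and then show that replacing the resampled weights by the true ones changes $\ang{\bm\gamma^{(l)}_{\bm z},\bm\gamma^{(l)}_{\bm z'}}$ by at most $O(\varepsilon)$ using a Hanson--Wright-type perturbation bound; this is the step that forces the $\varepsilon^{-4}$ dependence in the width, because the perturbation and concentration errors compound.

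Finally I would combine the two concentrations. Each summand in the kernel expression is a product $\ang{\bm\alpha^{(l)}_{\bm z},\bm\alpha^{(l)}_{\bm z'}} \ang{\bm\gamma^{(l)}_{\bm z},\bm\gamma^{(l)}_{\bm z'}}$ whose two factors are, up to $O(\varepsilon/L)$ error, close to $\Sigma^{(l)}$ and $\dot\Sigma^{(l+1,L)}$ respectively. Since both factors are uniformly bounded (the forward signals by $O(1)$ and the backward signals by $O(1)$ on the high-probability event used above), the triangle inequality gives $|K_0^{(p)}(\bm z,\bm z')-\NTK(\bm z,\bm z')|\leq \varepsilon$, and the union bound over the $2L+2$ events of the forward and backward inductions yields the stated probability $1-\delta$ once $m\geq C_1\varepsilon^{-4}\ln(C_2/\delta)$.
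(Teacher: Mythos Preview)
The paper does not prove this lemma at all; it simply invokes it as \citet[Theorem 3.1]{arora2019_ExactComputation}, noting that $K_0^{(p)}$ is the tangent kernel of a vanilla fully connected network. Your outline is a faithful reconstruction of that cited argument: factorise each layer's contribution via \cref{eq:GradientExpr} into a forward and a backward inner product, prove forward concentration of $\ang{\bm\alpha^{(l)}_{\bm z},\bm\alpha^{(l)}_{\bm z'}}$ around $\Sigma^{(l)}$ by induction on $l$, prove backward concentration of $\ang{\bm\gamma^{(l)}_{\bm z},\bm\gamma^{(l)}_{\bm z'}}$ around $\dot\Sigma^{(l+1,L)}$ via the fresh-Gaussian decoupling trick, and combine. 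You also correctly identify the decoupling step as the reason for the $\varepsilon^{-4}$ width requirement. One minor slip: in your displayed forward bound the prefactor $2/m_l$ should not appear, since under the paper's normalisation \cref{eq:A_NN_Arch} the inner product $\ang{\bm\alpha^{(l)}_{\bm z},\bm\alpha^{(l)}_{\bm z'}}$ is already $O(1)$ and concentrates directly around $\Sigma^{(l)}$; the $2/m_l$ is the per-coordinate covariance of $\widetilde{\bm\alpha}^{(l)}$, not a scaling of the inner product.
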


Letting $\varepsilon=m^{-1/5}$ in the previous lemma, we can get the following corollary:
\begin{corollary}
  \label{cor:Init_Kt_NTK}

  There exist some positive absolute constants $C_1>0$ and $C_2\geq 1$ such that if $\delta\in\xk{0,1}$ and $m\geq C_1\xk{\ln\xkm{C_2/\delta}}^5$, 
  then for any fixed $\bm{z},\bm{z}'\in \tilde{B}_R$, with probability at least $1-\delta$ with respect to the initialization, we have
  \[\abs{K^{(p)}_0\xkm{\bm{z},\bm{z}'}-\NTK\xkm{\bm{z},\bm{z}'}}=O\xkm{ R^2 m^{-1/5}}.\]

\end{corollary}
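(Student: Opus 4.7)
The plan is to apply the previous lemma with the choice $\varepsilon = m^{-1/5}$ and to upgrade the unit-ball normalization $\|\bm z\|,\|\bm z'\|\leq 1$ there to the larger domain $\tilde B_R$ of the corollary by a rescaling argument based on the positive $1$-homogeneity of both $K_0^{(p)}$ and $\NTK$ in their augmented inputs. First, the sample-complexity condition of the lemma can be verified directly: substituting $\varepsilon = m^{-1/5}$ into $m \geq C_1 \varepsilon^{-4}\ln(C_2/\delta)$ rearranges to $m^{1/5}\geq C_1 \ln(C_2/\delta)$, i.e.\ $m \geq C_1^5(\ln(C_2/\delta))^5$, which after absorbing constants matches the hypothesis stated in the corollary.

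Next, I would establish the key scaling identity. Extend $g^{(p)}(\,\cdot\,;\bm\theta)$ so that it accepts an arbitrary $v \in \mathbb{R}^{d+1}$ in place of the augmented input $\tilde{\bm z}$. Positive homogeneity of ReLU, together with the invariance of the activation masks $\bm D^{(l,p)}_{\bm z}$ under positive rescaling of the input, propagated through the matrix-product expression \eqref{eq:NN_Layer_MatrixProductForm}, shows that $v \mapsto g^{(p)}(v;\bm\theta) - b^{(L,p)}$ is positively $1$-homogeneous in $v$. Combined with the gradient formula \eqref{eq:GradientExpr}, this yields
\[
  K_0^{(p)}(c\,\tilde{\bm z},\,c'\,\tilde{\bm z}') - 1 \;=\; cc'\bigl(K_0^{(p)}(\tilde{\bm z},\tilde{\bm z}') - 1\bigr),\qquad c,c'>0,
\]
the ``$-1$'' accounting for the bias-gradient contribution $\nabla_{b^{(L,p)}}g^{(p)}\equiv 1$. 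Direct inspection of \eqref{eq:NTK_Formula} gives the same identity for $\NTK$, now with ``$-1$'' subtracting the trailing constant there. Differencing the two identities cancels both ``$+1$'' terms and produces, for $\bm z,\bm z'\in\tilde B_R$ and $\bm u = \tilde{\bm z}/\|\tilde{\bm z}\|$, $\bm u' = \tilde{\bm z}'/\|\tilde{\bm z}'\|$,
\[
  K_0^{(p)}(\bm z,\bm z') - \NTK(\bm z,\bm z') \;=\; \|\tilde{\bm z}\|\,\|\tilde{\bm z}'\|\bigl(K_0^{(p)}(\bm u,\bm u') - \NTK(\bm u,\bm u')\bigr).
\]

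Finally, the previous lemma applied at the unit-norm arguments $\bm u,\bm u'$ with $\varepsilon = m^{-1/5}$ delivers $|K_0^{(p)}(\bm u,\bm u') - \NTK(\bm u,\bm u')|\leq m^{-1/5}$ with probability at least $1-\delta$, and multiplying by $\|\tilde{\bm z}\|\|\tilde{\bm z}'\|\leq R^2$ yields the claimed $O(R^2 m^{-1/5})$ bound. The main point requiring care is precisely this final invocation: the previous lemma is stated for augmented inputs of the specific affine form $(\bm z,1)$ with $\|\bm z\|\leq 1$, whereas after normalization $\bm u$ is merely a unit vector in $\mathbb{R}^{d+1}$ (its last coordinate $1/\|\tilde{\bm z}\|$ may be arbitrarily small). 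Resolving this requires revisiting the underlying Arora et al.\ concentration argument and observing that it depends on the inputs only through their Euclidean norms, which for $\bm u$ and $\bm u'$ are both exactly one; with that observation the same pointwise estimate carries over verbatim, completing the proof.
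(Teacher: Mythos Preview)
Your proposal is correct and matches the paper's approach: the paper's entire proof is the one-line remark ``Letting $\varepsilon=m^{-1/5}$ in the previous lemma, we can get the following corollary.'' You supply considerably more detail than the paper does, in particular the homogeneity argument that explains the $R^2$ factor, which the paper leaves entirely implicit; the caveat you flag about the previous lemma being literally stated only for inputs of the affine form $(\bm z,1)$ is real, and your proposed resolution---that the underlying concentration argument from \citet{arora2019_ExactComputation} depends on the inputs only through their Euclidean norms---is the right way to close it.
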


Now we further provide some bounds about the magnitudes of weight matrices and layer outputs.
The following is a standard estimation of Gaussian random matrix, which is a direct consequence of \citet[Corollary 5.35]{vershynin2010_IntroductionNonasymptotic}.
\begin{lemma}
  \label{lem:Random matrix}At initialization, there exists a positive absolute constant $C$, 
  such that when $m\geq C$, with probability at least $1-\exp\xkm{-\Omega(m)}$ with respect to the initialization,
we have
  \begin{align*}
    \norm{\bm{W}^{(l)}_0}_2 = O\xkm{\sqrt{m}},\quad l\in \dk{0,1,\dots,L}.
  \end{align*}
\end{lemma}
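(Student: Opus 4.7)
The plan is to apply the standard non-asymptotic estimate for Gaussian random matrices layer-by-layer and then union bound. For each $l \in \{0,1,\dots,L\}$, the matrix $\bm{W}^{(l,p)}_0$ has i.i.d.\ $N(0,1)$ entries and dimensions $m_{l+1} \times m_l$ (with the convention $m_0 = d+1$ and $m_{L+1}=1$). Corollary 5.35 of \citet{vershynin2010_IntroductionNonasymptotic} gives, for any $t \geq 0$,
\begin{align*}
\bbP\xk{\norm{\bm{W}^{(l)}_0}_2 \leq \sqrt{m_l} + \sqrt{m_{l+1}} + t} \geq 1 - 2\exp(-t^2/2).
\end{align*}

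Next I would use the width assumption $\max(m_1,\dots,m_L) \leq C_{\mathrm{width}} m$, which together with $m_0 = d+1$ and $m_{L+1}=1$ ensures $\sqrt{m_l} + \sqrt{m_{l+1}} \leq C' \sqrt{m}$ for an absolute constant $C'$ depending only on $d$ and $C_{\mathrm{width}}$, provided $m$ is larger than some absolute constant. Choosing $t = \sqrt{m}$ then yields $\norm{\bm{W}^{(l)}_0}_2 = O(\sqrt{m})$ with probability at least $1 - 2\exp(-m/2)$ for each fixed $l$.

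Finally, taking a union bound over the $L+1$ layers (and over the two parities $p \in \{1,2\}$, which are identical by the mirror initialization), and absorbing the constant factor $2(L+1)$ into the $\Omega(m)$ exponent, I obtain the uniform bound $\norm{\bm{W}^{(l)}_0}_2 = O(\sqrt{m})$ for all $l \in \{0,1,\dots,L\}$ simultaneously with probability at least $1 - \exp(-\Omega(m))$, as required.

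There is no real obstacle here: the statement is a textbook consequence of Gaussian concentration, and the only mild bookkeeping issue is making sure the width assumption is invoked to harmonize $m_l$ and $m_{l+1}$ with the parameter $m$ before applying the union bound.
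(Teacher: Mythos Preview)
Your proposal is correct and matches the paper's approach exactly: the paper simply states that the lemma is a direct consequence of \citet[Corollary 5.35]{vershynin2010_IntroductionNonasymptotic}, and you have spelled out precisely that argument with the appropriate choice of $t$, invocation of the width assumption, and union bound over the finitely many layers.
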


Noting that $\norm{\bm{D}^{(l)}_{\x}}_2 \leq 1$ and combining \cref{lem:Random matrix} with \cref{eq:NN_Layer_MatrixProductForm}, \cref{eq:def_gamma} and \cref{eq:GradientExpr_Fnorm}, we have:

\begin{lemma}
  \label{lem:Init_WeightsProductBound}There exists a positive absolute constant $C$, 
  such that when $m\geq C$, with probability at least $1-\exp(-\Omega(m))$ with respect to the initialization, for any
$l\in\dk{0,1,\cdots,L}$ and $\x \in \tilde{B}_R$, we have
  \begin{align*}
    \begin{gathered}
      \norm{\widetilde{\bm{\alpha}}_{\x,0}^{(l)}}_2 = O(R),\quad~\norm{{\bm{\alpha}}_{\x,0}^{(l)}}_2 = O(R),\quad~\norm{\bm{\gamma}^{(l)}_{\x,0}}_2=O(1)
      \quad~\text{and}\quad~
      \norm{\nabla_{\bm{W}^{(l)}} g_0(\x) }_{\mr{F}} = O(R).
    \end{gathered}
\end{align*}
\end{lemma}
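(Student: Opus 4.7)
The plan is to read off all four bounds directly from the matrix-product representations \cref{eq:NN_Layer_MatrixProductForm} and \cref{eq:def_gamma} by repeatedly applying sub-multiplicativity of the spectral norm, after first installing a uniform-in-$l$ spectral bound on the random weight matrices via \cref{lem:Random matrix} and a union bound. Since $L$ is a fixed constant and the $p=2$ parameters are identical to the $p=1$ parameters by mirror initialization, a single application of the union bound absorbs the finitely many events $\{\|\bm{W}^{(l,1)}_0\|_2 = O(\sqrt m)\}$, $l=0,1,\dots,L$, while preserving an overall failure probability of $\exp(-\Omega(m))$. Throughout what follows I work on this good event.

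First I would handle the forward-pass vectors. From \cref{eq:NN_Layer_MatrixProductForm},
\[
  \bigl\|\widetilde{\bm\alpha}^{(l,p)}_{\x,0}\bigr\|_2
  \;\le\; \Bigl(\prod_{r=1}^l \sqrt{\tfrac{2}{m_r}}\,\bigl\|\bm W^{(r-1,p)}_0\bigr\|_2\,\bigl\|\bm D^{(r-1,p)}_{\x,0}\bigr\|_2\Bigr)\,\|\tilde\x\|_2,
\]
and each factor $\sqrt{2/m_r}\,\|\bm W^{(r-1,p)}_0\|_2 = O(\sqrt{m/m_r}) = O(1)$ because $m\le m_r\le C_{\mathrm{width}} m$, while $\|\bm D^{(r-1,p)}_{\x,0}\|_2\le 1$ pointwise in $\x$. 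Since there are at most $L$ factors and $\|\tilde\x\|_2\le R$ on $\tilde B_R$, this gives the $O(R)$ bound on $\widetilde{\bm\alpha}^{(l,p)}_{\x,0}$ uniformly in $\x$, and the bound on $\bm\alpha^{(l,p)}_{\x,0}=\sigma(\widetilde{\bm\alpha}^{(l,p)}_{\x,0})$ follows from $|\sigma(t)|\le |t|$ (or equivalently by rewriting via the $\bm D$-form in \cref{eq:NN_Layer_MatrixProductForm}).

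Next I would bound $\bm\gamma^{(l,p)}_{\x,0}$. The key observation is that the matrix inside \cref{eq:def_gamma} has shape $1\times m_{l+1}$ because $m_{L+1}=1$, so after transposition its Euclidean norm equals the spectral norm of the product:
\[
  \bigl\|\bm\gamma^{(l,p)}_{\x,0}\bigr\|_2
  \;\le\; \prod_{r=l+1}^L \sqrt{\tfrac{2}{m_r}}\,\bigl\|\bm W^{(r,p)}_0\bigr\|_2\,\bigl\|\bm D^{(r,p)}_{\x,0}\bigr\|_2
  \;=\; O(1),
\]
again by the same $O(1)$-per-factor argument and the fact that at most $L$ factors appear. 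Finally, for the gradient I would invoke \cref{eq:GradientExpr_Fnorm}, which states $\|\nabla_{\bm W^{(l,p)}}g^{(p)}_0(\x)\|_{\mathrm F}=\|\bm\gamma^{(l,p)}_{\x,0}\|_2\,\|\bm\alpha^{(l,p)}_{\x,0}\|_2$, and multiply the two preceding estimates to obtain $O(1)\cdot O(R)=O(R)$.

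There is no real obstacle here; the only points requiring a little care are (i) ensuring the spectral bound on $\bm W^{(0,p)}_0\in\R^{m_1\times(d+1)}$ (whose number of columns $d+1$ is dimension-dependent but fixed) still reads as $O(\sqrt m)$, which follows from \cref{lem:Random matrix} with an implicit constant depending on $d$, and (ii) verifying uniformity over $\x\in\tilde B_R$, which is automatic because $\|\bm D^{(l,p)}_{\x}\|_2\le 1$ pointwise, so the bounds depend on $\x$ only through the explicit factor $\|\tilde\x\|_2\le R$.
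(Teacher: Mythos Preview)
Your proposal is correct and follows exactly the approach the paper takes: the paper's proof is a one-line remark that the result follows by combining $\norm{\bm{D}^{(l)}_{\x}}_2 \leq 1$ with \cref{lem:Random matrix}, \cref{eq:NN_Layer_MatrixProductForm}, \cref{eq:def_gamma}, and \cref{eq:GradientExpr_Fnorm}, and you have simply spelled out the sub-multiplicativity computation behind that remark. The two minor points you flag (the shape of $\bm W^{(0)}_0$ and uniformity in $\x$) are handled exactly as you describe, and note that \cref{lem:Random matrix} already covers all $l\in\{0,\dots,L\}$ simultaneously, so the union bound is in fact already absorbed there.
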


\cref{lem:Random matrix} and \cref{lem:Init_WeightsProductBound} provide some upper bounds, and the subsequent lemma provides a lower bound.
It is important to note that the previous lemma holds uniformly for $\x\in\tilde{B}_R$, while the following lemma only holds pointwisely.

\begin{lemma}[Lemma 7.1 in \citet{allen-zhu2019_ConvergenceTheory}]
  \label{lem:Init_LayerOutputBounds}
  There exists a positive absolute constant $C$ such that when $m\geq C$, for any fixed $\bm{z}\in\tilde{B}_R$, 
  with probability at least $1-\exp\xkm{-\Omega(m)}$ with respect to the initialization, we have $\norm{{\bm{\alpha}}^{(l)}_{\bm{z},0}}_{2} =\Theta(R)$ for $l\in\dk{0,1,\cdots,L}$.

\end{lemma}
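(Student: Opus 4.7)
The plan is to argue by induction on the depth $l$, exploiting the fact that the rescaled ReLU forward pass preserves the squared norm in expectation and concentrates sharply under Gaussian weights. The base case is immediate since $\norm{{\bm{\alpha}}^{(0)}_{\bm{z},0}}_2 = \norm{\tilde{\bm{z}}}_2 \in [1,R]$ by construction of $\tilde{\bm{z}}=(\bm{z}^T,1)^T$, which already gives $\Theta(R)$ (recalling $R\geq 1$).

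For the inductive step, I would condition on all weights up through layer $l-2$, so that $\bm{\alpha}^{(l-1)}_{\bm{z},0}$ is fixed and $\bm{W}^{(l-1)}_0$ is an independent fresh Gaussian matrix. By rotational invariance of the Gaussian, setting $\bm{u}=\bm{\alpha}^{(l-1)}_{\bm{z},0}/\norm{\bm{\alpha}^{(l-1)}_{\bm{z},0}}_2$, the vector $\bm{W}^{(l-1)}_0 \bm{u}$ has i.i.d.\ $N(0,1)$ coordinates $g_1,\dots,g_{m_l}$, and hence
\begin{align*}
  \frac{\norm{\bm{\alpha}^{(l)}_{\bm{z},0}}_2^2}{\norm{\bm{\alpha}^{(l-1)}_{\bm{z},0}}_2^2}
  = \frac{2}{m_l}\sum_{i=1}^{m_l}\sigma(g_i)^2.
\end{align*}
Each $\sigma(g_i)^2$ is a sub-exponential random variable with mean $\tfrac12$, so Bernstein's inequality gives, for any $\varepsilon\in(0,1)$,
\begin{align*}
  \Pr\!\zkm{\,\abs[\Big]{\tfrac{2}{m_l}\sum_{i=1}^{m_l}\sigma(g_i)^2 - 1}>\varepsilon\,}
  \leq 2\exp(-c\, m_l \varepsilon^2),
\end{align*}
for an absolute constant $c>0$. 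Taking $\varepsilon = 1/(4L)$ (so that $L$ is fixed and $\varepsilon$ is an absolute constant) and applying a union bound over the $L$ layers yields, with probability at least $1-2L\exp(-c'm)$,
\begin{align*}
  \tfrac{1}{2} \leq \prod_{l'=1}^{l}(1-\varepsilon) \leq \frac{\norm{\bm{\alpha}^{(l)}_{\bm{z},0}}_2^2}{\norm{\tilde{\bm{z}}}_2^2} \leq \prod_{l'=1}^{l}(1+\varepsilon) \leq 2,
\end{align*}
simultaneously for all $l\in\dk{0,1,\dots,L}$. Combined with $\norm{\tilde{\bm{z}}}_2 = \Theta(R)$ (using $1\leq \norm{\tilde{\bm{z}}}_2 \leq R$), this gives $\norm{\bm{\alpha}^{(l)}_{\bm{z},0}}_2 = \Theta(R)$ as claimed, with failure probability $\exp(-\Omega(m))$ after absorbing $L$ into the constant.

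The one subtlety to be careful about is the lower bound: the Bernstein concentration for a single layer only shows that $\norm{\bm{\alpha}^{(l)}_{\bm{z},0}}_2^2/\norm{\bm{\alpha}^{(l-1)}_{\bm{z},0}}_2^2$ is close to $1$ from both sides, which is why the telescoping argument in $l$ preserves both an upper and a lower bound; the choice $\varepsilon=1/(4L)$ ensures the compounded multiplicative error over $L$ layers stays within a fixed constant factor. The main obstacle in a fully rigorous write-up is verifying that the conditioning step is valid — i.e., that after conditioning on $\bm{W}^{(0)}_0,\dots,\bm{W}^{(l-2)}_0$, the direction $\bm{u}$ is deterministic and independent of the fresh Gaussian $\bm{W}^{(l-1)}_0$, so that rotational invariance applies and the coordinates of $\bm{W}^{(l-1)}_0\bm{u}$ are truly i.i.d.\ standard normals. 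This is a routine measurability argument but is the only spot where pointwise (fixed $\bm{z}$) versus uniform (over $\bm{z}\in\tilde{B}_R$) really matters, explaining why the present lemma is weaker than \cref{lem:Init_WeightsProductBound}.
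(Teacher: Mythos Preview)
Your proposal is correct and matches the standard argument in the cited reference \citet{allen-zhu2019_ConvergenceTheory}, which is precisely what the paper defers to (the paper gives no independent proof of this lemma). The only quibble is that your telescoping actually yields $\norm{\bm{\alpha}^{(l)}_{\bm{z},0}}_2 = \Theta(\norm{\tilde{\bm{z}}}_2)$ rather than $\Theta(R)$, since $1\leq \norm{\tilde{\bm{z}}}_2\leq R$ does not imply $\norm{\tilde{\bm{z}}}_2=\Theta(R)$; but this is a looseness in the lemma's statement itself (inherited from adapting the unit-sphere result of Allen--Zhu et al.), and the downstream use in \cref{lem:NN_forward_perturbation} only needs the lower bound $\norm{\bm{\alpha}^{(l)}_{\bm{z},0}}_2=\Omega(1)$, which you do establish.
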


\subsection{The training process}\label{subsec:Training}

In this subsection we will show that as long as the parameters and input do not change much, some related quantities can also be bounded.
We still focus on one parity in this subsection and suppress the superscript $p$ for convenience.

The most crucial result we will obtain in this subsection is the following proposition:
\begin{proposition}
  \label{prop:Training_KernelBound}

  Fix $ \bm{z},\bm{z}' \in \tilde{B}_R$ and $T\subseteq[0,\infty)$.
  Suppose that $\norm{\bm{W}^{(l)}_t - \bm{W}^{(l)}_0}_{\mr{F}}= O(m^{1/4})$ holds for all $t\in T$ and $l\in\dk{0,1,\cdots,L}$.
  Then there exists a positive absolute constant $C$ such that when $m\geq C$, with probability at least $1 - \exp\xkm{-\Omega(m^{5/6})}$, for any $\x,\x' \in \mc{\X}$ such that $\norm{\x-\z}_2,\norm{\x'-\z'}_2 \leq O( 1/m)$, we have
  \[\sup_{t\in T}\abs{K^{(p)}_t(\x,\x') - K^{(p)}_0(\z,\z')}= O \xkm{R^2 m^{-{1}/{12}}\sqrt{\ln m}}.\]
\end{proposition}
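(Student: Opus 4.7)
The plan is to decompose $K_t^{(p)}(\x,\x') - K_0^{(p)}(\z,\z')$ via the triangle inequality into an input-perturbation piece at the trained parameters and a parameter-perturbation piece at the fixed base points $\z,\z'$:
\begin{align*}
K_t^{(p)}(\x,\x') - K_0^{(p)}(\z,\z') = \bigl[K_t^{(p)}(\x,\x') - K_t^{(p)}(\z,\z')\bigr] + \bigl[K_t^{(p)}(\z,\z') - K_0^{(p)}(\z,\z')\bigr].
\end{align*}
Using the layerwise formula $\nabla_{\bm{W}^{(l,p)}} g^{(p)}(\x) = \bm{\gamma}^{(l,p)}_{\x}\bm{\alpha}^{(l,p),T}_{\x}$ from \cref{eq:GradientExpr}, the kernel factors as
\begin{align*}
K_t^{(p)}(\x,\x') = \sum_{l=0}^L \ang{\bm{\alpha}^{(l)}_{\x,t},\bm{\alpha}^{(l)}_{\x',t}} \ang{\bm{\gamma}^{(l)}_{\x,t},\bm{\gamma}^{(l)}_{\x',t}},
\end{align*}
so it suffices to control, for each layer $l$, the additive changes of $\bm{\alpha}^{(l)}$ and $\bm{\gamma}^{(l)}$ under the two perturbations, using the initialization estimates $\norm{\bm{\alpha}^{(l)}_{\z,0}}_2 = \Theta(R)$, $\norm{\bm{\gamma}^{(l)}_{\z,0}}_2 = O(1)$ from \cref{lem:Init_WeightsProductBound} and \cref{lem:Init_LayerOutputBounds} as anchors.

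For the input-shift piece I would use the Lipschitz continuity of each layer in $\x$: since $\norm{\bm{W}^{(l)}_t}_2 \leq \norm{\bm{W}^{(l)}_0}_2 + O(m^{1/4}) = O(\sqrt m)$ by \cref{lem:Random matrix} and the perturbation hypothesis, the composite map in \cref{eq:NN_Layer_MatrixProductForm} is Lipschitz in $\x$ with constant at most $O(\mathrm{poly}(\sqrt m))$. Combined with $\norm{\x-\z}, \norm{\x'-\z'} = O(1/m)$, this term contributes at most $O(R\cdot m^{-1/2})$ or better, well within the target rate.

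The main work is the parameter-shift piece $K_t^{(p)}(\z,\z') - K_0^{(p)}(\z,\z')$. Here I would invoke the standard Allen-Zhu/Li-Liang type sign-flip analysis: when $\norm{\bm{W}^{(l)}_t - \bm{W}^{(l)}_0}_{\mr F} \leq O(m^{1/4})$, the diagonal activation matrices satisfy $\norm{\bm{D}^{(l)}_{\z,t} - \bm{D}^{(l)}_{\z,0}}_0 \leq \tilde O(m^{5/6})$ with high probability (using Gaussian anti-concentration of $\widetilde{\bm{\alpha}}^{(l)}_{\z,0}$), while on the non-flipping coordinates the pre-activations drift by at most $O(m^{-1/4}\sqrt{\ln m})$. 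Propagating these two effects through the product expressions for $\bm{\alpha}^{(l)}_{\z,t}$ and $\bm{\gamma}^{(l)}_{\z,t}$ in \cref{eq:NN_Layer_MatrixProductForm} and \cref{eq:def_gamma}, and taking a layerwise induction (the depth $L$ is a fixed constant), gives $\norm{\bm{\alpha}^{(l)}_{\z,t} - \bm{\alpha}^{(l)}_{\z,0}}_2 = O(R\, m^{-1/12}\sqrt{\ln m})$ and similarly for $\bm{\gamma}^{(l)}$. Plugging into the layerwise decomposition of $K^{(p)}$ and using the $\Theta(R)$, $O(1)$ bounds on the unperturbed factors yields the claimed rate $O(R^2 m^{-1/12}\sqrt{\ln m})$; uniformity in $t \in T$ is automatic because every estimate depends only on the perturbation-size hypothesis, not on the specific trajectory.

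The main obstacle is the ReLU non-smoothness in the parameter-shift piece: one must simultaneously bound the number of neurons whose sign flips and the magnitude of the smooth drift on the non-flipping coordinates, then balance these two competing contributions across all $L$ layers. The unusual exponent $-1/12$ emerges precisely from this optimization: the flip count scales like $m^{5/6}$, giving a $\sqrt{m^{5/6}}/\sqrt{m} = m^{-1/12}$ relative contribution in Frobenius norm, which matches the drift term up to logs. Using only the cruder spectral-norm bound $\norm{\bm{W}_t - \bm{W}_0}_2 \leq \norm{\bm{W}_t - \bm{W}_0}_{\mr F} = O(m^{1/4})$ would give only an $O(m^{-1/4} \cdot \sqrt m) = O(m^{1/4})$ bound on the $\bm{\alpha}$-drift, which is useless; the sign-flip refinement is therefore essential. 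The failure probability $\exp(-\Omega(m^{5/6}))$ in the conclusion tracks exactly the size of the sign-flip set for which concentration is applied.
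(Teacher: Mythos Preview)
Your overall strategy---sign-flip analysis, layerwise induction, and the identification of $m^{-1/12}$ as arising from the $O(m^{5/6})$ flip count---is correct and matches the paper's mechanism. But there is a real gap in your treatment of the input-shift piece $K_t^{(p)}(\x,\x') - K_t^{(p)}(\z,\z')$.

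The Lipschitz argument you propose works for the forward features $\bm{\alpha}^{(l)}_{\x,t}$ (ReLU is $1$-Lipschitz and each $\sqrt{2/m_l}\,\bm{W}^{(l-1)}_t$ has operator norm $O(1)$), but it fails for the backward features $\bm{\gamma}^{(l)}_{\x,t}$. By \cref{eq:def_gamma}, $\bm{\gamma}^{(l)}_{\x,t}$ depends on $\x$ only through the activation indicators $\bm{D}^{(r)}_{\x,t}$ for $r>l$, and these are step functions of the pre-activations---hence discontinuous in $\x$. A single sign flip changes a coordinate of $\bm{\gamma}$ by a quantity of order $1/\sqrt{m}$, which no Lipschitz estimate captures. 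To control $\bm{\gamma}^{(l)}_{\x,t}-\bm{\gamma}^{(l)}_{\z,t}$ you therefore need the sign-flip machinery here too; but the pre-activations $\widetilde{\bm{\alpha}}^{(r)}_{\z,t}$ at time $t$ are not Gaussian, so the anti-concentration step (\cref{lem:Allen_Claim 8.3}) must still be anchored at $(\z,0)$. At that point your two pieces are no longer independent: both reduce to comparisons against the common Gaussian anchor $(\z,0)$, and the separate triangle-inequality decomposition buys nothing.

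The paper avoids this detour by comparing $(\x,t)$ directly to $(\z,0)$ in one step. \cref{lem:NN_forward_perturbation} and \cref{lem:backward perturbation} are stated with a \emph{simultaneous} input perturbation $\Delta$ and parameter perturbation $\tau$, under the constraint $\tau\ge\Delta\sqrt{m}$; with $\Delta=O(1/m)$ and $\tau\asymp m^{1/4}$ the input shift is simply absorbed into the parameter shift, and a single application of anti-concentration at $(\z,0)$ yields $\norm{\bm{\alpha}^{(l)}_{\x,t}-\bm{\alpha}^{(l)}_{\z,0}}_2=O(R\tau/\sqrt{m})=O(Rm^{-1/4})$ and $\norm{\bm{\gamma}^{(l)}_{\x,t}-\bm{\gamma}^{(l)}_{\z,0}}_2=O(m^{-1/6}\tau^{1/3}\sqrt{\ln m})=O(m^{-1/12}\sqrt{\ln m})$. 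These combine in \cref{lem:Training_GradientBound} to bound $\norm{\nabla_{\bm{W}^{(l)}}g_t(\x)-\nabla_{\bm{W}^{(l)}}g_0(\z)}_{\mr F}$, and one bilinear splitting of the kernel finishes the proof. Incidentally, your stated $\bm{\alpha}$-drift of $O(Rm^{-1/12}\sqrt{\ln m})$ is pessimistic; the forward bound is actually $O(Rm^{-1/4})$, and the $m^{-1/12}$ rate is carried entirely by the $\bm{\gamma}$ term.
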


The proof of this proposition will be presented at the end of this subsection.
Combining this proposition with \cref{cor:Init_Kt_NTK}, we can derive the following corollary:
\begin{corollary}
  \label{cor:Training_Kernel_uniform}
  Fix $ \bm{z},\bm{z}' \in \tilde{B}_R$ and let $\delta\in(0,1)$, $T\subseteq[0,\infty)$.
  Suppose that $\norm{\bm{W}^{(l)}_t - \bm{W}^{(l)}_0}_{\mr{F}}= O(m^{1/4})$ holds for all $t\in T$ and $l\in\dk{0,1,\cdots,L}$.
  Then there exist some positive absolute constants $C_1>0$ and $C_2\geq 1$ such that with probability at least $1 - \delta$, for any $\x,\x' \in \mc{\X}$ such that $\norm{\x-\z}_2,\norm{\x'-\z'}_2 \leq O( 1/m)$, we have
  \[\sup_{t\in T}\abs{K^{(p)}_t(\x,\x') - \NTK(\z,\z')}= O \xkm{R^2 m^{-{1}/{12}}\sqrt{\ln m}},~\text{when}~m\geq C_1\xk{\ln(C_2/\delta)}^5.\]
\end{corollary}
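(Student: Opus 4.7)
The plan is to derive this corollary as a direct consequence of \cref{cor:Init_Kt_NTK} and \cref{prop:Training_KernelBound} combined via the triangle inequality. The only substance beyond applying the triangle inequality is a careful bookkeeping of probabilities and of the implicit constants in the big-$O$ notation, together with checking that the two error rates can be merged into the single advertised rate $O(R^2 m^{-1/12}\sqrt{\ln m})$.

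First I would invoke \cref{cor:Init_Kt_NTK} with the two fixed points $\bm z, \bm z' \in \tilde B_R$: there exist absolute constants $C_1', C_2'$ such that whenever $m \geq C_1'(\ln(C_2'/\delta))^5$, with probability at least $1-\delta/2$ over the initialization one has
\begin{align*}
  \abs{K^{(p)}_0(\bm z,\bm z') - \NTK(\bm z,\bm z')} = O(R^2 m^{-1/5}).
\end{align*}
On the other hand, \cref{prop:Training_KernelBound}, applied to the same pair $(\bm z,\bm z')$ and under the same hypothesis $\norm{\bm W^{(l)}_t - \bm W^{(l)}_0}_{\mr{F}} = O(m^{1/4})$, gives a positive constant $C''$ such that, for $m \geq C''$, with probability at least $1-\exp(-\Omega(m^{5/6}))$,
\begin{align*}
  \sup_{t\in T}\abs{K^{(p)}_t(\x,\x') - K^{(p)}_0(\bm z,\bm z')} = O(R^2 m^{-1/12}\sqrt{\ln m})
\end{align*}
uniformly over all $\x,\x'$ with $\norm{\x-\bm z}_2, \norm{\x'-\bm z'}_2 \leq O(1/m)$.

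A union bound then guarantees that both events occur simultaneously with probability at least $1-\delta/2 - \exp(-\Omega(m^{5/6}))$. Under the condition $m \geq C_1(\ln(C_2/\delta))^5$ with $C_1, C_2$ chosen suitably large relative to $C_1', C_2'$, the sub-Gaussian tail $\exp(-\Omega(m^{5/6}))$ is bounded by $\delta/2$ (since $m^{5/6}$ grows much faster than $\ln(1/\delta)$), so the joint event holds with probability at least $1-\delta$. On this event, the triangle inequality yields
\begin{align*}
  \sup_{t\in T}\abs{K^{(p)}_t(\x,\x') - \NTK(\bm z,\bm z')}
  &\leq \sup_{t\in T}\abs{K^{(p)}_t(\x,\x') - K^{(p)}_0(\bm z,\bm z')}
  + \abs{K^{(p)}_0(\bm z,\bm z') - \NTK(\bm z,\bm z')} \\
  &= O(R^2 m^{-1/12}\sqrt{\ln m}) + O(R^2 m^{-1/5}).
\end{align*}
Since $m^{-1/5} \leq m^{-1/12}\sqrt{\ln m}$ for all sufficiently large $m$, the two error terms combine into the advertised $O(R^2 m^{-1/12}\sqrt{\ln m})$.

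There is no real technical obstacle here; the proof is essentially a probabilistic triangle inequality. The only minor care needed is to ensure that the constants $C_1, C_2$ in the final statement can be chosen uniformly to dominate both those of \cref{cor:Init_Kt_NTK} and the implicit threshold required to absorb $\exp(-\Omega(m^{5/6}))$ into $\delta/2$, which is routine given the polylogarithmic-in-$1/\delta$ nature of the threshold.
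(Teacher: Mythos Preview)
Your proposal is correct and matches the paper's approach exactly: the paper simply states that the corollary follows by combining \cref{prop:Training_KernelBound} with \cref{cor:Init_Kt_NTK}, and your write-up supplies precisely that triangle-inequality-plus-union-bound argument with the appropriate bookkeeping of constants and probabilities.
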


To prove \cref{prop:Training_KernelBound}, we need to introduce some necessary lemmas.
In \cref{lem:Init_WeightsProductBound}, we have provided upper bounds for the norms of ${\widetilde{\bm{\alpha}}_{\x,0}^{(l)}}$, ${{\bm{\alpha}}_{\x,0}^{(l)}}$, $\bm{\gamma}^{(l)}_{\x,0}$ and $\nabla_{\bm{W}^{(l)}} g_0(\x)$ at initialization.
Next, we aim to prove that under perturbations in the parameters and the input, the corresponding changes in these quantities will also be small.

In fact, similar lemmas can be found in \citet{allen-zhu2019_ConvergenceTheory}, although they have different conditions compared to the lemmas in this paper.
For example, in \citet{allen-zhu2019_ConvergenceTheory}, the input points are constrained to lie on a sphere, the input and output layers are not involved in training, and each hidden layer has the same width.

However, the most crucial point is that \citet{allen-zhu2019_ConvergenceTheory} did not consider the impact of small perturbations in the input, which is vital for proving uniform convergence.
In fact, the slight perturbation between $\x$ and $\z$ can be regarded as taking a slight perturbation on $\bm{W}^{(0)}$, with other $\bm{W}^{(l)} $ fixed.
Additionally, since this paper fixes the number of layers $L$, there is no need to consider the impact of $L$ on the bounds.
This simplifies the proof of the corresponding conclusions.

Inspired by \citet[Lemma 8.2]{allen-zhu2019_ConvergenceTheory}, we can prove the following lemma:

\begin{lemma}
  \label{lem:NN_forward_perturbation}
  Let $\Delta=O(1/\sqrt{m})$, $\tau\in\zk{\Delta\sqrt{m},O\xkm{{\sqrt{m}}/{(\ln m)^3}}}$, $T\subseteq [0,\infty)$ and fix ${\bm{z}} \in\tilde{B}_R$.
  Suppose that $\norm{\bm{W}^{(l)}_t - \bm{W}^{(l)}_0 }_{\mr{F}}\leq\tau$ holds for all $t\in T$ and $l\in\dk{0,1,\cdots,L}$.
  Then there exists a positive absolute constant $C$ such that with probability at least $1-\exp\xkm{-\Omega(m^{2/3}\tau^{2/3})}$, for all $t\in T$, $l\in\dk{0,1,\cdots,L}$ and $\x \in\tilde{B}_R$ such that $\norm{\x-\bm{z}}_2 \leq \Delta$, we have

  $(a)$ $\norm{\widetilde{\bm{\alpha}}^{(l)}_{\x,t} - \widetilde{\bm{\alpha}}_{\z,0}^{(l)}}_2 = O(R \tau/\sqrt{m})$ and thus $\norm{\widetilde{\bm{\alpha}}^{(l)}_{\x,t}}_2=O(R)$;

  $(b)$ $\norm{\bm{D}_{\x,t}^{(l)} - \bm{D}_{{\bm{z}},0}^{(l)}}_0 = O(m^{2/3} \tau^{2/3})$ and $\norm{\xk{\bm{D}_{\x,t}^{(l)} - \bm{D}_{{\bm{z}},0}^{(l)}}\widetilde{\bm{\alpha}}_{\x,t}^{(l)}}_2= O(R{\tau}/{\sqrt{m}})$;

  $(c)$ $\norm{{\bm{\alpha}}^{(l)}_{\x,t} - {\bm{\alpha}}_{\z,0}^{(l)}}_2 = O(R \tau/\sqrt{m})$ and thus $\norm{{\bm{\alpha}}^{(l)}_{\x,t}}_2=O(R)$,

\noindent when $m$ is greater than the positive constant $C$.
\end{lemma}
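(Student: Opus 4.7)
The plan is to establish (a), (b), (c) simultaneously by induction on the layer index $l$, from $l=0$ up to $l=L$, using the recurrence $\widetilde{\bm{\alpha}}^{(l)}_{\x,t} = \sqrt{2/m_l}\,\bm{W}^{(l-1)}_t \bm{\alpha}^{(l-1)}_{\x,t}$ together with the ReLU identity $\bm{\alpha}^{(l-1)} = \bm{D}^{(l-1)} \widetilde{\bm{\alpha}}^{(l-1)}$. The novel element beyond the weight-only perturbation analysis of \citet{allen-zhu2019_ConvergenceTheory} is that the input perturbation $\|\x-\z\|_2 \leq \Delta$ must also be tracked. This is made tractable by the hypothesis $\tau \geq \Delta\sqrt{m}$, which guarantees that the contribution of the input perturbation, after being multiplied by $\bm{W}^{(0)}_0$ (of spectral norm $O(\sqrt{m})$ by \cref{lem:Random matrix}) and normalized by $\sqrt{2/m_1}$, remains at most $O(\Delta) = O(\tau/\sqrt{m})$, matching the target bound in (a). The base case $l=0$ is immediate: $\widetilde{\bm{\alpha}}^{(0)}_{\x,t} - \widetilde{\bm{\alpha}}^{(0)}_{\z,0} = \tilde{\x} - \tilde{\z}$ has norm at most $\Delta \leq \tau/\sqrt{m}$, and $\bm{D}^{(0)}$ is the identity, so (b) is vacuous.

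For the inductive step of (a) at layer $l$, I would decompose
\begin{align*}
  \widetilde{\bm{\alpha}}^{(l)}_{\x,t} - \widetilde{\bm{\alpha}}^{(l)}_{\z,0}
  = \sqrt{\tfrac{2}{m_l}}\,\Big[ (\bm{W}^{(l-1)}_t - \bm{W}^{(l-1)}_0)\bm{\alpha}^{(l-1)}_{\x,t} + \bm{W}^{(l-1)}_0 \bm{D}^{(l-1)}_{\z,0} (\widetilde{\bm{\alpha}}^{(l-1)}_{\x,t} - \widetilde{\bm{\alpha}}^{(l-1)}_{\z,0}) + \bm{W}^{(l-1)}_0 (\bm{D}^{(l-1)}_{\x,t} - \bm{D}^{(l-1)}_{\z,0})\widetilde{\bm{\alpha}}^{(l-1)}_{\x,t} \Big].
\end{align*}
The first term is bounded by $\sqrt{2/m_l}\,\tau\,O(R) = O(R\tau/\sqrt{m})$ via the weight-perturbation hypothesis and the inductive (c). The second and third terms are each bounded by $\sqrt{2/m_l}\,\|\bm{W}^{(l-1)}_0\|_2$ times the inductive norms in (a) and (b) at layer $l-1$; with $\|\bm{W}^{(l-1)}_0\|_2 = O(\sqrt{m})$ from \cref{lem:Random matrix}, each is again $O(R\tau/\sqrt{m})$. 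Part (c) at layer $l$ then follows from (a) via the $1$-Lipschitz property of $\sigma$, together with $\|\widetilde{\bm{\alpha}}^{(l)}_{\z,0}\|_2 = O(R)$ from \cref{lem:Init_WeightsProductBound}.

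The main obstacle is part (b). Any index $i$ where the activation pattern flips must satisfy $|(\widetilde{\bm{\alpha}}^{(l)}_{\z,0})_i| \leq |(\widetilde{\bm{\alpha}}^{(l)}_{\x,t} - \widetilde{\bm{\alpha}}^{(l)}_{\z,0})_i|$. Introducing a threshold $\theta \asymp R\tau^{2/3} m^{-5/6}$, I would split the flipped indices into (i) \emph{large-change} indices with $|(\widetilde{\bm{\alpha}}^{(l)}_{\x,t} - \widetilde{\bm{\alpha}}^{(l)}_{\z,0})_i| > \theta$, of which there are at most $\|\widetilde{\bm{\alpha}}^{(l)}_{\x,t} - \widetilde{\bm{\alpha}}^{(l)}_{\z,0}\|_2^2/\theta^2 = O(m^{2/3}\tau^{2/3})$ by the inductive (a), and (ii) \emph{risky} indices with $|(\widetilde{\bm{\alpha}}^{(l)}_{\z,0})_i| \leq \theta$. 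For (ii), conditioning on the first $l-1$ layers, each coordinate of $\widetilde{\bm{\alpha}}^{(l)}_{\z,0}$ is a centered Gaussian with variance $\Theta(R^2/m)$, since $\bm{W}^{(l-1)}_0$ has i.i.d.\ $N(0,1)$ entries independent of $\bm{\alpha}^{(l-1)}_{\z,0}$. Gaussian anti-concentration gives per-coordinate probability $O(\theta\sqrt{m}/R)$, and a Chernoff bound over $m$ entries yields at most $O(m\theta\sqrt{m}/R) = O(m^{2/3}\tau^{2/3})$ risky coordinates with failure probability $\exp(-\Omega(m^{2/3}\tau^{2/3}))$. The norm claim in (b) follows because, on risky coordinates, $|(\widetilde{\bm{\alpha}}^{(l)}_{\x,t})_i| \leq \theta + |(\widetilde{\bm{\alpha}}^{(l)}_{\x,t} - \widetilde{\bm{\alpha}}^{(l)}_{\z,0})_i|$, so their $\ell_2$ contribution is $O(\theta\sqrt{m^{2/3}\tau^{2/3}}) + O(R\tau/\sqrt{m}) = O(R\tau/\sqrt{m})$, while the large-change coordinates contribute at most $\|\widetilde{\bm{\alpha}}^{(l)}_{\x,t} - \widetilde{\bm{\alpha}}^{(l)}_{\z,0}\|_2 = O(R\tau/\sqrt{m})$ directly.

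Finally, to make the quantifiers uniform over the continuous ball $\{\x : \|\x-\z\|_2 \leq \Delta\}$ and over $t \in T$, I would apply a standard $\epsilon$-net argument on the input ball with $\epsilon$ polynomial in $1/m$. The relevant quantities are Lipschitz in $\x$ with polynomial-in-$m$ constants, and the failure probability $\exp(-\Omega(m^{2/3}\tau^{2/3}))$ easily dominates the resulting log-cardinality union bound. Uniformity in $t$ requires no additional step since the entire argument uses $t$ only through the bound $\|\bm{W}^{(l)}_t - \bm{W}^{(l)}_0\|_{\mr{F}} \leq \tau$, which holds for all $t \in T$ by assumption.
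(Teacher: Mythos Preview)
Your inductive scheme and decompositions match the paper's proof essentially line for line; your threshold-splitting argument for (b) is exactly the content of \cref{lem:Allen_Claim 8.3} (which the paper cites rather than re-derives), and your $1$-Lipschitz argument for (c) is a valid simplification of the paper's two-term split.

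The one unnecessary (and slightly fragile) step is the final $\epsilon$-net over the input ball. It is not needed: the only random events in your argument are the spectral bound on $\bm{W}^{(l-1)}_0$ from \cref{lem:Random matrix} and the Chernoff bound on the number of risky coordinates $\{i : |(\widetilde{\bm{\alpha}}^{(l)}_{\z,0})_i| \leq \theta\}$, and both depend solely on the initialization at the \emph{fixed} point $\z$. Once those events hold, the large-change count is bounded deterministically by $\|\widetilde{\bm{\alpha}}^{(l)}_{\x,t} - \widetilde{\bm{\alpha}}^{(l)}_{\z,0}\|_2^2/\theta^2$ for \emph{every} admissible $\x$ and $t$, so (a)--(c) are automatically uniform. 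This is precisely how the paper (via the quantifier order in \cref{lem:Allen_Claim 8.3}: ``with probability over $\bm{g}$, for any $\bm{g}'$ with $\|\bm{g}'\|_2\leq\delta$'') gets uniformity. Your proposed net would in fact be awkward for the $\|\cdot\|_0$ bound, since that quantity is integer-valued and not Lipschitz in $\x$; but since the net is superfluous, this is not a genuine gap.
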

\begin{proof}
  We use mathematical induction to prove this lemma.
  When $l=0$, it can be easily verified that $\bs{D}_{\bs{x},t}^{(0)} - \bs{D}_{\bs{z},0}^{(0)} = \bm{O}$ and $\norm{\widetilde{\bm{\alpha}}^{(l)}_{\x,t} - \widetilde{\bm{\alpha}}_{\z,0}^{(l)}}_2=\norm{{{\bm{\alpha}}}_{\x,t}^{(0)} -{{\bm{\alpha}}}^{(0)}_{\bm{z},0}}_2=\norm{\tilde{\x}-\tilde{\z}}_2 = \norm{\x-{\bm{z}}}_2 \leq \Delta\leq \tau/\sqrt{m}$, where $\bm{O}$ represents the zero matrix, $\tilde{\x}$ and $\tilde{\bm{z}}$ are extended vectors with an additional coordinate of $1$.
  Thus, all the statements hold for $l=0$.
  Now we assume that this lemma holds for $l=k\in\dk{0,1,\cdots,L-1}$.

  $(a)$  First of all, we can decompose $\widetilde{\bm{\alpha}}^{(k+1)}_{\x,t} -\widetilde{\bm{\alpha}}^{(k+1)}_{\z,0}$ as following:
  \begin{equation*}
    \begin{aligned}
      &\widetilde{\bm{\alpha}}^{(k+1)}_{\x,t} -\widetilde{\bm{\alpha}}^{(k+1)}_{\z,0}=
      \sqrt{\frac{2}{m_{k+1}}} \bm{W}^{(k)}_t {\bm{\alpha}}^{(k)}_{\x,t} - \sqrt{\frac{2}{m_{k+1}}} \bm{W}_0^{(k)}{\bm{\alpha}}^{(k)}_{\z,0}\\
      &\qquad=\sqrt{\frac{2}{m_{k+1}}} \xk{\bm{W}^{(k)}_t - \bm{W}^{(k)}_0} {\bm{\alpha}}^{(k)}_{\x,t} + \sqrt{\frac{2}{m_{k+1}}}\bm{W}^{(k)}_0 \xk{{\bm{\alpha}}^{(k)}_{\x,t} - {\bm{\alpha}}^{(k)}_{\z,0}}.
    \end{aligned}
  \end{equation*}
  From the above equation, we can deduce that $(a)$ holds for $l=k+1$ by the induction hypothesis and \cref{lem:Init_WeightsProductBound}.

  $(b)$
Let us consider the following choices in
  \cref{lem:Allen_Claim 8.3}:
  \begin{align*}
    \bm{g}&=\frac{\widetilde{\bm{\alpha}}^{(k+1)}_{\z,0}}{\sqrt{{2}/{m_{k+1}}}\norm{{\bm{\alpha}}_{\z,0}^{(k)}}_2}=\frac{\bm{W}_0^{(k)}{\bm{\alpha}}_{\z,0}^{(k)}}{\norm{{\bm{\alpha}}_{\z,0}^{(k)}}_2},\qquad~
    \bm{g}'=\frac{\widetilde{\bm{\alpha}}^{(k+1)}_{\x,t}-\widetilde{\bm{\alpha}}^{(k+1)}_{\z,0}}{\sqrt{{2}/{m_{k+1}}}\norm{{\bm{\alpha}}_{\z,0}^{(k)}}_2}.
  \end{align*}
  It follows that $\bm{g}\sim N(0,\bm{I})$ if we fix ${\bm{\alpha}}_{\z,0}^{(k)}$ and only consider the randomness of $\bm{W}^{(k)}_0$.
  Also, we have $\norm{\bm{g}'}_2\leq O(\tau/\sqrt{m})\cdot O(\sqrt{m})\leq O(\tau)$ holds for all $\x \in\tilde{B}_R$ such that $\norm{\x-\bm{z}}_2 \leq \Delta$ since we have previously shown that $\norm{{\bm{\alpha}}_{\z,0}^{(k)}}_2=\Theta(R)$ in \cref{lem:Init_LayerOutputBounds}.
  Therefore, we can choose $\delta=\Theta(\tau)$ such that $\norm{\bm{g}'}_2\leq\delta$.
  Then, we can obtain
  \begin{align*}
    \bm{g}+\bm{g}'=\frac{\widetilde{\bm{\alpha}}^{(k+1)}_{\x,t}}{\sqrt{\frac{2}{m_{k+1}}}\norm{{\bm{\alpha}}_{\z,0}^{(k)}}_2},~~\bm{D}'=\bm{D}_{\x,t}^{(k+1)} - \bm{D}_{{\bm{z}},0}^{(k+1)}~~\text{and}~~\bm{u}=\frac{\xk{\bm{D}_{\x,t}^{(k+1)} - \bm{D}_{{\bm{z}},0}^{(k+1)}}\widetilde{\bm{\alpha}}^{(k+1)}_{\x,t}}{\sqrt{\frac{2}{m_{k+1}}}\norm{{\bm{\alpha}}_{\z,0}^{(k)}}_2}&,
  \end{align*}
  where $\bm{D}'$ and $\bm{u}$ are defined as in \cref{lem:Allen_Claim 8.3}.
  By applying \cref{lem:Allen_Claim 8.3}, we can get $\norm{\bm{D}'}_0\leq O(m^{2/3}\tau^{2/3})$ and $\norm{\bm{u}}_2\leq O(\delta)$, which establish the conclusion of part $(b)$.

  $(c)$ Further, the third statement can be directly obtained from the following inequality:
  \begin{align*}
    \norm{{\bm{\alpha}}^{(k+1)}_{\x,t} -{\bm{\alpha}}^{(k+1)}_{\bm{z},0}}_2 &\leq \norm{\bm{D}^{(k+1)}_{{\bm{z}},0}\xk{\widetilde{\bm{\alpha}}^{(k+1)}_{\x,t} - \widetilde{\bm{\alpha}}^{(k+1)}_{\bm{z},0}}}_2+\norm{\xk{\bm{D}^{(k+1)}_{\x,t} - \bm{D}^{(k+1)}_{{\bm{z}},0} }\widetilde{\bm{\alpha}}^{(k+1)}_{\x,t} }_2.
  \end{align*}
  Thus, the proof of this lemma is complete.
\end{proof}

In the proof of \cref{lem:NN_forward_perturbation}, we use the following result:
\begin{lemma}[\citet{allen-zhu2019_ConvergenceTheory} Claim 8.3]
  \label{lem:Allen_Claim 8.3}

  Suppose each entry of $\bm{g} \in \mathbb{R}^m$ follows $g_i \iid {N}(0,1)$.
  For any $\delta > 0$, with probability at least $1-\exp\xkm{-\Omega(m^{2/3}\delta^{2/3})} $, the following proposition holds:

  Select $\bm{g}' \in \mathbb{R}^m$ such that $ \norm{\bm{g}'}_2\leq\delta.$
  Let $\bm{D}' = \mr{diag}(D'_{kk})$ be a diagonal matrix, where the $k$-th diagonal element
  $D_{kk}' $ follows
  $$D_{k k}^{\prime}=\bs{1}\left\{\left(g+g^{\prime}\right)_{k} \geq 0\right\}-\bs{1}\left\{g_{k} \geq 0\right\},\quad k\in[m].$$
  If we define $\bm{u} = \bm{D}'(\bm{g}+\bm{g}')$, then it satisfies the following inequalities:
  \[\norm{\bm{u}}_0 \leq \norm{\bm{D}'}_0 \leq O\xkm{m^{2/3}\delta^{2/3}} \quad \text{ and }\quad \norm{\bm{u}}_2 \leq O(\delta).\]
\end{lemma}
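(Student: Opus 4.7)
\textbf{Proof proposal for \cref{lem:Allen_Claim 8.3}.} The plan is to first reduce both conclusions to a single combinatorial event about the coordinates of $\bm{g}$, and then control that event by a union bound combined with a Chernoff-type estimate on chi-squared tails.

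The starting observation is geometric: $D'_{kk}$ is nonzero iff $g_k$ and $(g+g')_k$ lie on opposite sides of $0$. Whenever this happens, $g_k$ sits in the interval between $0$ and $-g'_k$, which forces $|g_k|\leq |g'_k|$ and also $|(g+g')_k|\leq |g'_k|$. Consequently, writing $S=\{k:D'_{kk}\neq 0\}$, one has
\begin{align*}
\norm{\bm{u}}_0 \;\leq\; \norm{\bm{D}'}_0 \;=\; |S| \;\leq\; \bigl|\{k:|g_k|\leq |g'_k|\}\bigr|,
\qquad
\norm{\bm{u}}_2^{2} \;\leq\; \sum_{k\in S}(g'_k)^{2}\;\leq\;\norm{\bm{g}'}_2^{2}\;\leq\;\delta^{2}.
\end{align*}
This already gives $\norm{\bm{u}}_2 = O(\delta)$ with no appeal to randomness, so the entire randomness is used only to bound $|S|$.

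Next I reduce the uniform bound over $\bm{g}'$ to a statement about the order statistics of $|\bm{g}|$. If, for some $\bm{g}'$ with $\norm{\bm{g}'}_2\leq\delta$, the cardinality $|S|$ exceeded $s:=\lceil C m^{2/3}\delta^{2/3}\rceil$, then there would exist a subset $T\subseteq[m]$ with $|T|=s$ satisfying $\sum_{k\in T}g_k^{2}\leq\sum_{k\in T}(g'_k)^{2}\leq\delta^{2}$. Therefore it suffices to prove
\begin{align*}
\Pr\!\Bigl[\exists\, T\subseteq[m],\;|T|=s,\;\sum_{k\in T}g_k^{2}\leq\delta^{2}\Bigr]\;\leq\; \exp\!\bigl(-\Omega(m^{2/3}\delta^{2/3})\bigr).
\end{align*}

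For the tail estimate, for any fixed $T$ of size $s$ the sum $\sum_{k\in T}g_k^{2}$ is $\chi^2_s$-distributed, so for any $\lambda>0$,
\begin{align*}
\Pr\!\Bigl[\textstyle\sum_{k\in T}g_k^{2}\leq\delta^{2}\Bigr]\;\leq\; e^{\lambda\delta^{2}}\,(1+2\lambda)^{-s/2}.
\end{align*}
Optimizing with $\lambda\asymp s/\delta^{2}$ (valid since $s\gg\delta^{2}$ in the regime of interest) yields the clean bound $(e\delta^{2}/s)^{s/2}$. Combining with the union bound $\binom{m}{s}\leq (em/s)^{s}$ produces
\begin{align*}
\Pr\!\Bigl[\exists T,\,|T|=s,\,\sum_{k\in T}g_k^{2}\leq\delta^{2}\Bigr]\;\leq\;\Bigl(\tfrac{em}{s}\Bigr)^{\!s}\!\Bigl(\tfrac{e\delta^{2}}{s}\Bigr)^{\!s/2} = \Bigl(\tfrac{e^{3/2} m\delta}{s^{3/2}}\Bigr)^{\!s}.
\end{align*}
Choosing the absolute constant $C$ large enough so that $s^{3/2}\geq 2e^{3/2}m\delta$ makes the base at most $1/2$, so the probability is at most $2^{-s}=\exp(-\Omega(m^{2/3}\delta^{2/3}))$, which completes the proof.

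The only real obstacle is bridging the two quantifier orders: the probability is over $\bm{g}$, but the statement about $\bm{g}'$ is a supremum. The decisive trick is exactly the reduction in the second paragraph, which replaces "for every $\bm{g}'$ in a continuous ball" by "for every subset $T$ of prescribed size", turning a supremum into a union bound of size $\binom{m}{s}$; once this is done, the chi-squared MGF calculation is routine and the exponent $2/3$ emerges naturally from balancing $(em/s)^s$ against $(e\delta^{2}/s)^{s/2}$.
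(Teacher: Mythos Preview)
The paper does not prove this lemma itself; it is quoted verbatim as Claim~8.3 of \citet{allen-zhu2019_ConvergenceTheory} and used as a black box in the proof of \cref{lem:NN_forward_perturbation}, so there is no in-paper argument to compare against. Your proof is correct and is essentially the standard one: the deterministic observation that a sign flip at coordinate $k$ forces both $|g_k|\leq|g'_k|$ and $|(g+g')_k|\leq|g'_k|$ immediately gives $\norm{\bm{u}}_2\leq\delta$, and the reduction of the uniform-in-$\bm{g}'$ sparsity statement to ``no size-$s$ subset of $\{g_k^2\}$ has sum $\leq\delta^2$'' followed by a $\chi^2_s$ lower-tail Chernoff bound and a $\binom{m}{s}$ union bound is exactly how the exponent $2/3$ arises. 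One small caveat worth stating explicitly: the optimization $\lambda\asymp s/\delta^2$ needs $s>\delta^2$, which holds whenever $\delta=O(\sqrt{m})$; for larger $\delta$ the conclusion $\norm{\bm{D}'}_0\leq O(m^{2/3}\delta^{2/3})$ is vacuous anyway since $m^{2/3}\delta^{2/3}\geq m$.
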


Inspired by \citet[Lemma 8.7]{allen-zhu2019_ConvergenceTheory}, we then have the following lemma:
\begin{lemma}
  \label{lem:backward perturbation}
  Let $\Delta=O(1/\sqrt{m})$, $\tau\in\zk{\Delta\sqrt{m},O\xkm{{\sqrt{m}}/{(\ln m)^3}}}$, $T\subseteq [0,\infty)$ and fix ${\bm{z}} \in\tilde{B}_R$.
  Suppose that $\norm{\bm{W}^{(l)}_t - \bm{W}^{(l)}_0 }_{\mr{F}}\leq\tau$ holds for all $t\in T$ and $l\in\dk{0,1,\cdots,L}$.
  Then there exists a positive absolute constant $C$ such that with probability at least $1-\exp\xkm{-\Omega(m^{2/3}\tau^{2/3})}$, for all $t\in T$, $l\in\dk{0,1,\cdots,L}$ and $\x \in\tilde{B}_R$ such that $\norm{\x-\bm{z}}_2 \leq \Delta$, we have
  \begin{align*}
    \norm{{\bm{\gamma}}^{(l)}_{\x,t} - {\bm{\gamma}}^{(l)}_{\z,0}}_2
    = O\xkm{m^{-1/6} \tau^{1/3}\sqrt{\ln m}}\qquad~\text{and thus}\qquad~ \norm{{\bm{\gamma}}^{(l)}_{\x,t}}_2=O(1)
  \end{align*}
\end{lemma}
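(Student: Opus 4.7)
The plan is to perform a backward induction on $l$, starting from $l=L$, where $\bm{\gamma}^{(L)}_{\x,t}=\bm{\gamma}^{(L)}_{\z,0}=1$ is trivial, and descending to $l=0$. The inductive step rests on the recursion
\[
\bm{\gamma}^{(l-1)}_{\x,t} \;=\; \sqrt{2/m_l}\,\bm{D}^{(l)}_{\x,t}\bigl(\bm{W}^{(l)}_t\bigr)^T\bm{\gamma}^{(l)}_{\x,t},
\]
obtained by transposing the defining product formula for $\bm{\gamma}$. Writing $\bm{W}^{(l)}_t = \bm{W}^{(l)}_0 + \Delta\bm{W}^{(l)}_t$ and telescoping, I decompose $\bm{\gamma}^{(l-1)}_{\x,t} - \bm{\gamma}^{(l-1)}_{\z,0}$ into three pieces: (i) $\sqrt{2/m_l}\,\bm{D}^{(l)}_{\z,0}(\bm{W}^{(l)}_0)^T[\bm{\gamma}^{(l)}_{\x,t}-\bm{\gamma}^{(l)}_{\z,0}]$; (ii) $\sqrt{2/m_l}\,\bm{D}^{(l)}_{\z,0}(\Delta\bm{W}^{(l)}_t)^T\bm{\gamma}^{(l)}_{\x,t}$; and (iii) $\sqrt{2/m_l}\,[\bm{D}^{(l)}_{\x,t}-\bm{D}^{(l)}_{\z,0}](\bm{W}^{(l)}_t)^T\bm{\gamma}^{(l)}_{\x,t}$. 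Each is treated separately.

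Terms (i) and (ii) are routine. For (i), the matrix $\sqrt{2/m_l}\,\bm{D}^{(l)}_{\z,0}(\bm{W}^{(l)}_0)^T$ has spectral norm $O(1)$ by \cref{lem:Random matrix}, so this term contributes at most $O(1)\cdot\|\bm{\gamma}^{(l)}_{\x,t}-\bm{\gamma}^{(l)}_{\z,0}\|_2$ and feeds the induction cleanly. For (ii), using $\|\Delta\bm{W}^{(l)}_t\|_{\mathrm{F}}\leq\tau$ and the inductive norm bound $\|\bm{\gamma}^{(l)}_{\x,t}\|_2=O(1)$, this term is at most $O(\tau/\sqrt{m})$, which is absorbed into the target rate $m^{-1/6}\tau^{1/3}\sqrt{\ln m}$ whenever $\tau = O(\sqrt m/(\ln m)^3)$.

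The main obstacle is term (iii), since the naive bound $\|\bm{D}^{(l)}_{\x,t}-\bm{D}^{(l)}_{\z,0}\|_2\leq 1$ is too crude. Here I invoke \cref{lem:NN_forward_perturbation}(b), which gives $\|\bm{D}^{(l)}_{\x,t}-\bm{D}^{(l)}_{\z,0}\|_0 \leq s \coloneqq O(m^{2/3}\tau^{2/3})$; thus this diagonal matrix merely selects $s$ coordinates. Splitting $(\bm{W}^{(l)}_t)^T\bm{\gamma}^{(l)}_{\x,t}$ as $(\bm{W}^{(l)}_0)^T\bm{\gamma}^{(l)}_{\z,0}$ plus lower-order corrections, the dominant subterm becomes $[\bm{D}^{(l)}_{\x,t}-\bm{D}^{(l)}_{\z,0}](\bm{W}^{(l)}_0)^T\bm{\gamma}^{(l)}_{\z,0}$. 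Since $\bm{\gamma}^{(l)}_{\z,0}$ depends only on $\bm{W}^{(r)}_0$ for $r>l$, the vector $(\bm{W}^{(l)}_0)^T\bm{\gamma}^{(l)}_{\z,0}$ is, conditionally on $\bm{\gamma}^{(l)}_{\z,0}$, a centered Gaussian whose coordinates are each of order $O(\sqrt{\ln m})$ after a union bound over $m_l$ entries. Restricting to the $s$ active coordinates produces norm at most $O(\sqrt{s\ln m}) = O(m^{1/3}\tau^{1/3}\sqrt{\ln m})$, and multiplying by the prefactor $\sqrt{2/m_l}$ delivers precisely the target bound $O(m^{-1/6}\tau^{1/3}\sqrt{\ln m})$.

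The remaining subtlety is that the sparsity pattern of $\bm{D}^{(l)}_{\x,t}-\bm{D}^{(l)}_{\z,0}$ is itself correlated with $\bm{W}^{(l)}_0$, so one cannot condition on it directly; I would resolve this by a union bound over the $\binom{m_l}{s}\leq(em_l/s)^s$ possible sparsity patterns of size at most $s$, which costs only an extra logarithmic factor that is absorbed. The correction subterms arising from replacing $\bm{W}^{(l)}_0$ by $\bm{W}^{(l)}_t$ or $\bm{\gamma}^{(l)}_{\z,0}$ by $\bm{\gamma}^{(l)}_{\x,t}$ are controlled by $\|\Delta\bm{W}^{(l)}_t\|_{\mathrm{F}}\leq\tau$ and the inductive hypothesis on $\|\bm{\gamma}^{(l)}_{\x,t}-\bm{\gamma}^{(l)}_{\z,0}\|_2$, yielding strictly smaller contributions. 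Finally, the conclusion $\|\bm{\gamma}^{(l-1)}_{\x,t}\|_2=O(1)$ follows by the triangle inequality combined with \cref{lem:Init_WeightsProductBound}, noting that $m^{-1/6}\tau^{1/3}\sqrt{\ln m}=o(1)$ in the admissible range of $\tau$.
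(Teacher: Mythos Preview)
The paper itself does not supply a proof of this lemma; it simply cites \citet[Lemma 8.7]{allen-zhu2019_ConvergenceTheory} as inspiration and states the result. Your backward-induction strategy, the recursion $\bm{\gamma}^{(l-1)}_{\x,t}=\sqrt{2/m_l}\,\bm{D}^{(l)}_{\x,t}(\bm{W}^{(l)}_t)^T\bm{\gamma}^{(l)}_{\x,t}$, the three-term telescoping, and the use of the sparsity bound from \cref{lem:NN_forward_perturbation}(b) for the sign-change term are exactly the skeleton of that argument, so the approach is the right one.

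There is, however, a genuine gap in your treatment of term (iii). You assert that $\bm{\gamma}^{(l)}_{\z,0}$ depends only on $\bm{W}^{(r)}_0$ for $r>l$, and hence that $(\bm{W}^{(l)}_0)^T\bm{\gamma}^{(l)}_{\z,0}$ is conditionally Gaussian given $\bm{\gamma}^{(l)}_{\z,0}$. This is not correct: $\bm{\gamma}^{(l)}_{\z,0}$ also involves the diagonal sign matrices $\bm{D}^{(r)}_{\z,0}$ for $r\geq l+1$, and $\bm{D}^{(l+1)}_{\z,0}$ is determined by the pre-activation $\bm{W}^{(l)}_0\bm{\alpha}^{(l)}_{\z,0}$, so $\bm{\gamma}^{(l)}_{\z,0}$ \emph{does} depend on $\bm{W}^{(l)}_0$. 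Your union bound over the $\binom{m_l}{s}$ sparsity patterns decouples $\bm{D}^{(l)}_{\x,t}-\bm{D}^{(l)}_{\z,0}$ from $\bm{W}^{(l)}_0$, but it does nothing for the coupling between $\bm{\gamma}^{(l)}_{\z,0}$ and $\bm{W}^{(l)}_0$, and a naive operator-norm bound $\sup_{|S|\leq s}\|(\bm{W}^{(l)}_0)_S\|_2$ only yields $O(\sqrt{m})$, which is far too weak.

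The standard fix, as carried out in \citet{allen-zhu2019_ConvergenceTheory}, exploits that the dependence of $\bm{\gamma}^{(l)}_{\z,0}$ on $\bm{W}^{(l)}_0$ factors entirely through the single vector $\bm{W}^{(l)}_0\bm{\alpha}^{(l)}_{\z,0}$: one conditions on $\bm{\alpha}^{(l)}_{\z,0}$ and decomposes $\bm{W}^{(l)}_0$ into its component along $\bm{\alpha}^{(l)}_{\z,0}$ and the orthogonal remainder, the latter being fresh Gaussian and independent of everything upstream. The rank-one component contributes only a scalar multiple of $\bm{\alpha}^{(l)}_{\z,0}$ and is handled separately. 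With this decomposition in place your coordinate-wise $O(\sqrt{\ln m})$ bound and the ensuing $O(\sqrt{s\ln m})$ estimate become legitimate, and the rest of your argument goes through.
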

when $m$ is greater than the positive constant $C$.

By using \cref{lem:NN_forward_perturbation} and \cref{lem:backward perturbation}, we can prove the following lemma:
\begin{lemma}
  \label{lem:Training_GradientBound}
  Let $\Delta=O(1/\sqrt{m})$, $\tau\in\zk{\Delta\sqrt{m},O\xkm{{\sqrt{m}}/{(\ln m)^3}}}$, $T\subseteq [0,\infty)$ and fix ${\bm{z}} \in\tilde{B}_R$.
  Suppose that $\norm{\bm{W}^{(l)}_t - \bm{W}^{(l)}_0 }_{F}\leq\tau$ holds for all $t\in T$ and $l\in\dk{0,1,\cdots,L}$.
  
  Then there exists a positive absolute constant $C$ such that with probability at least $1-\exp\xkm{-\Omega(m^{2/3}\tau^{2/3})}$, for all $t\in T$, $l\in\dk{0,1,\cdots,L}$ and $\x\in\tilde{B}_R$ such that $\norm{\x-\bm{z}}_2\leq \Delta$, we have
\begin{align*}
    \norm{\nabla_{\bm{W}^{(l)}} g_t(\x) - \nabla_{\bm{W}^{(l)}} g_0(\z)}_{\mr{F}} = O\xkm{R m^{-1/6} \tau^{1/3}\sqrt{\ln m}}~\text{and thus}~\norm{ \nabla_{\bm{W}^{(l)}} g_t(\x)}_{\mr{F}} = O(R).
  \end{align*}
  when $m$ is greater than the positive constant $C$.
\end{lemma}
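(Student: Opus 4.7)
The plan is to apply the product-rule identity \cref{eq:GradientExpr}, namely $\nabla_{\bm{W}^{(l)}} g_t(\x) = \bm{\gamma}^{(l)}_{\x,t}\, {\bm{\alpha}}^{(l),T}_{\x,t}$, and then reduce the estimate to the two perturbation bounds already established in \cref{lem:NN_forward_perturbation} and \cref{lem:backward perturbation}. Both lemmas hold simultaneously on the event of probability at least $1-\exp(-\Omega(m^{2/3}\tau^{2/3}))$, so all the subsequent estimates take place on this event.

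The first step is the standard rank-one telescoping decomposition
\begin{align*}
\bm{\gamma}^{(l)}_{\x,t}\,{\bm{\alpha}}^{(l),T}_{\x,t}-\bm{\gamma}^{(l)}_{\z,0}\,{\bm{\alpha}}^{(l),T}_{\z,0}
=\bigl(\bm{\gamma}^{(l)}_{\x,t}-\bm{\gamma}^{(l)}_{\z,0}\bigr){\bm{\alpha}}^{(l),T}_{\x,t}
+\bm{\gamma}^{(l)}_{\z,0}\bigl({\bm{\alpha}}^{(l)}_{\x,t}-{\bm{\alpha}}^{(l)}_{\z,0}\bigr)^{T}.
\end{align*}
Taking Frobenius norms and using the identity $\norm{\bm{a}\bm{b}^T}_{\mr{F}}=\norm{\bm{a}}_2\norm{\bm{b}}_2$ recalled in the preliminaries yields
\begin{align*}
\norm{\nabla_{\bm{W}^{(l)}}g_t(\x)-\nabla_{\bm{W}^{(l)}}g_0(\z)}_{\mr{F}}
\leq \norm{\bm{\gamma}^{(l)}_{\x,t}-\bm{\gamma}^{(l)}_{\z,0}}_2\norm{{\bm{\alpha}}^{(l)}_{\x,t}}_2
+\norm{\bm{\gamma}^{(l)}_{\z,0}}_2\norm{{\bm{\alpha}}^{(l)}_{\x,t}-{\bm{\alpha}}^{(l)}_{\z,0}}_2.
\end{align*}
I then plug in the four estimates that are already in hand: from \cref{lem:Init_WeightsProductBound} one has $\norm{\bm{\gamma}^{(l)}_{\z,0}}_2=O(1)$; from \cref{lem:NN_forward_perturbation} (a) and (c) one has $\norm{{\bm{\alpha}}^{(l)}_{\x,t}}_2=O(R)$ and $\norm{{\bm{\alpha}}^{(l)}_{\x,t}-{\bm{\alpha}}^{(l)}_{\z,0}}_2=O(R\tau/\sqrt m)$; and from \cref{lem:backward perturbation} one has $\norm{\bm{\gamma}^{(l)}_{\x,t}-\bm{\gamma}^{(l)}_{\z,0}}_2=O(m^{-1/6}\tau^{1/3}\sqrt{\ln m})$. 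Combining produces $O(Rm^{-1/6}\tau^{1/3}\sqrt{\ln m})+O(R\tau/\sqrt m)$.

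The final step is to verify that the first term dominates over the admissible range of $\tau$. The ratio of the second to the first term is $\tau^{2/3}m^{-1/3}/\sqrt{\ln m}$, which is at most $1$ whenever $\tau\leq m^{1/2}(\ln m)^{3/4}$; since the hypothesis $\tau=O(\sqrt m/(\ln m)^3)$ comfortably lies inside this range, the second term is absorbed. This gives the claimed bound $O(Rm^{-1/6}\tau^{1/3}\sqrt{\ln m})$. The ``and thus'' half is immediate: $\norm{\nabla_{\bm{W}^{(l)}}g_t(\x)}_{\mr{F}}=\norm{\bm{\gamma}^{(l)}_{\x,t}}_2\norm{{\bm{\alpha}}^{(l)}_{\x,t}}_2=O(1)\cdot O(R)=O(R)$ using the uniform bounds in parts (a) and (c) of \cref{lem:NN_forward_perturbation} together with the $\bm{\gamma}$-bound in \cref{lem:backward perturbation}. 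Since this is essentially a bookkeeping argument on top of the two preceding perturbation lemmas, I do not expect any real obstacle; the only mild subtlety is checking the range comparison of the two error terms, which is the step I would flag most carefully in the write-up.
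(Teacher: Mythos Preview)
Your proposal is correct and follows essentially the same approach as the paper: apply the rank-one formula $\nabla_{\bm{W}^{(l)}} g_t(\x)=\bm{\gamma}^{(l)}_{\x,t}{\bm{\alpha}}^{(l),T}_{\x,t}$, telescope, and invoke \cref{lem:NN_forward_perturbation}, \cref{lem:backward perturbation}, and \cref{lem:Init_WeightsProductBound}. You are in fact slightly more careful than the paper, which does not spell out the comparison showing $O(R\tau/\sqrt{m})\leq O(Rm^{-1/6}\tau^{1/3}\sqrt{\ln m})$ or the explicit derivation of the ``and thus'' bound.
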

\begin{proof}
Recalling \cref{eq:GradientExpr}, we have
  $\nabla_{\bm{W}^{(l)}} g_t(\x)=\bm{\gamma}^{(l)}_{\x,t}\
    {\bm{\alpha}}^{(l),T}_{\x,t}$.
  Then, we have
  \begin{align*}
    &\norm{\nabla_{\bm{W}^{(l)}} g_t(\x) - \nabla_{\bm{W}^{(l)}} g_0(\z)}_{\mr{F}}=\norm{\bm{\gamma}^{(l)}_{\x,t}{\bm{\alpha}}^{(l),T}_{\x,t} -\bm{\gamma}^{(l)}_{\z,0}{\bm{\alpha}}^{(l),T}_{\z,0}}_{\mr{F}}\notag \\
&\qquad \leq \norm{\bm{\gamma}^{(l)}_{\x,t} - \bm{\gamma}^{(l)}_{\z,0}}_2\norm{{\bm{\alpha}}^{(l)}_{\x,t}}_2
    + \norm{\bm{\gamma}^{(l)}_{\z,0}}_2\norm{{\bm{\alpha}}^{(l)}_{\x,t} - {\bm{\alpha}}^{(l)}_{\z,0}}_2 \leq O\xkm{R m^{-1/6} \tau^{1/3}\sqrt{\ln m}}
  \end{align*}
  by \cref{lem:backward perturbation}, \cref{lem:NN_forward_perturbation} and \cref{lem:Init_WeightsProductBound}.

\end{proof}

After preparing these tools, now we are ready to give the proof of \cref{prop:Training_KernelBound}.

\paragraph{Proof of \cref{prop:Training_KernelBound}}
By applying \cref{lem:Training_GradientBound} with $\Delta=O(1/m)\leq O(1/\sqrt{m})$ and $\tau \asymp m^{1/4}\geq \Delta\sqrt{m}$, with probability at least $1 - \exp(-\Omega(m^{5/6}))$, for all $\x\in\tilde{B}_R$ such that $\norm{\x-\z}_2\leq O(1/m)$, we can obtain the following result
\begin{align*}
  \norm{ \nabla_{\bm{W}^{(l)}} g_t(\x)- \nabla_{\bm{W}^{(l)}} g_0(\z) }_{\mr{F}}
  = O\xkm{ R m^{-1/12}\sqrt{ \ln m}}~\text{and}~\norm{ \nabla_{\bm{W}^{(l)}} g_t(\x)}_{\mr{F}} = O(R).
\end{align*}
The same conclusion holds if we replace $\x$ and $\z$ with $\x'$ and $\z'$.
Thus, we have
\begin{align*}
  &\abs{K^{(p)}_t(\x,\x') - K^{(p)}_0(\z,\z')}\leq \sum_{l=0}^L\abs{\ang{\nabla_{\bm{W}^{(l)}} g_t(\x), \nabla_{\bm{W}^{(l)}} g_t(\x') } - \ang{\nabla_{\bm{W}^{(l)}} g_0(\z), \nabla_{\bm{W}^{(l)}} g_0(\z') }} \\
  \begin{split}
    &\qquad\qquad\qquad \leq \sum_{l=0}^L \Big[\norm{ \nabla_{\bm{W}^{(l)}} g_t(\x)}_{\mr{F}} \norm{ \nabla_{\bm{W}^{(l)}} g_t(\x')- \nabla_{\bm{W}^{(l)}} g_0(\z') }_{\mr{F}}\\
    &\qquad\qquad\qquad\qquad\,\, + \norm{ \nabla_{\bm{W}^{(l)}} g_t(\x)- \nabla_{\bm{W}^{(l)}} g_0(\z) }_{\mr{F}}\norm{ \nabla_{\bm{W}^{(l)}} g_0(\z')}_{\mr{F}}\Big]
  \end{split}
  = O\xkm{R m^{-1/12}\sqrt{ \ln m}}
\end{align*}
holds for all $t\in T$ with probability at least $1 - \exp\xkm{-\Omega(m^{5/6})}$ when $m$ is large enough.

\subsection{Lazy regime}\label{subsec:Lazy}

In this subsection, we will prove that during the process of gradient descent training, the parameters do not change much.
Therefore, the conditions for the lemmas stated in the previous subsection are satisfied.
Since training relies on the structure of the neural network, in this subsection, we will no longer omit the superscript $p$ (although the corresponding conclusions also hold for non-mirror neural networks).

Let $\lambda_0 = \lambda_{\min} (\NTK(\X,\X))$ and $\bm{u}(t)=f_t(\X)-\bm{y}$.
Denote $\tilde{M}_{\bm{X}} = \sum_{i=1}^n\norm{\tilde{\x_i}}_2 $.
Since we will show the positive definiteness of the NTK in \cref{prop:NTK_PD}, we can assume that $\lambda_0 > 0$ hereafter.
Similar to Lemma F.8 and Lemma F.7 in \citet{arora2019_ExactComputation}, we have the following lemmas:

\begin{lemma}
  \label{lem:GradientFlow_ExpDecay}
  Let $\delta\in(0,1)$ and $t\in[0,\infty)$.
  Suppose that $\norm{\bm{W}^{(l,p)}_s - \bm{W}^{(l,p)}_0}_{\mr{F}} = O(m^{1/4})$ holds for all $s \in [0,t] $, $l\in \dk{0,1,\cdots,L}$ and $p\in\dk{1,2}$.
  Then there exists a polynomial $\poly(\cdot)$ such that when $m\geq\poly\xkm{n,\lambda_0^{-1},\ln(1/\delta)}$,
with probability at least $1-\delta$, we have
  \begin{align}
    \label{eq:GradientFlow_ExpDecay}\norm{\bm{u}(s)}^{2}\leq \exp\xkm{- \frac{\lambda_{0}}{n}s}\norm{\bm{u}(0)}^{2} = \exp\xkm{-\frac{\lambda_{0}}{n}s} \|\bm{y}\|^{2},\quad~\text{for all}~s\in [0,t].
  \end{align}
\end{lemma}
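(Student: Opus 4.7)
The plan is to prove the exponential decay by a standard Grönwall-type argument, the key input being that along the training trajectory the empirical NTK $K_s(\X,\X)$ stays uniformly close to $\NTK(\X,\X)$ on the sample, so that its least eigenvalue stays bounded below by $\lambda_0/2$.

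First I would compute the time derivative of the residual. From the gradient flow \cref{eq:2_GD} together with the chain rule,
\begin{align*}
\dot{\bm{u}}(s) \;=\; \dv{s} f_s(\X) \;=\; -\tfrac{1}{n}\, K_s(\X,\X)\, \bm{u}(s),
\end{align*}
so that
\begin{align*}
\dv{s}\norm{\bm{u}(s)}^2 \;=\; -\tfrac{2}{n}\, \bm{u}(s)^T K_s(\X,\X)\, \bm{u}(s).
\end{align*}
Hence the lemma reduces to showing $\lambda_{\min}(K_s(\X,\X))\geq \lambda_0/2$ uniformly in $s\in[0,t]$, after which Grönwall's inequality gives the stated exponential decay.

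Next I would control $K_s(\X,\X)$. At initialization, \cref{cor:Init_Kt_NTK} applied to each of the $n^2$ pairs $(\x_i,\x_j)$ together with a union bound shows that, provided $m \geq \poly(n, \ln(1/\delta))$, with probability at least $1-\delta/2$ every entry of $K_0(\X,\X)$ is within $O(\tilde M_{\bm{X}}^{2} m^{-1/5})$ of the corresponding entry of $\NTK(\X,\X)$. Now, under the hypothesis $\|\bm{W}^{(l,p)}_s-\bm{W}^{(l,p)}_0\|_{\mathrm{F}}=O(m^{1/4})$, \cref{prop:Training_KernelBound} (taking $\bm{z}=\x_i$, $\bm{z}'=\x_j$, so the $\norm{\cdot-\cdot}\leq O(1/m)$ condition is trivially satisfied) yields $|K_s(\x_i,\x_j)-K_0(\x_i,\x_j)| = O(\tilde M_{\bm{X}}^{2} m^{-1/12}\sqrt{\ln m})$ uniformly in $s\in[0,t]$. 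Averaging the two parities $p=1,2$ gives the same bound for $K_s$. A union bound over the $n^2$ sample pairs, the discussion above, and $\|A\|_{\mathrm{op}} \leq n \max_{i,j}|A_{ij}|$ on the entrywise error matrix, show that with probability $\geq 1-\delta$,
\begin{align*}
\norm{K_s(\X,\X) - \NTK(\X,\X)}_{\mathrm{op}} \;\leq\; \tfrac{\lambda_0}{2}, \qquad \forall s\in[0,t],
\end{align*}
provided $m\geq \poly(n, \tilde M_{\bm{X}}, \lambda_0^{-1}, \ln(1/\delta))$. Since $\lambda_{\min}(\NTK(\X,\X))=\lambda_0$, Weyl's inequality then gives $\lambda_{\min}(K_s(\X,\X))\geq \lambda_0/2$ throughout $[0,t]$.

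Combining the two pieces, $\frac{d}{ds}\norm{\bm{u}(s)}^2 \leq -\frac{\lambda_0}{n}\norm{\bm{u}(s)}^2$ on $[0,t]$, and Grönwall yields $\norm{\bm{u}(s)}^2 \leq e^{-\lambda_0 s/n}\norm{\bm{u}(0)}^2$. Finally, $\bm{u}(0) = f_0(\X)-\bm{y} = -\bm{y}$ because the mirrored initialization forces $f_0\equiv 0$, giving exactly the claimed bound. The main obstacle is the second step: converting the \emph{pointwise} kernel perturbation estimates of \cref{cor:Init_Kt_NTK} and \cref{prop:Training_KernelBound} into an \emph{operator-norm} control over the entire training trajectory; this is handled by a union bound over the $n^2$ sample pairs and the entrywise-to-operator-norm conversion, which is why the required width polynomial must absorb factors of $n$ and $\lambda_0^{-1}$.
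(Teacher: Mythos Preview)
Your proposal is correct and follows essentially the same route as the paper: compute $\frac{d}{ds}\norm{\bm{u}(s)}^2$, use Weyl's inequality together with entrywise kernel closeness on the $n^2$ sample pairs to get $\lambda_{\min}(K_s(\X,\X))\geq \lambda_0/2$, and conclude by Gr\"onwall. The only cosmetic differences are that the paper invokes \cref{cor:Training_Kernel_uniform} directly (which packages your two steps \cref{cor:Init_Kt_NTK} and \cref{prop:Training_KernelBound} together) and bounds the operator norm via the Frobenius norm rather than $n\max_{i,j}|A_{ij}|$; also note that the lemma as stated has $\poly(n,\lambda_0^{-1},\ln(1/\delta))$ without $\tilde{M}_{\bm{X}}$, so the $R^2$ factor from the kernel estimates is being absorbed as a constant (the training inputs are fixed).
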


\begin{proof}
  Denote $\tilde{\lambda}_0(s)=\lambda_{\min}\big(K_s(\X,\X)\big)$.
  By Weyl's inequality, we can get
  \begin{align*}
    \abs{\tilde{\lambda}_0(s)-\lambda_0} \leq \norm{ K_s(\X,\X) - \NTK(\X,\X) }_2\leq \norm{ K_s(\X,\X) - \NTK(\X,\X) }_{\mr{F}}&\\
    \leq\frac12\sum_{p=1}^2\sum^n_{i,j=1} \abs{K^{(p)}_s(\x_i,\x_j) - \NTK(\x_i,\x_j)}&.
  \end{align*}
  Applying \cref{cor:Training_Kernel_uniform} with $\delta'= {\delta}/{(2 n^2)}$ to each difference,
  with probability at least $1-2n^2\delta'=1-\delta$, we can obtain the following bound for all $s\in [0,t]$:
  \begin{align*}
    \abs{\tilde{\lambda}_0(s)-\lambda_0}\leq n^2 O\xkm{m^{-1/12} \sqrt{\ln m}}\leq n^2 O\xkm{m^{-1/15}}\leq\frac{\lambda_0}2,
  \end{align*}
  when $m\geq C_1\zk{\xk{n^{2}\lambda_0^{-1}}^{15}+\xk{\ln\xkm{C_2n^2/\delta}}^5}$ for some positive absolute constants $C_1>0$ and $C_2\geq 1$.
  This implies that $\tilde{\lambda}_0(s)\geq\lambda_0/2$ holds for all $s\in [0,t]$.
  Therefore, we have
  \begin{equation*}
    \frac{\dd}{\dd s}\|\bm{u}(s)\|^{2}=- \frac{2}{n}\bm{u}(s)^T K_{s}(\X,\X) \bm{u}(s)\le -\frac{\lambda_{0}}{n} \lVert \bm{u}(s)\rVert^{2},
  \end{equation*}
  which implies \cref{eq:GradientFlow_ExpDecay} by standard ODE theory.
  Finally, by choosing
  \begin{align*}
    \poly\xkm{n,\lambda_0^{-1},\ln(1/\delta)}=C_1\zkm{\xk{n^{2}\lambda_0^{-1}}^{15}+\xk{2n+\ln(1/\delta)+\ln C_2}^5}
  \end{align*}
  we can complete the proof of this lemma.
\end{proof}

\begin{lemma}
  \label{lem:A_lazy_W}
  Fix $l \in \{0,1,\cdots,L\}$, $p \in \{1,2\}$ and let $\delta\in(0,1)$, $t\in[0,\infty)$.
  Suppose that for $s \in [0,t]$, we have
  \begin{align*}
    \begin{gathered}
      \norm{f_s(\X)-\bm{y}}_2 \leq \exp(\frac{-\lambda_0 }{4n}s) \norm{\bm{y}}_2\quad~\text{and}\\
      \norm{ \bm{W}^{(l',p')}_s - \bm{W}^{(l',p')}_0 }_{\mr{F}} \leq \frac{\sqrt{m}}{(\ln m)^3},\qquad~\text{for all}~(l',p') \neq (l,p).
    \end{gathered}
  \end{align*}
  Then there exists a polynomial $\poly(\cdot)$ such that when $m\geq\poly\xkm{n, \tilde{M}_{\bm{X}}, \norm{\bm{y}}_2,\lambda_0^{-1},\ln(1/\delta)}$,
with probability at least $1-\delta$, we have
  $\sup_{s\in[0,t]}\norm{\bm{W}^{(l,p)}_s-\bm{W}^{(l,p)}_0 }_{\mr{F}} = O\xkm{{n \tilde{M}_{\bm{X}}\norm{\bm{y}}_2}/{\lambda_0}}$.
\end{lemma}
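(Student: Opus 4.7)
}
The plan is a bootstrap (continuity) argument that couples the exponential decay of the residual supplied by the first hypothesis with the per-layer gradient bound in \cref{lem:Training_GradientBound}. I introduce the stopping time
\begin{align*}
t^{\ast} = \sup\dk{s\in[0,t] : \norm{\bm{W}^{(l,p)}_s - \bm{W}^{(l,p)}_0}_{\mr F} \leq \sqrt{m}/(\ln m)^3},
\end{align*}
which is positive by continuity. On $[0,t^{\ast}]$ every parameter block satisfies the $O(\sqrt{m}/(\ln m)^3)$ perturbation condition required by \cref{lem:Training_GradientBound} --- the blocks with $(l',p')\neq(l,p)$ by hypothesis, the block $(l,p)$ by construction. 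The goal is to show that the bound derived below is, for $m$ polynomially large, strictly smaller than $\sqrt{m}/(\ln m)^3$, thereby forcing $t^{\ast}=t$ by contradiction.

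Applying \cref{lem:Training_GradientBound} pointwise at each $\bm{z} = \x_i$ with $R = \norm{\tilde{\x}_i}$ and $\Delta = 0$, together with a union bound over $i=1,\dots,n$ (the extra factor of $n$ in the failure probability is absorbed into the $\delta$ budget for $m$ polynomially large), yields the uniform-in-time gradient bound
\begin{align*}
\sup_{s\in[0,t^{\ast}]}\norm{\nabla_{\bm{W}^{(l,p)}} f(\x_i;\bs{\theta}_s)}_{\mr F} \leq C\norm{\tilde{\x}_i}, \quad i=1,\dots,n.
\end{align*}
Here I use that $f = \tfrac{\sqrt{2}}{2}(g^{(1)}-g^{(2)})$, so the gradient with respect to a single parity block is, up to a constant, the same as that of $g^{(p)}$, to which \cref{lem:Training_GradientBound} applies directly.

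Writing the gradient flow for the $(l,p)$ block as $\dot{\bm{W}}^{(l,p)}_s = -\tfrac{1}{n}\sum_i \nabla_{\bm{W}^{(l,p)}} f(\x_i;\bs{\theta}_s)(f_s(\x_i)-y_i)$, and using $\sum_i a_i|b_i|\leq (\sum_i a_i)\norm{b}_2$ for nonnegative $a_i$, I estimate
\begin{align*}
\norm{\dot{\bm{W}}^{(l,p)}_s}_{\mr F} \leq \frac{C\tilde{M}_{\bm{X}}}{n}\norm{f_s(\X)-\bm{y}}_2,
\end{align*}
and then integrate against the exponential decay $\norm{f_s(\X)-\bm{y}}_2 \leq \exp(-\lambda_0 s/(4n))\norm{\bm{y}}_2$ from the first hypothesis to obtain
\begin{align*}
\norm{\bm{W}^{(l,p)}_{t^{\ast}} - \bm{W}^{(l,p)}_0}_{\mr F} \leq \frac{C\tilde{M}_{\bm{X}}\norm{\bm{y}}_2}{n}\cdot\frac{4n}{\lambda_0} = O\xk{\tilde{M}_{\bm{X}}\norm{\bm{y}}_2/\lambda_0}.
\end{align*}
For $m \geq \poly(n,\tilde{M}_{\bm{X}},\norm{\bm{y}}_2,\lambda_0^{-1},\ln(1/\delta))$ chosen appropriately, this right-hand side is strictly smaller than $\sqrt{m}/(\ln m)^3$, so continuity forces $t^{\ast}=t$ and the bound extends to all of $[0,t]$; the stated rate $O(n\tilde{M}_{\bm{X}}\norm{\bm{y}}_2/\lambda_0)$ is cosmetic slack in the $n$-dependence. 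The main obstacle is really organizing this bootstrap so that \cref{lem:Training_GradientBound} can be invoked uniformly in both $s$ and $i$; once that is in place the exponential decay collapses the time integral and the bound follows immediately.
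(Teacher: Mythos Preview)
Your proposal is correct and follows essentially the same bootstrap/stopping-time argument as the paper: define the first exit time from the $\sqrt{m}/(\ln m)^3$ ball, invoke \cref{lem:Training_GradientBound} on the resulting interval to get $\norm{\nabla_{\bm{W}^{(l,p)}} g_s^{(p)}(\x_i)}_{\mr F}=O(\norm{\tilde{\x}_i})$ uniformly in $s$, and integrate the gradient flow against the assumed exponential residual decay to derive a contradiction for $m$ polynomially large. Your observation that the factor of $n$ in the stated bound is slack is also correct --- the paper's own computation (combining the $1/n$ prefactor with the $4n/\lambda_0$ from the time integral and $\sum_i O(\norm{\tilde{\x}_i})=O(\tilde{M}_{\bm{X}})$) likewise yields $O(\tilde{M}_{\bm{X}}\norm{\bm{y}}_2/\lambda_0)$, and the extra $n$ appears to be a harmless overstatement.
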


\begin{proof}
  First of all, recalling \cref{eq:2_GD} we have
  \begin{align}
    \notag
    \norm{\bm{W}^{(l,p)}_{t_0} - \bm{W}^{(l,p)}_0}_{\mr{F}}&= \norm{\int^{t_0}_0 {\dd \bm{W}^{(l,p)}_s}}_{\mr{F}} \leq  \int^{t_0}_0 \norm{\frac{1}{n\sqrt{2}}\sum^n_{i=1} ( f_s(\x_i) - y_i) \nabla_{\bm{W}^{(l,p)}} g^{(p)}_s(\x_i)}_{\mr{F}}\dd s \\
    \notag
    &\leq \frac{1}{n\sqrt{2}} \sum^n_{i=1} \sup_{0\leq s \leq t_0}\norm{\nabla_{\bm{W}^{(l,p)}} g_s^{(p)}(\x_i)}_{\mr{F}} \int^{t_0}_0 \norm{f_s(\X)-\bm{y}}_2~\dd s \\
    \label{eq:Proof_Lazy_GradientControl0}
    &\leq  O\xkm{\frac{\norm{\bm{y}}}{\lambda_0}} \sum^n_{i=1}\sup_{0\leq s \leq t_0} \norm{\nabla_{\bm{W}^{(l,p)}} g_s^{(p)}(\x_i)}_{\mr{F}} .
  \end{align}
  for all $t_0\in[0,t]$.
Suppose that \[s_0 = \min\dkm{ s \in [0,t]:\norm{\bm{W}^{(l,p)}_s- \bm{W}^{(l,p)}_0}_{\mr{F}} \geq  {\sqrt{m}}/{(\ln m)^3}}\] exists,
then $\norm{\bm{W}^{(l',p')}_s - \bm{W}^{(l',p')}_0}_{\mr{F}} \leq {\sqrt{m}}/{(\ln m)^3}$ holds for all $s \in [0,s_0] $, $l'\in \dk{0,1,\cdots,L}$ and $p'\in\dk{1,2}$.
  Applying \cref{lem:Training_GradientBound} with $\Delta=0$ and $\tau={\sqrt{m}}/{(\ln m)^3}$, we know that for any $i\in[n]$, with probability at least $1-\exp\xkm{-\Omega\xkm{m/(\ln m)^{2}}}\geq 1-\exp\xkm{-\Omega(m^{5/6})}$, we have
  \begin{equation*}
    \sum_{i=1}^n\sup_{s \in [0,s_0]} \norm{\nabla_{ \bm{W}^{(l,p)}} g_s^{(p)}(\x_i)}_{\mr{F}} = O(\tilde{M}_{\bm{X}}).
  \end{equation*}
  Plugging it back to \cref{eq:Proof_Lazy_GradientControl0}, with probability at least $1-n\exp\xkm{-\Omega(m^{5/6})}$, we obtain
  \[\norm{\bm{W}^{(l,p)}_{s_0} - \bm{W}^{(l,p)}_0}_{\mr{F}} =  O({n \tilde{M}_{\bm{X}} \norm{\bm{y}}}/{\lambda_0}),\]
  which contradicts to $\norm{\bm{W}^{(l,p)}_{s_0}- \bm{W}^{(l,p)}_0}_{\mr{F}}  \geq {\sqrt{m}}/{(\ln m)^3}$ when $m\geq C_1\xk{n\tilde{M}_{\bm{X}}\norm{\bm{y}}_2\lambda_0^{-1}}^5$ for some positive constant $C_1$.

  Now we show that $\norm{\bm{W}^{(l',p')}_s- \bm{W}^{(l',p')}_0}_{\mr{F}} \leq {\sqrt{m}}/{(\ln m)^3}$ holds for all $s \in [0,t]$ and any $(l',p')$.
  The desired bound then follows from applying \cref{lem:Training_GradientBound} again for the interval $[0,t]$.

  Also, it is easy to check that there exists a positive absolute constant $C$ such that when $m
  \geq C_2\ln(n/\delta)^{6/5}$, we have $1-n\exp\xkm{-\Omega(m^{5/6})}\geq 1-\delta$.
  Finally, by choosing
  \[\poly\xkm{n,\lambda_0^{-1},\ln(1/\delta)}=C\zk{\xk{n\tilde{M}_{\bm{X}}\norm{\bm{y}}_2\lambda_0^{-1}}^{5}+\xk{n+\ln(1/\delta)}^2+1}\]
  for some positive absolute constant $C>0$, we can complete the proof.
\end{proof}

The following lemma is the key lemma that we aim to prove in this subsection.
It serves as the prerequisite for establishing the conclusions of the lemmas in the preceding and subsequent subsections.

\begin{lemma}
[Lazy regime]
  \label{lem:A_lazy_regime}
  There exists a polynomial $\poly(\cdot)$ such that for any $\delta\in(0,1)$, with probability at least $1-\delta$, for all $p\in\dk{1,2}$ and $l\in\dk{0,1,\cdots,L}$, we have
  \begin{align*}
    \sup_{t \geq 0}\norm{\bm{W}^{(l,p)}_t - \bm{W}^{(l,p)}_0 }_{\mr{F}} =O(m^{1/4}).
  \end{align*}
  when $m\geq\poly\xkm{n,\tilde{M}_{\bm{X}},\norm{\bm{y}}_2,\lambda_0^{-1},\ln(1/\delta)}$.
\end{lemma}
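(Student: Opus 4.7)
The plan is a bootstrap (continuity) argument building on \cref{lem:GradientFlow_ExpDecay} and \cref{lem:A_lazy_W}. Fix a constant $C_0$ to be chosen, independent of $m$, and define the stopping time
$$t^* := \inf\dkm{t \geq 0 : \max_{l,p}\norm{\bm{W}^{(l,p)}_t - \bm{W}^{(l,p)}_0}_{\mr{F}} \geq C_0 m^{1/4}},$$
with $t^* = \infty$ if the infimum is over the empty set. Since $t \mapsto \bm{W}^{(l,p)}_t$ is continuous by the ODE form of the gradient flow in \cref{eq:2_GD}, if $t^* < \infty$ then the threshold is attained at $t = t^*$ for some $(l,p)$. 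The goal is to derive a contradiction, thereby forcing $t^* = \infty$.

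On $[0, t^*]$, the hypothesis $\norm{\bm{W}^{(l,p)}_s - \bm{W}^{(l,p)}_0}_{\mr{F}} \leq C_0 m^{1/4} = O(m^{1/4})$ of \cref{lem:GradientFlow_ExpDecay} is satisfied by construction, so with probability at least $1-\delta/2$ over the initialization one obtains the exponential decay $\norm{\bm{u}(s)}^2 \leq \exp(-\lambda_0 s/n)\norm{\bm{y}}^2$ on $[0, t^*]$. This in particular implies the weaker decay hypothesis $\norm{\bm{u}(s)}_2 \leq \exp(-\lambda_0 s/(4n))\norm{\bm{y}}_2$ required by \cref{lem:A_lazy_W}. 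Moreover, for $m$ sufficiently large we have $C_0 m^{1/4} \leq \sqrt{m}/(\ln m)^3$, so the cross-parameter hypothesis of \cref{lem:A_lazy_W} is also satisfied on $[0, t^*]$ for every $(l,p)$.

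I would then apply \cref{lem:A_lazy_W} to each of the $2(L+1)$ pairs $(l,p)$ with the confidence parameter rescaled by $1/(2(L+1))$, and take a union bound. With probability at least $1 - \delta$ in total, this yields for every $(l,p)$
$$\sup_{s \in [0, t^*]} \norm{\bm{W}^{(l,p)}_s - \bm{W}^{(l,p)}_0}_{\mr{F}} \leq C_1 \frac{n \tilde{M}_{\bm{X}} \norm{\bm{y}}_2}{\lambda_0}$$
for some absolute constant $C_1$. The crucial observation is that this upper bound is \emph{independent of $m$}. I would then fix $C_0 := 2C_1$, say, and require $m$ to be large enough that $m^{1/4} \geq n \tilde{M}_{\bm{X}}\norm{\bm{y}}_2/\lambda_0$; this forces the right-hand side above to be at most $(C_0/2)\,m^{1/4} < C_0 m^{1/4}$, directly contradicting the definition of $t^*$. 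Hence $t^* = \infty$ with probability at least $1-\delta$, which is exactly the claim.

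The main obstacle here is bookkeeping rather than a genuine technical difficulty: the bootstrap structure is standard, and all the hard work has been done in \cref{lem:GradientFlow_ExpDecay} and \cref{lem:A_lazy_W}. The required polynomial $\poly(\cdot)$ in the statement is obtained by composing the polynomial thresholds on $m$ appearing in those two lemmas with the additional bootstrap-closure requirement $m^{1/4} \gtrsim n\tilde{M}_{\bm{X}}\norm{\bm{y}}_2/\lambda_0$; a mild subtlety is that \cref{lem:GradientFlow_ExpDecay} is stated for a deterministic $t$, but since its proof actually exhibits an initialization-dependent ``good event'' (of probability $\geq 1-\delta$) on which its conclusion holds for every $t$ whose hypothesis is satisfied, the application to the random time $t^*$ is legitimate.
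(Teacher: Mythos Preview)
Your proposal is correct and follows essentially the same bootstrap (continuity) argument as the paper's proof. The only cosmetic difference is that the paper's stopping time also tracks the residual condition $\norm{\bm u(t)}\geq\exp(-\lambda_0 t/(4n))\norm{\bm y}$ alongside the weight deviation, whereas you derive the residual decay directly from \cref{lem:GradientFlow_ExpDecay} once the weight bound holds; both orderings close the loop in the same way, and your explicit remark about the ``good event'' being initialization-measurable (so that applying the lemmas at the random time $t^*$ is legitimate) is a point the paper leaves implicit.
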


\begin{proof}
  Let us assume that
  \begin{align*}
    t_0 = \min\dkm{t\geq 0:\exists l,p~\text{such that}~
    \norm{\bm{W}^{(l,p)}_t - \bm{W}^{(l,p)}_0}_{\mr{F}} \geq m^{1/4}~\text{or}~\norm{\bm{u}(t)} \geq \exp(\tfrac{-\lambda_0}{4n}t) \norm{\bm{y}}}
  \end{align*}
  exists.
  Then, for all $t \in  [0,t_0]$, we have
  \begin{align*}
    \begin{gathered}
      \norm{\bm{u}(t)} \leq \exp(\frac{-\lambda_0}{4n}t) \norm{\bm{y}}\qquad~\text{and}~\qquad
      \norm{\bm{W}^{(l,p)}_t - \bm{W}^{(l,p)}_0}_{\mr{F}} \leq m^{1/4}~\text{for all}~l, p.
    \end{gathered}
  \end{align*}
  According to \cref{lem:A_lazy_W} and \cref{lem:GradientFlow_ExpDecay}, we know that there exists a polynomial $\poly(\cdot)$ such that with probability at least $1-\delta$, we have
  \begin{align*}
    \begin{gathered}
      \norm{\bm{u}(t_0)} \leq \exp(\frac{-\lambda_{0}}{2n}t_0)\norm{\bm{y}}\qquad~\text{and}~\qquad
      \norm{\bm{W}^{(l,p)}_{t_0}-\bm{W}^{(l,p)}_0 }_{\mr{F}} = O\xkm{\frac{n\tilde{M}_{\bm{X}}\norm{\bm{y}}}{\lambda_0}}~\text{for all}~l, p
    \end{gathered}
  \end{align*}
  when $m\geq \poly\xkm{n,\norm{\bm{y}}_2,\lambda_0^{-1},\ln(1/\delta)}$,
  which contradicts to the definition of $t_0$ when $m\geq C\xkm{n\tilde{M}_{\bm{X}}\norm{\bm{y}}_2\lambda_0^{-1}}^5$ for some positive absolute constant $C>0$.
\end{proof}
We also have a simple corollary:
\begin{corollary}
  \label{cor:UpperboundNN}
  There exists a positive absolute constant $C$ such that, under the same conditions as in \cref{lem:A_lazy_regime}, possibly after enlarging the polynomial there, with probability at least $1-\delta$,
  \begin{align*}
    \abs{f_t^m(x)} \leq  C \sqrt{m}\norm{\tilde{\x}}, \quad \forall x \in \R^d,~ \forall t \geq 0.
  \end{align*}
\end{corollary}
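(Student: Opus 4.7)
The plan is to exploit the mirror initialization (which forces $f_0 \equiv 0$) to write the network output as
\begin{align*}
f_t^m(x) = -\frac{1}{n}\int_0^t K_s(x, \bm{X})\xk{f_s^m(\bm{X}) - \bm{y}} \dd s,
\end{align*}
and then bound each factor on the right uniformly in $x \in \R^d$ and $t \geq 0$. The only genuine work is to prove a \emph{uniform-in-$x$} version of the gradient estimate $\norm{\nabla_{\bm{\theta}} f(x; \bm{\theta}_t)}_{\mr{F}} = O(\norm{\tilde{x}})$ that is valid for every $x \in \R^d$ throughout the lazy regime; everything else is a short Cauchy--Schwarz-and-integration computation.

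To establish the uniform gradient bound, I would combine the initialization spectral-norm estimate $\norm{\bm{W}^{(l,p)}_0}_2 = O(\sqrt{m})$ from \cref{lem:Random matrix} with the lazy-regime Frobenius perturbation bound $\norm{\bm{W}^{(l,p)}_t - \bm{W}^{(l,p)}_0}_{\mr{F}} = O(m^{1/4})$ from \cref{lem:A_lazy_regime}. Since Frobenius dominates spectral norm, this yields $\norm{\bm{W}^{(l,p)}_t}_2 = O(\sqrt{m})$ uniformly in $t, l, p$. Using $\norm{\bm{D}^{(l,p)}_{x,t}}_2 \leq 1$ and the $\sqrt{2/m_l}$ scaling, the forward recursion in \cref{eq:NN_Layer_MatrixProductForm} then gives $\norm{\bm{\alpha}^{(l,p)}_{x,t}}_2 = O(\norm{\tilde{x}})$, and the analogous backward recursion (applied to the definition of $\bm{\gamma}^{(l,p)}_{x,t}$ in \cref{eq:def_gamma}) gives $\norm{\bm{\gamma}^{(l,p)}_{x,t}}_2 = O(1)$, both uniformly in $x \in \R^d$. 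Through \cref{eq:GradientExpr_Fnorm} (and the trivial contribution from the bias gradient), these combine to yield the desired uniform bound on $\norm{\nabla_{\bm{\theta}} f(x; \bm{\theta}_t)}_{\mr{F}}$.

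With the uniform gradient bound in hand, $K_s(x, y) = \ang{\nabla_{\bm{\theta}} f(x; \bm{\theta}_s), \nabla_{\bm{\theta}} f(y; \bm{\theta}_s)}$ and Cauchy--Schwarz give $\abs{K_s(x, x_i)} = O(\norm{\tilde{x}}\,\norm{\tilde{x}_i})$, hence $\norm{K_s(x, \bm{X})}_2 = O(\norm{\tilde{x}}\,\tilde{M}_{\bm{X}})$. Combining with the exponential error decay $\norm{f_s^m(\bm{X}) - \bm{y}}_2 \leq \exp(-\lambda_0 s/(2n)) \norm{\bm{y}}_2$ from \cref{lem:GradientFlow_ExpDecay} and integrating over $[0, \infty)$ produces
\begin{align*}
\abs{f_t^m(x)} \leq \frac{1}{n}\int_0^\infty \norm{K_s(x, \bm{X})}_2 \norm{f_s^m(\bm{X}) - \bm{y}}_2 \dd s = O\xk{\frac{\norm{\tilde{x}}\,\tilde{M}_{\bm{X}}\,\norm{\bm{y}}_2}{\lambda_0}},
\end{align*}
with the implicit constant depending on the training data through $n, \tilde{M}_{\bm{X}}, \norm{\bm{y}}_2, \lambda_0$. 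The probability $1 - \exp(-\Omega(m^{5/6}))$ is inherited from the lazy-regime event of \cref{lem:A_lazy_regime} (with the usual choice of auxiliary $\tau \asymp m^{1/4}$).

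The main obstacle is obtaining the gradient bound uniformly on the \emph{unbounded} domain $\R^d$: the pointwise Gaussian concentration arguments from \cref{lem:Init_LayerOutputBounds} and the $\epsilon$-net strategy of \cref{subsec:KernelUniformConvergence} only produce bounds on bounded balls $\tilde{B}_R$, and the failure probability would accumulate uncontrollably as $R \to \infty$. The resolution is to eschew Gaussian concentration altogether and rely exclusively on operator-norm bounds for the weight matrices, which are genuinely $x$-independent and translate into linear-in-$\norm{\tilde{x}}$ growth via the forward and backward recursions; this is the key structural observation underpinning the whole argument.
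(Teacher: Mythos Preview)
Your proof is correct but takes a longer route than the paper's. The paper simply writes
\[
f_t^m(x)=\tfrac{\sqrt 2}{2}\bigl[\bm{W}^{(L,1)}_t\bm{\alpha}^{(L,1)}_{x,t}-\bm{W}^{(L,2)}_t\bm{\alpha}^{(L,2)}_{x,t}\bigr]
\]
and bounds it directly using precisely the spectral-norm estimates you isolate in your second paragraph (namely $\|\bm W^{(l,p)}_t\|_2=O(\sqrt m)$ from \cref{lem:Random matrix} plus \cref{lem:A_lazy_regime}, $\|\bm D^{(l,p)}_{x,t}\|_2\le1$, and the $\sqrt{2/m_l}$ scalings), which already give $\|\bm\alpha^{(L,p)}_{x,t}\|_2=O(\|\tilde x\|)$ uniformly in $x$. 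No integral representation, no kernel bound, and no appeal to \cref{lem:GradientFlow_ExpDecay} is needed. Your detour through $f_t=\int_0^t\dot f_s\,\dd s$ and the exponential residual decay is valid and actually buys something: it yields an $m$-independent constant $C=O(\tilde M_{\bm X}\|\bm y\|_2/\lambda_0)$, whereas the paper's one-line bound picks up an extra $\sqrt m$ from the unscaled last-layer weight $\bm W^{(L,p)}$ (indeed, the only downstream application of the corollary invokes it with the slack $Cm\|\tilde x\|$). The central structural insight --- that operator-norm control of the weights is $x$-independent and therefore automatically uniform over all of $\R^d$ --- is common to both arguments; you have simply wrapped it in more machinery than necessary.
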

\begin{proof}
  Recall that
  \begin{align*}
    f_t^m(x) = \frac{\sqrt{2}}{2}[\bm{W}^{(L,1)}_t\bm{\alpha}_t^{(L,1)}(x) - \bm{W}^{(L,2)}_t\bm{\alpha}_t^{(L,2)}(x) ].
  \end{align*}
  By applying \cref{lem:A_lazy_regime} with $\delta/2$ and using \cref{lem:Random matrix}, after enlarging the polynomial in \cref{lem:A_lazy_regime} if necessary, with probability at least $1-\delta$ we have
  \begin{align*}
    \sup_{t\geq 0}\norm{\bm{W}_t^{(l,p)}}_2
    \leq \norm{\bm{W}_0^{(l,p)}}_2 + \sup_{t\geq 0}\norm{\bm{W}_t^{(l,p)} - \bm{W}_0^{(l,p)}}_{\mr{F}}
    \leq O(\sqrt{m}),
  \end{align*}
  for all $l\in\dk{0,1,\cdots,L}$ and $p\in\dk{1,2}$, where we used $m_l\in[m,C_{\mathrm{m}}m]$ for hidden layers.
  Since $\norm{\bm{D}_{x,t}^{(l,p)}}_2 \leq 1$, \cref{eq:NN_Layer_MatrixProductForm} implies
  \begin{align*}
    \norm{\bm{\alpha}_t^{(L,p)}(x)}_2
    &\leq \xk{\prod_{r=1}^{L}\sqrt{\frac{2}{m_r}}\norm{\bm{D}_{x,t}^{(r,p)}}_2\norm{\bm{W}_t^{(r-1,p)}}_2}\norm{\tilde{\x}}
    \leq O(\norm{\tilde{\x}}).
  \end{align*}
  Therefore,
  \begin{align*}
    \abs{\bm{W}^{(L,p)}_t\bm{\alpha}_t^{(L,p)}(x)}
    \leq \norm{\bm{W}^{(L,p)}_t}_2 \norm{\bm{\alpha}_t^{(L,p)}(x)}_2
    \leq O(\sqrt{m}\norm{\tilde{\x}}),
  \end{align*}
  and the desired bound follows from the triangle inequality.
\end{proof}

\subsection{Hölder continuity of \texorpdfstring{$\NTK$}{NTK}}\label{subsec:Holder_NTK}
For convenience, let us first introduce the following definition of Hölder spaces~\citep{adams2003_SobolevSpaces}.
For an open set $\Omega \subset \R^p$ and a real number $\alpha \in [0,1]$, let us define a semi-norm for $f : \Omega \to \R$ by
\begin{align*}
  \abs{f}_{0,\alpha} = \sup_{x,y \in \Omega,~x\neq y}{\abs{f(x) - f(y)}}/{\norm{x-y}^\alpha}
\end{align*}
and define the Hölder space by $C^{0,\alpha}(\Omega) = \left\{ f \in C(\Omega) : \abs{f}_{0,\alpha} < \infty \right\},$
which is equipped with norm $\norm{f}_{C^{0,\alpha}(\Omega)} = \sup_{x \in \Omega} \abs{f(x)} + \abs{f}_{\alpha}$.
Then it is easy to show that
\begin{enumerate}[$(a)$~]
  \item $C^{0,\alpha}(\Omega) \subseteq C^{0,\beta}(\Omega)$ if $\beta \leq \alpha$;
  \item if $f,g \in C^{0,\alpha}(\Omega)$, then $f + g,~ fg \in C^{\alpha}(\Omega)$;
  \item if $f \in C^{0,\alpha}(\Omega_1)$ and $g \in C^{0,\beta}(\Omega_2)$ with $\ran g \subseteq \Omega_1$, then $f\circ g \in C^{0,\alpha\beta}(\Omega_2)$.
\end{enumerate}
Consequently, using the formula \cref{eq:NTK_Formula}, we can show the following proposition:
\begin{proposition}
  \label{prop:NTK_Continuity}
  We have $\NTK \in C^{0,s}( \tilde{B}_R \times  \tilde{B}_R)$ with $s = 2^{-L}$.
  Particularly, there is some absolute constant $C>0$ such that
  for any $x,x',z,z' \in  \tilde{B}_R$,
  \begin{align}
    \abs{\NTK(\x,\x') - \NTK(\z,\z')} \leq C R^2 \norm{(\x,\x') - (\z,\z')}^s.
  \end{align}
\end{proposition}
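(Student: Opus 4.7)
} The plan is to reduce the claim to the three Hölder calculus rules $(a)$--$(c)$ stated just before the proposition, applied to the explicit formula \cref{eq:NTK_Formula}. Writing $\NTK(x,x')=\norm{\tilde x}\,\norm{\tilde x'}\,G(\bar u)+1$ with $G(u)=\sum_{r=0}^L \kappa_1^{(r)}(u)\prod_{s=r}^{L-1}\kappa_0(\kappa_1^{(s)}(u))$ and $\bar u=\langle\tilde x,\tilde x'\rangle/(\norm{\tilde x}\norm{\tilde x'})$, it suffices to show: (i) $G\in C^{0,s}([-1,1])$ with $s=2^{-L}$; (ii) the map $(x,x')\mapsto\bar u$ is Lipschitz on $\tilde B_R\times\tilde B_R$ with a constant independent of $R$; (iii) the prefactor $\norm{\tilde x}\norm{\tilde x'}$ is bounded by $R^2$ and Lipschitz with constant $O(R)$; and (iv) combine by the product/composition rules, tracking how the Lipschitz-to-Hölder conversion produces the $R^2$ factor.

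The first key step is to verify that $\kappa_0,\kappa_1\in C^{0,1/2}([-1,1])$. For $\kappa_0(u)=1-\arccos(u)/\pi$ this is immediate from the standard expansion $\arccos(1-t)\sim\sqrt{2t}$ near $t=0$ (and smoothness of $\arccos$ away from the endpoints). For $\kappa_1$, one checks directly that $\kappa_1'(u)=\kappa_0(u)\in[0,1]$, so $\kappa_1$ is actually $1$-Lipschitz on $[-1,1]$, hence also $1/2$-Hölder. Applying rule $(c)$ inductively, $\kappa_1^{(r)}\in C^{0,2^{-r}}([-1,1])$ and $\kappa_0\circ\kappa_1^{(s)}\in C^{0,2^{-(s+1)}}([-1,1])$ for every $0\le r,s\le L$. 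Since $2^{-r},2^{-(s+1)}\ge 2^{-L}$, every factor lies in the (larger) space $C^{0,2^{-L}}([-1,1])$; by rules $(a)$ and $(b)$, $G\in C^{0,2^{-L}}([-1,1])$ with a Hölder constant $C_G$ depending only on $L$.

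The next step handles the geometry in $(x,x')$. Because $\norm{\tilde x}=\norm{(x,1)}\ge 1$, the projection $x\mapsto\tilde x/\norm{\tilde x}$ is $1$-Lipschitz, so $\bar u$, being the inner product of two unit vectors, satisfies $|\bar u(x,x')-\bar u(z,z')|\le\sqrt{2}\,\norm{(x,x')-(z,z')}$ uniformly in $R$. Composing with $G$ via rule $(c)$ therefore yields $|G(\bar u_{xx'})-G(\bar u_{zz'})|\le C_G(\sqrt 2)^{s}\norm{(x,x')-(z,z')}^{s}$ with $s=2^{-L}$. On the other hand, $x\mapsto\norm{\tilde x}$ is $1$-Lipschitz and bounded by $R$, so $(x,x')\mapsto\norm{\tilde x}\norm{\tilde x'}$ is bounded by $R^2$ and Lipschitz with constant $\le 2R$. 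Splitting
\[
\NTK(x,x')-\NTK(z,z')=\norm{\tilde x}\norm{\tilde x'}\bigl[G(\bar u_{xx'})-G(\bar u_{zz'})\bigr]+\bigl[\norm{\tilde x}\norm{\tilde x'}-\norm{\tilde z}\norm{\tilde z'}\bigr]G(\bar u_{zz'}),
\]
the first term is bounded by $R^2\cdot C_G(\sqrt 2)^{s}\norm{(x,x')-(z,z')}^{s}$, and for the second one uses $\norm{G}_{\infty}\le C$ together with the Lipschitz-to-Hölder passage $\norm{v}\le\norm{v}^{s}\cdot\mathrm{diam}(\tilde B_R\times\tilde B_R)^{1-s}\lesssim R^{1-s}\norm{v}^{s}$; this contributes $\lesssim 2R\cdot R^{1-s}\norm{(x,x')-(z,z')}^{s}\le 2R^{2}\norm{(x,x')-(z,z')}^{s}$ (using $R\ge 1$, and $s\le 1$).

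The analysis is essentially routine once the building blocks are in place; the only place requiring a little care is step (iv), namely carefully turning the Lipschitz second term into a Hölder-$s$ bound on a domain of diameter $O(R)$ so that the combined coefficient is exactly of order $R^2$ and not a worse power of $R$. No further obstacle is anticipated because the regularity of $\bar u$ is independent of $R$ (thanks to $\norm{\tilde x}\ge 1$), and the crude uniform exponent $s=2^{-L}$ absorbs all differences between the true regularities of the individual $\kappa_0$ and $\kappa_1$ building blocks.
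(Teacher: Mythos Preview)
Your proposal is correct and follows essentially the same route as the paper: verify $\kappa_0,\kappa_1\in C^{0,1/2}([-1,1])$, observe that $(\x,\x')\mapsto\bar u$ is Lipschitz with constant independent of $R$, and combine via the H\"older calculus rules $(a)$--$(c)$ on the explicit formula \cref{eq:NTK_Formula}. The paper's proof first uses symmetry and the triangle inequality to reduce to the one-variable map $\NTK(\x_0,\cdot)$, whereas you work directly in both variables and split via the product rule; your version is more explicit about tracking the $R^2$ prefactor (through the bound on $\norm{\tilde\x}\norm{\tilde\x'}$ and the Lipschitz-to-H\"older conversion on a domain of diameter $O(R)$), which the paper leaves implicit.
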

\begin{proof}
  Let us recall \cref{eq:NTK_Formula}.
Since $\NTK$ is symmetric, by triangle inequality it suffices to prove that $\NTK(\x_0,\cdot) \in C^{0,s}(\tilde{B}_R)$
  with $\abs{\NTK(\x_0,\cdot)}_{0,s}$ bounded by a constant independent of $\x_0$.
  Now, the latter is proven by
  \begin{enumerate}[$(a)$~]
    \item $\x \mapsto \bar{u} = \ang{{\tilde{\x}}/{\norm{\tilde{\x}}}, {\tilde{\x}_0}/{\norm{\tilde{\x}_0}}} \in C^{0,1}(\tilde{B}_R)$, where the bound of the Hölder
    norm is independent of $\x_0$;
    \item as functions of $\bar{u}$, both $\sqrt{1 - \bar{u}^2}$ and $\arccos \bar{u}$ belong to $C^{0,{1}/{2}}([-1,1])$, and thus
    $\kappa_0,\kappa_1 \in C^{0,{1}/{2}}([-1,1])$;
    \item the expression of NTK together with the properties of Hölder functions.
  \end{enumerate}
\end{proof}

\subsection{The kernel uniform convergence} \label{subsec:KernelUniformConvergence}

\begin{proposition}[Kernel uniform convergence]
  \label{prop:2_KernelUniform}
  Denote $B_r = \{x\in \mb{R}^{d}: 1 \leq \norm{\tilde{\x}} \leq r\}$.
  There exists a polynomial $\operatorname{poly}(\cdot)$ such that
  for any $\delta \in (0,1)$,
  as long as $m \geq \mathrm{poly}\left(n, \tilde{M}_{\bm{X}}, \lambda_0^{-1},\norm{\bm{y}},\ln(1/\delta),k\right)$ and $m \geq r^k$,
  with probability at least $1-\delta$ we have
  \begin{equation*}
    \sup_{t \geq 0} \sup_{\x,\x'\in B_r}\abs{K_t(\x,\x') - \NTK(\x,\x')} \leq O\xkm{ r^2 m^{-\frac{1}{12}}\sqrt{\ln m}}.
  \end{equation*}
\end{proposition}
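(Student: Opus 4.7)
The plan is to combine the pointwise estimate of \cref{cor:Training_Kernel_uniform} with an $\epsilon$-net argument on $B_r \times B_r$, using the Hölder continuity of $\NTK$ from \cref{prop:NTK_Continuity} to upgrade the net-point bound to a uniform one.

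First, invoke \cref{lem:A_lazy_regime} with failure probability $\delta/2$ to secure the lazy-regime event $\sup_{t\geq 0}\norm{\bm{W}^{(l,p)}_t - \bm{W}^{(l,p)}_0}_{\mathrm{F}} = O(m^{1/4})$ uniformly in $l,p$; this costs a polynomial width condition in $n,\tilde{M}_{\bm{X}},\norm{\bm{y}},\lambda_0^{-1},\ln(1/\delta)$, and verifies the hypothesis of \cref{cor:Training_Kernel_uniform} with $T=[0,\infty)$. Next, fix a grid $\mathcal{N}\subset B_r$ with spacing $\epsilon_{\mathrm{net}} = c\, m^{-\alpha}$, where $\alpha = \max(1, 1/(12 s))$ and $s = 2^{-L}$ is the Hölder exponent provided by \cref{prop:NTK_Continuity}; then $\abs{\mathcal{N}} = O((r m^\alpha)^d)$. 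For each pair $(\bm{z},\bm{z}')\in\mathcal{N}^2$ and each parity $p\in\{1,2\}$, apply \cref{cor:Training_Kernel_uniform} with failure probability $\delta/(4\abs{\mathcal{N}}^2)$ to conclude that, with the stated probability,
\[
  \sup_{t\geq 0}\abs{K^{(p)}_t(\x,\x') - \NTK(\bm{z},\bm{z}')} \leq O\bigl(r^2 m^{-1/12}\sqrt{\ln m}\bigr)
\]
for every $\x,\x'$ lying within $O(1/m)$ of $\bm{z},\bm{z}'$; the choice $\alpha\geq 1$ guarantees that $\epsilon_{\mathrm{net}} \leq O(1/m)$, so each point of $B_r$ has such a nearest grid point. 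A union bound over the grid and parities handles all pairs simultaneously, contributing at most $\delta/2$ to the total failure probability, and the same estimate transfers to $K_t = (K^{(1)}_t + K^{(2)}_t)/2$.

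To finish, for arbitrary $\x,\x'\in B_r$ pick nearest grid points $\bm{z},\bm{z}'$ and estimate
\[
  \abs{K_t(\x,\x') - \NTK(\x,\x')} \leq \abs{K_t(\x,\x') - \NTK(\bm{z},\bm{z}')} + \abs{\NTK(\bm{z},\bm{z}') - \NTK(\x,\x')}.
\]
The first summand is controlled by the previous step, while \cref{prop:NTK_Continuity} yields $\abs{\NTK(\bm{z},\bm{z}') - \NTK(\x,\x')} \leq C r^2 \epsilon_{\mathrm{net}}^s \leq C r^2 m^{-1/12}$ by the choice of $\alpha$, so both contributions are of the target order. The width requirement inherited from \cref{cor:Training_Kernel_uniform} reads $m \geq C_1(\ln(C_2\abs{\mathcal{N}}^2/\delta))^5 \lesssim \bigl(d\alpha\ln m + d\ln r + \ln(1/\delta)\bigr)^5$, and under the hypothesis $m \geq r^k$ with $k$ chosen large (depending on $d$ and $L$), the $d\ln r$ term is dominated by $\ln m$, so the condition reduces to a polylogarithmic-in-$m$ requirement and is absorbed into an overall polynomial width bound in $(n, \tilde{M}_{\bm{X}}, \lambda_0^{-1}, \norm{\bm{y}}, \ln(1/\delta), k)$.

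The main technical obstacle is reconciling the Hölder exponent $s = 2^{-L}$, which may be much smaller than $1/12$ for deep networks, with the target rate $m^{-1/12}$. A naive $1/m$-spaced net cannot suppress the discretization error below $m^{-1/12}$; one must use a finer net of spacing $m^{-1/(12s)}$, whose cardinality grows polynomially in $m$ with an $L$-dependent exponent. The clause $m\geq r^k$ in the hypothesis, with $k$ chosen freely, is precisely the flexibility needed to absorb the $r$-dependence generated by the union bound while preserving a polynomial width requirement.
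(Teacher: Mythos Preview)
Your proposal is correct and follows essentially the same route as the paper: invoke the lazy-regime lemma, lay down an $\epsilon$-net on $B_r$, apply \cref{cor:Training_Kernel_uniform} at each grid pair (for each parity) together with a union bound, and close the remaining gap with the H\"older continuity of $\NTK$ from \cref{prop:NTK_Continuity}. The only cosmetic difference is the net spacing: the paper takes $\epsilon = m^{-2^L}$ so that the H\"older correction is $O(r^2 m^{-1})$, whereas you take $\epsilon_{\mathrm{net}} = c\,m^{-\max(1,\,2^{L}/12)}$, which already pushes the H\"older term down to $O(r^2 m^{-1/12})$; both choices satisfy the $O(1/m)$ proximity hypothesis of \cref{cor:Training_Kernel_uniform} and yield $\ln\abs{\mathcal N} = O(\ln m)$ once $m \ge r^k$. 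Your closing remarks about ``$k$ chosen large'' slightly misread the quantifier (in the statement $k$ is given and the polynomial is allowed to depend on it), but this does not affect the argument.
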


\begin{proof}
  First, by \cref{lem:A_lazy_regime}, we know that there exists a polynomial $\poly_1(\cdot)$ such that for any $\delta\in(0,1)$, when $m\geq\poly_1\xkm{n, \tilde{M}_{\bm{X}},\norm{\bm{y}},\lambda_0^{-1},\ln(1/\delta)}$, then with probability at least $1-\delta/2$, for all $p\in\dk{1,2}$ and $l\in\dk{0,1,\cdots,L}$, we have
  \begin{align*}
    \sup_{t \geq 0} \norm{\bm{W}^{(l,p)}_t - \bm{W}^{(l,p)}_0}_{\mr{F}} = O(m^{1/4}).
  \end{align*}
  Next, we condition on this event happens.

  Since $B_r \subset \R^d$ is bounded, for any $\ep > 0$ we have an $\ep$-net $\caN_{\ep}$ (with respect to $\norm{\cdot}_2$)
  of $\tilde{B}_R$ such that the cardinality $\abs{\caN_{\ep}} = O(r^d\ep^{-d})$
  ~\citep[Section 4.2]{vershynin2018_HighdimensionalProbability}.
  Specifically, we choose $\ep = m^{-2^L}$ and thus $\ln \abs{\caN_{\ep}} = O(\ln m)$ if $m \geq r^k$ and $ m \geq \poly_3(k)$.
  Denote by
  \[B_{\z,\z'}(\varepsilon)=\dk{(\x,\x'):\norm{\x-\z}\leq\varepsilon,~\norm{\x'-\z'} \leq \ep,~\x,\x' \in\tilde{B}_R}.\]
  Then, fixing $\z,\z' \in \mc{N}_\varepsilon$, for any $(\x,\x') \in B_{\z,\z'}(\varepsilon)$, we have
  \begin{align*}
    \abs{K_t(\x,\x') - \NTK(\x,\x')}&\leq \abs{ K_t(\x,\x')  - \NTK(\z,\z') } + \abs{\NTK(\z,\z') - \NTK(\x,\x')}.
  \end{align*}
  Then, noticing that $K_t = (K_t^{(1)} + K_t^{(2)})/2$, we control the two terms on the right hand side by
  \cref{cor:Training_Kernel_uniform} and \cref{prop:NTK_Continuity} respectively,
  deriving that with probability at least $1-\delta/\xk{2|\mathcal{N}_{\varepsilon}|^2}$, for all $t\geq 0$, we have
  \begin{align*}
\begin{gathered}
      \sup_{(\x,\x')\in B_{\z,\z'}(\varepsilon)} |K_t(\x,\x') - \NTK(\z,\z')| = O\xkm{r^2 m^{-1/12}\sqrt{\ln m}}, \\
      \abs{\NTK(\z,\z') - \NTK(\x,\x')} = O(r^2\ep^{2^{-L}}) = O(r^2m^{-1}),
    \end{gathered}
  \end{align*}
  if $m\geq C_1\ln\xkm{C_2|\mathcal{N}_{\varepsilon}|^2/\delta}^5$ for some positive absolute constants $C_1>0$ and $C_2\geq 1$.
  
  And there also exists a polynomial $\poly_2(\cdot)$ such that when $m\geq\poly_2\xkm{\ln(1/\delta)}$, we have $m\geq C_1\ln\xkm{C_2|\mathcal{N}_{\varepsilon}|^2/\delta}^5$, since $\ln |\mathcal{N}_{\varepsilon}| = O(\ln m)$.
  Combining these two terms, we have
  \begin{align*}
\sup_{t \geq 0} \sup_{(\x,\x')\in B_{\z,\z'}(\ep)} \abs{K_t(\x,\x') - \NTK(\x,\x')} = O\xkm{r^2m^{-{1/12}}\sqrt{\ln m}}
  \end{align*}
  if $m\geq\poly_2(\ln(1/\delta))$.

  Combining all of the above results and applying the union bound for all pair $\bm{z},\bm{z}'\in\mathcal{N}_\varepsilon$, with probability at least $1-\delta$, we have
  \begin{align*}
    \sup_{t\geq 0}\sup_{\x,\x'\in B_r}\abs{ K_{t}(\x,\x') - \NTK(\x,\x') } = O\xkm{r^2 m^{-{1}/{12}}\sqrt{\ln m}}
  \end{align*}
  if $m\geq\poly_1\xkm{n,\tilde{M}_{\bm{X}},\norm{\bm{y}}_2,\lambda_0^{-1},\ln(1/\delta)}  + \poly_2(\ln(1/\delta)) + \poly_3(k)$.
\end{proof}

  \section{Auxiliary Results on the NTK}

\subsection{Positive definiteness}
\label{subsec:D_PDNTK}

The following proposition is an elementary result on the power series expansion of the arc-cosine kernels.
\begin{proposition}
  \label{prop:D_ACK_PS}
  We have the following power series expansion for $\kappa_0$ and $\kappa_1$ in \cref{eq:Arccos_Formula}:
  \begin{align}
    \kappa_0(u) &= \frac{1}{2} + \frac{1}{\pi} \sum_{r=0}^\infty \frac{\left(\frac{1}{2}\right)_r}{(2r+1) r!} u^{2r+1},
    \quad
    \kappa_1(u) = \frac{1}{\pi} + \frac{1}{2}u + \frac{1}{\pi} \sum_{r = 1}^{\infty} \frac{\left(\frac{1}{2}\right)_{r-1}}{2(2r-1) r!} u^{2r},
  \end{align}
  where $(a)_p \coloneqq a(a+1)\dots(a+p-1)$ represents the rising factorial
  and both series converge absolutely for $u \in [-1,1]$.
\end{proposition}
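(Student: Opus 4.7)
The plan is to reduce both identities to the classical Maclaurin series of $\arcsin$ and then verify convergence via the known asymptotics of the coefficients.

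First, I would rewrite $\kappa_0$ using the identity $\arccos u = \frac{\pi}{2} - \arcsin u$, which immediately gives
\[
  \kappa_0(u) = 1 - \frac{1}{\pi}\arccos u = \frac{1}{2} + \frac{1}{\pi}\arcsin u.
\]
The standard power series $\arcsin u = \sum_{r\geq 0} \frac{(1/2)_r}{(2r+1) r!}\, u^{2r+1}$ (which one can derive by integrating the binomial expansion of $(1-u^2)^{-1/2}$ term by term on $(-1,1)$) then yields the claimed expansion for $\kappa_0$ directly.

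For $\kappa_1$, I would use the antiderivative relation
\[
  \dv{u}\!\bigl[\sqrt{1-u^2} + u \arcsin u\bigr] = -\frac{u}{\sqrt{1-u^2}} + \arcsin u + \frac{u}{\sqrt{1-u^2}} = \arcsin u,
\]
with initial value $1$ at $u=0$. Integrating the series for $\arcsin$ term by term and reindexing $r \mapsto r-1$ gives
\[
  \sqrt{1-u^2} + u \arcsin u = 1 + \sum_{r\geq 1} \frac{(1/2)_{r-1}}{2(2r-1)\, r!}\, u^{2r}.
\]
Combining this with the rearrangement
\[
  \kappa_1(u) = \frac{1}{\pi}\sqrt{1-u^2} + u\kappa_0(u) = \frac{1}{2}u + \frac{1}{\pi}\bigl[\sqrt{1-u^2} + u \arcsin u\bigr]
\]
produces exactly the stated formula for $\kappa_1$.

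For absolute convergence on the closed interval $[-1,1]$, I would invoke the identity $(1/2)_r / r! = \binom{2r}{r}/4^r$ together with Stirling's formula, which gives $(1/2)_r / r! \sim (\pi r)^{-1/2}$. Hence the coefficients of $\kappa_0$'s series decay like $r^{-3/2}$ and those of $\kappa_1$'s series like $r^{-5/2}$, so both series are dominated by convergent $p$-series at $|u|=1$. No genuine obstacle is anticipated; the only subtlety is ensuring term-by-term integration is justified at the endpoints $u=\pm 1$, which follows from uniform absolute convergence on $[-1,1]$ guaranteed by the aforementioned coefficient decay (alternatively, by Abel's theorem applied to the continuous extension of $\arcsin$ to $[-1,1]$).
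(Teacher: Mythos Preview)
Your argument is correct. The paper itself does not supply a proof of this proposition; it is introduced only as ``an elementary result on the power series expansion of the arc-cosine kernels'' and then used without justification. Your route via the Maclaurin series of $\arcsin$ together with the antiderivative identity $\dv{u}\bigl[\sqrt{1-u^2}+u\arcsin u\bigr]=\arcsin u$ is the natural way to derive both expansions, and the coefficient asymptotics you cite give the claimed absolute convergence on $[-1,1]$.
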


\begin{lemma}
[Lemma B.1 in \citet{lai2023_GeneralizationAbility}]
  \label{lem:B_PDSphere}
  Let $f: [-1,1] \to \R$ be a continuous function with the expansion
  $f(u) = \sum^{\infty}_{n=0}a_n u^n, \quad u \in [-1,1]$
  and $k(x,y) = f(\ang{x,y})$ be the associated inner-product kernel on $\bbS^{d}$.
  If $a_{n}\geq 0$ for all $n \geq 0$ and there are infinitely many $a_n > 0$, then $k$ is
  positive definite on $\mathbb{S}^{d}_{+}$.
\end{lemma}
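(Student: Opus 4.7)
The plan is to exploit the series expansion $f(u) = \sum_{n=0}^{\infty} a_n u^n$ to decompose the Gram matrix $K = (k(x_i,x_j))_{N \times N}$ as a nonnegative combination of positive semi-definite matrices, and then argue that at least one of them is strictly positive definite. For any nonzero $c \in \R^N$ and any distinct $x_1,\dots,x_N \in \bbS^d_+$, I would write
\begin{align*}
c^T K c = \sum_{n=0}^{\infty} a_n\, c^T M_n c, \qquad M_n := \bigl(\ang{x_i,x_j}^n\bigr)_{i,j=1}^N,
\end{align*}
and note that $M_n \succeq 0$ because it is the Gram matrix of the symmetric tensors $x_i^{\otimes n}$. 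Since every summand is nonnegative, it suffices to exhibit one index $n^* \in S := \{n : a_n > 0\}$ for which $M_{n^*} \succ 0$; this yields $c^T K c \ge a_{n^*} c^T M_{n^*} c > 0$, which is exactly the strict positive-definiteness we want.

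To locate such an $n^*$, I would show that $M_n \to I_N$ entrywise as $n \to \infty$. The diagonal entries are already $1$, since $\ang{x_i,x_i}=1$. For the off-diagonal entries, the key observation is that $\ang{x_i,x_j}\in(-1,1)$ for any two distinct points $x_i,x_j \in \bbS^d_+$: equality to $+1$ would force $x_i = x_j$, while equality to $-1$ would force $x_i = -x_j$, contradicting the assumption that both points have strictly positive last coordinate. Hence $\ang{x_i,x_j}^n \to 0$ for $i \neq j$, so $M_n \to I_N$, and continuity of the smallest eigenvalue gives $\lambda_{\min}(M_n) > 0$ for all $n \ge n_0$. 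Because the hypothesis ``infinitely many $a_n > 0$'' makes $S$ unbounded, I can pick $n^* \in S$ with $n^* \ge n_0$, finishing the argument.

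The main, and essentially the only, nontrivial input is the strict inequality $|\ang{x_i,x_j}|<1$, which is precisely where the restriction to the open upper hemisphere $\bbS^d_+$ (and the resulting absence of antipodal pairs) is used; the rest is linear-algebraic bookkeeping. I do not expect to need Müntz-type density theorems or other heavy analytic machinery, because the nonnegativity of all $a_n$ makes the ``select a single dominant index'' strategy available and circumvents any issue with gaps in $S$. For the series manipulation one needs absolute convergence of $\sum a_n u^n$ on $[-1,1]$, which follows from Abel's theorem at $u=1$: continuity of $f$ together with $a_n \ge 0$ yields $\sum a_n = f(1) < \infty$.
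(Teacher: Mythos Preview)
The paper does not supply its own proof of this lemma; it merely cites it from \citet{lai2023_GeneralizationAbility} and then invokes it in the proof of \cref{prop:NTK_PD}. Your argument is correct and self-contained. The decomposition $c^T K c = \sum_{n\ge 0} a_n\, c^T M_n c$ with each $M_n \succeq 0$, together with the entrywise limit $M_n \to I_N$ (which is exactly where the hemisphere restriction eliminates antipodal pairs and forces $|\ang{x_i,x_j}|<1$ for $i\neq j$), lets you pick a single index $n^*\in S$ with $M_{n^*}\succ 0$ and conclude $c^T K c \ge a_{n^*}\,c^T M_{n^*} c > 0$. The interchange of the finite sum over $i,j$ with the series over $n$ is justified by $\sum_n a_n = f(1) < \infty$, as you note via Abel's theorem and nonnegativity. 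There is nothing missing; this is an efficient elementary proof that could readily replace the bare citation.
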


\begin{proof}
[of \cref{prop:NTK_PD}]
  Following the proof of \cref{thm:NTK_EDR}, we introduce the transformation $\Phi$ and the homogeneous NTK $\NTK_0$.
  Plugging \cref{prop:D_ACK_PS} into \cref{eq:NTK0_Def}, by \cref{lem:B_PDSphere} we can show that $\NTK_0$ is strictly positive definite on
  $\bbS_{+}^{d} = \Phi(\R^d)$.
  Consequently, the positive definiteness of $\NTK$ follows from the fact that $\Phi$ is bijective and $\norm{\tilde{x}} \geq 1$.
\end{proof}

\subsection{Numerical experiments}

\begin{figure*}[t]
  \centering
  \includegraphics[width=0.49\textwidth]{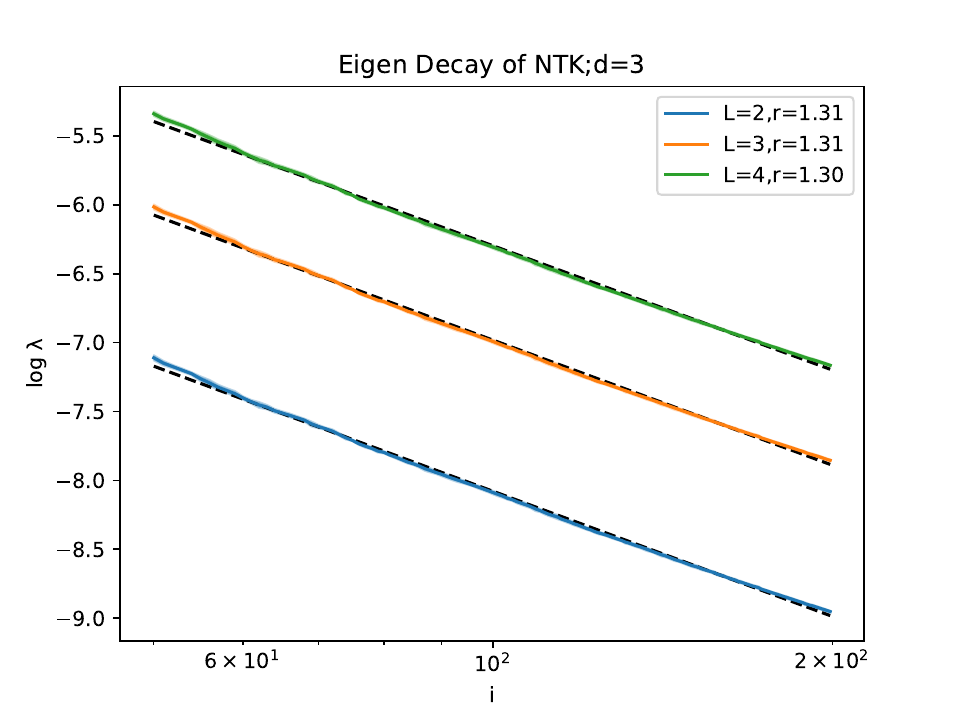}
  \includegraphics[width=0.49\textwidth]{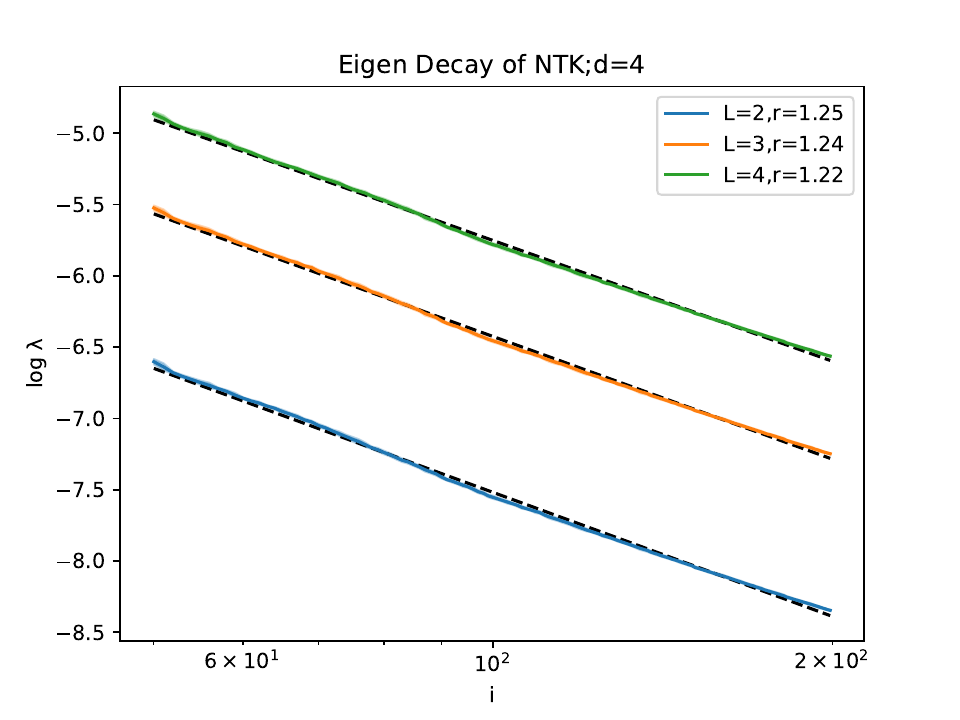}
  \includegraphics[width=0.49\textwidth]{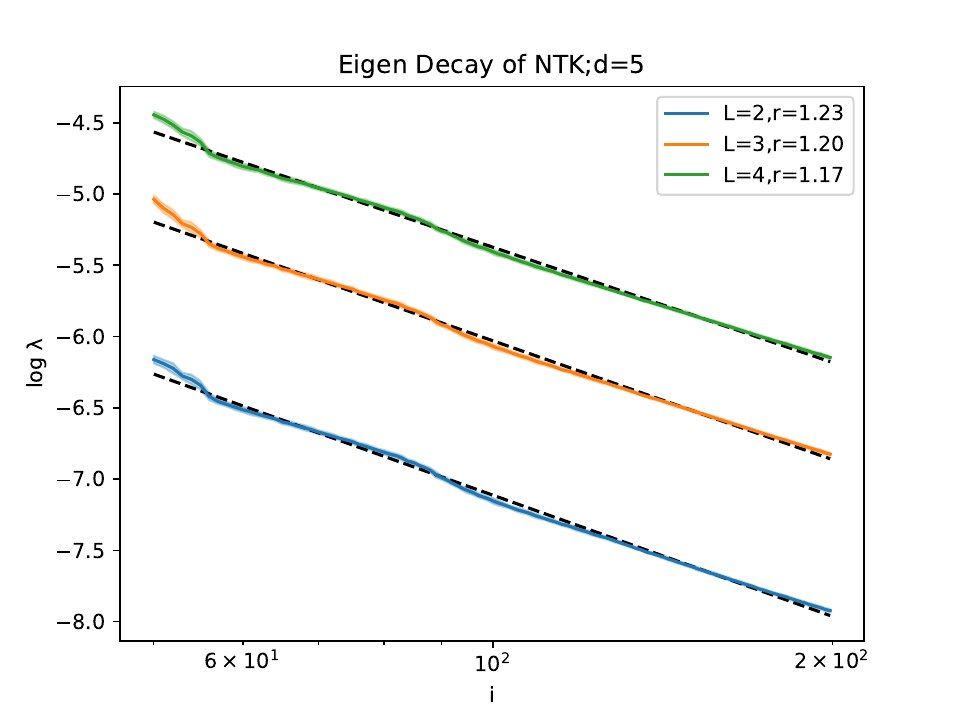}
  \includegraphics[width=0.49\textwidth]{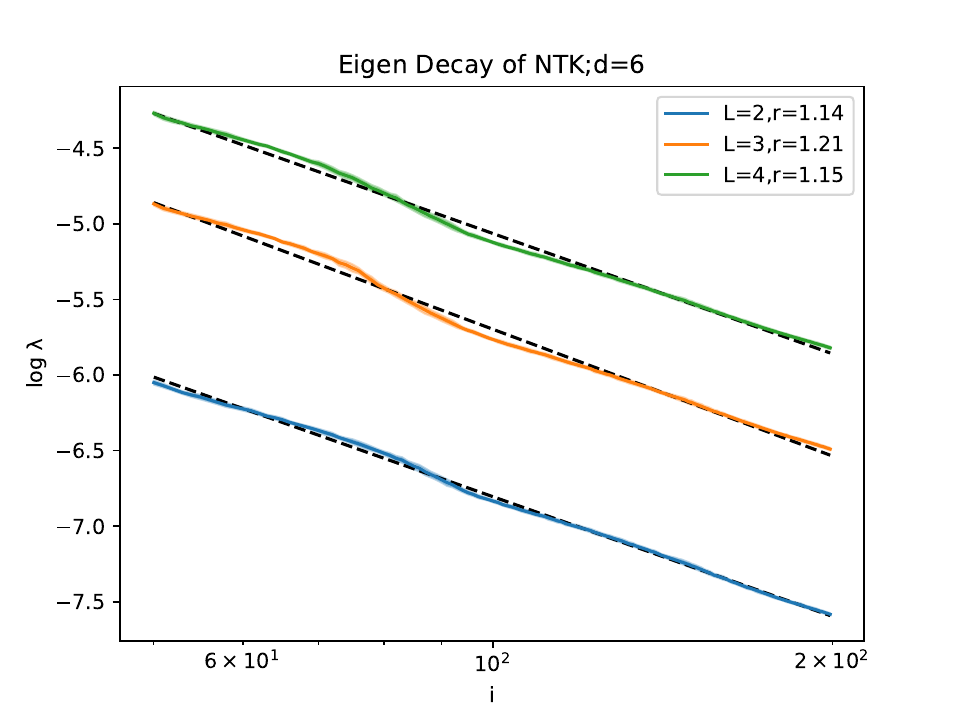}
  \caption{Eigenvalue decay of NTK under uniform distribution on $[-1,1]^d$,
    where $i$ is selected in $[50,200]$ and $n = 1000$.
    The dashed black line represents the log least-square fit and the decay rates $r$ are reported.}
  \label{fig:EDR1}
\end{figure*}
\begin{table*}[t]
  \centering
  \begin{tabular}{c|ccc|ccc|ccc}
    & \multicolumn{3}{c|}{$d=3$} & \multicolumn{3}{c|}{$d=4$} & \multicolumn{3}{c}{$d=5$} \\
    \hline
    Distribution   & $L=2$ & $L=3$ & $L=4$ & $L=2$ & $L=3$ & $L=4$ & $L=2$ & $L=3$ & $L=4$ \\
    \hline
    $U(-1,1)$      & 1.31  & 1.31  & 1.30  & 1.25  & 1.24  & 1.22  & 1.23  & 1.20  & 1.17  \\
    $U(0,1)$       & 1.33  & 1.33  & 1.32  & 1.26  & 1.26  & 1.25  & 1.14  & 1.13  & 1.12  \\
    Triangular     & 1.34  & 1.33  & 1.32  & 1.21  & 1.23  & 1.22  & 1.22  & 1.16  & 1.13  \\
    Clipped normal & 1.28  & 1.30  & 1.28  & 1.26  & 1.24  & 1.21  & 1.11  & 1.09  & 1.06
  \end{tabular}
  \caption{Eigenvalue decay rates of NTK, where each entry of $\x$ is drawn independently from multiple distributions.
  Triangular: $p(\x) = 1+x$, $x \in [-1,0]$, $p(\x) = 1-x$, $x \in [0,1]$;
  Clipped normal: standard normal clipped into $(-10,10)$.}
  \label{tab:1}
\end{table*}

We provide some numerical experiments on the eigenvalue decay of the neural tangent kernel.
We approximate the eigenvalue $\lambda_i$ by the eigenvalue $\lambda_i(K)$ of the regularized sample kernel matrix for $n$ much larger than $i$.
Then, we estimate eigenvalue decay by fitting a log least-square
$\ln \lambda_i = r \ln i + b$.
We skip the first several eigenvalues since they do not reflect the asymptotic decay.
We report the results in \cref{fig:EDR1} and \cref{tab:1}.
The results match our theoretical prediction and justify our theory.

  \section{Omitted proofs}\label{sec:omitted-proofs}
  \paragraph{Proof of \cref{prop:NN_Gen}}
\cref{thm:NTK_EDR} shows the eigenvalue decay rate of $\NTK$ is $(d+1)/d$.
Therefore, the results in \citet{lin2018_OptimalRates} implies the lower rate and that the gradient flow of NTK satisfies
\begin{align}
  \label{eq:E_Proof1}
  \norm{\hat{f}^{\mathrm{NTK}}_{t_{\mathrm{op}}} - f^*}_{L^2} \leq C \left( \ln\frac{6}{\delta} \right) n^{-\frac{1}{2}\frac{s\beta}{s\beta + 1}}
\end{align}
with probability at least $1-\delta$, where $\beta = (d+1)/d$.

On the other hand, since $\mu$ is sub-Gaussian, $\sum_{i=1}^n \norm{x_i}_2 \leq C n^2$
for probability at least $1-\delta$ if $n \geq \mathrm{poly}(\ln(1/\delta))$.
From $y_i = f^*(\x_i) + \ep_i$, $f^* \in L^\infty$ and $\ep_i$ is sub-Gaussian,
we have $\norm{\bm{y}} \leq 2C n$ for probability at least $1-\delta$ as long as $n \geq \mathrm{poly}(\ln(1/\delta))$.
Then, taking $k=1/48$ and $r = m^{k}$ in \cref{lem:UnifConverge},
when $m \geq \mathrm{poly}(n,\lambda_0^{-1},\ln(1/\delta))$,
with probability $1-3\delta$ we have
\begin{align*}
  \sup_{t\geq 0} \sup_{\x \in B_r} \abs{\fNTK(\x) - \fNN(\x)} \leq C m^{-\frac{1}{24}}\sqrt {\ln m} \leq C n^{-1}
\end{align*}
as long as we take a larger power of $n$ in the requirement of $m$.
Consequently,
\begin{align*}
  \norm{(\hat{f}^{\mathrm{NN}}_{t_{\mathrm{op}}} - f^*)\bm{1}_{B_r}}_{L^2}
  \leq \norm{(\hat{f}^{\mathrm{NN}}_{t_{\mathrm{op}}} - \hat{f}^{\mathrm{NTK}}_{t_{\mathrm{op}}})\bm{1}_{B_r}}_{L^2} + \norm{(\hat{f}^{\mathrm{NTK}}_{t_{\mathrm{op}}} - f^*)\bm{1}_{B_r}}_{L^2}
  \leq  \frac{1}{n} + C \left( \ln\frac{12}{\delta} \right) n^{-\frac{1}{2}\frac{s\beta}{s\beta + 1}}.
\end{align*}
Now,
\begin{align*}
  \norm{\hat{f}^{\mathrm{NN}}_{t_{\mathrm{op}}} - f^*}_{L^2}
&\leq \norm{(\hat{f}^{\mathrm{NN}}_{t_{\mathrm{op}}} - f^*)\bm{1}_{B_r}}_{L^2}
  + \norm{\hat{f}^{\mathrm{NN}}_{t_{\mathrm{op}}}\bm{1}_{B_r^\complement}}_{L^2}
  + \norm{f^* \bm{1}_{B_r^\complement}}_{L^2},
\end{align*}
where the first term is already bounded.
Noting that $\mu$ is sub-Gaussian and $r = m^{1/48}$, by \cref{cor:UpperboundNN} we bound the second term by
\begin{align*}
  \norm{\hat{f}^{\mathrm{NN}}_{t_{\mathrm{op}}}\bm{1}_{B_r^\complement}}_{L^2}
  \leq \norm{C \sqrt{m}\norm{\tilde{x}} \bm{1}_{B_r^\complement}}_{L^2}
  \leq C m^{-1} \leq Cn^{-1}
\end{align*}
and the third term by
\begin{align*}
  \norm{f^* \bm{1}_{B_r^\complement}}_{L^2} \leq \norm{f^*}_{L^\infty} \mu(B_r^\complement)^{1/2} \leq C n^{-1}.
\end{align*}
Plugging these bounds into the above inequality, we finish the proof.

\subsection{Choosing stopping time with cross-validation}
Before proving \cref{prop:NN_CV}, we introduce a modified version of \citet[Theorem 3]{caponnetto2010_CrossvalidationBased}.
\begin{proposition}
  \label{prop:E_CV}
  Let $\delta \in (0,1)$ and $\ep > 0$.
  Suppose $\hat{f}_t$ is a family of estimators indexed by $t \in T_n$ such that with probability at least $1-\delta$,
  it holds that $\norm{\hat{f}_{t_n} - f^*}_{L^2} \leq \ep$ for some $t_n \in T_n$.
  Then, if $\hat{t}_{\mathrm{cv}}$ is chosen by cross-validation according to
  \cref{eq:5_StoppingTimeCV}, with probability at least $1-2\delta$, it holds that
  \begin{align}
    \norm{\hat{f}_{\hat{t}_{\mathrm{cv}}} - f^*}_{L^2} \leq 2\ep + \left( \frac{160 M^2}{\tilde{n}} \ln \frac{2\abs{T_n}}{\delta} \right)^{1/2}.
  \end{align}
\end{proposition}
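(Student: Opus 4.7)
The plan is to reduce the CV selection error to a concentration problem over the finite grid $T_n$ by means of Bernstein's inequality, exploiting the boundedness $\abs{\tilde{y}_i}, \abs{L_M(f)} \leq M$. Let $\hat{\caE}(f) = \tilde{n}^{-1}\sum_{i=1}^{\tilde{n}}(L_M(f(\tilde{\x}_i))-\tilde{y}_i)^2$ denote the CV empirical risk appearing in \cref{eq:5_StoppingTimeCV} and $\caE(f)$ its expectation over a fresh draw of $(\tilde{\x},\tilde{y})$, taken conditionally on the training sample that produced $\hat{f}_t$. A direct computation using $\tilde{y} = f^*(\tilde{\x}) + \tilde{\ep}$ with $\E \tilde{\ep} = 0$ yields the identity
\begin{align*}
  \eta_t \coloneqq \caE(\hat{f}_t) - \caE(f^*) = \norm{L_M(\hat{f}_t) - f^*}_{L^2}^2.
\end{align*}
Since $L_M$ is $1$-Lipschitz and $\abs{f^*} \leq M$ forces $L_M(f^*) = f^*$, the hypothesis $\norm{\hat{f}_{t_n} - f^*}_{L^2} \leq \ep$ carries over to $\eta_{t_n} \leq \ep^2$.

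The main step is a uniform concentration bound for $\hat{\caE}(\hat{f}_t) - \hat{\caE}(f^*) - \eta_t$ over $t\in T_n$. Setting $Z_i^t = (L_M(\hat{f}_t(\tilde{\x}_i)) - \tilde{y}_i)^2 - (f^*(\tilde{\x}_i) - \tilde{y}_i)^2$ and factoring as a product of a sum and a difference, $Z_i^t = \xk{L_M(\hat{f}_t(\tilde{\x}_i)) - f^*(\tilde{\x}_i)}\xk{L_M(\hat{f}_t(\tilde{\x}_i)) + f^*(\tilde{\x}_i) - 2\tilde{y}_i}$, yields $\abs{Z_i^t} \leq 4M^2$ and $\mathrm{Var}(Z_i^t) \leq 16 M^2 \eta_t$. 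Conditional on the training sample, Bernstein's inequality applied to the i.i.d.\ variables $(Z_i^t)_{i=1}^{\tilde{n}}$ together with a union bound over $T_n$ gives, with probability at least $1-\delta$,
\begin{align*}
  \abs{\hat{\caE}(\hat{f}_t) - \hat{\caE}(f^*) - \eta_t}
  \leq \sqrt{\frac{32 M^2 \eta_t}{\tilde{n}} \ln\frac{2\abs{T_n}}{\delta}} + \frac{8 M^2}{3\tilde{n}}\ln\frac{2\abs{T_n}}{\delta},\quad \forall t \in T_n.
\end{align*}
Intersecting with the assumed $1-\delta$ approximation event (which depends only on the training randomness) gives total probability at least $1-2\delta$.

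On that event, the CV minimality $\hat{\caE}(\hat{f}_{\hat{t}_{\mathrm{cv}}}) \leq \hat{\caE}(\hat{f}_{t_n})$ chains with the Bernstein inequality applied to both $\hat{t}_{\mathrm{cv}}$ (lower side) and $t_n$ (upper side) to produce an inequality of the schematic form $\eta_{\hat{t}_{\mathrm{cv}}} \leq \eta_{t_n} + \sqrt{A \eta_{\hat{t}_{\mathrm{cv}}}} + \sqrt{A \eta_{t_n}} + 2B$, with $A \asymp M^2 \tilde{n}^{-1}\ln(2\abs{T_n}/\delta)$ and $B$ of the same order. Two applications of Young's inequality $\sqrt{A\eta} \leq Ac + \eta/(4c)$ with appropriately chosen splitting parameters absorb the $\sqrt{A\eta_{\hat{t}_{\mathrm{cv}}}}$ term back into the left-hand side and produce a clean bound $\eta_{\hat{t}_{\mathrm{cv}}} \leq 4 \eta_{t_n} + C M^2 \tilde{n}^{-1} \ln(2\abs{T_n}/\delta)$. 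Taking square roots via $\sqrt{a+b}\leq \sqrt{a}+\sqrt{b}$ then delivers the desired $\sqrt{\eta_{\hat{t}_{\mathrm{cv}}}} \leq 2\ep + (160 M^2 \tilde{n}^{-1}\ln(2\abs{T_n}/\delta))^{1/2}$.

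The only bookkeeping subtlety, and the one place care is needed, is to tune the Young splitting so that the prefactor of $\eta_{t_n}$ comes out to exactly $4$ (thereby producing the stated $2\ep$) while keeping the residual constant in the $M^2/\tilde{n}$ term under $160$; the extra slack relative to the tighter value one gets from optimizing Bernstein directly accounts for the rounded constant used in Theorem 3 of \citet{caponnetto2010_CrossvalidationBased}, of which the present proposition is an immediate adaptation.
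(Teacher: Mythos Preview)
The paper does not actually supply a proof of this proposition: it is introduced as ``a modified version of \citet[Theorem 3]{caponnetto2010_CrossvalidationBased}'' and then used directly in the proof of \cref{prop:NN_CV}. Your argument---Bernstein's inequality applied conditionally on the training sample to the centered validation losses $Z_i^t$, a union bound over the finite grid $T_n$, and two Young-type splittings to absorb the $\sqrt{A\eta_{\hat{t}_{\mathrm{cv}}}}$ term---is exactly the standard route in that reference, and your moment bounds $\abs{Z_i^t}\leq 4M^2$, $\mathrm{Var}(Z_i^t)\leq 16M^2\eta_t$ are correct. One small point worth flagging: what you actually bound is $\norm{L_M(\hat{f}_{\hat{t}_{\mathrm{cv}}})-f^*}_{L^2}$, not $\norm{\hat{f}_{\hat{t}_{\mathrm{cv}}}-f^*}_{L^2}$ as the proposition literally writes; this is consistent with how the paper applies the result (the final estimator in \cref{prop:NN_CV} is the truncated $\hat{f}^{\mathrm{NN}}_{\mathrm{cv}}=L_M(\hat{f}^{\mathrm{NN}}_{\hat{t}_{\mathrm{cv}}})$), so the discrepancy is a notational slip in the statement rather than a gap in your proof.
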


\paragraph{Proof of \cref{prop:NN_CV}}
The choice of $T_n$ guarantees that there is $t_n \in T_n$
such that $t_{\mathrm{op}} \leq t_n \leq Q t_{\mathrm{op}}$ for $t_{\mathrm{op}} = n^{(d+1)/ [s(d+1)+d]}$
and that $\abs{T_n} \leq \log_Q n + 1 \leq C\ln n$.
Then, by \cref{prop:NN_Gen} we know that
\begin{align*}
  \norm{\hat{f}_{t_n} - f^*}_{L^2} \leq C \left( \ln\frac{12}{\delta} \right) n^{-\frac{1}{2}\frac{s\beta}{s\beta + 1}}.
\end{align*}
Consequently, by \cref{prop:E_CV}, we conclude that
\begin{align*}
  \norm{\hat{f}_{\hat{t}_{\mathrm{cv}}} - f^*}_{L^2}
  \leq C \left( \ln\frac{12}{\delta} \right) n^{-\frac{1}{2}\frac{s\beta}{s\beta + 1}}
  + \left( \frac{160 M^2}{c_{\mathrm{v}} n} \ln \frac{C \ln n}{\delta} \right)^{1/2}
  \leq C \left( \ln\frac{12}{\delta} \right) n^{-\frac{1}{2}\frac{s\beta}{s\beta + 1}}
\end{align*}
as long as $n$ is sufficiently large.

  \section{Auxiliary Results}\label{sec:A_Aux}

\subsection{Self-adjoint compact operator}

For a self-adjoint compact positive operator $A$ on a Hilbert space,
we denote by $\lambda_n(A)$ the $n$-th largest eigenvalue of $A$.
The following minimax principle is a classic result in functional analysis.

\begin{lemma}[Minimax principle]
  Let $A$ be a self-adjoint compact positive operator.
  Then
  \begin{align*}
    \lambda_n(A) &= \sup_{\substack{V \subseteq H \\ \dim V = n}} \inf_{\substack{x \in V \\ \norm{x} = 1}} \ang{Ax,x}.
  \end{align*}
\end{lemma}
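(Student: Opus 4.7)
The plan is to leverage the spectral decomposition of $A$ and argue in two directions: exhibit a distinguished $n$-dimensional subspace realizing the value $\lambda_n(A)$, and then use a codimension/intersection argument to cap any competing subspace at $\lambda_n(A)$.

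First, since $A$ is self-adjoint, compact, and positive, the spectral theorem supplies an orthonormal basis $(e_k)_{k \geq 1}$ of $H$ (or of $\overline{\ran A} \oplus \Ker A$, padding with a basis of $\Ker A$ for eigenvalue $0$) such that $A e_k = \lambda_k e_k$ with $\lambda_1 \geq \lambda_2 \geq \cdots \geq 0$. Every $x \in H$ then admits the expansion $x = \sum_k \ang{x,e_k} e_k$, giving the quadratic form $\ang{Ax,x} = \sum_k \lambda_k \abs{\ang{x,e_k}}^2$, which will be the workhorse for both bounds.

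For the lower bound on the supremum, I would choose the candidate subspace $V_* = \spn\{e_1,\dots,e_n\}$, which has dimension $n$. For any unit vector $x \in V_*$, writing $x = \sum_{k=1}^n a_k e_k$ with $\sum_{k=1}^n \abs{a_k}^2 = 1$, the formula above yields $\ang{Ax,x} = \sum_{k=1}^n \lambda_k \abs{a_k}^2 \geq \lambda_n \sum_{k=1}^n \abs{a_k}^2 = \lambda_n(A)$, with equality at $x = e_n$. This shows the sup is at least $\lambda_n(A)$.

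For the matching upper bound, I would take an arbitrary $n$-dimensional subspace $V \subseteq H$ and consider the closed subspace $W = \overline{\spn}\{e_n, e_{n+1}, \ldots\}$, whose orthogonal complement $\spn\{e_1,\dots,e_{n-1}\}$ has dimension $n-1$. A standard dimension count then forces $V \cap W \neq \{0\}$, so I can pick a unit vector $x_0 \in V \cap W$. Expanding $x_0 = \sum_{k \geq n} \ang{x_0,e_k} e_k$ and using monotonicity of $(\lambda_k)$ gives $\ang{Ax_0, x_0} = \sum_{k \geq n} \lambda_k \abs{\ang{x_0,e_k}}^2 \leq \lambda_n(A)$, hence $\inf_{x \in V,~\norm{x}=1} \ang{Ax,x} \leq \lambda_n(A)$. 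Combining the two directions closes the proof.

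The only mildly delicate point is the dimension count when $H$ is infinite-dimensional: one should justify that the finite-dimensional subspace $V$ cannot be contained in $\spn\{e_1,\dots,e_{n-1}\}^\perp{}^\complement$ vacuously, which follows from the fact that the orthogonal projection $P$ onto $\spn\{e_1,\dots,e_{n-1}\}$ maps $V$ into an $(n-1)$-dimensional space, so $\Ker(P|_V) \neq \{0\}$ and any non-zero element of the kernel lies in $V \cap W$. No other step should pose difficulty.
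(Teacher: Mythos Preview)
Your argument is correct and is the standard Courant--Fischer proof via the spectral decomposition. Note, however, that the paper does not actually prove this lemma: it is stated as ``a classic result in functional analysis'' and invoked without proof, so there is no paper proof to compare against. Your write-up would serve perfectly well as the omitted justification.
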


\begin{lemma}[Weyl's inequality for operators]
  \label{lem:A_WeylOp}
  Let $A,B$ be self-adjoint compact positive operators.
  Then
  \begin{align}
    \lambda_{i+j-1}(A+B) & \leq \lambda_{i}(A) +  \lambda_{j}(B),\quad i,j \geq 1.
\end{align}
\end{lemma}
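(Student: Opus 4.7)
The plan is to derive Weyl's inequality directly from the max-min principle stated immediately before, via a simple dimension-counting argument. The key preliminary observation is that for a self-adjoint compact positive operator $A$ with orthonormal eigenvectors $e_1^A, e_2^A, \ldots$ corresponding to the decreasing sequence $\lambda_1(A) \geq \lambda_2(A) \geq \cdots$, any unit vector $x$ orthogonal to $U_A^{i-1} := \mathrm{span}(e_1^A, \ldots, e_{i-1}^A)$ satisfies
\[
\langle Ax, x\rangle = \sum_{k \geq i} \lambda_k(A)\,|\langle x, e_k^A\rangle|^2 \leq \lambda_i(A),
\]
because all $\lambda_k(A)$ with $k \geq i$ are bounded by $\lambda_i(A)$.

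Now fix $i, j \geq 1$, set $n := i+j-1$, and write $\mu := \lambda_n(A+B)$. By the max-min characterization, for each $\varepsilon > 0$ there exists a subspace $V \subseteq H$ with $\dim V = n$ such that $\inf_{x \in V,\, \|x\|=1}\langle (A+B)x, x\rangle \geq \mu - \varepsilon$. Define $U := U_A^{i-1} + U_B^{j-1}$, which has dimension at most $(i-1) + (j-1) = i+j-2$, and let $W := U^{\perp}$, so that $W$ has codimension at most $i+j-2$. A standard dimension count gives
\[
\dim(V \cap W) \geq \dim V - \mathrm{codim}\, W \geq (i+j-1) - (i+j-2) = 1,
\]
so we may pick a unit vector $x \in V \cap W$. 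Since $x \perp U_A^{i-1}$ and $x \perp U_B^{j-1}$, applying the preliminary observation to $A$ and to $B$ yields $\langle Ax, x\rangle \leq \lambda_i(A)$ and $\langle Bx, x\rangle \leq \lambda_j(B)$, hence $\langle (A+B)x, x\rangle \leq \lambda_i(A) + \lambda_j(B)$. Combined with the lower bound $\langle (A+B)x, x\rangle \geq \mu - \varepsilon$ inherited from membership in $V$, this gives $\mu \leq \lambda_i(A) + \lambda_j(B) + \varepsilon$, and letting $\varepsilon \to 0$ completes the proof.

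I expect no serious obstacle here, since the argument becomes essentially finite-dimensional the moment the eigenbases of $A$ and $B$ are invoked; compactness and self-adjointness enter only to supply these spectral decompositions. The one point requiring a little care is the dimension count, but because $W = U^{\perp}$ has codimension exactly $\dim U$ in the ambient Hilbert space and $V$ is finite-dimensional, the intersection $V \cap W$ necessarily contains a nonzero vector whenever $\dim V > \dim U$, which is exactly what the choice $n = i+j-1$ guarantees.
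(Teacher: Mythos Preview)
Your proof is correct and is the standard argument for Weyl's inequality via the minimax principle. The paper itself does not supply a proof of this lemma; it is stated as a classical auxiliary result without proof, so there is nothing to compare against beyond noting that your derivation is exactly the one a reader would be expected to supply.
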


\begin{lemma}
  \label{lem:seca_SumEigenCount}
  Let $A_1,\dots,A_k$ be self-adjoint and compact.
  Suppose $\ep = \sum_{i=1}^k \ep_i$.
  Denote by $N^{\pm}(\ep,T)$ the count of eigenvalues of $T$ that is strictly greater(smaller) than $\ep$ ($-\ep$).
  We have
  \begin{align}
    N^{\pm}(\ep, \sum_{i=1}^k A_i) \leq \sum_{i=1}^k N^{\pm}(\ep_i,A_i),
  \end{align}
\end{lemma}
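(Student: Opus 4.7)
The plan is to deduce this from an iterated version of the pairwise Weyl inequality stated just above (\cref{lem:A_WeylOp}), applied with carefully chosen indices. First, I would extend that lemma by induction on $k$ to the form
\[
\lambda_{i_1 + i_2 + \cdots + i_k - (k-1)}(A_1 + \cdots + A_k) \;\leq\; \lambda_{i_1}(A_1) + \cdots + \lambda_{i_k}(A_k),
\]
valid for any integers $i_1,\dots,i_k \geq 1$, where $\lambda_n(\cdot)$ denotes the $n$-th largest eigenvalue of a self-adjoint compact operator (in the sense of the minimax principle, so that the ordering is defined whether or not the operator is positive). The inductive step just groups $A_1 + \cdots + A_{k-1}$ with $A_k$ and applies the pairwise Weyl bound, choosing the indices $i_1 + \cdots + i_{k-1} - (k-2)$ and $i_k$.

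Next, I would treat the $N^+$ case. Set $n_j := N^+(\ep_j, A_j)$, so that by definition the $(n_j+1)$-th largest eigenvalue of $A_j$ satisfies $\lambda_{n_j + 1}(A_j) \leq \ep_j$ (using the convention that $\lambda_m(A_j) \leq 0 \leq \ep_j$ when there are fewer than $m$ eigenvalues above any given positive level). Substituting $i_j = n_j + 1$ into the iterated Weyl inequality gives
\[
\lambda_{n_1 + \cdots + n_k + 1}\Bigl(\sum_{i=1}^k A_i\Bigr) \;\leq\; \sum_{j=1}^k \lambda_{n_j+1}(A_j) \;\leq\; \sum_{j=1}^k \ep_j \;=\; \ep,
\]
which is precisely the statement that $\sum_i A_i$ has at most $n_1 + \cdots + n_k$ eigenvalues strictly greater than $\ep$. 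This is the desired bound $N^+(\ep, \sum_i A_i) \leq \sum_j N^+(\ep_j, A_j)$.

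Finally, for $N^-$ I would apply the same argument with each $A_i$ replaced by $-A_i$, noting that eigenvalues of $-A$ below $-\ep$ correspond bijectively to eigenvalues of $A$ above $\ep$, so that $N^-(\ep, T) = N^+(\ep, -T)$, and then using linearity $\sum_i(-A_i) = -\sum_i A_i$. There is no real obstacle here: once the pairwise Weyl inequality is in hand, the whole argument is a short indexing bookkeeping. The only mild subtlety is ensuring the convention on $\lambda_m$ when $A_j$ has only finitely many positive eigenvalues (compactness of $A_j$ guarantees its eigenvalues accumulate only at $0$, so the minimax definition extends the ordered list by nonpositive values, which does not break any inequality since we only need $\lambda_{n_j+1}(A_j) \leq \ep_j$).
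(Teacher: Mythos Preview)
Your argument is correct. The paper does not actually give its own proof of this lemma: it simply cites \citet[Lemma 5]{widom1963_AsymptoticBehavior}. Your route via the iterated Weyl inequality is the natural way to prove it from first principles and is essentially what underlies Widom's lemma.

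One small point worth making explicit: \cref{lem:A_WeylOp} as stated in the paper assumes $A,B$ are \emph{positive}, whereas the $A_i$ here are only self-adjoint and compact. You already anticipate this by interpreting $\lambda_n(\cdot)$ through the minimax formula, and indeed the Courant--Fischer proof of Weyl's inequality goes through unchanged in that generality. So there is no gap, only a mild extension of the cited lemma that should be stated rather than left implicit. Your handling of the case where $A_j$ has only finitely many positive eigenvalues is also fine, since in all applications the $\ep_j$ are nonnegative (and when some $\ep_j<0$ the bound is vacuous anyway because $N^+(\ep_j,A_j)=\infty$ for a compact operator on an infinite-dimensional space).
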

\begin{proof}
  \citet[Lemma 5]{widom1963_AsymptoticBehavior}.
\end{proof}

\subsection{Subdomains on the sphere}

Let $d(\x,\y) = \arccos\ang{\x,\y}$ be the geodesic distance on the sphere.
The first proposition deals with the ``overlapping  area'' after rotation of two subdomains.

\begin{proposition}
  \label{prop:AreaControl}
  Let $\Omega_1, \Omega_2 \subset \bbS^d$ be two disjoint domains with piecewise smooth boundary.
  Fix two points ${e},\y \in \bbS^d$.
  Suppose that for any $\x \in \bbS^d$, $R_{{e},\x}$ is an isometric transformation such that $R_{{e},\x} {e} = \x$.
  Then, there exists some $M$ such that
  \begin{align}
    \abs{\left\{ \x \in \Omega_1 : R_{{e},\x} \y \in \Omega_2 \right\}} \leq
    M d(\y,{e}) = M \arccos \ang{\y,{e}},
  \end{align}
\end{proposition}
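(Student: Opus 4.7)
\textbf{Proof proposal for \cref{prop:AreaControl}.}

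The essential geometric input is that $R_{{e},\x}$ is an isometry with $R_{{e},\x} {e} = \x$, which immediately gives
\begin{equation*}
  d(R_{{e},\x} \y, \x) \;=\; d(R_{{e},\x} \y,\; R_{{e},\x} {e}) \;=\; d(\y,{e}) \;=:\; r.
\end{equation*}
Thus the set $A := \left\{ \x \in \Omega_1 : R_{{e},\x} \y \in \Omega_2 \right\}$ consists of those $\x \in \Omega_1$ whose geodesic distance from $\Omega_2$ is at most $r$. Since $\Omega_1$ and $\Omega_2$ are disjoint, any such $\x$ must in fact lie within geodesic distance $r$ of $\partial \Omega_2$, so
\begin{equation*}
  A \;\subseteq\; \Omega_1 \cap T_r(\partial \Omega_2),
  \qquad
  T_r(\partial \Omega_2) := \left\{ \x \in \bbS^d : d(\x,\partial \Omega_2) \leq r \right\}.
\end{equation*}

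The remaining task is to bound $\sigma(T_r(\partial \Omega_2))$ linearly in $r$. On the smooth strata of $\partial \Omega_2$ the normal exponential map is a local diffeomorphism for $r$ below some $r_0>0$, and compactness of $\bbS^d$ makes its Jacobian uniformly bounded, yielding a contribution of at most $C_1 \cdot \mathrm{vol}_{d-1}(\partial \Omega_2^{\mathrm{sm}}) \cdot r$. The singular strata of $\partial \Omega_2$ are, by the piecewise smoothness assumption, contained in a finite union of submanifolds of codimension at least $2$ in $\bbS^d$, and an elementary covering argument bounds the volume of their $r$-neighborhood by $O(r^2)$. Combining these gives $\sigma(T_r(\partial \Omega_2)) \leq C_{\Omega_2}\, r$ for all $r \leq r_0$. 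For $r > r_0$ we use the trivial bound $\sigma(T_r(\partial \Omega_2)) \leq \sigma(\bbS^d)$, which is at most $(\sigma(\bbS^d)/r_0)\, r$. Taking $M = \max\{C_{\Omega_2},\, \sigma(\bbS^d)/r_0\}$ yields $\sigma(A) \leq M\, r = M \arccos \ang{\y,{e}}$, as required.

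The bulk of the work is the tube estimate, and the only subtlety is its behavior near the singular points of $\partial \Omega_2$. This is the main obstacle but is handled by the observation above: singular strata live in codimension $\geq 2$, so their tubes contribute only a higher-order term that is absorbed into the linear-in-$r$ bound. I note that the constant $M$ depends on $\Omega_1,\Omega_2$ but is independent of $\y$, ${e}$, and of the particular isometry $R_{{e},\x}$ chosen (indeed, only the identity $d(R_{{e},\x}\y,\x) = d(\y,{e})$ is used), which is exactly what is needed for the application in the proof of \cref{lem:EDRS_MainLemma}.
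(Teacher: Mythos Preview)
Your proof is correct and follows essentially the same route as the paper's: use the isometry to get $d(R_{e,\x}\y,\x)=r$, then disjointness of $\Omega_1,\Omega_2$ to confine $A$ to the $r$-tube of a boundary, then a linear-in-$r$ tube volume bound. The only cosmetic differences are that you tube around $\partial\Omega_2$ while the paper tubes around $\partial\Omega_1$ (either works by the same contrapositive), and you sketch the tube estimate directly whereas the paper simply invokes \citet{weyl1939_VolumeTubes}.
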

\begin{proof}
  Let $r = d(\y,{e})$.
  Since $R_{{e},\x}$ is isometric, we have
  \begin{align*}
    r = d(\y,{e}) = d(R_{{e},\x}\y, R_{{e},\x}{e}) = d(R_{{e},\x}\y,\x).
  \end{align*}
  Therefore, if $r < d(\x,\partial \Omega_1)$, noticing that $\x \in \Omega_1$, we have $R_{{e},\x} \y \in \overline{B}_{\x}(r) \subset \Omega_1$,
  and hence $R_{{e},\x}\y \notin \Omega_2$.
  Therefore,
  \begin{align*}
    \left\{ \x \in \Omega_1 : R_{{e},\x} \y \in \Omega_2 \right\}
    \subseteq \left\{ \x \in \Omega_1 : d(\x,\partial \Omega_1) \leq r \right\}.
  \end{align*}
  The latter is a tube of radius $r$ of $\partial \Omega_1$ as defined in \citet{weyl1939_VolumeTubes}.
  Moreover, since $\partial \Omega_1$ is piecewise smooth, the results in \citet{weyl1939_VolumeTubes} show that there is some constant $M$
  such that
  \begin{align*}
    \abs{\left\{ \x \in \Omega_1 : d(\x,\partial \Omega_1) \leq r \right\}}
    \leq M r,
  \end{align*}
  giving the desired estimation.
\end{proof}

This following proposition provides a decomposition of the sphere.
\begin{proposition}
  \label{prop:C_SphereDecomp}
  There exists a sequence of subdomains $U_0,V_0,U_1,V_1,\dots \subseteq \bbS^d$ with piecewise smooth boundary such that
  \begin{enumerate}[(1)]
    \item $U_0 = \bbS^d$;
    \item There are disjoint isometric copies $V_{i,1},\dots,V_{i,n_i}$ of $V_i$ such that
    $U_i = \bigcup_{j=1}^{n_i} V_{i,n_i} \cup Z,$
    where $Z$ is a null-set;
    \item $V_i \subseteq U_{i+1}$ after some isometric transformation;
    \item $\mathrm{diam}~V_i \to 0$.
  \end{enumerate}
\end{proposition}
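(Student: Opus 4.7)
I will construct the sequence inductively by iterated bisection along reflection-symmetry hyperplanes of $U_i$, setting $U_{i+1} := V_i$ at each step so that condition~(3) is automatic, and taking $V_i$ to be one of the two closed halves of $U_i$ across such a hyperplane $H_i$. The reflection across $H_i$ is an isometry of $\bbS^d$ preserving $U_i$, so it carries $V_i$ to the other half; together they cover $U_i$ with overlap in $H_i \cap \bbS^d$, which is a null set, yielding condition~(2). The piecewise smoothness of the boundary is preserved throughout, since all bisecting great spheres and earlier boundary pieces are smooth submanifolds of $\bbS^d$.

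Starting from $U_0 = \bbS^d$, the first $d+1$ steps use the coordinate hyperplanes $\{x_{d+1}=0\}, \{x_d=0\}, \dots, \{x_1=0\}$, successively reducing $U_{d+1}$ to the spherical orthant $\Delta = \bbS^d \cap \R^{d+1}_{\geq 0}$, which has diameter $\pi/2$. In the next phase I continue with bisections along the permutation-symmetry hyperplanes $\{x_j = x_k\}$ of the orthant; after $\binom{d+1}{2}$ further steps this reduces $U_i$ to a fundamental Weyl chamber of the hyperoctahedral group $B_{d+1}$, a spherical simplex of bounded diameter.

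To ensure $\mathrm{diam}~V_i \to 0$, the construction must be continued past the Weyl chamber, where no further reflection symmetry of the ambient sphere preserves the simplex. I would proceed by induction on the dimension $d$: the Weyl chamber has a natural cone-like structure over a lower-dimensional Coxeter complex on $\bbS^{d-1}$, and an inductively constructed sequence on $\bbS^{d-1}$, together with bisections in the radial direction by carefully chosen great hyperspheres, produces a sequence of $V_i$'s whose diameters tend to zero. The base case $d=1$ is trivial by halving arcs.

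The hard part will be this inductive step past the Weyl chamber. The subdivision must yield sub-domains that are \emph{exactly} isometric (not merely approximately so) in the ambient spherical metric, which requires selecting cross-sectional rotations of $\bbS^{d-1}$ that extend to isometries of $\bbS^d$, and choosing the radial bisection depths so the resulting pieces carry compatible isometries. This compatibility condition arises from the non-product structure of the round metric near the pole of the chamber, and it is here that the bulk of the technical verification would lie.
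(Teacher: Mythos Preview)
Your construction works cleanly up to the Weyl chamber, but the continuation past it has a genuine gap. Once you reach a fundamental chamber of $B_{d+1}$, no reflection of $\bbS^d$ preserves it, as you note, and your proposed workaround---induction on $d$ via a ``cone over $\bbS^{d-1}$'' picture together with radial great-sphere bisections---does not supply the missing isometries. A spherical simplex is not metrically a cone or warped product over a lower-dimensional sphere in any way that lets an isometric subdivision scheme on $\bbS^{d-1}$ lift to one on the chamber; and a great hypersphere cutting the chamber ``radially'' has no reason to produce two pieces related by an ambient isometry of $\bbS^d$ unless that isometry already existed as a symmetry of the chamber. You flag this as ``the hard part,'' but as written there is no mechanism that actually manufactures the required exact isometric halves once all reflective symmetry is exhausted, and I do not see how to build one along these lines.

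The underlying difficulty is your choice to force $U_{i+1}=V_i$. This makes condition~(3) automatic, but at the price of demanding that every $V_i$ still carry a reflection symmetry for the next bisection. The paper instead exploits the slack in condition~(3): it only needs $V_i\subseteq U_{i+1}$ up to isometry, so at each step it \emph{enlarges} $V_i$ to a spherical cap $U_{i+1}=S_{r_{i+1}}$ containing it. Caps have the full $O(d)$ stabiliser, so they always decompose into $2^d$ isometric polar ``slices'' regardless of how small $r_i$ is, and the whole argument then reduces to a one-variable estimate showing each slice of $S_{r_i}$ fits inside a cap of strictly smaller radius $r_{i+1}$, with $r_i\to 0$ geometrically. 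Your first phase (coordinate hyperplanes down to the positive orthant) matches the paper's first step; the divergence is that the paper then resets the symmetry at every stage by passing to caps, rather than trying to push reflection bisection further.
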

\begin{proof}
  Let us denote by $S_{\bm{p}, r} = \left\{ \x \in \bbS^d \;|\; \ang{\x,\bm{p}} > \cos r \right\}$
  the spherical cap centered at $p$ with radius $r$,
  and $S_r = S_{{e}_{d+1},r}$, where ${e}_{d+1}$ is the unit vector for the last coordinate.

  First, let $V_0 = \left\{ \x = (x_1,\dots,x_{d+1}) \in \bbS^d ~\big|~ x_i > 0,~ i=1,\dots,d+1 \right\}.$
  Then, by reflection, there are $2^{d+1}$ isometric copies of $\Omega$
  such that their disjoint union is whole sphere minus equators,
  which is a null set.

  To proceed, taking $\bm{p} = \frac{1}{\sqrt {d+1}}(1,\dots,1)$, for any points $\x \in V_0$, we have
  \begin{align*}
    \ang{\x,\bm{p}} = \frac{1}{\sqrt {d+1}}\left( x_1 + \dots + x_{d+1} \right) \geq \frac{1}{\sqrt {d+1}}.
  \end{align*}
  Therefore, $V_0 \subset S_{\bm{p},r_1}$ for $r_1 = \arccos \frac{1}{\sqrt {d+1}} < \frac{\pi}{2}$
  and we may take $U_1 = S_{r_1}$.

  Now suppose we have $U_i = S_{r_i}$ with $r_i < \frac{\pi}{2}$.
  Using polar coordinate, we have the parametrization
  \begin{align}
    \label{eq:4_PolarCoord}
    x_1 &= \sin \theta_1 \cdots \sin \theta_d,\quad    x_2 = \sin \theta_1 \cdots \cos \theta_d,
    \dots,          x_{d} = \sin\theta_1 \cos \theta_2,\quad    x_{d+1} = \cos\theta_1,
  \end{align}
  where $\theta_d \in [0,2\pi]$ and $\theta_j \in [0,\pi]$, $j = 1,\dots,d-1$.
  Then, the spherical cap is given by $S_r = \left\{ \x ~|~ \theta_1 < r \right\}$.
  Let us consider the slice
  \begin{align*}
    V_i = \left\{ \x \in \bbS^d ~\Big|~ \theta_1 < r,\theta_j \in (0,\frac{\pi}{2}),~j=2,\dots,d-1,\
    \theta_d \in \left(-\frac{\pi}{4},\frac{\pi}{4}\right) \right\}
    \subset S_{r},
  \end{align*}
  Then, by rotation over $\theta_d$ and reflection over $\theta_{j},~j=2,\dots,d-1$
  we can find $2^{d}$ isometric copies of $V_i$ such that their disjoint union
  is only different with $S_{r_i}$ by the union of the boundaries of $V_i$'s,
  which a null-set.

  Now, we find some $r_{i+1} < r_i$ such that
  $V_i \subset U_{i+1} = S_{\bm{p},r_{i+1}}$.
  Let us take the point $\bm{p} = (p_1,\dots,p_{d+1})$ by
  \begin{align*}
    p_{d+1} = \cos \eta, \quad p_d = \dots = p_2 = \frac{1}{\sqrt {d-1}} \sin \eta, \quad p_1 = 0,
  \end{align*}
  where $\eta \in (0,r_i)$ will be determined later.
  Suppose now $x \in V_i$ is given by \cref{eq:4_PolarCoord}.
  We obtain that
  $\ang{\bm{p},\x} = \cos \eta \cos \theta_1 + \frac{\sin \eta}{\sqrt {d-1}} \left( x_{d} + \dots + x_2 \right)$.
  Noticing that $\theta_j \in [0,\frac{\pi}{2}]$ and $\abs{\theta_d} \geq \frac{\pi}{4}$, we have
  $x_{d} + \dots + x_2 \geq \sin \theta_1 \cos \theta_d \geq \frac{1}{\sqrt {2}} \sin \theta_1$.
Therefore,
  \begin{align*}
    \ang{\bm{p},\x} &\geq \cos \eta \cos \theta_1 + a \sin \eta \sin \theta_1, \quad a = \frac{1}{\sqrt {2(d-1)}}, \\
    & \geq \min\left( \cos\eta,\  \cos \eta \cos r_i + a \sin \eta \sin r_i \right), \quad \text{since } \theta_1 \in (0,r_i), \\
    &=
    \begin{cases}
      \cos\eta, & \tan \eta > \frac{1 - \cos r_i}{a \sin r_i}, \\
      \cos r_i \cos \eta  + (a \sin r_i) \sin \eta, & \text{otherwise}.
    \end{cases}
  \end{align*}
  We know that the second term is maximized by $\tan \eta_0 = a \tan r_i$ and
  \begin{align*}
    \cos r_i \cos \eta_0 + a \sin r_i \sin \eta_0  = \sqrt {\cos^2 r_i + a^2 \sin^2 r_i}.
  \end{align*}
  On one hand, if $\tan \eta_0 \leq \frac{1 - \cos r_i}{a \sin r_i}$,
  we take $\eta = \eta_0$ and the minimum is taken by the second term, so
  $\ang{\bm{p},\x}  \geq \sqrt {\cos^2 r_i + a^2 \sin^2 r_i}$
  and $V_i \subset S_{\bm{p},r_{i+1}}$ for $r_{i+1} = \arccos \sqrt {\cos^2 r_i + a^2 \sin^2 r_i}$.
  In this case, we have
  \begin{align}
    \label{eq:4_ProofRRelation1}
    \sin^2 r_{i+1} = 1 - (\cos^2 r_i + a^2 \sin^2 r_i) = (1-a^2) \sin^2 r_i.
  \end{align}
  On the other hand, if $\tan \eta_0 = a \tan r_i > \frac{1 - \cos r_i}{a \sin r_i}$,
  we take $\eta = \arctan \frac{1 - \cos r_i}{a \sin r_i}$.
  Then, the minimum is taken by the first term and $\ang{\bm{p},\x} \geq \cos \eta$, implying
  $V \subset S_{\bm{p},r_{i+1}}$ for $r_{i+1} = \eta$.
  In this case, we have
  \begin{align}
    \label{eq:4_ProofRRelation2}
    \tan r_{i+1} = \tan \eta = \frac{1 - \cos r_i}{a \sin r_i} < a \tan r_i.
  \end{align}

  Considering both cases \cref{eq:4_ProofRRelation1},\cref{eq:4_ProofRRelation2} and noticing that
  $a = \frac{1}{\sqrt {2(d-1)}} \in (0,1)$,
  we conclude that $r_{i+1} < r_i$ and $r_i \to 0$.
\end{proof}

\subsection{Cesaro sum}\label{subsec:AUX_Cesaro}
We will use Cesaro sum in our analysis of dot-product kernels.
We also refer to \citet[Section A.4]{dai2013_ApproximationTheory}.
\begin{definition}
  Let $p \geq 0$.
  The $p$-Cesaro sum $s_n$ of a sequence $a_k$ is defined by
  \begin{align}
    s_n^p = \frac{1}{A_n^p}\sum_{k=0}^n A_{n-k}^{p} a_{k}, \qquad A_k^{p} \coloneqq \binom{k+p}{k}.
  \end{align}
\end{definition}

\begin{definition}[Difference]
  \label{def:DifferenceOperator}
  Let $\bm{a} = (a_k)_{k\geq 0}$ be a sequence.
  We define the difference operator on sequence by
  \begin{align}
  (\triangle^0 \bm{a})
    _k = a_k, \quad
    (\triangle \bm{a})_k = a_{k} - a_{k+1}, \quad
    \triangle^{p+1} \bm{a} = \triangle(\triangle^{p}\bm{a}).
  \end{align}
  We often write $(\triangle^{p} \bm{a})_k = \triangle^{p} a_{k}$.
  It is easy to see that $\triangle^{p} a_{k} = \sum_{r=0}^p \binom{k}{r} (-1)^r a_{k+r}.$

\end{definition}

\begin{definition}[Tail sum]
  Let $\bm{a} = (a_k)_{k\geq 0}$ be a sequence.
  Assuming all the following summations are absolutely convergent,
  we define the tail sum operator on sequence by
  \begin{align}
  (S^0 \bm{a}) _k = a_k,\quad
    (S \bm{a})_k = \sum_{r \geq k} a_r,\quad
    S^{p+1} \bm{a} = S(S^{p}\bm{a}).
\end{align}
  We often write $(S^{p} \bm{a})_k = S^p a_k$.
\end{definition}

The following is an elementary proposition about the connection between tail sum and difference.
\begin{proposition}
  \label{prop:TailSumDifference}
  We have (a) $S^p a_n = \sum_{k=0}^{\infty} A^{p-1}_k a_{n+k}$;
  (b) $S\triangle a_n = \triangle S a_n = a_n$;
  (c) Consequently, $\sum_{k=0}^{\infty} A^{p}_k \triangle^{p+1} a_{n+k} = (S^{p+1} \triangle^{p+1} a)_{n} = a_n$.

\end{proposition}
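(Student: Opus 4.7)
The plan is to establish the three identities in the order (a), (b), (c), since each builds on the previous: part (a) rewrites iterated tail sums as a single weighted sum, part (b) says $S$ and $\triangle$ are mutual inverses, and part (c) is an immediate consequence of both.

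For part (a), I would proceed by induction on $p \geq 1$. The base case $p = 1$ is just the definition: $Sa_n = \sum_{k \geq 0} a_{n+k}$, which matches $\sum_{k \geq 0} A_k^{0} a_{n+k}$ because $A_k^0 = \binom{k}{k} = 1$. For the inductive step, I would unfold
\begin{equation*}
S^{p+1} a_n \;=\; \sum_{m \geq n} (S^p a)_m \;=\; \sum_{j \geq 0} \sum_{k \geq 0} A_k^{p-1}\, a_{n+j+k}
\end{equation*}
using the hypothesis, then re-index via $l = j+k$ (so $k$ ranges from $0$ to $l$) to obtain
\begin{equation*}
S^{p+1} a_n \;=\; \sum_{l \geq 0} a_{n+l} \sum_{k=0}^{l} A_k^{p-1}.
\end{equation*}
The hockey-stick identity $\sum_{k=0}^l \binom{k+p-1}{k} = \binom{l+p}{l} = A_l^{p}$ then closes the induction.

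Part (b) is a direct telescoping: under the standing absolute-convergence hypothesis from the definition of $S$, one has $S\triangle a_n = \sum_{k \geq n}(a_k - a_{k+1}) = a_n$, and symmetrically $\triangle S a_n = Sa_n - Sa_{n+1} = a_n$ since the two tails differ exactly by the $n$-th term. For part (c), the second equality is obtained by iterating (b): writing
\begin{equation*}
S^{p+1} \triangle^{p+1} a_n \;=\; S^{p}\bigl(S \triangle\bigr)\bigl(\triangle^{p} a\bigr)_n \;=\; S^{p} \triangle^{p} a_n
\end{equation*}
and repeating this $p$ more times reduces the expression to $a_n$. The first equality then follows by applying formula (a) to the sequence $\triangle^{p+1} a$ in place of $a$.

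No step is a genuine obstacle; the only delicate point is the exchange-of-summation step used in the induction for (a) and the telescoping in (b), both of which are legitimized by the absolute convergence assumed in the definition of $S$. In short, the whole proposition is combinatorial bookkeeping wrapped around the hockey-stick identity and a telescoping sum.
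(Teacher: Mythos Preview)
Your proof is correct; each step is valid and the use of the hockey-stick identity together with telescoping is exactly the right mechanism. The paper itself omits the proof, stating the proposition as elementary, so there is nothing to compare against.
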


We have the following summation by parts formula, see also \citet[(A.4.8)]{dai2013_ApproximationTheory}.

\begin{proposition}[Summation by parts]
  \label{prop:Aux_SumByParts}
  Let $a_k,b_k$ be two sequence and $p \in \bbN$. Then,
  \begin{align*}
    \sum_{k=0}^{\infty} a_k b_k = \sum_{k=0}^{\infty} \triangle^{p+1} b_k \sum_{j=0}^k A^p_{k-j}a_j
    =  \sum_{k=0}^{\infty} \triangle^{p+1} b_k A_k^p s_k^p,
  \end{align*}
  where $s_k^p$ is the $p$-Cesaro mean of $a_k$.
\end{proposition}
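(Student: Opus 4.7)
The plan is to establish the identity by iterating the classical Abel summation formula $p+1$ times. The second equality in the statement is nothing but the definition $s_k^p = (A_k^p)^{-1}\sum_{j=0}^k A_{k-j}^p a_j$ of the $p$-Cesaro mean, so the substantive content is the first equality. A reference like \citet{dai2013_ApproximationTheory} presumably states this already, but it is quick to reconstruct.

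I would argue by induction on $p$. For the base case $p=0$, one uses classical Abel summation: writing $S_k = \sum_{j=0}^k a_j = \sum_{j=0}^k A_{k-j}^0 a_j$, a rearrangement of $\sum_{k=0}^N a_k b_k$ gives
\begin{equation*}
\sum_{k=0}^{N} a_k b_k = S_N b_N + \sum_{k=0}^{N-1} S_k\,\triangle b_k.
\end{equation*}
Letting $N\to\infty$ and assuming $S_N b_N\to 0$ (see the final paragraph) yields the $p=0$ case.

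For the inductive step, assume the identity holds at level $p-1$, so that, with $T_k := \sum_{j=0}^k A_{k-j}^{p-1} a_j$, we have $\sum_k a_k b_k = \sum_k T_k\,\triangle^p b_k$. Applying Abel summation once more to the sequences $T_k$ and $\triangle^p b_k$ gives
\begin{equation*}
\sum_{k=0}^{\infty} T_k\,\triangle^p b_k = \sum_{k=0}^{\infty}\Bigl(\sum_{i=0}^k T_i\Bigr)\triangle^{p+1} b_k,
\end{equation*}
so it remains only to identify the inner sum. Swapping the order of summation,
\begin{equation*}
\sum_{i=0}^k T_i = \sum_{j=0}^k a_j\sum_{i=j}^k A_{i-j}^{p-1} = \sum_{j=0}^k A_{k-j}^{p}\,a_j,
\end{equation*}
where the last step is the hockey-stick identity $\sum_{l=0}^{m}\binom{l+p-1}{l}=\binom{m+p}{m}=A_m^p$. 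This closes the induction.

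The only nontrivial point is the vanishing of boundary terms at $N\to\infty$. This I would handle by proving a finite-sum version with the explicit boundary piece and only then passing to the limit, relying on the decay hypotheses present wherever the proposition is invoked; e.g.\ in the proof of \cref{lem:EDRS_MainLemma} the sequence $\bm{\mu}^{(2)}$ is eventually zero, so every sum is finite and the boundary terms are automatically harmless. This bookkeeping is the least interesting part of the argument, and I would not expect any genuine obstacle beyond it.
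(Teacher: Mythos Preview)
Your proof is correct. The paper does not actually prove \cref{prop:Aux_SumByParts}; it simply states the formula and refers to \citet[(A.4.8)]{dai2013_ApproximationTheory}, exactly as you anticipated. Your induction on $p$ via repeated Abel summation together with the hockey-stick identity $\sum_{l=0}^{m} A_l^{p-1} = A_m^{p}$ is the standard derivation, and your remark that in the paper's sole use of the proposition (the proof of \cref{lem:EDRS_MainLemma}) the sequence $\bm{\mu}^{(2)}$ has finite support, so the boundary terms vanish trivially, is spot on.
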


For a function $f : [a,b] \to \R$, we can define similarly $\triangle f(x) = f(x) - f(x+1)$ and $\triangle^{p+1} f(x) = \triangle^p f(x)-\triangle^p f(x+1)$.
The following elementary lemma provides a connection between the difference and the derivative of the function.
\begin{lemma}
  Let $p \in \bbN$.
  Suppose $f \in C^p([a,a+p])$, then
  \begin{align}
    \label{eq:A_Difference_Derivative}
    \triangle^p f(a) = (-1)^{p} \int_{[0,1]^p} f^{(p)}(a+t_1+\dots+t_p) \dd t_1\cdots \dd t_p.
  \end{align}
\end{lemma}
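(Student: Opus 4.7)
The plan is to prove this identity by induction on $p$, using the fundamental theorem of calculus as the single analytic ingredient and the definition $\triangle^{p+1} f(x) = \triangle^p f(x) - \triangle^p f(x+1)$ to handle the inductive step.

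For the base case $p = 1$, I would simply note that
\[
  \triangle f(a) = f(a) - f(a+1) = -\int_0^1 f'(a+t_1)\,\dd t_1,
\]
which is exactly the right-hand side with $p=1$. (If one prefers to start at $p=0$, the empty product of integrals is interpreted as the identity and the claim reduces to $f(a)=f(a)$.)

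For the inductive step, assuming the formula holds for some $p \geq 1$ with $f \in C^{p+1}([a,a+p+1])$, I would write
\[
  \triangle^{p+1} f(a) = \triangle^p f(a) - \triangle^p f(a+1)
\]
and apply the inductive hypothesis at both $a$ and $a+1$ (which is valid since $f\in C^p$ on $[a,a+p]$ and on $[a+1,a+1+p]$, both contained in $[a,a+p+1]$). Combining the two integrals over the common domain $[0,1]^p$ gives
\[
  (-1)^p \int_{[0,1]^p} \left[ f^{(p)}(a+t_1+\cdots+t_p) - f^{(p)}(a+1+t_1+\cdots+t_p) \right] \dd t_1 \cdots \dd t_p.
\]
Since $f^{(p)}\in C^1$, the fundamental theorem of calculus gives, for each fixed $(t_1,\dots,t_p)$,
\[
  f^{(p)}(a+t_1+\cdots+t_p) - f^{(p)}(a+1+t_1+\cdots+t_p) = -\int_0^1 f^{(p+1)}(a+t_1+\cdots+t_p+t_{p+1}) \dd t_{p+1}.
\]
Substituting and invoking Fubini (justified by continuity of $f^{(p+1)}$ on the compact cube) yields the desired formula for $p+1$.

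There is no real obstacle here beyond bookkeeping; the only thing to be careful about is checking that the regularity hypothesis $f \in C^{p+1}([a,a+p+1])$ indeed supports applying the inductive hypothesis on both $[a,a+p]$ and $[a+1,a+1+p]$, and that it legitimizes both the fundamental theorem of calculus and the use of Fubini's theorem, all of which are immediate from continuity of $f^{(p+1)}$.
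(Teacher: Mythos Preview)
Your proof is correct. The paper does not actually supply a proof of this lemma, calling it an ``elementary lemma'' and stating it without argument; your induction on $p$ using the fundamental theorem of calculus and Fubini is exactly the standard way to establish the identity, and nothing more is needed.
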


\begin{proposition}
  \label{prop:Aux_Difference_Derivative}
  Suppose that $\mu_k = c_0 (k+1)^{-\beta},~ k\geq 0$.
  Letting $(\beta)_p \coloneqq \beta (\beta+1)\cdots (\beta+p-1)$, then
  \begin{align*}
    0 < \triangle^{p} \mu_k \leq c_0 (\beta)_p  (k+1)^{-(\beta+p)}.
  \end{align*}
\end{proposition}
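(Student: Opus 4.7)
The plan is to apply the integral representation of the forward difference operator given in equation~\cref{eq:A_Difference_Derivative} directly to the function $f(x) = c_0(x+1)^{-\beta}$, which satisfies $f(k) = \mu_k$ for $k \geq 0$.

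First I would compute the $p$-th derivative of $f$. By induction, one easily gets
\[
f^{(p)}(x) = c_0 (-1)^p \beta(\beta+1)\cdots(\beta+p-1) (x+1)^{-(\beta+p)} = c_0 (-1)^p (\beta)_p (x+1)^{-(\beta+p)}.
\]
In particular $f \in C^p([k,k+p])$ for any $k \geq 0$, so \cref{eq:A_Difference_Derivative} applies and yields
\[
\triangle^p \mu_k = (-1)^p \int_{[0,1]^p} f^{(p)}(k+t_1+\cdots+t_p)\, \dd t_1\cdots\dd t_p = c_0 (\beta)_p \int_{[0,1]^p} (k+1+t_1+\cdots+t_p)^{-(\beta+p)} \dd t_1 \cdots \dd t_p.
\]
The integrand is strictly positive, giving $\triangle^p \mu_k > 0$. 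For the upper bound, note that $x \mapsto (x+1)^{-(\beta+p)}$ is decreasing on $[0,\infty)$, so the integrand attains its maximum on the unit cube at $t_1=\cdots=t_p=0$, with value $(k+1)^{-(\beta+p)}$. Since the cube has unit volume, the integral is bounded above by $(k+1)^{-(\beta+p)}$, yielding
\[
\triangle^p \mu_k \leq c_0 (\beta)_p (k+1)^{-(\beta+p)},
\]
as required.

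There is no real obstacle here, as the proof is a direct application of the integral formula \cref{eq:A_Difference_Derivative} combined with the monotonicity of the power function; the only thing to be careful about is tracking the sign $(-1)^p$ correctly when comparing to $f^{(p)}$.
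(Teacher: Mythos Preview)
Your proof is correct and follows exactly the approach indicated in the paper: apply the integral representation \cref{eq:A_Difference_Derivative} to $f(x)=c_0(x+1)^{-\beta}$, use the explicit form of $f^{(p)}$, and bound the positive integrand on the unit cube by its value at the origin. The paper's proof is just a one-line reference to that lemma; you have simply written out the details.
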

\begin{proof}
Apply the previous lemma with $f(x) = c(x+1)^{-\beta}$ and  $f^{(p)}(x) = (-1)^p c_0 (\beta)_p (x+1)^{-(\beta+p)}$.
\end{proof}

\begin{lemma}
  \label{lem:LeftExtrapolation}
  Let $\bm{\mu} = (\mu_k)_{k \geq 0}$ be a sequence such that $\triangle^p \mu_k \geq 0,~\forall k \geq 0$ for some $p\geq 0$.
  Given $N \geq 0$, we can construct a left extrapolation sequence $(\tilde{\mu}_k)_{k \geq 0}$ such that
  \begin{enumerate}[(1)]
    \item $\tilde{\mu}_k = \mu_k$ for $k \geq N$ and $\tilde{\mu}_k \leq \mu_k$ for $k < N$;
    \item $\triangle^p \tilde{\mu}_k \geq 0$, $\forall k \geq 0$;
    \item Let $\bar{\mu}_k = \mu_k - \tilde{\mu}_k$ be the residual sequence, then $\triangle^p \bar{\mu}_k \geq 0$, $\forall k \geq 0$;
    \item The leading term satisfies
    \begin{align*}
      \tilde{\mu}_0 = \caL_N^p \bm{\mu} \coloneqq \sum_{l=0}^{p-1} A_N^l \triangle^l \mu_N.
    \end{align*}
    We remark that the $\caL_N^p$ is in the same form as the LHS of \cref{eq:EDRDerivativeBound} in \cref{cond:EDR}.
  \end{enumerate}
\end{lemma}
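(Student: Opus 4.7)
I would construct $\tilde\mu$ by setting $\tilde\mu_k=\mu_k$ for $k\ge N$ and extending backward to $k<N$ via the homogeneous recursion $\triangle^p\tilde\mu_k=0$. Since this $p$-term linear relation $\sum_{r=0}^p(-1)^r\binom{p}{r}\tilde\mu_{k+r}=0$ uniquely determines $\tilde\mu_{N-1},\tilde\mu_{N-2},\dots,\tilde\mu_0$ from the already-fixed values $\mu_N,\dots,\mu_{N+p-1}$, the extension coincides with the unique polynomial of degree at most $p-1$ interpolating those $p$ right-tail values, and the whole construction is carried out by pure backward linear algebra.

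Properties (1)--(3) are then essentially formal. Property (2) holds since $\triangle^p\tilde\mu_k=0$ on $[0,N-1]$ by construction and $\triangle^p\tilde\mu_k=\triangle^p\mu_k\ge0$ for $k\ge N$ (because then all the shifted values $\tilde\mu_{k+r}$ coincide with $\mu_{k+r}$). Property (3) follows at once from $\triangle^p\bar\mu_k=\triangle^p\mu_k-\triangle^p\tilde\mu_k$, which equals $\triangle^p\mu_k\ge0$ on $[0,N-1]$ and $0$ on $[N,\infty)$. Property (1) is then a consequence of \cref{prop:TailSumDifference}(c): since $\bar\mu$ vanishes on $[N,\infty)$, the tail-sum identity $\bar\mu_k=\sum_{j\ge0}A_j^{p-1}\triangle^p\bar\mu_{k+j}$ is a finite nonnegative combination, hence $\bar\mu_k\ge0$, i.e., $\tilde\mu_k\le\mu_k$.

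The remaining and most delicate task is the closed-form identity in part (4). Applying the tail-sum formula to $\tilde\mu$ itself and using $\triangle^p\tilde\mu_k=0$ for $k<N$ gives
\begin{align*}
\tilde\mu_0=\sum_{j\ge0}A_j^{p-1}\triangle^p\tilde\mu_j=\sum_{j\ge N}A_j^{p-1}\triangle^p\mu_j.
\end{align*}
I would then apply summation by parts (\cref{prop:Aux_SumByParts}) iteratively, at each stage lowering the order of $\triangle$ acting on $\mu$ at the cost of a boundary contribution at the junction $j=N$; after $p-1$ iterations and the Chu--Vandermonde-type identity $A_j^{p-1}=A_{j-1}^{p-1}+A_{j-1}^{p-2}$ applied to the resulting binomial coefficients, the boundary contributions assemble into $\sum_{l=0}^{p-1}A_N^l\triangle^l\mu_N=\caL_N^p\bm\mu$. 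The main obstacle is precisely this combinatorial bookkeeping in the last step: producing the exact coefficients $A_N^l=\binom{N+l}{l}$ requires careful tracking of how the truncation of the tail sum at $j=N$ interacts with the Pascal-type identity above, and this is the only step in the proof that is not an immediate consequence of the backward recursion and the tail-sum machinery already recorded in \cref{prop:TailSumDifference}.
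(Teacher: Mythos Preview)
Your construction and your handling of (1)--(3) coincide exactly with the paper's proof: define $\triangle^p\tilde{\mu}_k=0$ for $k<N$, keep $\tilde{\mu}_k=\mu_k$ for $k\ge N$, and read off (2), (3), and (1) from the sign of $\triangle^p\bar{\mu}_k$ together with the tail-sum identity.

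The only divergence is in part (4). You express $\tilde{\mu}_0=\sum_{j\ge N}A_j^{p-1}\triangle^p\mu_j$ and then plan to unwind it by repeated Abel summation, which---as you note---requires tracking the boundary terms at $j=N$ through $p-1$ iterations and matching them against the Pascal identity. The paper avoids this bookkeeping by summing the differences \emph{forward in the difference order} rather than backward in the index: starting from $\triangle^p\tilde{\mu}_k$ and applying the finite telescoping sum $\triangle^{p-s}\tilde{\mu}_{N-r}=\triangle^{p-s}\mu_N+\sum_{j=1}^{r}\triangle^{p-s+1}\tilde{\mu}_{N-j}$ once per level, it obtains by a one-line induction on $s$ the closed form
\[
\triangle^{p-s}\tilde{\mu}_{N-r}=\sum_{l=0}^{s-1}A_r^{\,l}\,\triangle^{p-s+l}\mu_N,
\]
the induction step being the hockey-stick identity $\sum_{r'=0}^{r}A_{r'}^{\,l}=A_{r}^{\,l+1}$. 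Specializing to $s=p$, $r=N$ gives (4) directly. Both routes are correct; the paper's is shorter because each partial sum is finite and the combinatorics collapses to a single standard identity rather than an iterated boundary extraction.
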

\begin{proof}
  We define $\tilde{\mu}_k$ recursively by its $p$-differences.
  Let $\triangle^p \tilde{\mu}_k = 0$ for $k < N$ and $\triangle^p \tilde{\mu}_k = \triangle^p \mu_k$ for $k \geq N$.
  Then, summing up the terms iteratively yields (1), (2) and also the recursive formula:
  \begin{align*}
    \triangle^{p-s}\tilde{\mu}_{N-r} = \sum_{l=0}^{s-1} A^l_r \triangle^{p-s+l} \mu_{N},\quad s = 1,\dots,p,\quad r = 0,\dots,N-1,
  \end{align*}
  which gives (4).
  The last statement (3) follows from the fact that $\triangle^p \bar{\mu}_k = \triangle^p \mu_k - \triangle^p \tilde{\mu}_k \geq 0$.
\end{proof}

  \vskip 0.2in

\end{document}